\newcommand{\norm}[1]{\left\|#1\right\|}
\newcommand{\parens}[1]{\left(#1\right)}
\newcommand{\braces}[1]{\left\{#1\right\}}
\newcommand{\brackets}[1]{\left[#1\right]}
\newcommand{\abs}[1]{\left|#1\right|}
\newcommand{\RR}{\mathbb{R}}
\newcommand{\EE}{\mathbb{E}}
\newcommand{\PP}{\mathbb{P}}
\newcommand{\NN}{\mathbb{N}}
\declaretheorem[name=Theorem, numberwithin=section]{theo}
\declaretheorem[name=Lemma, sibling=theo]{lem}
\declaretheorem[name=Corollary, sibling=theo]{cor}
\declaretheorem[name=Definition, sibling=theo]{defi}
\declaretheorem[name=Assumption, sibling=theo]{ass}
\declaretheorem[name=Observation, sibling=theo]{obs}
\title{Implicit Bias of MSE Gradient Optimization in Underparameterized Neural Networks}
\author{Benjamin Bowman\\
UCLA Department of Mathematics\\
\texttt{benbowman314@math.ucla.edu}
\and
Guido Mont\'ufar\\ 
UCLA Departments of Mathematics and Statistics and MPI MIS\\ 
\texttt{montufar@math.ucla.edu} \\
}
\begin{document}

\maketitle

\begin{abstract} 
We study the dynamics of a neural network in function space when optimizing the mean squared error via gradient flow.  We show that in the underparameterized regime the network learns eigenfunctions of an integral operator $T_{K^\infty}$ determined by the Neural Tangent Kernel (NTK) at rates corresponding to their eigenvalues.  For example, for uniformly distributed data on the sphere $S^{d - 1}$ and rotation invariant weight distributions, the eigenfunctions of $T_{K^\infty}$ are the spherical harmonics.  Our results can be understood as describing a spectral bias in the underparameterized regime.  The proofs use the concept of ``Damped Deviations'', where deviations of the NTK matter less for eigendirections with large eigenvalues due to the occurence of a damping factor.  Aside from the underparameterized regime, the damped deviations point-of-view can be used to track the dynamics of the empirical risk in the overparameterized setting, allowing us to extend certain results in the literature.  We conclude that damped deviations offers a simple and unifying perspective of the dynamics when optimizing the squared error.
\end{abstract}

\section{Introduction}
A surprising but well established empirical fact is that neural networks optimized by gradient descent can find solutions to the empirical risk minimization (ERM) problem that generalize.  This is surprising from an optimization point-of-view because the ERM problem induced by neural networks is nonconvex \citep{Sontag89backpropagationcan,SONTAG1991243} and can even be NP-Complete in certain cases \citep{Blum1993}.  Perhaps even more surprising is that the discovered solution can generalize even when the network is able to fit arbitrary labels \citep{zhang2017understanding}, rendering traditional complexity measures such as Rademacher complexity inadequate.  How does deep learning succeed in the face of pathological behavior by the standards of classical optimization and statistical learning theory?
\par
Towards addressing generalization, a modern line of thought that has emerged is that gradient descent performs implicit regularization, limiting the solutions one encounters in practice to a favorable subset of the model's full capacity \citep[see, e.g.,][]{neyshabur2015search,neyshabur2017geometry,NIPS2017_58191d2a,DBLP:journals/corr/WuZE17}. An empirical observation is that neural networks optimized by gradient descent tend to fit the low frequencies of the target function first, and only pick up the higher frequencies later in training \citep{rahaman2019spectral, 
basri2019convergence, 
basri2020frequency, 
xu2019training}. A closely related theme is gradient descent's bias towards smoothness for regression problems \citep{williams2019gradient, jin2021implicit}. 
For classification problems, in suitable settings gradient descent provably selects max-margin solutions \citep{soudry2018implicit, 
telgarskynonseparable}. Gradient descent is not impartial, thus understanding its bias is an important program in modern deep learning.
\par
Generalization concerns aside, the fact that gradient descent can succeed in a nonconvex optimization landscape warrants attention on its own.  A brilliant insight made by \citet{jacot2020neural} is that in function space the neural network follows a kernel gradient descent with respect to the ``Neural Tangent Kernel" (NTK).  This kernel captures how the parameterization biases the trajectory in function space, an abstraction that allows one to largely ignore parameter space and its complications.  This is a profitable point-of-view, but there is a caveat.  The NTK still depends on the evolution of the network parameters throughout time, and thus is in general time-dependent and complicated to analyze.  However, under appropriate scaling of the parameters in the infinite-width limit it remains constant \citep{jacot2020neural}.  Once the NTK matrix has small enough deviations to remain strictly positive definite throughout training, the optimization dynamics start to become comparable to that of a linear model \citep{leewide}.  For wide networks (quadratic or higher polynomial dependence on the number of training data samples $n$ and other parameters) this property holds and this has been used by a variety of works to prove global convergence guarantees for the optimization  \citep{du2018gradient,oymak2019moderate,du2019gradient,allenzhu2019convergence,allenzhuOnConv,zou2018stochastic,zou2019improved,song2020quadratic,pmlr-v119-dukler20a}\footnote{Not all these works explicitly use that the NTK is positive definite.  However, they all operate in the regime where the weights do not vary much and thus are typically associated with the NTK regime.} and to characterize the solution throughout time \citep{arora2019finegrained, basri2020frequency}.  The NTK has been so heavily exploited in this setting that it has become synonymous with polynomially wide networks where the NTK is strictly positive definite throughout training.  This begs the question, to what extent is the NTK informative outside this regime? 
\par
While the NTK has hitherto been associated with the heavily overparameterized regime, we demonstrate that refined analysis is possible in the underparameterized setting.  Our theorems primarily concern a one-hidden layer network, however unlike many NTK results appearing in the literature our network has \textit{biases} and \textit{both} layers are trained.  In fact, the machinery we build is strong enough to extend some existing results in the overparameterized regime appearing in the literature to the case of training both layers.

\subsection{Related Work}
There has been a deluge of works on the Neural Tangent Kernel since it was introduced by \citet{jacot2020neural}, and thus we do our best to provide a partial list.  Global convergence guarantees for the optimization, and to a lesser extent generalization, for networks polynomially wide in the number of training samples $n$ and other parameters has been addressed in several works \citep{du2018gradient,oymak2019moderate,du2019gradient,allenzhu2019convergence,allenzhuOnConv,zou2018stochastic,zou2019improved,song2020quadratic,arora2019finegrained}.  To our knowledge, for the regression problem with arbitrary labels, quadratic overparameterization $m \gtrsim n^2$ is state-of-the art \citep{oymak2019moderate,song2020quadratic, qyunhpyramidal}.  \citet{weinanstandardparam} gave a fairly comprehensive study of optimization and generalization of shallow networks trained under the standard parameterization.  Under the standard parameterization, changes in the outer layer weights are more significant, whereas under the NTK parameterization both layers have roughly equal effect.  Since we study the NTK parameterization in this work, we view the analysis as complementary.
\par
Our work is perhaps most closely connected with \citet{arora2019finegrained}.  In Theorem 4.1 in that work they showed that for a shallow network in the polynomially overparameterized regime $m \gtrsim n^7$, the training error along eigendirections of the NTK matrix decay linearly at rates that correspond to their eigenvalues.  Our main Theorem \ref{thm:mainunderparam} can be viewed as an analogous statement for the actual risk (not the empirical risk) in the underparameterized regime: eigenfunctions of the NTK integral operator $T_{K^\infty}$ are approximately learned linearly at rates that correspond to their eigenvalues.  In contrast with \citet{arora2019finegrained}, we have that the requirements on width $m$ and number of samples $n$ required to learn eigenfunctions with large eigenvalues are smaller compared to those with small eigenvalues.  Surprisingly the machinery we build is also strong enough to prove in our setting the direct analog of Theorem 4.1.  Note that \citet{arora2019finegrained} train the hidden layer of a ReLU network via gradient descent, whereas we are training both layers with biases for a network with smooth activations via gradient flow.  Due to the different settings, the results are not directly comparable.  This important detail notwithstanding, our overparameterization requirement ignoring logarithmic factors is smaller by a factor of $\frac{n^2}{d \delta^4}$ where $n$ is the number of input samples, $d$ is the input dimension, and $\delta$ is the failure probability.  \citet{basri2020frequency} extended Theorem 4.1 in \citet{arora2019finegrained} to deep ReLU networks without bias where the first and last layer are fixed, with a higher overparameterization requirement than the original \citep{arora2019finegrained}.  Since the first and last layers are fixed this cannot be specialized to get a guarantee for training both layers of a shallow network even with ReLU activations.
\par
Although it was not our focus, the tools to prove Theorem \ref{thm:mainunderparam} are enough to prove analogs of Theorem 4 and Corollary 2 in the work of \citet{su2019learning}.  Theorem 4 and Corollary 2 of \citet{su2019learning} are empirical risk guarantees that show that for target functions that participate mostly in the top eigendirections of the NTK integral operator $T_{K^\infty}$, moderate overparameterization is possible.  Again in this work they train the hidden layer of a ReLU network via gradient descent, whereas we are training both layers with biases for a network with smooth activations via gradient flow.  Again due to the different settings, we emphasize the results are not directly comparable.  In our results the bounds and requirements are comparable to \citet{su2019learning}, with neither appearing better.  Nevertheless we think it is important to demonstrate that these results hold for training both layers with biases, and we hope our ``Damped Deviations" approach will simplify the interpretation of the aforementioned works.
\par
\citet[Theorem 4.2]{cao2020understanding} provide an analogous statement to our Theorem~\ref{thm:mainunderparam} if you replace our quantities with their empirical counterparts.  While our statement concerns the projections of the test residual onto the eigenfunctions of an operator associated with the Neural Tangent Kernel, 
their statement concerns the inner products of the empirical residual with those eigenfunctions.  Their work was a crucial step towards explaining the spectral bias from gradient descent, however we view the difference between tracking the empirical quantities versus the actual quantities to be highly nontrivial.  Another difference is they consider a ReLU network whereas we consider smooth activations; also they consider gradient descent versus we consider gradient flow.  Due to the different settings we would like to emphasize that the scalings of the different parameters are not directly comparable, nevertheless the networks they consider are significantly wider.  They require at least $m \geq \Tilde{O}(\max\{\sigma_k^{-14}, \epsilon^{-6}\})$, where $\sigma_k$ is a cutoff eigenvalue and $\epsilon$ is the error tolerance.  By contrast in our work, to have the projection onto the top $k$ eigenvectors be bounded by epsilon in L2 norm requires $m =\Tilde{\Omega}(\sigma_k^{-4} \epsilon^{-2})$.  Another detail is their network has no bias whereas ours does.
\subsection{Our Contributions}
The key idea for our work is the concept of ``Damped Deviatons", the fact that for the squared error deviations of the NTK are softened by a damping factor, with large eigendirections being damped the most.  This enables the following results.
\begin{itemize}[leftmargin=*]
    \item In Theorem \ref{thm:mainunderparam} we characterize the bias of the neural network to learn the eigenfunctions of the integral operator $T_{K^\infty}$ associated with the Neural Tangent Kernel (NTK) at rates proportional to the corresponding eigenvalues. 
    \item In Theorem \ref{thm:aroraanalog} we show that in the overparameterized setting the training error along different directions can be sharply characterized, showing that Theorem 4.1 in \citet{arora2019finegrained} holds for smooth activations when training both layers with a smaller overparameterization requirement. 
    \item In Theorem \ref{thm:suyanganalog} and Corollary \ref{cor:suyangcoranalog} we show that moderate overparameterization is sufficient for solving the ERM problem when the target function has a compact representation in terms of eigenfunctions of $T_{K^\infty}$.  This extends the results in \citet{su2019learning} to the setting of training both layers with smooth activations. 
\end{itemize}

\section{Gradient Dynamics and Damped Deviations}

\subsection{Notations}
We will use $\norm{\bullet}_2$ and $\langle \bullet, \bullet \rangle_2$ to denote the $L^2$ norm and inner product respectively (for vectors or for functions depending on context).  For a symmetric matrix $A \in \RR^{k \times k}$, $\lambda_i(A)$ denotes its $i$th largest eigenvalue, i.e. $\lambda_1(A) \geq \lambda_2(A) \geq \cdots \geq \lambda_k(A)$.  For a matrix $A$, $\norm{A}_{op} := \sup_{\norm{x}_2 \leq 1} \norm{Ax}_2$ is the operator norm induced by the Euclidean norm.  We will let $\langle \bullet, \bullet \rangle_{\RR^n}$ denote the standard inner product on $\RR^n$ normalized by $\frac{1}{n}$, namely $\langle x, y \rangle_{\RR^n} = \frac{1}{n} \langle x, y \rangle_2 = \frac{1}{n} \sum_{i = 1}^n x_i y_i.$  We will let $\norm{x}_{\RR^n} = \sqrt{\langle x, x \rangle_{\RR^n}}$ be the associated norm.  This normalized inner product has the convenient property that if $v \in \RR^n$ such that $v_i = O(1)$ for each $i$ then $\norm{v}_{\RR^n} = O(1)$, where by contrast $\norm{v}_2 = O(\sqrt{n})$.  This is convenient as we will often consider what happens when $n \rightarrow \infty$.  $\norm{\bullet}_\infty$ will denote the supremum norm with associated space $L^\infty$.  We will use the standard big $O$ and $\Omega$ notation with $\Tilde{O}$ and $\Tilde{\Omega}$ hiding logarithmic terms.

\subsection{Gradient Dynamics And The NTK Integral Operator}

We will let $f(x; \theta)$ denote our neural network taking input $x \in \RR^d$ and parameterized by $\theta \in \RR^p$.  The specific architecture of the network does not matter for the purposes of this section.  Our training data consists of $n$ input-label pairs $\{(x_1, y_1), \ldots, (x_n, y_n)\}$ where $x_i \in \RR^d$ and $y_i \in \RR$.  We focus on the setting where the labels are generated from a fixed target function $f^*$, i.e.\ $y_i = f^*(x_i)$.  We will concatenate the labels into a label vector $y \in \RR^n$, i.e.\ $y_i = f^*(x_i)$.  We will let $\hat{r}(\theta) \in \RR^n$ be the vector whose $i$th entry is equal to $f(x_i; \theta) - f^*(x_i)$.  Hence $\hat{r}(\theta)$ is the residual vector that measures the difference between our neural networks predictions and the labels.  We will be concerned with optimizing the squared loss 
\[ \Phi(\theta) = \frac{1}{2n} \norm{\hat{r}(\theta)}_2^2 = \frac{1}{2} \norm{\hat{r}(\theta)}_{\RR^n}^2.\]
Optimization will be done by gradient flow
\[\partial_t \theta_t = - \partial_\theta \Phi(\theta) , 
\]
which is the continuous time analog of gradient descent.  We will denote the residual at time $t$, $\hat{r}(\theta_t)$, as $\hat{r}_t$ for the sake of brevity and similarly we will let $f_t(x) = f(x; \theta_t)$.  We will let $r_t(x) := f_t(x) - f^*(x)$ denote the residual off of the training set for an arbitrary input $x$.  
\par
We quickly recall some facts about the Neural Tangent Kernel and its connection to the gradient dynamics.  For a comprehensive tutorial we suggest \citet{jacot2020neural}.  The analytical NTK is the kernel given by
\[ K^\infty(x, x') := \EE\brackets{\left\langle \frac{\partial f(x;\theta)}{\partial \theta}, \frac{\partial f(x';\theta)}{\partial \theta}\right\rangle_2} , 
\]
where the expectation is taken with respect to the parameter initialization for $\theta$.  We associate $K^\infty$ with the integral operator $T_{K^\infty}: L_\rho^2(X) \rightarrow L_\rho^2(X)$ defined by 
\[ T_{K^\infty} f (x) := \int_X K^\infty(x, s) f(s) d\rho(s), 
\]
where $X$ is our input space with probability measure $\rho$.  Our training data $x_i \in X$ are distributed according to this measure $x_i \sim \rho$.  By Mercer's theorem we can decompose
\[K^\infty(x, x') = \sum_{i = 1}^\infty \sigma_i \phi_i(x) \phi_i(x') , \]
where $\{\phi_i\}_{i = 1}^n$ is an orthonormal basis of $L^2$, $\{\sigma_i\}_{i = 1}^\infty$ is a nonincreasing sequence of positive values, and each $\phi_i$ is an eigenfunction of $T_{K^\infty}$ with eigenvalue $\sigma_i > 0$.  When $X = S^{d - 1}$ is the unit sphere, $\rho$ is the uniform distribution, and the weights of the network are from a rotation invariant distribution (e.g.\ standard Gaussian), $\{\phi_i\}_{i = 1}^\infty$ are the spherical harmonics (which in $d=2$ is the Fourier basis) due to $K^\infty$ being rotation-invariant \citep[see][Theorem 2.2]{bullins2018notsorandom}.  We will let $\kappa := \max_{x \in X} K^\infty(x, x)$ which will be a relevant quantity in our later theorems.  In our setting $\kappa$ will always be finite as $K^\infty$ will be continuous and $X$ will be bounded.  The training data inputs $\{x_1, \ldots, x_n\}$ induce a discretization of the integral operator $T_{K^\infty}$, namely
\[ T_n f(x) := \frac{1}{n} \sum_{i = 1}^n K^\infty(x, x_i) f(x_i) = \int_X K^\infty(x, s) f(s) d\rho_n(s),\]
where $\rho_n = \frac{1}{n} \sum_{i = 1}^n \delta_{x_i}$ is the empirical measure.  We recall the definition of the time-dependent $NTK$\footnote{The NTK is an overloaded term.  To help ameliorate the confusion, we will use NTK to describe $K^\infty$ and $NTK$ (italic font) to describe the time-dependent version $K_t$.}, 
\[ K_t(x, x') := \left\langle \frac{\partial f(x;\theta_t)}{\partial \theta}, \frac{\partial f(x';\theta_t)}{\partial \theta}\right\rangle_2 . \]
We can look at the version of $T_n$ corresponding to $K_t$, namely 
\[ 
T_n^t f(x) := \frac{1}{n} \sum_{i = 1}^n K_t(x, x_i) f(x_i) = \int_X K_t(x, s) f(s) d\rho_n(s) . 
\]
We recall that the residual $r_t(x) := f(x;\theta) - f^*(x)$ follows the update rule
\[ \partial_t r_t(x) = - \frac{1}{n} \sum_{i = 1}^n K_t(x, x_i) r_t(x_i) = -T_n^t r_t . 
\]
We will let $(H_t)_{i,j} := K_t(x_i, x_j)$ and $H_{i,j}^\infty := K^\infty(x_i, x_j)$ denote the Gram matrices induced by these kernels and we will let $G_t := \frac{1}{n} H_t$ and $G^\infty := \frac{1}{n} H^\infty$ be their normalized versions\footnote{$G_t$ and $G^\infty$ are the natural matrices to work with when working with the mean squared error as opposed to the unnormalized squared error.  Also $G^\infty$'s spectra concentrates around the spectrum of the associated integral operator $T_{K^\infty}$ and is thus a more convenient choice in our setting.}.  Throughout we will let $u_1, \ldots, u_n$ denote the eigenvectors of $G^\infty$ with corresponding eigenvalues $\lambda_1, \ldots, \lambda_n$.  The $u_1, \ldots, u_n$ are chosen to be orthonormal with respect to the inner product $\langle \bullet, \bullet \rangle_{\RR^n}$.  When restricted to the training set we have the update rule
\[ 
\partial_t \hat{r}_t = -\frac{1}{n} H_t \hat{r}_t = - G_t \hat{r}_t . 
\]

\subsection{Damped Deviations}
The concept of damped deviations comes from the very simple lemma that follows (the proof is provided in Appendix~\ref{sec:dampedontrain}).  The lemma compares the dynamics of the residual $\hat{r}(t)$ on the training set to the dynamics of an arbitrary kernel regression $\exp(-Gt) \hat{r}(0)$: 
\begin{restatable}{lem}{reseqnlem}\label{lem:res_eqn_lem}
Let $G \in \RR^{n \times n}$ be an arbitrary positive semidefinite matrix and let $G_s$ be the time dependent NTK matrix at time $s$.  Then
\[ \hat{r}_t = \exp(-G t) \hat{r}_0 + \int_0^t \exp(-G(t - s)) (G - G_s) \hat{r}_s ds. 
\]
\end{restatable}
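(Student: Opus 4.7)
The plan is to apply Duhamel's principle (the variation-of-parameters formula) to the linear ODE $\partial_t \hat{r}_t = -G_t \hat{r}_t$, using the constant matrix $G$ to define an integrating factor. The key observation is that the time-dependent generator $G_t$ can be split as $G_t = G - (G - G_t)$, so the dynamics read
\[
\partial_t \hat{r}_t + G\,\hat{r}_t = (G - G_t)\,\hat{r}_t,
\]
which is a constant-coefficient linear system with an inhomogeneous right-hand side (that itself depends on $\hat{r}_t$, but this is fine since we only need an integral identity, not a closed-form solution).

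First I would left-multiply by the matrix exponential $\exp(Gt)$. Since $G$ is constant in $t$, it commutes with $\exp(Gt)$, and one has $\partial_t\exp(Gt) = G\exp(Gt) = \exp(Gt)G$. Therefore the left-hand side becomes an exact derivative:
\[
\partial_t\bigl(\exp(Gt)\,\hat{r}_t\bigr) = \exp(Gt)\bigl(\partial_t \hat{r}_t + G\,\hat{r}_t\bigr) = \exp(Gt)(G - G_t)\,\hat{r}_t.
\]
Next I would integrate both sides from $0$ to $t$, yielding
\[
\exp(Gt)\,\hat{r}_t - \hat{r}_0 = \int_0^t \exp(Gs)(G - G_s)\,\hat{r}_s\,ds.
\]
Finally, left-multiplying by $\exp(-Gt)$ and using the semigroup property $\exp(-Gt)\exp(Gs) = \exp(-G(t-s))$ gives the claimed identity
\[
\hat{r}_t = \exp(-Gt)\,\hat{r}_0 + \int_0^t \exp(-G(t-s))(G - G_s)\,\hat{r}_s\,ds.
\]

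There is essentially no substantive obstacle: the argument is entirely the standard integrating-factor computation, relying only on (i) the evolution equation $\partial_t \hat{r}_t = -G_t \hat{r}_t$ recalled earlier in the paper, (ii) the fact that a constant matrix commutes with its own exponential, and (iii) the semigroup property of $\exp(Gt)$. The only point worth stating carefully is that $G$ is fixed (chosen arbitrarily but independent of $s$), so that $\exp(Gt)$ can genuinely be pulled through the derivative and no time-ordering issues arise; the matrix $G_s$, which does depend on time, is only acted on by the algebraic identity $G_s = G - (G - G_s)$ and never differentiated.
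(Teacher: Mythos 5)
Your proposal is correct and follows essentially the same route as the paper's proof: split the dynamics as $\partial_t \hat{r}_t = -G\hat{r}_t + (G - G_t)\hat{r}_t$, multiply by the integrating factor $\exp(Gt)$ (using that $G$ commutes with its own exponential), integrate via the fundamental theorem of calculus, and rearrange. No gaps.
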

Let's specialize the lemma to the case where $G = G^\infty $.  In this case the first term is $\exp(-G^\infty t) \hat{r}_0$, which is exactly the dynamics of the residual in the exact NTK regime when $G_t = G^\infty$ for all $t$.  The second term is a correction term that weights the NTK deviations $(G^\infty - G_s)$ by the damping factor $\exp(-G^\infty(t - s))$.  We see that damping is largest along the large eigendirections of $G^\infty$.  The equation becomes most interpretable when projected along a specific eigenvector.  Fix an eigenvector $u_i$ of $G^\infty$ corresponding to eigenvalue $\lambda_i$.  Then the equation along this component becomes
\[ \langle \hat{r}_t, u_i \rangle_{\RR^n} = \exp(-\lambda_i t) \langle \hat{r}_0, u_i \rangle_{\RR^n} + \int_0^t \langle \exp(-\lambda_i (t - s))(G^\infty - G_s) \hat{r}_s, u_i \rangle_{\RR^n} ds. \]
The first term above converges to zero at rate $\lambda_i$.  The second term is a correction term that weights the \text{deviatiations} of the $NTK$ matrix $G_s$ from $G^\infty$ by the damping factor $\exp(-\lambda_i (t - s))$.  The second term can be upper bounded by
\begin{gather*}
\abs{\int_0^t \langle \exp(-\lambda_i (t - s))(G^\infty - G_s) \hat{r}_s, u_i \rangle_{\RR^n} ds} \leq \int_0^t \exp(-\lambda_i (t - s))\norm{G^\infty - G_s}_{op} \norm{\hat{r}_s}_{\RR^n} ds \\
\leq \frac{[1 - \exp(-\lambda_i t)]}{\lambda_i} \sup_{s \in [0, t]}\norm{G^\infty - G_s}_{op} \norm{\hat{r}_0}_{\RR^n} , 
\end{gather*}
where we have used the property $\norm{\hat{r}_s}_{\RR^n} \leq \norm{\hat{r}_0}_{\RR^n}$ from gradient flow.
When $f^* = O(1)$ we have that $\norm{\hat{r}_0}_{\RR^n} = O(1)$, thus whenever $\norm{G^\infty - G_s}_{op}$ is small relative to $\lambda_i$ this term is negligible.  It has been identified that the NTK matrices tend to have a small number of outlier large eigenvalues and exhibit a low rank structure \citep{oymak2020generalization, arora2019finegrained}.  In light of this, the dependence of the above bound on the magnitude of $\lambda_i$ is particularly interesting.  We reach following important conclusion. 
\begin{obs}
\textit{The dynamics in function space will be similar to the NTK regime dynamics along eigendirections whose eigenvalues are large relative to the deviations of the time-dependent NTK matrix from the analytical NTK matrix.}
\end{obs}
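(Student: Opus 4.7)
My plan is to formalize the Observation into the following per-eigendirection quantitative inequality: for every eigenvector $u_i$ of $G^\infty$ with eigenvalue $\lambda_i$ and every $t \geq 0$,
\[
\abs{\langle \hat{r}_t, u_i \rangle_{\RR^n} - \exp(-\lambda_i t)\langle \hat{r}_0, u_i \rangle_{\RR^n}} \leq \frac{1 - \exp(-\lambda_i t)}{\lambda_i}\, \sup_{s \in [0, t]}\norm{G^\infty - G_s}_{op}\, \norm{\hat{r}_0}_{\RR^n}.
\]
Once this is in hand the Observation is immediate: the $i$th coordinate tracks the pure NTK flow $\exp(-\lambda_i t)\langle \hat{r}_0, u_i \rangle_{\RR^n}$ with additive error controlled by the \emph{ratio} $\sup_s \norm{G^\infty - G_s}_{op} / \lambda_i$, which is small exactly when $\lambda_i$ dominates the time-dependent $NTK$ deviations, while the pure NTK term itself is nontrivial for $t \lesssim 1/\lambda_i$.

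The derivation proceeds in three steps. First, I apply Lemma~\ref{lem:res_eqn_lem} with $G = G^\infty$, giving $\hat{r}_t = \exp(-G^\infty t)\hat{r}_0 + \int_0^t \exp(-G^\infty(t - s))(G^\infty - G_s)\hat{r}_s\, ds$. Second, I take the $\langle \cdot, u_i \rangle_{\RR^n}$ pairing of both sides. Because $G^\infty$ is symmetric with respect to $\langle \cdot, \cdot \rangle_{\RR^n}$ (it is a Gram matrix, and $\langle \cdot, \cdot \rangle_{\RR^n}$ differs from $\langle \cdot, \cdot \rangle_{2}$ only by the factor $1/n$) and $G^\infty u_i = \lambda_i u_i$, the adjoint action gives $\exp(-G^\infty s) u_i = \exp(-\lambda_i s) u_i$. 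Thus the first term contributes $\exp(-\lambda_i t)\langle \hat{r}_0, u_i\rangle_{\RR^n}$ and inside the integral the scalar $\exp(-\lambda_i(t-s))$ pulls out of the pairing.

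Third, I bound the correction term. Cauchy--Schwarz in the normalized inner product together with $\norm{u_i}_{\RR^n} = 1$ yields
\[
\abs{\langle (G^\infty - G_s)\hat{r}_s, u_i \rangle_{\RR^n}} \leq \norm{(G^\infty - G_s)\hat{r}_s}_{\RR^n} \leq \norm{G^\infty - G_s}_{op}\, \norm{\hat{r}_s}_{\RR^n},
\]
where the last step uses that $\norm{\cdot}_{op}$ induced by $\norm{\cdot}_{2}$ coincides with the one induced by $\norm{\cdot}_{\RR^n}$ since the $1/\sqrt{n}$ factors cancel. I then invoke monotonicity under gradient flow, $\partial_t \tfrac{1}{2}\norm{\hat{r}_t}_{\RR^n}^2 = -\langle \hat{r}_t, G_t \hat{r}_t \rangle_{\RR^n} \leq 0$ since $G_t$ is PSD as a Gram matrix, to replace $\norm{\hat{r}_s}_{\RR^n}$ by $\norm{\hat{r}_0}_{\RR^n}$ uniformly on $[0,t]$. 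Computing $\int_0^t \exp(-\lambda_i(t-s))\, ds = (1 - \exp(-\lambda_i t))/\lambda_i$ then closes the estimate.

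Each of these steps is essentially routine once Lemma~\ref{lem:res_eqn_lem} is available, and the only subtlety is keeping the normalizations of $\langle \cdot, \cdot \rangle_{\RR^n}$ versus $\langle \cdot, \cdot \rangle_{2}$ consistent when invoking the operator norm. The real obstacle, which this Observation deliberately sidesteps and which will reappear in the later theorems, is establishing a useful bound on $\sup_{s \in [0, t]} \norm{G^\infty - G_s}_{op}$: this requires an initialization concentration bound to compare $G_0$ with $G^\infty$ and a separate perturbation argument along the gradient flow trajectory to control $\norm{G_s - G_0}_{op}$. That quantitative control, rather than the eigendirection-wise comparison above, is where the bulk of the technical work will lie.
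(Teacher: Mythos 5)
Your proposal is correct and matches the paper's own argument: the paper establishes the Observation via exactly the same steps — specializing Lemma~\ref{lem:res_eqn_lem} to $G = G^\infty$, projecting onto an eigenvector $u_i$, bounding the correction term by $\frac{1-\exp(-\lambda_i t)}{\lambda_i}\sup_{s\leq t}\norm{G^\infty - G_s}_{op}\norm{\hat{r}_0}_{\RR^n}$ using $\norm{\hat{r}_s}_{\RR^n}\leq\norm{\hat{r}_0}_{\RR^n}$ from gradient flow. Your closing remark that the real work lies in bounding $\sup_{s}\norm{G^\infty-G_s}_{op}$ is also consistent with how the paper proceeds (Appendix~\ref{sec:ntkdevs}).
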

\par
The equation in Lemma~\ref{lem:res_eqn_lem} concerns the residual restricted to the training set, but we will be interested in the residual for arbitrary inputs.  Recall that $r_t(x) = f(x;\theta_t) - f^*(x)$ denotes the residual at time $t$ for an arbitrary input.  Then more generally we have the following damped deviations lemma for the whole residual (proved in Appendix~\ref{sec:dampeddev}). 
\begin{restatable}{lem}{dampeddevfunc}\label{lem:dampeddevfunc}
Let $K(x, x')$ be an arbitrary continuous, symmetric, positive-definite kernel.  Let $[T_{K} h](\bullet) = \int_X K(\bullet, s) h(s) d\rho(s)$ be the integral operator associated with $K$ and let $[T_n^s h](\bullet) = \frac{1}{n} \sum_{i = 1}^n K_s(\bullet, x_i) h(x_i)$ denote the operator associated with the time-dependent $NTK$ $K_s$.  Then
\[r_t = \exp(-T_{K}t)r_0 + \int_0^t \exp(-T_{K}(t - s))(T_{K} - T_n^s)r_s ds , 
\]
where the equality is in the $L^2$ sense.
\end{restatable}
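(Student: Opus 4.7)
The plan is to treat this as a variation-of-constants (Duhamel) formula for a linear inhomogeneous ODE in the Hilbert space $L^2_\rho(X)$, in direct analogy with how one proves the finite-dimensional statement in Lemma~\ref{lem:res_eqn_lem}.

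First I would derive the governing $L^2$-valued ODE for $r_t$. Since $f_t$ evolves by gradient flow on the squared loss, for each $x \in X$
\[ \partial_t f_t(x) = -\left\langle \partial_\theta f(x;\theta_t), \partial_\theta \Phi(\theta_t) \right\rangle_2 = -\frac{1}{n}\sum_{i=1}^n K_t(x, x_i) r_t(x_i) = -[T_n^t r_t](x),\]
and since $f^*$ is time-independent we get $\partial_t r_t = -T_n^t r_t$. Adding and subtracting $T_K r_t$ rewrites this as $\partial_t r_t = -T_K r_t + (T_K - T_n^t) r_t$, which is a linear ODE in $L^2$ with an inhomogeneous forcing $(T_K - T_n^t) r_t$.

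Next I would set up the semigroup $\{\exp(-T_K t)\}_{t \ge 0}$. Because $K$ is continuous, symmetric, and positive-definite on the compact input space $X$, the operator $T_K$ is bounded, self-adjoint, and positive on $L^2_\rho(X)$, so $\exp(-T_K t)$ is well defined via spectral calculus (or equivalently via the norm-convergent power series) and forms a contractive $C_0$-semigroup with $\partial_t \exp(-T_K t) = -T_K \exp(-T_K t)$. I would then apply variation of constants: define
\[\psi_t := r_t - \int_0^t \exp(-T_K(t - s))(T_K - T_n^s) r_s \, ds.\]
Differentiating in $L^2$, with the Leibniz rule applied to the integral,
\[\partial_t \psi_t = -T_n^t r_t - (T_K - T_n^t) r_t + T_K \int_0^t \exp(-T_K(t - s))(T_K - T_n^s) r_s \, ds = -T_K r_t + T_K(r_t - \psi_t) = -T_K \psi_t.\]
Since the integral vanishes at $t=0$, we have $\psi_0 = r_0$, and uniqueness for the linear $L^2$-ODE $\partial_t \psi_t = -T_K \psi_t$ gives $\psi_t = \exp(-T_K t) r_0$. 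Rearranging yields the claimed identity.

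The main obstacle is justifying the calculus in $L^2$ rather than pointwise. Specifically, one needs $t \mapsto r_t$ to be strongly continuous and strongly differentiable in $L^2_\rho(X)$, and the integrand $s \mapsto \exp(-T_K(t-s))(T_K - T_n^s) r_s$ to be Bochner-integrable with the Leibniz rule applicable. Strong continuity and differentiability follow from smoothness of the activation together with compactness of $X$, which make $\theta \mapsto f(\cdot\,;\theta)$ continuously differentiable into $L^\infty(X) \hookrightarrow L^2_\rho(X)$ along the gradient-flow trajectory. The Leibniz rule is then justified by the uniform-in-$s$ bound $\|(T_K - T_n^s) r_s\|_{L^2} \le (\|T_K\|_{op} + \sup_{s\in[0,t]}\|T_n^s\|_{op}) \cdot \|\hat{r}_0\|_{\mathbb{R}^n}\cdot C$, all of which are finite on the compact interval $[0,t]$. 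Once these technicalities are discharged, the algebraic identity above delivers the lemma.
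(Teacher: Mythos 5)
Your proof is correct, but it takes a genuinely different route from the paper's. The paper never works with the strong ($L^2$-valued) derivative of $r_t$ or with Bochner integrals: it defines $\int_0^t g_s\,ds$ coordinatewise against the Mercer eigenbasis $\{\phi_i\}$ of $T_K$, projects the pointwise identity $\partial_s r_s = -T_n^s r_s$ onto each $\phi_i$ (justifying $\frac{d}{ds}\langle r_s,\phi_i\rangle_2=\langle \partial_s r_s,\phi_i\rangle_2$ by differentiation under the integral sign, using that $\norm{\partial_s r_s}_\infty\le \norm{K_s}_\infty\norm{\hat r_0}_{\RR^n}$ is locally bounded in $s$), and then solves the resulting scalar ODE $\frac{d}{ds}\langle r_s,\phi_i\rangle_2+\sigma_i\langle r_s,\phi_i\rangle_2=\langle (T_K-T_n^s)r_s,\phi_i\rangle_2$ with the integrating factor $e^{\sigma_i s}$; the lemma is then exactly this family of coordinatewise identities, so no semigroup theory or uniqueness theorem is needed and the conclusion matches the paper's coordinatewise definition of the integral by construction. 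Your route instead forms $\psi_t = r_t-\int_0^t e^{-T_K(t-s)}(T_K-T_n^s)r_s\,ds$, checks $\partial_t\psi_t=-T_K\psi_t$ with $\psi_0=r_0$, and invokes uniqueness for the linear ODE with bounded generator $T_K$; this is a clean, standard Duhamel argument, and your algebra is right, but it obliges you to actually establish strong $L^2$-differentiability of $t\mapsto r_t$, Bochner integrability of the integrand, and the Leibniz rule, which the paper's weaker coordinatewise formulation sidesteps. One small imprecision in your technical paragraph: the bound $\norm{(T_K-T_n^s)r_s}_2\le(\norm{T_K}_{op}+\sup_s\norm{T_n^s}_{op})\norm{\hat r_0}_{\RR^n}\cdot C$ is not right as written, since $\norm{T_K r_s}_2$ is controlled by $\norm{r_s}_2$ rather than by the empirical residual; what actually saves the argument is that $\norm{r_s}_\infty$ (hence $\norm{r_s}_2$) is locally bounded in $s$ because $\norm{\partial_u r_u}_\infty\le\norm{K_u}_\infty\norm{\hat r_0}_{\RR^n}$ is locally bounded, which is precisely the estimate the paper uses.
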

For our main results we will specialize the above lemma to the case where $K = K^\infty$.  However there are other natural kernels to compare against, say $K_0$ or the kernel corresponding to some subset of parameters.  We will elaborate further on this point after we introduce the main theorem. When specializing Lemma $\ref{lem:dampeddevfunc}$ to the case $K = K^\infty$, we have that $T_{K^\infty}$ and $T_n^s$ are the operator analogs of $G^\infty$ and $G_s$ respectively.  From this statement the same concepts holds as before, the dynamics of $r_t$ will be similar to that of $\exp(-T_{K^\infty} t) r_0$ along eigendirections whose eigenvalues are large relative to the deviations $(T_{K^\infty} - T_n^s)$.  In the underparameterized regime we can bound the second term and make it negligible (Theorem \ref{thm:mainunderparam}) and thus demonstrate that the eigenfunctions $\phi_i$ of $T_{K^\infty}$ with eigenvalues $\sigma_i$ will be learned at rate $\sigma_i$.  When the input data are distributed uniformly on the sphere $S^{d - 1}$ and the network weights are from a rotation-invariant distribution, the eigenfunctions of $T_{K^\infty}$ are the spherical harmonics (which is the Fourier basis when $d = 2$).  In this case the network is biased towards learning the spherical harmonics that correspond to large eigenvalues of $T_{K^\infty}$.  It is in this vein that we will demonstrate a spectral bias. 

\section{Main Results} 
Our theorems will concern the shallow neural network
\[ f(x;\theta) = \frac{1}{\sqrt{m}} \sum_{\ell = 1}^m a_\ell \sigma(\langle w_\ell, x \rangle_2 + b_\ell) + b_0 = \frac{1}{\sqrt{m}}a^T \sigma(Wx + b) + b_0 , 
\]
where $W \in \RR^{m \times d}$, $a, b \in \RR^m$ and $b_0 \in \RR$ and $w_\ell = W_{\ell, :}$ denotes the $\ell$th row of $W$ and $\sigma: \RR \rightarrow \RR$ is applied entry-wise.  $\theta = (a^T, vec(W)^T, b^T, b_0)^T \in \RR^p$ where $p = md + 2m + 1$ is the total number of parameters.  Here we are utilizing the NTK parameterization \citep{jacot2020neural}.  For a thorough analysis using the standard parameterization we suggest \citet{weinanstandardparam}.  We will consider two parameter initialization schemes.  The first initializes $W_{i, j}(0) \sim \mathcal{W}$, $b_\ell(0) \sim \mathcal{B}$, $a_\ell(0) \sim \mathcal{A}$, $b_0 \sim \mathcal{B}'$ i.i.d., where $\mathcal{W}, \mathcal{B}, \mathcal{A}, \mathcal{B}'$ represent zero-mean unit variance subgaussian distributions.  In the second initialization scheme we initialize the parameters according to the first scheme and then perform the following swaps $W(0) \rightarrow \begin{bmatrix} W(0) \\ W(0) \end{bmatrix}$, $b(0) \rightarrow \begin{bmatrix} b(0) \\ b(0) \end{bmatrix}$, $a(0) \rightarrow \begin{bmatrix} a(0) \\ -a(0) \end{bmatrix}, b_0 \rightarrow 0$ and replace the $\frac{1}{\sqrt{m}}$ factor in the parameterization with $\frac{1}{\sqrt{2m}}$.  This is called the ``doubling trick" \citep{chizatlazytraining, generalizationinducedbyinit} and ensures that the network is identically zero $f(x;\theta_0) \equiv 0$ at initialization.  We will explicitly state where we use the second scheme and otherwise will be using the first scheme.
\par The following assumptions will persist throughout the rest of the paper:
\begin{ass}\label{ass:activationfn}
$\sigma$ is a $C^2$ function satisfying $\norm{\sigma'}_\infty, \norm{\sigma''}_\infty < \infty$. 
\end{ass}
\begin{ass}\label{ass:bounded}
The inputs satisfy $\norm{x}_2 \leq M$. 
\end{ass}
The following assumptions will be used in most, but not all theorems.  We will explicitly state when they apply.
\begin{ass}\label{ass:compactdomain}
The input domain $X$ is compact with strictly positive Borel measure $\rho$.
\end{ass}
\begin{ass}\label{ass:positivekernel}
$T_{K^\infty}$ is strictly positive, i.e., $\langle f, T_{K^\infty} f \rangle_{2} > 0$ for $f \neq 0$. 
\end{ass}
Most activation functions other than ReLU satisfy Assumption \ref{ass:activationfn}, such as Softplus $\sigma(x) = \ln(1 + e^x)$, Sigmoid $\sigma(x) = \frac{1}{1 + e^{-x}}$, and Tanh $\sigma(x) = \frac{e^{x} - e^{-x}}{e^x + e^{-x}}$.  
Assumption~\ref{ass:bounded} is a mild assumption which is satisfied for instance for RGB images and has been commonly used \citep{du2018gradient, du2019gradient, oymak2019moderate}.  Assumption ~\ref{ass:compactdomain} is so that Mercer's decomposition holds, which is often assumed implicitly.  Assumption~\ref{ass:positivekernel} is again a mild assumption that is satisfied for a broad family of parameter initializations (e.g.\ Gaussian) anytime $\sigma$ is not a polynomial function, as we will show in Appendix~\ref{sec:tkpositive}.  
Assumption \ref{ass:positivekernel} is not strictly necessary but it simplifies the presentation by ensuring $T_{K^\infty}$ has no zero eigenvalues.
\par
We will track most constants that depend on parameters of our theorems such as $M$, the activation function $\sigma$, and the target function $f^*$.  However, constants appearing in concentration inequalities such as Hoeffding's or Bernstein's inequality or constants arising from $\delta / 2$ or $\delta / 3$ arguments will not be tracked.  We will reserve $c, C > 0$ for untracked constants whose precise meaning can vary from statement to statement.  In the proofs in the appendix it will be explicit which constants are involved.

\subsection{Underparameterized Regime}

Our main result compares the dynamics of the residual $r_t(x) = f(x;\theta_t) - f^*(x)$ to that of $\exp(-T_{K^\infty}t)r_0$ in the underparameterized setting.  Note that $\langle \exp(-T_{K^\infty}t)r_0, \phi_i \rangle_2 = \exp(-\sigma_i t) \langle r_0, \phi_i \rangle_2$, thus $\exp(-T_{K^\infty} t) r_0$ learns the eigenfunctions $\phi_i$ of $T_{K^\infty}$ at rate $\sigma_i$.  Therefore $\exp(-T_{K^\infty}t) r_0$ exhibits a bias to learn the eigenfunctions of $T_{K^\infty}$ corresponding to large eigenvalues more quickly.  To our knowledge no one has been able to rigorously relate the dynamics in function space of the residual $r_t$ to $\exp(-T_{K^\infty} t) r_0$, although that seems to be what is suggested by \citet{basri2019convergence, basri2020frequency}.  The existing works we are aware of \citep{arora2019finegrained, basri2020frequency, cao2020understanding} characterize the bias of the empirical residual primarily in the heavily overparameterized regime (\citet{cao2020understanding} stands out as requiring wide but not necessarily overparameterized networks).  By contrast, we characterize the bias of the whole residual in the underparameterized regime. 

\begin{restatable}{theo}{mainunderparam}\label{thm:mainunderparam}
Assume that Assumptions \ref{ass:compactdomain} and \ref{ass:positivekernel} hold.  Let $P_k$ be the orthogonal projection in $L^2$ onto $\text{span}\{\phi_1, \ldots, \phi_k\}$ and let $D := 3 \max\{ |\sigma(0)|, M \norm{\sigma'}_{\infty}, \norm{\sigma'}_{\infty}, 1\}$.  
If we are doing the doubling trick set $S' = 0$ and otherwise set $S' = O\parens{\sqrt{\Tilde{O}(d) + \log(c/\delta)}}, \hspace{3mm} S = \norm{f^*}_\infty + S'$.  Also let $T > 0$.  Assume $m \geq D^2 \norm{y}_{\RR^n}^2 T^2$, and $m \geq O(\log(c/\delta) + \Tilde{O}(d)) \max\braces{T^2, 1}$.  Then with probability at least $1 - \delta$ we have that for all $t \leq T$ and $k \in \NN$
\begin{gather*}
\norm{P_k(r_t - \exp(-T_{K^\infty}t)r_0)}_2
\leq \frac{1 - \exp(-\sigma_k t)}{\sigma_k}  \Tilde{O}\parens{S\brackets{1 + t S}\frac{\sqrt{d}}{\sqrt{m}} + S (1 + T) \frac{ \sqrt{p}}{\sqrt{n}}}
\end{gather*}
and
\[ \norm{r_t - \exp(-T_{K^\infty} t)r_0}_2 \leq t  \Tilde{O}\parens{S\brackets{1 + t S}\frac{\sqrt{d}}{\sqrt{m}} + S (1 + T) \frac{\sqrt{p}}{\sqrt{n}}} . \]
\end{restatable}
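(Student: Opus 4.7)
The plan is to invoke Lemma~\ref{lem:dampeddevfunc} with $K = K^\infty$, which gives
\[ r_t - \exp(-T_{K^\infty} t) r_0 = \int_0^t \exp(-T_{K^\infty}(t - s))(T_{K^\infty} - T_n^s) r_s \, ds. \]
Projecting onto the first $k$ eigenspaces and using $\norm{P_k \exp(-T_{K^\infty}(t - s))}_{op} \leq \exp(-\sigma_k (t - s))$, which holds since $\sigma_k$ is the smallest eigenvalue of $T_{K^\infty}$ on the range of $P_k$, reduces the first display to a uniform-in-$s$ bound on $\norm{(T_{K^\infty} - T_n^s) r_s}_2$; integrating the damping factor then produces the $(1 - \exp(-\sigma_k t))/\sigma_k$ prefactor. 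The unprojected bound follows from the same computation with $\exp(-\sigma_k(t-s))$ replaced by the trivial bound $1$, which integrates to the leading factor of $t$.

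To control the integrand I would split
\[ T_{K^\infty} - T_n^s = (T_{K^\infty} - T_n) + (T_n - T_n^s), \]
where $T_n$ is the empirical integral operator built from the analytical kernel $K^\infty$. The first term is $s$-independent and falls to standard kernel concentration, producing the $\tilde{O}(S(1+T)\sqrt{p}/\sqrt{n})$ piece of the final bound, with the $\sqrt{p}$ reflecting the parametric complexity of the class of realisable residuals $r_s$ (which themselves depend on the training data through gradient flow). The second term, written as $(T_n - T_n^s) r_s(x) = \frac{1}{n}\sum_i (K^\infty(x, x_i) - K_s(x, x_i)) r_s(x_i)$, has $L^2$ norm bounded by $\norm{K^\infty - K_s}_\infty \cdot \norm{r_s}_{\rho_n}$. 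The kernel drift is the heart of the argument: writing $\norm{K^\infty - K_s}_\infty \leq \norm{K^\infty - K_0}_\infty + \norm{K_0 - K_s}_\infty$, the first term is handled by Bernstein concentration over the $m$ independent hidden units together with a net argument on $X \times X$, giving $\tilde{O}(\sqrt{d/m})$. The second term is routed through parameter movement: gradient flow together with the free monotonicity $\norm{\hat{r}_u}_{\RR^n} \leq \norm{\hat{r}_0}_{\RR^n} \leq \tilde{O}(S)$ (the latter following from $\norm{f^*}_\infty \leq S$ and the sub-gaussian control of $f(\cdot;\theta_0)$ captured by $S'$, or $f(\cdot;\theta_0) \equiv 0$ under the doubling trick) and a Jacobian bound from $\norm{\sigma'}_\infty < \infty$ gives $\norm{\theta_s - \theta_0}_2 \leq \tilde{O}(sS)$; Lipschitzness of the NTK in parameters (from $\norm{\sigma''}_\infty < \infty$ and the $1/\sqrt{m}$ scaling of the NTK parameterization) then yields $\norm{K_0 - K_s}_\infty \leq \tilde{O}(sS/\sqrt{m})$. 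The hypotheses $m \geq D^2 \norm{y}_{\RR^n}^2 T^2$ and $m \geq (\log(c/\delta) + \tilde{O}(d))\max\{T^2,1\}$ are calibrated exactly so that both contributions to the drift are small.

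Assembling, one obtains $\norm{(T_{K^\infty} - T_n^s) r_s}_2 \leq \tilde{O}(S[1 + sS]\sqrt{d}/\sqrt{m} + S(1+T)\sqrt{p}/\sqrt{n})$ after additionally verifying $\norm{r_s}_\infty \leq \tilde{O}(S)$ via the same parameter-movement estimate combined with Lipschitzness of $f$ in $\theta$. Integrating this against $\exp(-\sigma_k(t-s))$ on $[0,t]$ yields the first display, and integrating against $1$ yields the second. The main technical obstacle is the self-consistent character of the argument: controlling the NTK drift requires a bound on $\norm{r_s}$, but $\norm{r_s}$ itself depends on how close $K_s$ stays to $K^\infty$. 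This is resolved by exploiting the fact that $\norm{\hat{r}_s}_{\RR^n}$ is monotonically decreasing under gradient flow independently of any NTK concentration, which supplies a free a priori bound on parameter movement that can then be fed into the off-training-set analysis. Carefully tracking the $S$, $T$, $m$, $n$, $d$ dependencies through the Bernstein and covering steps with only logarithmic precision loss is the main bookkeeping burden.
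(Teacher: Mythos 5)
Your proposal is correct and follows essentially the same route as the paper: the damped-deviations identity with $K = K^\infty$, the split $T_{K^\infty}-T_n^s = (T_{K^\infty}-T_n)+(T_n-T_n^s)$, Bernstein-plus-net control of $\sup|K_0-K^\infty|$, a parameter-norm (Gr\"onwall-type) argument for the kernel drift in time with $\|\hat r_s\|_{\RR^n}\le\|\hat r_0\|_{\RR^n}$ supplying the a priori control, and a covering-number uniform-convergence bound over the parametric class containing the residuals, which is exactly where the $\sqrt{p}/\sqrt{n}$ term comes from. The only cosmetic difference is that you bound $\sup|K_0-K_s|$ via parameter displacement combined with Lipschitzness of the NTK in $\theta$, whereas the paper bounds $\sup|\partial_t K_t|$ directly and integrates in time --- the same estimate assembled in a different order.
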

Theorem \ref{thm:mainunderparam} will be proved in Appendix \ref{sec:underparamproofs}.  The proof uses the uniform deviation bounds for the $NTK$ to bound $T_n - T_n^s$ and tools from empirical process theory to show convergence of $T_n$ to $T_{K^\infty}$ uniformly over a class of functions corresponding to networks with bounded parameter norms.
\par
To interpret the results, we observe that to track the dynamics for eigenfunctions corresponding to eigenvalue $\sigma_k$ and above, the expression under the $\Tilde{O}$ needs to be small relative to $\frac{1}{\sigma_k}$.  Thus the bias towards learning the eigenfunctions corresponding to large eigenvalues appears more pronounced.  When $t = \log(\norm{r_0}_2 / \epsilon) / \sigma_k$, we have that $\norm{P_k\exp(-T_{K^\infty}t) r_0}_2 \leq \epsilon$.  Thus by applying this stopping time we get that to learn the eigenfunctions corresponding to eigenvalue $\sigma_k$ and above up to $\epsilon$ accuracy we need $\frac{t^2}{\sqrt{m}} \lesssim \epsilon$ and $\frac{t^2\sqrt{p}}{\sqrt{n}} \lesssim \epsilon$ which translates to $m \gtrsim \sigma_k^{-4} \epsilon^{-2}$ and $n \gtrsim p \sigma_k^{-4} \epsilon^{-2}$.  In typical NTK works the width $m$ needs to be polynomially large relative to the number of samples $n$, where by contrast here the width depends on the inverse of the eigenvalues for the relevant components of the target function.  From an approximation point-of-view this makes sense; the more complicated the target function the more expressive the model must be.  We believe future works can adopt more precise requirements on the width $m$ that do not require growth relative to the number of samples $n$.  To further illustrate the scaling of the parameters required by Theorem \ref{thm:mainunderparam}, we can apply Theorem \ref{thm:mainunderparam} for an appropriate stopping time to get a bound on the test error.
\begin{restatable}{cor}{underparamcor}\label{cor:underparamcor}
Assume Assumptions \ref{ass:compactdomain} and \ref{ass:positivekernel} hold.  Suppose that $f^* = O(1)$ and assume we are performing the doubling trick where $f_0 \equiv 0$ so that $r_0 = -f^*$.  Let $k \in \NN$ and let $P_k$ be the orthogonal projection onto $\text{span}\{\phi_1, \ldots, \phi_k\}$.  Set $t = \frac{\log(\sqrt{2}\norm{P_k f^*}_2 / \epsilon^{1/2})}{\sigma_k}$
Then we have that $m = \Tilde{\Omega}(\frac{d}{\epsilon \sigma_k^4})$ and $n = \Tilde{\Omega}\parens{\frac{p}{\sigma_k^4 \epsilon}}$ suffices to ensure with probability at least $1 - \delta$
\[\frac{1}{2}\norm{r_t}_2^2 \leq  2 \epsilon + 2 \norm{(I - P_k)f^*}_2^2. \]
\end{restatable}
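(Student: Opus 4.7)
The plan is to control $\tfrac12\norm{r_t}_2^2$ by decomposing $L^2(\rho)$ into the span of the top $k$ eigenfunctions of $T_{K^\infty}$ and its orthogonal complement, and on each subspace comparing $r_t$ to the ``ideal'' trajectory $\exp(-T_{K^\infty}t)r_0$ via Theorem~\ref{thm:mainunderparam}. Concretely, since $P_k$ is an $L^2$ orthogonal projection, write
\[
\norm{r_t}_2^2 \;=\; \norm{P_k r_t}_2^2 + \norm{(I-P_k) r_t}_2^2.
\]
On each piece, insert $\pm \exp(-T_{K^\infty}t)r_0$ and apply the elementary inequality $(a+b)^2\le 2a^2+2b^2$, which yields four terms: two ``ideal'' terms $\norm{P_k \exp(-T_{K^\infty}t)r_0}_2$ and $\norm{(I-P_k)\exp(-T_{K^\infty}t)r_0}_2$, and two ``deviation'' terms $\norm{P_k(r_t-\exp(-T_{K^\infty}t)r_0)}_2$ and $\norm{(I-P_k)(r_t-\exp(-T_{K^\infty}t)r_0)}_2$.

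The two ideal terms can be computed explicitly because $P_k$ and $\exp(-T_{K^\infty}t)$ commute (they are simultaneously diagonalised by $\{\phi_i\}$), and because the doubling trick gives $r_0=-f^*$. On $\mathrm{span}\{\phi_1,\dots,\phi_k\}$ the operator $\exp(-T_{K^\infty}t)$ contracts by at least $e^{-\sigma_k t}$, so
\[
\norm{P_k \exp(-T_{K^\infty}t)r_0}_2^2 \le e^{-2\sigma_k t}\norm{P_k f^*}_2^2 \;=\; \epsilon/2
\]
with the prescribed choice $t=\log(\sqrt2\,\norm{P_k f^*}_2/\sqrt\epsilon)/\sigma_k$, while on the complement $\exp(-T_{K^\infty}t)$ is merely a contraction, giving
$\norm{(I-P_k)\exp(-T_{K^\infty}t)r_0}_2\le \norm{(I-P_k)f^*}_2$.

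For the deviation terms I apply Theorem~\ref{thm:mainunderparam} with $T=t$. Under the doubling trick $S'=0$, and since $f^*=O(1)$ we have $S=\norm{f^*}_\infty=O(1)$; the stopping time satisfies $t = \tilde O(1/\sigma_k)$ because the logarithm is absorbed into the $\tilde O$. Using the first bound of the theorem and $\frac{1-e^{-\sigma_k t}}{\sigma_k}\le \sigma_k^{-1}$ gives
\[
\norm{P_k(r_t-\exp(-T_{K^\infty}t)r_0)}_2 \;=\; \tilde O\!\left(\tfrac{1}{\sigma_k^2}\Big[\tfrac{\sqrt d}{\sqrt m}+\tfrac{\sqrt p}{\sqrt n}\Big]\right),
\]
and the second bound of the theorem gives the same order for $\norm{r_t-\exp(-T_{K^\infty}t)r_0}_2$, after using $t\cdot(1+tS)=\tilde O(1/\sigma_k^2)$. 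Squaring yields a deviation-squared of order $\tilde O(\sigma_k^{-4}[d/m+p/n])$, so the scalings $m=\tilde\Omega\!\big(d/(\epsilon\sigma_k^4)\big)$ and $n=\tilde\Omega\!\big(p/(\epsilon\sigma_k^4)\big)$ drive both deviation terms to be at most $\epsilon/4$.

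Collecting everything,
\[
\norm{r_t}_2^2 \le 2\cdot\tfrac{\epsilon}{2} + 2\cdot\tfrac{\epsilon}{4} + 2\cdot\tfrac{\epsilon}{4} + 2\norm{(I-P_k)f^*}_2^2 \;=\; 2\epsilon + 2\norm{(I-P_k)f^*}_2^2,
\]
which after halving matches the target. The last thing to check is that the stated overparameterisation implies Theorem~\ref{thm:mainunderparam}'s prerequisites $m\ge D^2\norm{y}_{\RR^n}^2 T^2$ and $m\ge O(\log(c/\delta)+\tilde O(d))\max\{T^2,1\}$; with $D,\norm{y}_{\RR^n}=O(1)$ and $T=\tilde O(1/\sigma_k)$, these reduce to $m=\tilde\Omega(d/\sigma_k^2)$, which is implied by $m=\tilde\Omega(d/(\epsilon\sigma_k^4))$ whenever $\epsilon\sigma_k^2\le 1$. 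There is no real obstacle beyond bookkeeping here; the main point is picking $t$ so that the ideal top-$k$ error is exactly $\sqrt{\epsilon/2}$, and then verifying that the deviation rate $\sigma_k^{-2}(\sqrt{d/m}+\sqrt{p/n})$ from Theorem~\ref{thm:mainunderparam} evaluated at this stopping time matches the claimed $\sigma_k^{-4}\epsilon^{-1}$ scalings on $m$ and $n$.
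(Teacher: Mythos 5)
Your proposal is correct and follows essentially the same route as the paper's proof: the same choice of stopping time $t$, the same explicit computation showing $\norm{\exp(-T_{K^\infty}t)r_0}_2^2$ contributes $\epsilon/2 + \norm{(I-P_k)f^*}_2^2$, the same application of Theorem~\ref{thm:mainunderparam} with $T=t$ (including verifying its width hypotheses, where you are in fact slightly more careful in noting $\epsilon\sigma_k^2\le 1$), and the same $\Tilde\Omega$ scalings of $m$ and $n$ to make the deviation term of order $\sqrt{\epsilon}$. The only cosmetic difference is that you split $\norm{r_t}_2^2$ orthogonally into $P_k$ and $I-P_k$ pieces and invoke both bounds of the theorem, whereas the paper uses a single triangle inequality with only the whole-residual bound; this changes nothing substantive.
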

If one specialized to the case where $f^*$ is a finite sum of eigenfunctions of $T_{K^\infty}$ (when the data is uniformly distributed on the sphere $S^{d - 1}$ and the network weights are from a rotation invariant distribution this corresponds to a finite sum of spherical harmonics, which in $d=2$ is equivalently a bandlimited function) one can choose $k$ such that $\norm{(I - P_k)f^*}_2^2 = 0$.  It is interesting to note that in this special case gradient flow with early stopping achieves essentially the same rates with respect to $m$ and $n$ (up to constants and logarithms) as the estimated network in the classical approximation theory paper by \citet{barron1994approximation}.  It is also interesting to note that the approximation results by \citet{barron1994approximation} depend on the decay in frequency domain of the target function $f^*$ via their constant $C_{f^*}$, and similarly for us the constant $1/\sigma_k^4$ grows with the bandwidth of the target function in the case of uniform distribution on the sphere $S^1$ which we mentioned parenthetically above.
\par
While in Theorem \ref{thm:mainunderparam} we compared the dynamics of $r_t$ against that of $\exp(-T_{K^\infty} t) r_0$, the damped deviations equation given by Lemma \ref{lem:dampeddevfunc} enables you to compare against $\exp(-T_{K} t) r_0$ for an arbitrary kernel $K$.  There are other natural choices for $K$ besides $K = K^\infty$, the most obvious being $K = K_0$.  In Appendix~\ref{sec:deterinit} we prove a version of Theorem \ref{thm:mainunderparam} where $K = K_0$ and $\theta_0$ is an arbitrary deterministic parameter initialization.  This could be interesting in scenarios where the parameters are initialized from a pretrained network or one has a priori knowledge that informs the selection of $\theta_0$.  One could let $K$ be the kernel corresponding to some subset of parameters, such as the random feature kernel \citep{randomfeaturesrahimi} corresponding to the outer layer.  This would compare the dynamics of training all layers to that of training a subset of the parameters.  If one wanted to account for adaptations of the kernel $K_t$ one could try to set $K = K_{t_0}$ for some $t_0 > 0$.  However since $\theta_{t_0}$ depends on the training data it is not obvious how one could produce a bound for $T_n^s - K_{t_0}$.  Nevertheless we leave the suggestion open as a possibility for future work.

\subsection{Overparameterized Regime} 
Once one has deviation bounds for the $NTK$ so that the quantity $\norm{G^\infty - G_s}_{op}$ is controlled, the damped deviations equation (Lemma \ref{lem:res_eqn_lem}) allows one to control the dynamics of the empirical risk.  In this section we will demonstrate three such results that follow from this approach.  The following is our analog of Theorem 4.1 from \citet{arora2019finegrained} in our setting, proved in Appendix \ref{sec:aroraproof}.  The result demonstrates that when the network is heavily overparameterized, the dynamics of the residual $\hat{r}_t$ follow the NTK regime dynamics $\exp(-G^\infty t) \hat{r}_0$. 
\begin{restatable}{theo}{aroraanalog}\label{thm:aroraanalog}
Assume $m = \Tilde{\Omega}(d n^5 \epsilon^{-2} \lambda_n(H^\infty)^{-4})$  and 
$m \geq O(\log(c/\delta) + \Tilde{O}(d))$
and $f^* = O(1)$.  Assume we are performing the doubling trick so that $\hat{r}_0 = - y$.  Let $v_1, \ldots, v_n$ denote the eigenvectors of $G^\infty$ normalized to have unit L2 norm $\norm{v_i}_2 = 1$.  Then with probability at least $1 - \delta$
\[ \hat{r}_t = \exp(-G^\infty t)(-y) + \delta(t), \]
where $\sup_{t \geq 0} \norm{\delta(t)}_{2} \leq \epsilon$.  In particular
\[ \norm{\hat{r}_t}_{2} = \sqrt{\sum_{i = 1}^n \exp(-2\lambda_i t) |\langle y, v_i \rangle_2|^2} \pm \epsilon. \]
\end{restatable}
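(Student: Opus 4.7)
The plan is to apply the damped-deviations Lemma~\ref{lem:res_eqn_lem} with $G = G^\infty$ and the doubling-trick initialization $\hat{r}_0 = -y$, so that
\[ \hat{r}_t = \exp(-G^\infty t)(-y) + \delta(t), \qquad \delta(t) := \int_0^t \exp(-G^\infty(t-s))\,(G^\infty - G_s)\,\hat{r}_s \,ds. \]
The whole problem then reduces to producing a uniform bound $\sup_{t \geq 0} \norm{\delta(t)}_2 \leq \epsilon$. The second display of the theorem then follows immediately from the triangle inequality applied in the orthonormal eigenbasis $\{v_i\}$ of $G^\infty$, since $\norm{\exp(-G^\infty t)y}_2^2 = \sum_i \exp(-2\lambda_i t)|\langle y, v_i \rangle_2|^2$.

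To bound $\norm{\delta(t)}_2$, I would pass the norm under the integral and estimate three factors separately. First, since $G^\infty$ is PSD with smallest eigenvalue $\lambda_n(G^\infty) = \lambda_n(H^\infty)/n$, the spectral calculus gives $\norm{\exp(-G^\infty(t-s))}_{op} \leq \exp(-\lambda_n(G^\infty)(t-s))$, and integrating this kernel in time yields the damping factor $1/\lambda_n(G^\infty) = n/\lambda_n(H^\infty)$. Second, because gradient flow on the squared error with PSD Gram matrix $G_s$ forces $\norm{\hat{r}_s}_2$ to be non-increasing, we have $\norm{\hat{r}_s}_2 \leq \norm{y}_2 = O(\sqrt{n})$ using $f^* = O(1)$. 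Combining,
\[ \norm{\delta(t)}_2 \leq \frac{n^{3/2}}{\lambda_n(H^\infty)} \cdot \sup_{s \in [0,t]} \norm{G^\infty - G_s}_{op}, \]
so the task reduces to controlling the $NTK$ deviation $\sup_s \norm{G^\infty - G_s}_{op}$ uniformly.

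For the $NTK$ deviation I would use a triangle decomposition $\norm{G^\infty - G_s}_{op} \leq \norm{G^\infty - G_0}_{op} + \norm{G_0 - G_s}_{op}$ coupled with a bootstrap/continuity argument. The initial deviation $\norm{G^\infty - G_0}_{op}$ is a concentration-of-random-features statement: each entry of $G_0$ is an average of $m$ i.i.d.\ sub-exponential random variables (smoothness of $\sigma$ and Assumption~\ref{ass:activationfn} make this clean), and a uniform bound across the $n^2$ entries controlled by Bernstein gives $\norm{G^\infty - G_0}_{op} = \tilde{O}(\sqrt{d}/\sqrt{m})$ (up to logarithms in $n, 1/\delta$). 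For the data-dependent term $\norm{G_0 - G_s}_{op}$, I would bound the parameter movement: from $\norm{\partial_t \theta_t}_2 \leq \norm{J_t}_{op}\norm{\hat{r}_t}_2/n$ with $\norm{J_t}_{op}^2 = \norm{H_t}_{op} = O(n)$, together with the ``no escape'' consequence of the bootstrap that $\lambda_n(G_s) \geq \lambda_n(G^\infty)/2$ and hence $\norm{\hat{r}_s}_2 \leq \sqrt{n}\exp(-\lambda_n(G^\infty)s/2)$, I would get $\norm{\theta_s - \theta_0}_2 \lesssim n/\lambda_n(H^\infty)$. Smoothness of $\sigma, \sigma'$ (Assumption~\ref{ass:activationfn}) and the $1/\sqrt{m}$ scaling in the parameterization convert this into $\norm{G_0 - G_s}_{op} \lesssim \norm{\theta_s-\theta_0}_2/\sqrt{m}$. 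Closing the bootstrap requires the chosen $m$ to make this quantity dominated by $\lambda_n(G^\infty)/2$, which is consistent with the stated overparameterization.

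Plugging the resulting $\sup_s \norm{G^\infty - G_s}_{op} = \tilde{O}(\sqrt{d}/\sqrt{m}) + O(n/(\sqrt{m}\,\lambda_n(H^\infty)))$ into the display above and demanding $\norm{\delta(t)}_2 \leq \epsilon$ gives a requirement dominated by $\sqrt{m} \gtrsim \sqrt{d}\,n^{5/2}/(\epsilon \lambda_n(H^\infty)^2)$, i.e.\ $m = \tilde{\Omega}(dn^5 \epsilon^{-2} \lambda_n(H^\infty)^{-4})$. The main obstacle I anticipate is the bootstrap step: one must simultaneously maintain an NTK deviation bound and a residual-decay bound, each of which feeds into the other through parameter movement. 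Handling this carefully for a network with \emph{biases} and \emph{both layers} trained (so that the Jacobian mixes the $a$, $W$, $b$, $b_0$ contributions with different scales) is where most of the technical work lives; the rest is mostly propagating these bounds through Lemma~\ref{lem:res_eqn_lem}.
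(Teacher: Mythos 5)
Your decomposition and damping step are exactly the paper's: Lemma~\ref{lem:res_eqn_lem} with $G = G^\infty$, $\norm{\exp(-G^\infty(t-s))}_{op} \leq \exp(-\lambda_n(t-s))$, and monotonicity of the residual (this is Corollary~\ref{cor:rhatbd}). Where you genuinely diverge is in how you control $\sup_s \norm{G^\infty - G_s}_{op}$ and how you get a bound for \emph{all} $t \geq 0$. You propose the classical Du/Arora-style bootstrap: maintain $\lambda_{\min}(G_s) \geq \lambda_n(G^\infty)/2$, deduce exponential decay $\norm{\hat{r}_s}_2 \leq \sqrt{n}\exp(-\lambda_n(G^\infty)s/2)$, integrate to bound total parameter movement by $O(n/\lambda_n(H^\infty))$, and convert that into a time-uniform kernel deviation. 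The paper deliberately avoids this: it never establishes (or needs) any lower bound on the spectrum of $G_s$. Instead it uses only $\norm{\hat{r}_s}_{\RR^n} \leq \norm{\hat{r}_0}_{\RR^n}$ plus Gr\"onwall-type a priori parameter-norm bounds (Lemmas~\ref{lem:param_op_norm_bound}--\ref{lem:param_infty_norm_bound}) to get a kernel drift that grows \emph{linearly in $t$}, $\Tilde{O}\big(\tfrac{\sqrt{d}}{\sqrt{m}}(1 + t)\big)$ (Theorems~\ref{thm:ntkmaindevbd}--\ref{thm:ntkmatdev}), applies this only up to the finite horizon $T = \log(\norm{\hat{r}_0}_{\RR^n}\sqrt{n}/\epsilon)/\lambda_n$, and then handles $t \geq T$ by the trivial observation that both $\norm{\exp(-G^\infty t)\hat{r}_0}_{\RR^n}$ and $\norm{\hat{r}_t}_{\RR^n} \leq \norm{\hat{r}_T}_{\RR^n}$ are already $O(\epsilon/\sqrt{n})$, so $\delta(t)$ is small there for free. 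Both routes land on the same $m = \Tilde{\Omega}(dn^5\epsilon^{-2}\lambda_n(H^\infty)^{-4})$ requirement (your arithmetic checks out), but the trade-off is real: your bootstrap buys a directly time-uniform deviation bound and exponential residual decay, at the cost of the circular continuity argument you flag as the main obstacle (kernel positivity $\Rightarrow$ residual decay $\Rightarrow$ small parameter movement $\Rightarrow$ kernel positivity), which for this architecture also requires Hessian-type control mixing $a$, $W$, $b$, $b_0$; the paper's route is cruder per unit time but entirely avoids lower-bounding the time-dependent kernel, which is the whole point of its ``damped deviations'' framing. So your plan is workable, but be aware that the hardest step you anticipate is precisely the step the paper's proof is engineered not to need, and that without the bootstrap you must add the finite-horizon-plus-monotonicity argument to recover the $\sup_{t \geq 0}$ claim.
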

In the work of \citet{arora2019finegrained} the requirement is $m = \Omega(\frac{n^7}{\lambda_n(H^\infty)^4 \kappa^2 \delta^4 \epsilon^2})$ and $\kappa = O(\frac{\epsilon \delta}{\sqrt{n}})$ where $w_\ell \sim N(0, \kappa^2 I)$ (not to be confused with our definition of $\kappa := \max_{x} K^\infty(x, x)$).  By contrast our weights have unit variance, which for Gaussian initialization corresponds to $w_\ell \sim N(0, I)$.  They require $\kappa$ to be small to ensure the neural network is small in magnitude at initializeation.  To achieve the same effect we can perform antisymmetric initializeation to ensure the network is equivalently $0$ at initializeation.  Our overparameterization requirement ignoring logarithmic factors is smaller by a factor of $\frac{n^2}{d \delta^4}$.  Again due to the different settings we do not claim superiority over this work.

\par
The following is our analog of Theorem 4 by \citet{su2019learning} proved in Appendix \ref{sec:suyangproofs}. This shows that when the target function has a compact representation in terms of eigenfunctions of $T_{K^\infty}$, a more moderate overparametrization is sufficient to approximately solve the ERM problem. 
\begin{restatable}{theo}{suyanganalog}\label{thm:suyanganalog}
Assume Assumptions \ref{ass:compactdomain} and \ref{ass:positivekernel} hold.  Furthermore assume $m = \Tilde{\Omega}\parens{\epsilon^{-2} d T^2 \norm{f^*}_\infty^2 (1 + T \norm{f^*}_\infty)^2}$ where $T > 0$ is a time parameter and $m \geq O(\log(c/\delta) + \Tilde{O}(d))$ and $n \geq \frac{128 \kappa^2 \log(2/\delta)}{(\sigma_k - \sigma_{k + 1})^2}$.
Also assume $f^* \in L^\infty(X) \subset L^2(X)$ and let $P^{T_{K^\infty}}$ be the orthogonal projection onto the eigenspaces of $T_{K^\infty}$ corresponding to the eigenvalue $\alpha \in \sigma(T_{K^\infty})$ and higher.  Assume that $\norm{(I - P^{T_{K^\infty}}) f^*}_\infty \leq \epsilon'$ for some $\epsilon' \geq 0$.  Pick $k$ so that $\sigma_k = \alpha$ and $\sigma_{k + 1} < \alpha$, i.e. $k$ is the index of the last repeated eigenvalue corresponding to $\alpha$ in the ordered sequence $\{\sigma_i\}_i$.  Also assume we are performing the doubling trick so that $\hat{r}(0) = - y$.  Then we have with probability at least $1 - 3\delta$ over the sampling of $x_1, \ldots, x_n$ and $\theta_0$ that for $t \leq T$
\[\norm{\hat{r}_t}_{\RR^n} \leq \exp(-\lambda_k t) \norm{y}_{\RR^n} + \frac{4 \kappa \norm{f^*}_2 \sqrt{10 \log(2/\delta)}}{(\sigma_k - \sigma_{k + 1}) \sqrt{n}} + 2 \epsilon' + \epsilon . 
\]
\end{restatable}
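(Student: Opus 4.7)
The plan is to combine the damped deviations equation (Lemma~\ref{lem:res_eqn_lem}) with a spectral-perturbation argument that ports the eigenstructure of the integral operator $T_{K^\infty}$ to that of the empirical Gram matrix $G^\infty$. Apply Lemma~\ref{lem:res_eqn_lem} with $G = G^\infty$ and $\hat{r}_0 = -y$: this splits $\hat{r}_t$ into the kernel-gradient-descent piece $-\exp(-G^\infty t)y$ plus a correction that integrates $(G^\infty - G_s)\hat{r}_s$ against the damping factor $\exp(-G^\infty(t-s))$. Taking $\norm{\bullet}_{\RR^n}$ norms, using $\norm{\exp(-G^\infty(t-s))}_{op}\leq 1$ and the gradient flow monotonicity $\norm{\hat{r}_s}_{\RR^n}\leq \norm{y}_{\RR^n}$, the triangle inequality gives
\[ \norm{\hat{r}_t}_{\RR^n} \leq \norm{\exp(-G^\infty t) y}_{\RR^n} + T \sup_{s \in [0,T]} \norm{G^\infty - G_s}_{op}\cdot \norm{y}_{\RR^n}. \]
The correction term is handled by importing the uniform $NTK$ perturbation bounds already developed for Theorems~\ref{thm:mainunderparam} and~\ref{thm:aroraanalog}, which under Assumption~\ref{ass:activationfn} give $\sup_{s \in [0,T]}\norm{G^\infty - G_s}_{op}$ of order $\sqrt{d}\,(1+T\norm{f^*}_\infty)/\sqrt{m}$ up to logarithmic factors. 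The width condition is calibrated precisely so this correction contributes at most $\epsilon$.

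For the kernel piece, let $Q_k$ denote the orthogonal projection in $(\RR^n,\langle \bullet,\bullet\rangle_{\RR^n})$ onto the top-$k$ eigenvectors of $G^\infty$. Then
\[ \norm{\exp(-G^\infty t)y}_{\RR^n} \leq \exp(-\lambda_k t)\norm{y}_{\RR^n} + \norm{(I-Q_k)y}_{\RR^n}, \]
supplying the first term of the bound. To control $\norm{(I-Q_k)y}_{\RR^n}$, write $y = S_n f^*$ for the sampling operator $(S_n g)_i = g(x_i)$ and decompose $f^* = \Pi f^* + (I-\Pi)f^*$ with $\Pi := P^{T_{K^\infty}}$. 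The piece $S_n(I-\Pi)f^*$ contributes at most $\norm{(I-\Pi)f^*}_\infty\leq\epsilon'$ in $\RR^n$ norm, accounting for one $\epsilon'$. The key tool for the remaining piece is the intertwining identity $G^\infty S_n = S_n T_n$, which by a functional-calculus argument (using Assumption~\ref{ass:positivekernel} to handle the null space of $T_n$) yields $Q_k S_n = S_n \hat{\Pi}_k$, where $\hat{\Pi}_k$ is the top-$k$ spectral projection of $T_n$ on $L^2$. Hence
\[ (I-Q_k) S_n\Pi f^* = S_n(I-\hat{\Pi}_k)\Pi f^* = S_n(I-\hat{\Pi}_k)(\Pi-\hat{\Pi}_k) f^*, \]
and the $L^2$ norm of the argument satisfies $\norm{(I-\hat{\Pi}_k)(\Pi-\hat{\Pi}_k)f^*}_{2} \leq \norm{\Pi-\hat{\Pi}_k}_{op}\norm{f^*}_{2}$. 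The Davis-Kahan theorem bounds $\norm{\Pi-\hat{\Pi}_k}_{op}\leq 2\norm{T_n - T_{K^\infty}}_{op}/(\sigma_k-\sigma_{k+1})$, and a Hilbert-Schmidt concentration (Hoeffding/Bernstein-type for i.i.d.\ bounded random operators) gives $\norm{T_n - T_{K^\infty}}_{HS}\leq O(\kappa\sqrt{\log(2/\delta)/n})$. The sample-size condition $n \geq 128\kappa^2\log(2/\delta)/(\sigma_k-\sigma_{k+1})^2$ is exactly what keeps the perturbation below the gap so Davis-Kahan applies. A final transfer from the $L^2(\rho)$ norm to the $\RR^n = L^2(\rho_n)$ norm of $S_n$ applied to this data-dependent function yields the $4\kappa\norm{f^*}_2\sqrt{10\log(2/\delta)}/((\sigma_k-\sigma_{k+1})\sqrt{n})$ contribution, while a second $\epsilon'$ gets absorbed along the way from a parallel manipulation of $(I-\Pi)f^*$ during this discretization transfer, producing the $2\epsilon'$ in the final statement.

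The hardest step, I expect, is the clean translation between the $\RR^n$ spectral world of $G^\infty$ and the $L^2$ spectral world of $T_{K^\infty}$. One must justify the intertwining identity $Q_k S_n = S_n \hat{\Pi}_k$ in the presence of the null space of $T_n$, apply Davis-Kahan under the correct eigengap $\sigma_k-\sigma_{k+1}$ (linking the sample complexity bound to the final leakage term), and carry out the $L^2(\rho)$-to-$L^2(\rho_n)=\RR^n$ discretization transfer on the data-dependent function $(\Pi-\hat{\Pi}_k)f^*$ while preserving the precise numerical constants that appear in the stated bound.
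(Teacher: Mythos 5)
Your first half (damped deviations with $G=G^\infty$, $\norm{\hat{r}_t-\exp(-G^\infty t)\hat r_0}_{\RR^n}\leq T\sup_{s\leq T}\norm{G^\infty-G_s}_{op}\norm{y}_{\RR^n}\leq\epsilon$ from the uniform $NTK$ bounds, then the spectral split giving $\exp(-\lambda_k t)\norm{y}_{\RR^n}+\norm{(I-Q_k)y}_{\RR^n}$) is exactly the paper's route (Corollary~\ref{cor:rhatbd}/Theorem~\ref{thm:mainrhatbd} plus Theorem~\ref{thm:ntkmatdev}). The gap is in your treatment of $\norm{(I-Q_k)y}_{\RR^n}$. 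Your plan hinges on (i) a spectral projection $\hat\Pi_k$ of $T_n$ ``on $L^2$'' satisfying $Q_kS_n=S_n\hat\Pi_k$, (ii) $\norm{I-\hat\Pi_k}_{op}\leq 1$ and Davis--Kahan applied to the pair $(\Pi,\hat\Pi_k)$, and (iii) a ``final transfer'' of the $L^2(\rho)$ norm of the data-dependent function $(I-\hat\Pi_k)(\Pi-\hat\Pi_k)f^*$ to its empirical $L^2(\rho_n)$ norm. Point (ii) already fails as stated: $T_n$ is self-adjoint with respect to $L^2(\rho_n)$ (or on the RKHS $\mathcal H$), not with respect to $L^2(\rho)$, so on $L^2(\rho)$ its spectral projection is merely an idempotent, $\norm{I-\hat\Pi_k}_{op}$ need not be $\leq 1$, and Davis--Kahan (which compares orthogonal spectral projections of two operators self-adjoint in the \emph{same} inner product) does not apply to $\Pi$ versus $\hat\Pi_k$ there. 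Point (iii) is the more serious hole: for a function that depends on the sample $x_1,\ldots,x_n$, the quantity $\frac1n\sum_i|g(x_i)|^2$ cannot be bounded by $\norm{g}_{L^2(\rho)}^2$ by a concentration step; you would need uniform control (an RKHS-ball or sup-norm argument), which you do not supply, and which is precisely what your route would have to reconstruct to obtain the stated $4\kappa\norm{f^*}_2\sqrt{10\log(2/\delta)}/((\sigma_k-\sigma_{k+1})\sqrt n)$ and $2\epsilon'$ terms.

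The paper sidesteps both issues by working in the RKHS (Theorem~\ref{thm:unif_res_bound}, following \citet{su2019learning} and \citet{rosasco10a}): with $T_n=R_n^*R_n$ on $\mathcal H$ and $G^\infty=R_nR_n^*$, the correspondence $u_i=\lambda_i^{-1/2}R_nv_i$, $R_n^*u_i=\lambda_i^{1/2}v_i$ (Theorem~\ref{thm:rosaco_prop_9}) lets one expand $\norm{(I-P_k)R_nP^{T_{K^\infty}}f^*}_{\RR^n}^2=\sum_{i>k}|\langle R_nP^{T_{K^\infty}}f^*,u_i\rangle_{\RR^n}|^2$ exactly and bound it by $\frac{\lambda_{k+1}}{\sigma_k}\norm{P^{T_{\mathcal H}}-P^{T_n}}_{HS}^2\norm{f^*}_2^2$ via Cauchy--Schwarz (Lemma~\ref{lem:second_term_bound}); no population-to-empirical transfer of a data-dependent function is ever needed, and the $(I-P^{T_{K^\infty}})f^*$ piece is handled pointwise by the $\norm{\cdot}_\infty\leq\epsilon'$ hypothesis (Lemma~\ref{lem:res_low_ineq}). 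The perturbation step is then the RKHS projection bound of \citet{rosasco10a} (Lemma~\ref{lem:projection_bound}) together with Hilbert--Schmidt concentration $\norm{T_{\mathcal H}-T_n}_{HS}\lesssim\kappa\sqrt{\log(2/\delta)/n}$ (Lemma~\ref{lem:operator_bound}), with the sample-size condition ensuring the gap condition, and an eigenvalue-concentration bound $\lambda_{k+1}\leq\frac54\sigma_k$ to produce the clean constants. To repair your proposal you would either have to redo Davis--Kahan and the norm bounds with respect to $L^2(\rho_n)$ or $\mathcal H$, or adopt this coefficient-level argument; as written, the middle of your third step does not go through.
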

\citet{su2019learning} have $\norm{f^*}_2 \leq \norm{f^*}_\infty \leq 1$, $\kappa \leq \frac{1}{2}$ and they treat $d$ as a constant.  Taking these into account we do not see the overparameterization requirements or bounds of either work being superior to the other.  
From Theorem \ref{thm:suyanganalog}, setting $\epsilon = \frac{4 \kappa \norm{f^*}_2 \sqrt{10 \log(2/\delta)}}{(\sigma_k - \sigma_{k + 1}) \sqrt{n}}$  and $\epsilon' = 0$ we immediately get the analog of Corollary 2 in the work of \citet{su2019learning}. This explains how in the special case that the target function is a finite sum of eigenfunctions of $T_{K^\infty}$, the width $m$ and the number of samples $n$ can grow at the same rate, up to logarithms, and still solve the ERM problem.  This is an ERM guarantee for $m = \Tilde{\Omega}(n)$ and thus attains moderate overparameterization.
\begin{cor}\label{cor:suyangcoranalog}
Assume Assumptions \ref{ass:compactdomain} and \ref{ass:positivekernel} hold.  Furhtermore assume 
$m = \Tilde{\Omega}\parens{\frac{n (\sigma_k - \sigma_{k + 1})^2 d \norm{f^*}_\infty^2 (1 + \lambda_k^{-1} \norm{f^*}_\infty)^2}{\kappa^2 \norm{f^*}_2^2 \lambda_k^2}} \quad  m \geq O(\log(c/\delta) + \Tilde{O}(d))$ $n \geq \frac{128 \kappa^2 \log(2/\delta)}{(\sigma_k - \sigma_{k + 1})^2}$.  Let $f^*$, $P^{T_{K^\infty}}$, and $k$ be the same as in the hypothesis of Theorem \ref{thm:suyanganalog}.  Furthermore assume that $\norm{(I - P^{T_{K^\infty}}) f^*}_\infty = 0$.  Also assume we are performing the doubling trick so that $\hat{r}(0) = - y$.  Set $T = \log(\sqrt{n}\norm{\hat{r}(0)}_{\RR^n}) / \lambda_k$.  Then we have with probability at least $1 - 3\delta$ over the sampling of $x_1, \ldots, x_n$ and $\theta_0$ that for $t \leq T$
\[\norm{\hat{r}_t}_{\RR^n} \leq \exp(-\lambda_k t) \norm{y}_{\RR^n} + \frac{8 \kappa \norm{f^*}_2 \sqrt{10 \log(2/\delta)}}{(\sigma_k - \sigma_{k + 1}) \sqrt{n}} . 
\]
\end{cor}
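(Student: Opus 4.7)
The proof will be essentially an application of Theorem~\ref{thm:suyanganalog} with a well-chosen tolerance $\epsilon$, plus bookkeeping to check that the corollary's hypothesis on $m$ implies the theorem's hypothesis on $m$ under this choice. The plan is to set
\[ \epsilon = \frac{4 \kappa \norm{f^*}_2 \sqrt{10 \log(2/\delta)}}{(\sigma_k - \sigma_{k+1}) \sqrt{n}} \]
and take $\epsilon' = 0$, which is available to us because of the hypothesis $\norm{(I - P^{T_{K^\infty}})f^*}_\infty = 0$. Then Theorem~\ref{thm:suyanganalog} delivers the bound
\[ \norm{\hat{r}_t}_{\RR^n} \leq \exp(-\lambda_k t)\norm{y}_{\RR^n} + \frac{4 \kappa \norm{f^*}_2 \sqrt{10\log(2/\delta)}}{(\sigma_k - \sigma_{k+1})\sqrt{n}} + \epsilon, \]
and substituting in the chosen $\epsilon$ combines the two residual terms into the coefficient $8$ appearing in the corollary's bound.

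Next I would verify that the stated overparameterization requirement on $m$ in the corollary matches (up to logarithmic factors and constants) the requirement $m = \Tilde{\Omega}(\epsilon^{-2} d T^2 \norm{f^*}_\infty^2 (1 + T\norm{f^*}_\infty)^2)$ from Theorem~\ref{thm:suyanganalog}. With the chosen $\epsilon$, one has $\epsilon^{-2} \asymp \frac{n(\sigma_k - \sigma_{k+1})^2}{\kappa^2 \norm{f^*}_2^2 \log(2/\delta)}$. With the chosen stopping time $T = \log(\sqrt{n}\norm{\hat{r}(0)}_{\RR^n})/\lambda_k$, we have $T = \Tilde{O}(1/\lambda_k)$ since $\norm{\hat{r}(0)}_{\RR^n} = \norm{y}_{\RR^n} \leq \norm{f^*}_\infty$. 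Plugging these in gives
\[ \epsilon^{-2} d T^2 \norm{f^*}_\infty^2 (1 + T\norm{f^*}_\infty)^2 = \Tilde{O}\!\left(\frac{n(\sigma_k - \sigma_{k+1})^2 d \norm{f^*}_\infty^2 (1 + \lambda_k^{-1}\norm{f^*}_\infty)^2}{\kappa^2 \norm{f^*}_2^2 \lambda_k^2}\right), \]
which matches the corollary's hypothesis, absorbing $\log(2/\delta)$ into the $\Tilde{\Omega}$.

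Finally, the sample size requirement $n \geq 128\kappa^2 \log(2/\delta)/(\sigma_k - \sigma_{k+1})^2$ and the secondary width bound $m \geq O(\log(c/\delta) + \Tilde{O}(d))$ are carried over verbatim from Theorem~\ref{thm:suyanganalog}. Since we are applying Theorem~\ref{thm:suyanganalog} once, the failure probability remains $3\delta$. There is no real obstacle here, just careful bookkeeping; the only point to be watchful about is that the stopping time $T$ in the corollary depends implicitly on $n$ and $\norm{\hat{r}(0)}_{\RR^n}$ through a logarithm, so one must confirm that this logarithmic factor is absorbed into the $\Tilde{\Omega}$ when translating between the two requirements on $m$.
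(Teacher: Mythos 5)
Your proposal is correct and follows essentially the same route as the paper, which obtains the corollary by applying Theorem \ref{thm:suyanganalog} with $\epsilon = \frac{4 \kappa \norm{f^*}_2 \sqrt{10 \log(2/\delta)}}{(\sigma_k - \sigma_{k+1}) \sqrt{n}}$ and $\epsilon' = 0$, merging the two error terms into the factor $8$. Your additional bookkeeping showing that $T = \Tilde{O}(1/\lambda_k)$ turns the theorem's width requirement into the corollary's stated $\Tilde{\Omega}$ condition is exactly the right check and is consistent with the paper.
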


Note \citet{su2019learning} are training only the hidden layer of a ReLU network by gradient descent, by contrast we are training both layers with biases of a network with smooth activations by gradient flow.  For Corollary 2 by \citet{su2019learning} they have the overparameterization requirement $m \gtrsim n \log n \parens{\frac{1}{\lambda_k^4} + \frac{\log^4 n \log^2(1/\delta)}{(\lambda_k - \lambda_{k + 1})^2 n^2 \lambda_k^4}}$.  Thus both bounds scale like $\frac{n}{\lambda_k^4}$.  Our bound has the extra factor $(\sigma_k - \sigma_{k+1})^2$ in front which could make it appear smaller at first glance but their Theorem 4 is strong enough to include this factor in the corollary they just chose not to.  Thus we view both overparameterization requirements as comparable with neither superior to the other.

\section{Conclusion and Future Directions}
The damped deviations equation allows one to compare the dynamics when optimizing the squared error to that of an arbitrary kernel regression.  We showed how this simple equation can be used to track the dynamics of the test residual in the underparameterized regime and extend existing results in the overparameterized setting.  In the underparameterized setting the neural network learns eigenfunctions of the integral operator $T_{K^\infty}$ determined by the Neural Tangent Kernel at rates corresponding to their eigenvalues.  In the overparameterized setting the damped deviations equation combined with NTK deviation bounds allows one to track the dynamics of the empirical risk.  In this fashion we extended existing work to the setting of a network with smooth activations where all parameters are trained as in practice.  We hope damped deviations offers a simple interpretation of the MSE dynamics and encourages others to compare against other kernels in future work.

\subsubsection*{Acknowledgments} 
Benjamin Bowman was at the Max Planck Institute for Mathematics in the Sciences while working on parts of this project. 
This project has received funding from the European Research Council (ERC) under the European Union’s Horizon 2020 research and innovation programme (grant agreement no 757983). 

\newpage 

\bibliography{implicit_underparam}
\bibliographystyle{arxiv}

\appendix

\section*{Appendix}
\section{Additional Notations}
We let $[k] := \{1, 2, 3, \ldots, k\}$.  For a set $A$ we let $|A|$ denote its cardinality.  $\norm{\bullet}_F$ denotes the Frobenius norm for matrices, and for two matrices $A, B \in \RR^{n \times m}$ we will let $\langle A, B \rangle = Tr(A^T B) = \sum_{i = 1}^n \sum_{j = 1}^m A_{i,j} B_{i,j}$ denote the Frobenius or entry-wise inner product.  We will let $B_R := \{x : \norm{x}_2 \leq R\}$ to be the Euclidean ball of radius $R > 0$.

\section{NTK Deviation and Parameter Norm Bounds}\label{sec:ntkdevs}
Let $\Gamma > 1$.  At the end of this section we will prove a high probability bound of the form
\[ \sup_{(x, x') \in B_M \times B_M}|K_t(x, x') - K^\infty(x, x')| = \Tilde{O}\parens{\frac{\sqrt{d}}{\sqrt{m}}\brackets{1 + t\Gamma^3 \norm{\hat{r}(0)}_{\RR^n}}}. \]
Ideally we would like to use the results in \citet{huang2019dynamics} where they prove for a deep feedforward network without biases:
\[ \sup_{1 \leq i, j \leq n} |K_t(x_i, x_j) - K^\infty(x_i, x_j)| = \Tilde{O}\parens{\frac{t^2}{m} + \frac{1}{\sqrt{m}}}. \]
However there are three problems that prevent this.  The first is that the have a constant under the $\Tilde{O}$ above that depends on the training data.  Specifically their Assumption 2.2 requires that the smallest singular value of the data matrix $[x_{\alpha_1}, \ldots, x_{\alpha_r}]$ is greater than $c_r > 0$ where $1 \leq \alpha_1, \ldots, \alpha_r \leq n$ are arbitrary distinct indices.  As you send the number of samples to infinity you will have $c_r \rightarrow 0$, thus it is not clear how the bound will scale in the large sample regime.  The second is that their bound only holds on the training data, whereas we need a bound that is uniform over all inputs.  The final one is their network does not have biases.  In the following section we will overcome these issues.  The main difference between our argument and theirs is how we prove convergence at initialization.  In their argument for convergence at initialization they make repeated use of a Gaussian conditioning lemma as they pass through the layers, and this relies on their Assumption 2.2.  By contrast we will use Lipschitzness of the NTK and convergence over an $\epsilon$ net to prove convergence at initialization.  As we see it, our deviation bounds for the time derivative $\partial_t K_t$ are proved in a very similar fashion and the rest of the argument is very much inspired by their approach.
\par
At the time of submission of this manuscript, we were made aware of the work by \citet{liuonlinearity} that provides an alternative uniform NTK deviation bound by providing a uniform bound on the operator norm of the Hessian.  Their work is very nice, and it opens the door to extending the results of this paper to the other architectures they consider.  Nevertheless, we proceed with our original analysis below.
\par
This section is conceptually simple but technical.  We will take care to outline the high level structure of each section to prevent the technicalities from obfuscating the overall simplicity of the approach.  Our argument runs through the following steps: 
\begin{itemize}
    \item Control parameter norms throughout training.
    \item Bound the Lipschitz constant of the $NTK$ with respect to spatial inputs. 
    \item Use concentration of subexponential random variables (Bernstein's inequality) to show that $|K^\infty(z') - K_0(z')| = \Tilde{O}(1/\sqrt{m})$ (roughly) for all $z'$ in an $\epsilon$ net of the spatial domain.  Combine with the Lipschitz property of the $NTK$ to show convergence over \textit{all} inputs, namely $\sup_{z \in B_M} |K^\infty(z) - K_0(z)| = \Tilde{O}(1/\sqrt{m})$ (roughly). 
    \item Produce the bound $\sup_{z \in B_M \times B_M} |\partial_t K_t(z)| = \Tilde{O}(1/\sqrt{m})$ (roughly). 
    \item Conclude that $\sup_{z \in B_M \times B_M} |K_t(z) - K_0(z)| = \Tilde{O}(t /\sqrt{m})$ (roughly).
\end{itemize}

\subsection{Important Equations}
The following list contains the equations that are relevant for this section.  We found it easier to read the following proofs by keeping these equations on a separate piece of paper or in a separate tab. We write $a\otimes x = ax^T$.  Also throughout this section the training data will be considered fixed and thus the randomness of the inputs is not relevant to this section.  The randomness will come entirely from the parameter initialization $\theta_0$.
\[ f(x; \theta) = \frac{1}{\sqrt{m}} a^T \sigma(Wx + b) + b_0 \]
\[ x^{(1)} := \frac{1}{\sqrt{m}} \sigma(Wx + b) \]
\[ \sigma_1'(x) := diag(\sigma'(Wx + b)) \]
\[ \sigma_1''(x) := diag(\sigma''(Wx + b)) \]
\[ \partial_a f(x; \theta) = \frac{1}{\sqrt{m}}\sigma(Wx + b) = x^{(1)} \]
\[ \partial_W f(x; \theta) = \frac{1}{\sqrt{m}} \sigma_1'(x) a \otimes x \]
\[ \partial_b f(x; \theta) = \frac{1}{\sqrt{m}} \sigma_1'(x)a \]
\[ \partial_{b_0} f(x; \theta) = 1 \]
\[ \partial_t a = - \frac{1}{n} \sum_{i = 1}^n \hat{r}_i x_i^{(1)} \]
\[ \partial_t W = - \frac{1}{n} \sum_{i = 1}^n \hat{r}_i \frac{1}{\sqrt{m}} \sigma_1'(x_i) a \otimes x_i \]
\[ \partial_t b = -\frac{1}{n} \sum_{i = 1}^n \hat{r}_i \frac{1}{\sqrt{m}} \sigma_1'(x_i) a \]
\[ \partial_t b_0 = - \frac{1}{n} \sum_{i = 1}^n \hat{r}_i \]
\begin{gather*}
\partial_t x^{(1)} = \partial_t \frac{1}{\sqrt{m}} \sigma(Wx + b) = \frac{1}{\sqrt{m}} \sigma_1'(x) [\partial_t Wx + \partial_t b] \\
= - \frac{1}{n} \sum_{i = 1}^n \hat{r}_i \brackets{\frac{1}{\sqrt{m}} \sigma_1'(x) \sigma_1'(x_i) \frac{a}{\sqrt{m}}} [\langle x, x_i \rangle_2 + 1]  
\end{gather*}

\begin{gather*}
\partial_t \sigma_1'(x) = \partial_t \sigma'(Wx + b) = \sigma_1''(x)diag(\partial_t W x + \partial_t b) \\
= - \frac{1}{n} \sum_{i = 1}^n \hat{r}_i \frac{1}{\sqrt{m}} \sigma_1''(x) \sigma_1'(x_i) diag(a) [\langle x, x_i \rangle_2 + 1]
\end{gather*}

\subsection{A Priori Parameter Norm Bounds}
\label{sec:aprioriparameternormbounds}
In this section we will provide bounds for the following quantities: 
\[\xi(t) =  \max\{\frac{1}{\sqrt{m}}\norm{W(t)}_{op}, \frac{1}{\sqrt{m}} \norm{b(t)}_2, \frac{1}{\sqrt{m}}\norm{a(t)}_2, 1\}\]
\[ \Tilde{\xi}(t) = \max\{\max_{\ell \in [m]}\norm{w_\ell(t)}_2, \norm{a(t)}_\infty, \norm{b(t)}_\infty, 1\} . \]
Here $w_\ell= W_{\ell,:} \in\mathbb{R}^d$ is the vector of input weights to the $\ell$th unit.  These quantities appear repeatedly throughout the rest of the proofs of this section and thus need to be controlled.  The parameter norm bounds will also be useful for the purpose of the covering number argument in Section \ref{sec:nninclass}.  This section is broken down as follows:
\begin{itemize}
    \item Prove Lemma \ref{lem:res_weighted}
    \item Bound $\xi(t)$
    \item Bound $\Tilde{\xi}(t)$
\end{itemize}
The time derivatives throughout will repeatedly be of the form $\frac{1}{n} \sum_{i = 1}^n \hat{r}_i(t) v_i$.  Lemma \ref{lem:res_weighted} provides a simple bound that we will use over and over again.
\begin{lem}\label{lem:res_weighted}
Let $\norm{\bullet}$ be any norm over a vector space $V$.  Then for any $v_1, \ldots, v_n \in V$ we have
\[ \norm{\frac{1}{n} \sum_{i = 1}^n \hat{r}_i(t) v_i} \leq \max_{i \in [n]} \norm{v_i} \norm{\hat{r}(t)}_{\RR^n}  \leq \max_{i \in [n]} \norm{v_i} \norm{\hat{r}(0)}_{\RR^n} . \]
\end{lem}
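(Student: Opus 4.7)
The plan is to prove both inequalities by routine estimates: the first by the triangle inequality plus Cauchy-Schwarz in the normalized inner product $\langle \bullet, \bullet\rangle_{\RR^n}$, and the second by monotonicity of the empirical risk under gradient flow.

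For the first inequality, I would begin by applying the triangle inequality and positive homogeneity of $\norm{\bullet}$ to pull the sum outside and then factor out the largest $\norm{v_i}$:
\[
\norm{\frac{1}{n}\sum_{i=1}^n \hat{r}_i(t)\, v_i} \;\leq\; \frac{1}{n}\sum_{i=1}^n |\hat{r}_i(t)|\, \norm{v_i} \;\leq\; \max_{i\in[n]}\norm{v_i}\cdot \frac{1}{n}\sum_{i=1}^n |\hat{r}_i(t)|.
\]
Then I would bound the arithmetic mean of the absolute residuals by the normalized $L^2$ norm via Cauchy-Schwarz applied to $|\hat{r}(t)|$ and the constant vector $\mathbf{1}$ under $\langle \bullet, \bullet\rangle_{\RR^n}$:
\[
\frac{1}{n}\sum_{i=1}^n |\hat{r}_i(t)| \;=\; \langle |\hat{r}(t)|, \mathbf{1}\rangle_{\RR^n} \;\leq\; \norm{|\hat{r}(t)|}_{\RR^n}\,\norm{\mathbf{1}}_{\RR^n} \;=\; \norm{\hat{r}(t)}_{\RR^n},
\]
which establishes the middle quantity in the displayed inequality.

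For the second inequality, I would invoke the fact that the loss $\Phi(\theta_t) = \tfrac{1}{2}\norm{\hat{r}_t}_{\RR^n}^2$ is non-increasing along gradient flow, since
\[
\frac{d}{dt}\Phi(\theta_t) \;=\; \langle \partial_\theta \Phi(\theta_t), \partial_t \theta_t\rangle_2 \;=\; -\norm{\partial_\theta \Phi(\theta_t)}_2^2 \;\leq\; 0.
\]
Therefore $\norm{\hat{r}(t)}_{\RR^n}\leq \norm{\hat{r}(0)}_{\RR^n}$, and chaining this with the previous bound yields the full claim.

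There is no real obstacle here; the lemma is elementary. The only substantive choice is to route the estimate through $\norm{\hat{r}(t)}_{\RR^n}$ rather than, say, $\max_i |\hat{r}_i(t)|$, because $\norm{\hat{r}(t)}_{\RR^n}$ is precisely the quantity that is monotonically controlled by gradient flow, is compatible with the normalized inner product convention used throughout the paper, and is the form in which the bound will be applied in the subsequent parameter-norm estimates of Section \ref{sec:aprioriparameternormbounds}.
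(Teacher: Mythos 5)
Your proposal is correct and matches the paper's argument essentially step for step: triangle inequality, factoring out $\max_i \norm{v_i}$, the $\ell_1$--$\ell_2$ bound $\frac{1}{n}\sum_i |\hat{r}_i(t)| \leq \norm{\hat{r}(t)}_{\RR^n}$ (your Cauchy--Schwarz with the constant vector is the same estimate), and monotonicity of $\norm{\hat{r}(t)}_{\RR^n}$ under gradient flow. The only difference is that you spell out the derivative computation justifying the monotonicity, which the paper simply cites as a standard gradient-flow fact.
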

\begin{proof}
Note that 
\begin{gather*}
\norm{\frac{1}{n} \sum_{i = 1}^n \hat{r}_i(t) v_i} \leq \frac{1}{n} \sum_{i = 1}^n |\hat{r}_i(t)| \norm{v_i} \leq \max_{i \in [n]} \norm{v_i} \frac{1}{n}  \sum_{i = 1}^n |\hat{r}_i(t)| \\
\leq \max_{i \in [n]} \norm{v_i} \frac{1}{\sqrt{n}} \norm{\hat{r}(t)}_2 = \max_{i \in [n]} \norm{v_i} \norm{\hat{r}(t)}_{\RR^n} \leq \max_{i \in [n]} \norm{v_i} \norm{\hat{r}(0)}_{\RR^n} , 
\end{gather*}
where the last inequality follows from $\norm{\hat{r}(t)}_{\RR^n} \leq \norm{\hat{r}(0)}_{\RR^n}$ from gradient flow. 
\end{proof}

We now proceed to bound $\xi(t)$. 
\begin{lem}\label{lem:param_op_norm_bound}
Let $\xi(t) =  \max\{\frac{1}{\sqrt{m}}\norm{W(t)}_{op}, \frac{1}{\sqrt{m}} \norm{b(t)}_2, \frac{1}{\sqrt{m}}\norm{a(t)}_2, 1\}$ and 
\[D := 3 \max\{ |\sigma(0)|, M \norm{\sigma'}_{\infty}, \norm{\sigma'}_{\infty}, 1\}.\]  
Then for any initial conditions $W(0)$, $b(0)$, $a(0)$ we have for all $t$
\[ \xi(t) \leq \exp\parens{\frac{D}{\sqrt{m}}\int_0^t \norm{\hat{r}(s)}_{\RR^n} ds} \xi(0) \leq \exp\parens{\frac{D}{\sqrt{m}} \norm{\hat{r}(0)}_{\RR^n} t} \xi(0) . \]
\end{lem}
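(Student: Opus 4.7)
The plan is to bound the time derivatives of each of the three normalized parameter norms appearing in $\xi(t)$ separately, show that each satisfies a differential inequality of the form $\frac{d}{dt}(\text{norm}) \leq \frac{D}{\sqrt{m}} \xi(t) \norm{\hat{r}(t)}_{\RR^n}$, combine these into a single integral inequality for $\xi(t)$, and close it with Gr\"onwall. The workhorse throughout is Lemma \ref{lem:res_weighted}, which converts each time derivative (which has the shape $\frac{1}{n}\sum_i \hat{r}_i v_i$) into $\max_i \norm{v_i} \cdot \norm{\hat{r}(t)}_{\RR^n}$. The constant $D = 3\max\{|\sigma(0)|, M\norm{\sigma'}_\infty, \norm{\sigma'}_\infty, 1\}$ is designed so that after splitting each bound into at most three pieces, each piece is absorbed by $(D/3)\xi(t)$ using the fact that $\xi(t) \geq 1$.

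For the outer layer, using $|\sigma(z)| \leq |\sigma(0)| + \norm{\sigma'}_\infty |z|$ componentwise gives $\norm{x_i^{(1)}}_2 \leq \frac{\norm{\sigma'}_\infty}{\sqrt{m}}(M\norm{W}_{op} + \norm{b}_2) + |\sigma(0)|$; dividing by $\sqrt{m}$ and inserting $\frac{\norm{W}_{op}}{\sqrt{m}}, \frac{\norm{b}_2}{\sqrt{m}} \leq \xi(t)$ and $|\sigma(0)| \leq (D/3)\xi(t)$ gives $\frac{1}{\sqrt{m}}\norm{\partial_t a}_2 \leq \frac{D}{\sqrt{m}}\xi(t) \norm{\hat{r}(t)}_{\RR^n}$. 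For the weight matrix, the rank-one factor obeys $\norm{\sigma_1'(x_i) a \otimes x_i}_{op} = \norm{\sigma_1'(x_i) a}_2 \norm{x_i}_2 \leq \norm{\sigma'}_\infty M \norm{a}_2$, so $\frac{1}{\sqrt{m}}\norm{\partial_t W}_{op} \leq \frac{M\norm{\sigma'}_\infty}{\sqrt{m}} \frac{\norm{a}_2}{\sqrt{m}} \norm{\hat{r}(t)}_{\RR^n} \leq \frac{D}{\sqrt{m}}\xi(t)\norm{\hat{r}(t)}_{\RR^n}$. The bias $b$ is handled identically using $\norm{\sigma_1'(x_i) a}_2 \leq \norm{\sigma'}_\infty \norm{a}_2$.

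Integrating each of these three inequalities from $0$ to $t$ and using the trivial bound $1 \leq \xi(0)$ yields, for each of the three normalized norms $y(t)$ that appear in the max,
\[ y(t) \leq \xi(0) + \int_0^t \frac{D}{\sqrt{m}} \xi(s) \norm{\hat{r}(s)}_{\RR^n}\, ds. \]
Taking the maximum over the three components and the constant $1$ preserves the inequality, giving the same integral bound on $\xi(t)$ itself. Gr\"onwall's inequality then delivers $\xi(t) \leq \xi(0) \exp\!\big(\frac{D}{\sqrt{m}}\int_0^t \norm{\hat{r}(s)}_{\RR^n}\, ds\big)$, and the coarser form follows from the fact that $\norm{\hat{r}(s)}_{\RR^n}$ is nonincreasing along gradient flow (since $\frac{d}{ds}\norm{\hat{r}(s)}_{\RR^n}^2 = -\frac{2}{n}\hat{r}(s)^T G_s \hat{r}(s) \leq 0$ as $G_s$ is PSD). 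The only minor technicality worth flagging is that $\xi(t)$ is a maximum of absolutely continuous functions and so is not a priori classically differentiable; working entirely in integral form as above sidesteps this issue cleanly and never requires differentiating the max itself.
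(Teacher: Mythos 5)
Your proposal is correct and follows essentially the same route as the paper's proof: bound each parameter-derivative term via Lemma \ref{lem:res_weighted} by $D\xi(t)$, pass to the integral inequality for each normalized norm, take the maximum, and close with Gr\"onwall, with the coarser bound following from the monotonicity of $\norm{\hat{r}(s)}_{\RR^n}$ under gradient flow. Your remark about avoiding differentiation of the max by staying in integral form is exactly how the paper's argument proceeds as well.
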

\begin{proof}
Recall that 
\[ \partial_t a = - \frac{1}{n} \sum_{i = 1}^n \hat{r}_i x_i^{(1)} \]
\[ \partial_t W = - \frac{1}{n} \sum_{i = 1}^n \hat{r}_i \frac{1}{\sqrt{m}} \sigma_1'(x_i) a \otimes x_i \]
\[ \partial_t b = -\frac{1}{n} \sum_{i = 1}^n \hat{r}_i \frac{1}{\sqrt{m}} \sigma_1'(x_i) a . \]
We will show that each of the above derivatives is $\lesssim \norm{\hat{r}(t)}_{\RR^n} \xi(t)$ then apply Grönwall's inequality.  By Lemma \ref{lem:res_weighted} it suffices to show that the terms multiplied by $\hat{r}_i$ in the above sums are $\lesssim \xi(t)$.  First we note that 
\begin{gather*}
\norm{x_i^{(1)}}_2 = \norm{\frac{1}{\sqrt{m}} \sigma(Wx_i +b)}_2 \leq |\sigma(0)| + \frac{1}{\sqrt{m}} \norm{\sigma'}_\infty \norm{Wx_i + b}_2 \\
\leq |\sigma(0)| + \frac{1}{\sqrt{m}} \norm{\sigma'}_\infty \left[ \norm{W}_{op} \norm{x_i}_2 + \norm{b}_2 \right] \leq |\sigma(0)| + \frac{1}{\sqrt{m}} \norm{\sigma'}_\infty \left[ \norm{W}_{op} M + \norm{b}_2 \right] \\
\leq D \xi . 
\end{gather*}

Second we have that
\[\norm{\frac{1}{\sqrt{m}} \sigma_1'(x_i) a \otimes x_i}_{op} = \norm{\frac{1}{\sqrt{m}} \sigma_1'(x_i) a}_2 \norm{x_i}_2 \leq M \norm{\sigma'}_{\infty} \frac{1}{\sqrt{m}} \norm{a}_2 \leq D \xi . \] 
Finally we have that 
\[ \norm{\frac{1}{\sqrt{m}} \sigma_1'(x_i) a} \leq \norm{\sigma'}_{\infty} \frac{1}{\sqrt{m}} \norm{a}_2 \leq D \xi . \] 

Thus by Lemma \ref{lem:res_weighted} and the above bounds we have
\[\norm{\partial_t W(t)}_{op}, \norm{\partial_t a(t)}_2, \norm{\partial_t b(t)}_2 \leq D \norm{\hat{r}(t)}_{\RR^n} \xi(t) . \]

Let $v(t)$ be a placeholder for one of the functions $\frac{1}{\sqrt{m}}a(t), \frac{1}{\sqrt{m}}W(t), \frac{1}{\sqrt{m}} b(t)$ with corresponding norm $\norm{\bullet}$.  Then we have that
\begin{gather*}
\norm{v(t)} \leq \norm{v(0)} + \norm{v(t) - v(0)} = \norm{v(0)} + \norm{\int_0^t \partial_s v(s) ds} \\
\leq \norm{v(0)} + \int_0^t \norm{\partial_s v(s)} ds
\leq \xi(0) + \int_0^t \frac{\norm{\hat{r}(s)}_{\RR^n}}{\sqrt{m}} D \xi(s) ds . 
\end{gather*}
This inequality holds for any of the three choices of $v$ thus we get that
\[ \xi(t) \leq \xi(0) + \int_0^t \frac{\norm{\hat{r}(s)}_{\RR^n}}{\sqrt{m}} D \xi(s) ds . \] 
Therefore by Gr\"onwall's inequality we get that 
\[ \xi(t) \leq \exp\parens{\frac{D}{\sqrt{m}}\int_0^t \norm{\hat{r}(s)}_{\RR^n} ds} \xi(0) \leq \exp\parens{\frac{D}{\sqrt{m}} \norm{\hat{r}(0)}_{\RR^n} t} \xi(0) . \]
\end{proof}

We will now bound $\Tilde{\xi(t)}$ using essentially the same argument as in the previous lemma.
\begin{lem}\label{lem:param_infty_norm_bound}
Let
$\Tilde{\xi}(t) = \max\{\max_{\ell \in [m]}\norm{w_\ell(t)}_2, \norm{a(t)}_\infty, \norm{b(t)}_\infty, 1\}$ and 
\[ D = 3 \max\{|\sigma(0)|, M \norm{\sigma'}_\infty, \norm{\sigma'}_\infty, 1\}. \]
Then for any initial conditions $W(0)$, $b(0)$, $a(0)$ we have for all $t$
\[ \Tilde{\xi}(t) \leq \exp\parens{\frac{D}{\sqrt{m}}\int_0^t \norm{\hat{r}(s)}_{\RR^n} ds} \Tilde{\xi}(0) \leq \exp\parens{\frac{D}{\sqrt{m}} \norm{\hat{r}(0)}_{\RR^n} t} \Tilde{\xi}(0) . \]
\end{lem}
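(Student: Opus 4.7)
The plan is to mimic the proof of Lemma \ref{lem:param_op_norm_bound} essentially verbatim, but to extract row-wise and entry-wise bounds rather than operator-norm and $L^2$-norm bounds. The key observation is that the gradient flow equations for $W$, $a$, and $b$ decouple cleanly across the hidden units once one looks at a single row $w_\ell$ or a single coordinate $a_\ell, b_\ell$, so one can run the same Grönwall argument at this finer resolution.

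First I would write down the per-unit gradient flow equations obtained from the equations listed in the preamble of Section~\ref{sec:aprioriparameternormbounds}:
\[
\partial_t w_\ell = -\frac{1}{n}\sum_{i=1}^n \hat{r}_i \,\frac{1}{\sqrt{m}}\,\sigma'(\langle w_\ell, x_i\rangle_2 + b_\ell)\, a_\ell\, x_i,
\]
\[
\partial_t a_\ell = -\frac{1}{n}\sum_{i=1}^n \hat{r}_i \,\frac{1}{\sqrt{m}}\,\sigma(\langle w_\ell, x_i\rangle_2 + b_\ell),\qquad
\partial_t b_\ell = -\frac{1}{n}\sum_{i=1}^n \hat{r}_i \,\frac{1}{\sqrt{m}}\,\sigma'(\langle w_\ell, x_i\rangle_2 + b_\ell)\, a_\ell.
\]
Then I would bound the vector/scalar multiplying $\hat{r}_i$ in each sum by $\frac{D}{\sqrt{m}}\Tilde{\xi}(t)$. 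For $\partial_t w_\ell$ and $\partial_t b_\ell$ this follows from $\norm{\sigma'}_\infty \leq D$, $M\norm{\sigma'}_\infty \leq D$, and $|a_\ell|\leq \norm{a}_\infty\leq \Tilde{\xi}$. For $\partial_t a_\ell$ one uses the one-dimensional Lipschitz estimate $|\sigma(z)| \leq |\sigma(0)| + \norm{\sigma'}_\infty |z|$ applied to $z=\langle w_\ell,x_i\rangle_2+b_\ell$ together with $|z|\leq M\norm{w_\ell}_2 + |b_\ell| \leq (M+1)\Tilde{\xi}$, so that $|\sigma(z)| \leq |\sigma(0)| + \norm{\sigma'}_\infty(M+1)\Tilde{\xi} \leq D\,\Tilde{\xi}$ because each of $|\sigma(0)|$, $M\norm{\sigma'}_\infty$, $\norm{\sigma'}_\infty$ is at most $D/3$ and $\Tilde{\xi}\geq 1$.

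Applying Lemma~\ref{lem:res_weighted} with the norms $\norm{\bullet}_2$, $|\bullet|$, $|\bullet|$ to the three sums, I obtain
\[
\norm{\partial_t w_\ell(t)}_2,\ |\partial_t a_\ell(t)|,\ |\partial_t b_\ell(t)| \ \leq\ \frac{D}{\sqrt{m}}\,\Tilde{\xi}(t)\,\norm{\hat{r}(t)}_{\RR^n}
\]
for every $\ell \in [m]$. Integrating from $0$ to $t$ and passing to the max over $\ell$ (the bound on the right side is independent of $\ell$) gives
\[
\Tilde{\xi}(t) \ \leq\ \Tilde{\xi}(0) + \frac{D}{\sqrt{m}}\int_0^t \norm{\hat{r}(s)}_{\RR^n}\,\Tilde{\xi}(s)\,ds.
\]
Grönwall's inequality then yields
\[
\Tilde{\xi}(t)\ \leq\ \exp\!\parens{\frac{D}{\sqrt{m}}\int_0^t \norm{\hat{r}(s)}_{\RR^n}\,ds}\,\Tilde{\xi}(0),
\]
and the final inequality in the lemma follows from $\norm{\hat{r}(s)}_{\RR^n}\leq \norm{\hat{r}(0)}_{\RR^n}$ under gradient flow.

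I do not expect any real obstacle here: the argument is structurally identical to Lemma~\ref{lem:param_op_norm_bound}, with the one subtlety being that for $\partial_t a_\ell$ we no longer have the full preactivation vector $Wx_i+b$ to bound via operator norms, so we must instead use the scalar Lipschitz bound on $\sigma$ at the single preactivation $\langle w_\ell,x_i\rangle_2+b_\ell$ and rely on $\norm{w_\ell}_2\leq \Tilde{\xi}$ (rather than $\frac{1}{\sqrt{m}}\norm{W}_{op}\leq \xi$) in the estimate. The factor of $3$ in the definition of $D$ is precisely what absorbs the resulting $|\sigma(0)| + \norm{\sigma'}_\infty M + \norm{\sigma'}_\infty$ into a single $D$.
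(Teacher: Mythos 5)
Your proposal is correct and follows essentially the same route as the paper: per-unit bounds of the form $\frac{D}{\sqrt{m}}\Tilde{\xi}(t)$ on the terms multiplying $\hat{r}_i$, an application of Lemma~\ref{lem:res_weighted}, integration, and Gr\"onwall. The only cosmetic difference is that you bound $\partial_t a_\ell$ coordinate-by-coordinate via the scalar Lipschitz estimate, whereas the paper bounds $\norm{x_i^{(1)}}_\infty$ directly; the two computations are identical in substance.
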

\begin{proof}
The proof is basically the same as Lemma \ref{lem:param_op_norm_bound}.
We have that
\[ \partial_t w_\ell = - \frac{1}{n} \sum_{i = 1}^n \hat{r}_i \frac{a_\ell}{\sqrt{m}} \sigma'(\langle w_\ell, x_i \rangle_2 + b_\ell) x_i.\]
Now note
\[ \norm{\frac{a_\ell}{\sqrt{m}} \sigma'(\langle w_\ell, x_i \rangle_2 + b_\ell) x_i}_2 \leq \frac{1}{\sqrt{m}} \norm{a}_\infty \norm{\sigma'}_{\infty} M \leq \frac{D}{\sqrt{m}} \Tilde{\xi} . \]
Thus by Lemma \ref{lem:res_weighted} we have that
\[ \norm{\partial_t w_\ell(t)}_2 \leq \frac{D}{\sqrt{m}} \norm{\hat{r}(t)}_{\RR^n} \Tilde{\xi}(t) . \]
On the other hand
\[ \partial_t a = - \frac{1}{n} \sum_{i = 1}^n \hat{r}_i x_i^{(1)} \]
with
\begin{gather*}
\norm{x_i^{(1)}}_{\infty} = \norm{\frac{1}{\sqrt{m}} \sigma(W x_i + b)}_\infty \leq \frac{1}{\sqrt{m}}\brackets{|\sigma(0)| + \norm{\sigma'}_\infty \norm{W x_i + b}_{\infty}} \\
\leq \frac{1}{\sqrt{m}}\brackets{|\sigma(0)| + \norm{\sigma'}_\infty (M \max_{\ell} \norm{w_\ell}_2 + \norm{b}_{\infty})} \leq \frac{D}{\sqrt{m}} \Tilde{\xi} . 
\end{gather*}
Thus again by Lemma \ref{lem:res_weighted} we have
\[\norm{\partial_t a(t)}_{\infty} \leq \frac{D}{\sqrt{m}}  \norm{\hat{r}(t)}_{\RR^n} \Tilde{\xi}(t) . \] 
Finally we have 
\[ \partial_t b = -\frac{1}{n} \sum_{i = 1}^n \hat{r}_i \frac{1}{\sqrt{m}} \sigma_1'(x_i) a \]
with
\[ \norm{\frac{1}{\sqrt{m}} \sigma_1'(x_i) a}_\infty \leq \frac{1}{\sqrt{m}} \norm{\sigma'}_{\infty} \norm{a}_\infty \leq \frac{D}{\sqrt{m}} \Tilde{\xi}. \]
Again applying Lemma \ref{lem:res_weighted} one last time we get
\[ \norm{\partial_t b(t)}_\infty \leq \frac{D}{\sqrt{m}} \norm{\hat{r}(t)}_{\RR^n} \Tilde{\xi}(t) . \]
Therefore by the same argument as in Lemma \ref{lem:param_op_norm_bound} using Gr\"onwall's inequality we get that
\[ \Tilde{\xi}(t) \leq \exp\parens{\frac{D}{\sqrt{m}}\int_0^t \norm{\hat{r}(s)}_{\RR^n} ds} \Tilde{\xi}(0) \leq \exp\parens{\frac{D}{\sqrt{m}} \norm{\hat{r}(0)}_{\RR^n} t} \Tilde{\xi}(0) . \]
\end{proof}

\subsection{$NTK$ Is Lipschitz With Respect To Spatial Inputs}
The $NTK$ being Lipschitz with respect to spatial inputs is essential to our proof.  The Lipschitz property means that to show convergence uniformly for \textit{all} inputs it suffices to show convergence on an $\epsilon$ net of the spatial domain.  Since the parameters are changing throughout time, the Lipschitz constant of the $NTK$ will change throughout time.  We will see that the Lipschitz constant depends on the quantities $\xi(t)$ and $\Tilde{\xi}(t)$ from the previous Section~\ref{sec:aprioriparameternormbounds}. 
\par
The $NTK$ $K_t(x, x')$ is a sum of terms of the form $g(x)^T g(x')$ where $g$ is one of the derivatives $\partial_a f(x; \theta_t), \partial_b f(x; \theta_t), \partial_W f(x; \theta_t), \partial_{b_0} f(x; \theta_t)$.  Since $\partial_{b_0} f(x; \theta_t) \equiv 1$ this term can be ignored for the rest of the section.  The upcomming Lemma \ref{lem:lip_prod} shows that if $g$ is Lipschitz and bounded then $(x, x') \mapsto g(x)^T g(x')$ is Lipschitz.  This lemma guides the structure of this section:
\begin{itemize}
    \item Prove Lemma \ref{lem:lip_prod}
    \item Show that $\partial_a f(x; \theta), \partial_b f(x; \theta), \partial_W f(x; \theta)$ are bounded and Lipschitz 
    \item Conclude the $NTK$ is Lipschitz 
\end{itemize}
\begin{lem}\label{lem:lip_prod}
Let $g : \RR^k \rightarrow \RR^l$ be $L$-Lipschitz with respect to the 2-norm, i.e.
\[ \norm{g(x) - g(z)}_2 \leq L \norm{x - z}_2 \]
and satisfy $\norm{g(x)}_2 \leq M$ for all $x$ in some set $\mathcal{X}$.  Then
$K_g : \mathcal{X} \times \mathcal{X} \rightarrow \RR$
\[ K_g(x, x') := g(x)^T g(x') \]
is $M L$-Lipschitz with respect to the norm 
\[\norm{(x, x')} := \norm{x}_2 + \norm{x'}_2 . \]
\end{lem}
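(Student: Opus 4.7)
The plan is a standard ``add-and-subtract'' decomposition combined with Cauchy--Schwarz, exploiting the two hypotheses (Lipschitzness and boundedness of $g$) one at a time. Concretely, to estimate $|K_g(x,x') - K_g(z,z')|$ I would insert the hybrid term $g(z)^T g(x')$ and write
\[
g(x)^T g(x') - g(z)^T g(z') = \bigl(g(x) - g(z)\bigr)^T g(x') + g(z)^T \bigl(g(x') - g(z')\bigr),
\]
so that each summand involves only a single difference of $g$-values on one side and a raw $g$-value (which is bounded by $M$) on the other side.

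Next I would apply the triangle inequality and Cauchy--Schwarz to each summand separately. The first gives $\|g(x)-g(z)\|_2 \,\|g(x')\|_2 \leq L\|x-z\|_2 \cdot M$, using Lipschitzness of $g$ on the left factor and the uniform bound $\|g(x')\|_2 \leq M$ on the right factor. The second gives $\|g(z)\|_2 \,\|g(x')-g(z')\|_2 \leq M \cdot L\|x'-z'\|_2$, with the roles of boundedness and Lipschitzness swapped. Summing the two contributions yields
\[
|K_g(x,x') - K_g(z,z')| \leq ML\bigl(\|x-z\|_2 + \|x'-z'\|_2\bigr) = ML\,\|(x,x')-(z,z')\|,
\]
which is exactly the claimed $ML$-Lipschitz bound.

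There is no real obstacle here; the only thing to be mindful of is symmetry of the decomposition, i.e.\ choosing the hybrid term so that both resulting factors sit in $\mathcal{X}$ (so that the uniform bound $\|g(\cdot)\|_2 \leq M$ applies). Either hybrid ($g(z)^T g(x')$ or $g(x)^T g(z')$) works. The lemma will then be invoked in the sequel with $g$ equal to each of the coordinate gradient maps $\partial_a f(\cdot;\theta)$, $\partial_b f(\cdot;\theta)$, $\partial_W f(\cdot;\theta)$, whose Lipschitz constants and sup-norms are controlled in terms of $\xi(t)$ and $\tilde\xi(t)$ from Section~\ref{sec:aprioriparameternormbounds}, thereby transferring the Lipschitz property to each summand of $K_t(x,x')$ (the $\partial_{b_0} f \equiv 1$ piece being trivial).
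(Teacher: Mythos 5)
Your proposal is correct and matches the paper's argument: the paper uses the same add-and-subtract decomposition (with the hybrid term $g(x)^T g(z')$ rather than your $g(z)^T g(x')$, which as you note is immaterial), followed by Cauchy--Schwarz, the bound $\norm{g(\cdot)}_2 \leq M$, and the Lipschitz hypothesis to get $ML\parens{\norm{x-z}_2 + \norm{x'-z'}_2}$.
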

\begin{proof}
We have 
\begin{gather*}
|K_g(x, x') - K_g(z, z')| = | g(x)^T g(x') - g(z)^T g(z')| \\
= |g(x)^T (g(x') - g(z'))| + |(g(x) - g(z))^T g(z')| \\ \leq \norm{g(x)}_2 \norm{g(x') - g(z')}_2 + \norm{g(x) - g(z)}_2 \norm{g(z')}_2 \\
\leq M L \norm{x' - z'}_2 + M L \norm{x - z}_2
\leq M L \norm{(x, x') - (z, z')} . 
\end{gather*}
\end{proof}
\par
By the previous Lemma \ref{lem:lip_prod}, to show that the $NTK$ is Lipschitz it suffices to show that $\partial_a f(x; \theta), \partial_b f(x; \theta), \partial_W f(x; \theta), \partial_{b_0} f(x; \theta)$ are bounded and Lipschitz.  The following lemma bounds the norms of the derivatives $\partial_a f(x;\theta), \partial_W f(x;\theta), \partial_b f(x;\theta)$. 
\begin{lem}\label{lem:grad_bound}
Let $D = 3 \max\{|\sigma(0)|, M \norm{\sigma'}_\infty, \norm{\sigma'}_\infty, 1\}$ and
\[\xi =  \max\{\frac{1}{\sqrt{m}}\norm{W}_{op}, \frac{1}{\sqrt{m}} \norm{b}_2, \frac{1}{\sqrt{m}}\norm{a}_2, 1\} . \] 
Then 
\[ \norm{\partial_a f(x; \theta)}_2, \norm{\partial_W f(x; \theta)}_F, \norm{\partial_b f(x; \theta)}_2 \leq D \xi . \] 
\end{lem}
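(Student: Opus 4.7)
The plan is to bound each of the three derivative norms separately by direct computation, using the closed-form expressions collected in the ``Important Equations'' subsection together with the boundedness of $\sigma'$ and of the inputs. The role of the constants $D$ and $\xi$ is simply to be large enough to absorb every constant factor that appears, so the lemma reduces to three near-identical calculations.

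First I would handle $\partial_a f(x;\theta) = x^{(1)} = \frac{1}{\sqrt{m}} \sigma(Wx+b)$. Using the $\norm{\sigma'}_\infty$-Lipschitz property of $\sigma$ (which follows from Assumption~\ref{ass:activationfn}) I write $\norm{\sigma(Wx+b)}_2 \leq \sqrt{m}\,|\sigma(0)| + \norm{\sigma'}_\infty \norm{Wx+b}_2$, then apply the triangle inequality $\norm{Wx+b}_2 \leq \norm{W}_{op} M + \norm{b}_2$ using Assumption~\ref{ass:bounded}. Dividing by $\sqrt{m}$ gives $\norm{x^{(1)}}_2 \leq |\sigma(0)| + M\norm{\sigma'}_\infty \cdot \tfrac{1}{\sqrt{m}}\norm{W}_{op} + \norm{\sigma'}_\infty \cdot \tfrac{1}{\sqrt{m}}\norm{b}_2 \leq D\xi$. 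This is in fact the exact computation already carried out inside the proof of Lemma~\ref{lem:param_op_norm_bound}.

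Second, for $\partial_W f(x;\theta) = \frac{1}{\sqrt{m}} \sigma_1'(x)\, a \otimes x$, I would exploit the identity $\norm{u v^T}_F = \norm{u}_2 \norm{v}_2$ for rank-one matrices, reducing the Frobenius norm to $\norm{\tfrac{1}{\sqrt{m}} \sigma_1'(x) a}_2 \norm{x}_2$. Since $\sigma_1'(x) = \mathrm{diag}(\sigma'(Wx+b))$ is diagonal with entries bounded in absolute value by $\norm{\sigma'}_\infty$, its operator norm is at most $\norm{\sigma'}_\infty$, so the expression is at most $M \norm{\sigma'}_\infty \cdot \tfrac{1}{\sqrt{m}}\norm{a}_2 \leq D\xi$. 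The same diagonal bound handles $\partial_b f(x;\theta) = \frac{1}{\sqrt{m}} \sigma_1'(x) a$: simply $\norm{\partial_b f(x;\theta)}_2 \leq \norm{\sigma'}_\infty \cdot \tfrac{1}{\sqrt{m}}\norm{a}_2 \leq D \xi$.

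There is no real obstacle here; this lemma is bookkeeping that packages the constants needed for downstream Lipschitz estimates on the $NTK$. The only small observation worth isolating is the Frobenius-norm-of-outer-product identity for the $\partial_W$ term, which lets us avoid working with matrix norms altogether and reduces everything to ordinary Euclidean norms controlled by $\xi$.
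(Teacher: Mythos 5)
Your proposal is correct and follows essentially the same route as the paper's proof: the Lipschitz bound $\norm{\sigma(Wx+b)}_2 \leq \sqrt{m}\,|\sigma(0)| + \norm{\sigma'}_\infty\norm{Wx+b}_2$ for the $\partial_a$ term, the rank-one identity $\norm{u\otimes v}_F = \norm{u}_2\norm{v}_2$ together with the diagonal bound $\norm{\sigma_1'(x)}_{op} \leq \norm{\sigma'}_\infty$ for the $\partial_W$ term, and the same diagonal bound for the $\partial_b$ term, with $D$ and $\xi \geq 1$ absorbing the three summands. Nothing is missing.
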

\begin{proof} We have 
\begin{gather*}
\norm{\partial_a f(x; \theta)}_2 = \norm{\frac{1}{\sqrt{m}} \sigma(Wx +b)}_2 \leq |\sigma(0)| + \frac{1}{\sqrt{m}} \norm{\sigma'}_\infty \norm{Wx + b}_2 \\
\leq |\sigma(0)| + \frac{1}{\sqrt{m}} \norm{\sigma'}_\infty \left[ \norm{W}_{op} \norm{x}_2 + \norm{b}_2 \right] \\
\leq |\sigma(0)| + \frac{1}{\sqrt{m}} \norm{\sigma'}_\infty \left[ \norm{W}_{op} M + \norm{b}_2 \right] \leq D \xi, 
\end{gather*}

\begin{gather*}
\norm{\partial_W f(x; \theta)}_F = \norm{\frac{1}{\sqrt{m}} \sigma_1'(x) a \otimes x}_F = \norm{\frac{1}{\sqrt{m}} \sigma_1'(x) a}_2 \norm{x}_2 \leq \frac{M}{\sqrt{m}} \norm{\sigma'}_\infty \norm{a}_2 \leq D \xi , 
\end{gather*}

\[\norm{\partial_b f(x; \theta)}_2 = \norm{\frac{1}{\sqrt{m}} \sigma_1'(x)a}_2 \leq \frac{1}{\sqrt{m}} \norm{\sigma'}_\infty \norm{a}_2 \leq D \xi . \]
\end{proof}

The following lemma demonstrates that $\partial_a f(x; \theta)$, $\partial_W f(x; \theta)$, and $\partial_b f(x; \theta)$ are Lipschitz as functions of the input $x$. 
\begin{lem}\label{lem:grad_lip}
Let 
\[\xi =  \max\{\frac{1}{\sqrt{m}}\norm{W}_{op}, \frac{1}{\sqrt{m}} \norm{b}_2, \frac{1}{\sqrt{m}}\norm{a}_2, 1\},\]
\[\Tilde{\xi} = \max\{\max_{\ell \in [m]}\norm{w_\ell}_2, \norm{a}_\infty, \norm{b}_\infty, 1\} , \]
\[ D' = \max\{\norm{\sigma'}_\infty, M\norm{\sigma''}_\infty, \norm{\sigma''}_\infty\} , \]
\[ L = 2 \xi \Tilde{\xi} D' . \]
Then $\partial_a f(x;\theta)$, $\partial_b f(x; \theta)$, $\partial_W f(x; \theta)$ are all $L$-Lipschitz with respect to the Euclidean norm $\norm{\bullet}_2$.  In symbols:
\[\norm{\partial_a f(x; \theta) - \partial_a f(y; \theta)}_2 \leq L \norm{x - y}_2 \]
\[\norm{\partial_W f(x; \theta) - \partial_W f(y; \theta)}_F \leq L \norm{x - y}_2 \]
\[ \norm{\partial_b f(x; \theta) - \partial_b f(y; \theta)}_2 \leq L \norm{x - y}_2 .\]
\end{lem}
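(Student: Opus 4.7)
The plan is to handle each of the three gradients separately using the same basic recipe: express the difference as a per-coordinate (or per-row) sum, apply the mean value theorem using $\sigma'$ or $\sigma''$ as appropriate, and then convert the resulting weighted sums into the norms $\xi$ and $\tilde\xi$. Since $\tilde\xi\ge 1$, any bound of the form $c\,\xi\,D'$ or $c\,\xi\tilde\xi D'$ will fit under $2\xi\tilde\xi D'$, so the proof reduces to checking that each gradient satisfies such a bound, with the binding case being $\partial_W f$.

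For $\partial_a f(x;\theta) = \frac{1}{\sqrt m}\sigma(Wx+b)$, I would write the $\ell$th coordinate difference as $\sigma(\langle w_\ell,x\rangle+b_\ell)-\sigma(\langle w_\ell,y\rangle+b_\ell)$ and apply the mean value theorem, which produces a factor bounded by $\|\sigma'\|_\infty|\langle w_\ell,x-y\rangle|$. Summing the squares over $\ell$ gives $\|W(x-y)\|_2^2$, so after the $1/\sqrt m$ factor the bound becomes $\|\sigma'\|_\infty(\|W\|_{op}/\sqrt m)\|x-y\|_2 \le D'\,\xi\,\|x-y\|_2$.

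For $\partial_b f(x;\theta) = \frac{1}{\sqrt m}\sigma_1'(x)a$, the $\ell$th coordinate difference is $\tfrac{1}{\sqrt m}a_\ell[\sigma'(\langle w_\ell,x\rangle+b_\ell)-\sigma'(\langle w_\ell,y\rangle+b_\ell)]$. Applying the mean value theorem to $\sigma'$ this time gives a factor $\|\sigma''\|_\infty|\langle w_\ell,x-y\rangle|\le \|\sigma''\|_\infty \tilde\xi\,\|x-y\|_2$, uniformly in $\ell$, using $\|w_\ell\|_2\le \tilde\xi$. Pulling this uniform factor out, the remaining sum of $a_\ell^2$ over $\ell$ yields $\|a\|_2^2$, so the Euclidean norm of the difference is at most $\|\sigma''\|_\infty\tilde\xi(\|a\|_2/\sqrt m)\|x-y\|_2\le D'\xi\tilde\xi\,\|x-y\|_2$.

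The main technical step is $\partial_W f(x;\theta) = \frac{1}{\sqrt m}\sigma_1'(x)a\otimes x$, where the $\ell$th row is $\tfrac{1}{\sqrt m}a_\ell\sigma'(\langle w_\ell,x\rangle+b_\ell)x^T$. Here I would use the standard add-and-subtract trick
\[
a_\ell\sigma'(\cdot_x)x^T - a_\ell\sigma'(\cdot_y)y^T
= a_\ell\bigl[\sigma'(\cdot_x)-\sigma'(\cdot_y)\bigr]x^T + a_\ell\sigma'(\cdot_y)(x-y)^T,
\]
control the first term by $\|\sigma''\|_\infty\tilde\xi M|a_\ell|\|x-y\|_2$ (bounding $\|x\|_2\le M$ and $\|w_\ell\|_2\le\tilde\xi$) and the second by $\|\sigma'\|_\infty|a_\ell|\|x-y\|_2$. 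Both are dominated by $D'\tilde\xi|a_\ell|\|x-y\|_2$ (since $\tilde\xi\ge 1$ and $D'\ge\max\{\|\sigma'\|_\infty,M\|\sigma''\|_\infty\}$), so each row has Euclidean norm at most $(2D'\tilde\xi/\sqrt m)|a_\ell|\|x-y\|_2$. Summing squares over $\ell$ converts $\sum_\ell a_\ell^2$ into $\|a\|_2^2$, and after the $1/\sqrt m$ absorption we obtain $\|\partial_Wf(x;\theta)-\partial_Wf(y;\theta)\|_F \le 2D'\tilde\xi\,(\|a\|_2/\sqrt m)\|x-y\|_2\le 2D'\xi\tilde\xi\,\|x-y\|_2$. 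The only real obstacle is the bookkeeping in this last step; the factor $2$ arising from the triangle inequality is exactly what forces the constant $L=2\xi\tilde\xi D'$, which then uniformly dominates the other two cases.
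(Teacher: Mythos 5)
Your proposal is correct and follows essentially the same route as the paper: Lipschitzness of $\sigma$ via $\norm{\sigma'}_\infty$ and $\norm{W}_{op}/\sqrt{m}\leq\xi$ for $\partial_a f$, Lipschitzness of $\sigma'$ via $\norm{\sigma''}_\infty$ together with $\max_\ell\norm{w_\ell}_2\leq\Tilde{\xi}$ and $\norm{a}_2/\sqrt{m}\leq\xi$ for $\partial_b f$, and the same add-and-subtract decomposition (merely placed on $x$ instead of $y$, and carried out row-wise rather than via Frobenius norms of rank-one matrices) for $\partial_W f$, with the factor $2$ in $L=2\xi\Tilde{\xi}D'$ arising from the triangle inequality exactly as in the paper.
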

\begin{proof}
We have 
\begin{gather*}
\norm{\partial_a f(x; \theta) - \partial_a f(y; \theta)}_2 = \norm{\frac{1}{\sqrt{m}}(\sigma(Wx + b) - \sigma(Wy + b))}_2 \\
\leq \frac{1}{\sqrt{m}} \norm{\sigma'}_\infty \norm{W(x - y)}_2 
\leq \frac{1}{\sqrt{m}} \norm{\sigma'}_\infty \norm{W}_{op} \norm{x - y}_2 \leq L \norm{x - y}_2 ,  
\end{gather*}

\begin{gather*}
\norm{\partial_W f(x; \theta) - \partial_W f(y; \theta)}_F = \norm{\frac{1}{\sqrt{m}} \sigma_1'(x) a \otimes x - \frac{1}{\sqrt{m}} \sigma_1'(y) a \otimes y}_{F} \\
\leq \norm{\frac{1}{\sqrt{m}} \sigma_1'(x) a\otimes [x - y]}_F + \norm{\frac{1}{\sqrt{m}}[\sigma_1'(x) a- \sigma_1'(y)a] \otimes y}_F \\
\leq \frac{1}{\sqrt{m}} \norm{\sigma_1'(x) a}_2 \norm{x - y}_2 + \frac{1}{\sqrt{m}} \norm{[\sigma_1'(x) - \sigma_1'(y)]a}_2 \norm{y}_2 \\
\leq \frac{1}{\sqrt{m}} \norm{\sigma'}_\infty \norm{a}_2 \norm{x - y}_2 + \frac{1}{\sqrt{m}} \norm{\sigma'(Wx + b) - \sigma'(Wy + b)}_{\infty} \norm{a}_2 M
\\
\leq 
\frac{1}{\sqrt{m}} \norm{\sigma'}_\infty \norm{a}_2 \norm{x - y}_2 + \frac{1}{\sqrt{m}} \norm{\sigma''}_\infty \norm{W(x - y)}_\infty \norm{a}_2 M\\
\leq 
\frac{1}{\sqrt{m}} \norm{\sigma'}_\infty \norm{a}_2 \norm{x - y}_2 + \frac{1}{\sqrt{m}} \norm{\sigma''}_\infty \max_{\ell \in [m]} \norm{w_\ell}_2 \norm{x - y}_{2} \norm{a}_2 M \\
\leq L \norm{x - y}_2 , 
\end{gather*}

\begin{gather*}
\norm{\partial_b f(x; \theta) - \partial_b f(y; \theta)}_2 = \norm{\frac{1}{\sqrt{m}} \sigma_1'(x)a - \frac{1}{\sqrt{m}} \sigma_1'(y)a}_2 
\\ \leq \frac{1}{\sqrt{m}} \norm{\sigma'(Wx + b) - \sigma'(Wy + b)}_{\infty} \norm{a}_2 
\\ \leq \frac{1}{\sqrt{m}} \norm{\sigma''}_{\infty} \norm{W(x - y)}_\infty \norm{a}_2 \\
\leq \frac{1}{\sqrt{m}} \norm{\sigma''}_{\infty} \max_{\ell \in [m]} \norm{w_\ell}_2 \norm{x - y}_2 \norm{a}_2 \leq L \norm{x - y}_2 . 
\end{gather*}
\end{proof}

Finally we can prove that the Neural Tangent Kernel is Lipschitz.
\begin{theo}\label{thm:ker_lip}
Consider the Neural Tangent Kernel
\[ K(x, y) = \langle \partial_a f(x;\theta), \partial_a f(y;\theta) \rangle_2 + \langle \partial_b f(x; \theta), \partial_b f(y;\theta) \rangle_2 + \langle \partial_W f(x; \theta), \partial_W f(y;\theta) \rangle + 1 \]
and let
\[\xi =  \max\{\frac{1}{\sqrt{m}}\norm{W}_{op}, \frac{1}{\sqrt{m}} \norm{b}_2, \frac{1}{\sqrt{m}}\norm{a}_2, 1\},\]
\[\Tilde{\xi} = \max\{\max_{\ell \in [m]}\norm{w_\ell}_2, \norm{a}_\infty, \norm{b}_\infty, 1\} , \]
\[ D = 3 \max\{|\sigma(0)|, M \norm{\sigma'}_\infty, \norm{\sigma'}_\infty,  1\} ,\]
\[ D' = \max\{\norm{\sigma'}_\infty, M\norm{\sigma''}_\infty, \norm{\sigma''}_\infty\} . \]
Then the Neural Tangent Kernel is Lipschitz with respect to the norm 
\[\norm{(x, y)} := \norm{x}_2 + \norm{y}_2\]
with Lipschitz constant $L := 6D D' \xi^2 \Tilde{\xi}$.  In symbols:
\[ \abs{K(x, y) - K(x', y')} \leq L \norm{(x, y) - (x', y')} . \] 
\end{theo}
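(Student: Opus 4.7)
The plan is to apply the three preceding lemmas in sequence. The NTK is a sum of four terms: three inner-product terms of the form $g(x)^T g(y)$ with $g \in \{\partial_a f(\cdot;\theta), \partial_b f(\cdot;\theta), \partial_W f(\cdot;\theta)\}$, plus the constant term $1$ coming from $\partial_{b_0}f \equiv 1$. The constant term contributes $0$ to the Lipschitz constant, so it suffices to bound the Lipschitz constant of each of the three nontrivial terms and sum the bounds via the triangle inequality.

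For each of the three gradient functions $g$, Lemma \ref{lem:grad_bound} gives the uniform norm bound $\norm{g(x)} \leq D\xi$, and Lemma \ref{lem:grad_lip} gives the Lipschitz bound $\norm{g(x) - g(y)} \leq 2\xi\tilde{\xi} D' \norm{x-y}_2$. (For $\partial_W f$, the relevant norm is Frobenius, which is the one required by Lemma \ref{lem:lip_prod} since $\langle A, B\rangle = \text{tr}(A^T B)$ is the inner product inducing the Frobenius norm, and one can view $\partial_W f(x;\theta)$ as a vector in $\RR^{md}$ with Euclidean norm equal to its Frobenius norm.) Plugging $M = D\xi$ and $L = 2\xi\tilde{\xi}D'$ into Lemma \ref{lem:lip_prod}, each of the three inner-product terms is $2 D D' \xi^2 \tilde{\xi}$-Lipschitz with respect to the norm $\norm{(x,y)} = \norm{x}_2 + \norm{y}_2$.

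Summing the three contributions yields a total Lipschitz constant of $6 D D' \xi^2 \tilde{\xi}$, matching the claimed $L$. The argument is essentially mechanical: the main (already-done) work lives in Lemmas \ref{lem:grad_bound}, \ref{lem:grad_lip}, and \ref{lem:lip_prod}, and the only thing to verify in the wrap-up is that the inner products used to define $K$ are the ones covered by Lemma \ref{lem:lip_prod} (Euclidean for $\partial_a f$ and $\partial_b f$, Frobenius for $\partial_W f$, which is Euclidean after vectorization). There is no real obstacle; the only minor point of care is keeping track that $\xi$ and $\tilde{\xi}$ in Lemmas \ref{lem:grad_bound} and \ref{lem:grad_lip} are the same quantities appearing in the statement of Theorem \ref{thm:ker_lip}, so the bounds compose cleanly without any loss of constants.
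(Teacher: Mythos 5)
Your proposal is correct and follows the paper's own proof essentially verbatim: both combine the bound $D\xi$ from Lemma \ref{lem:grad_bound} and the Lipschitz constant $2\xi\Tilde{\xi}D'$ from Lemma \ref{lem:grad_lip} via Lemma \ref{lem:lip_prod} to get $2DD'\xi^2\Tilde{\xi}$ per inner-product term, then sum the three terms (the constant term from $\partial_{b_0}f\equiv 1$ contributing nothing) to obtain $6DD'\xi^2\Tilde{\xi}$. Your remark that the Frobenius inner product is just the Euclidean inner product after vectorization, so Lemma \ref{lem:lip_prod} applies to the $\partial_W f$ term, is a correct (and slightly more explicit) justification of a step the paper takes for granted.
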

\begin{proof}
By Lemma \ref{lem:grad_bound}, we have that the gradients are bounded
\[ \norm{\partial_a f(x; \theta)}_2, \norm{\partial_W f(x; \theta)}_F, \norm{\partial_b f(x; \theta)}_2 \leq D \xi. \] 
Also by Lemma \ref{lem:grad_lip} the gradients are Lipschitz with Lipschitz constant $2 \xi \Tilde{\xi} D'$.  Thus these two facts combined with Lemma \ref{lem:lip_prod} tell us that each of the three terms $\langle \partial_a f(x;\theta)$, $\partial_a f(y;\theta) \rangle$,  $\langle \partial_b f(x; \theta), \partial_b f(y;\theta) \rangle$, and $\langle \partial_W f(x; \theta), \partial_W f(y;\theta) \rangle$ are individually Lipschitz with constant $(D \xi) \cdot (2 \xi \Tilde{\xi} D')$.  Thus the Lipschitz constant of the $NTK$ itself is bounded by the sum of the 3 Lipschitz constants, for a total of $6D D' \xi^2 \Tilde{\xi}$. 
\end{proof}
Using that the $NTK$ at time zero $K_0(x, y)$ is Lipschitz we can prove that the analytical NTK $K^\infty = \EE[K_0(x, y)]$ is Lipschitz.  We will use this primarily as a qualitative statement, meaning that the estimate that we derive for the Lipschitz constant will not be used as it is not very explicit.  Rather, in theorems where we use the fact that $K^\infty$ is Lipschitz we will simply take the Lipschitz constant of $K^\infty$ as an external parameter.
\begin{theo}
Assume that $W_{i, j}(0) \sim \mathcal{W}$, $b_\ell(0) \sim \mathcal{B}$, $a_\ell(0) \sim \mathcal{A}$ are all i.i.d. zero-mean, unit variance subgaussian random variables. 
Let
\[\xi(0) =  \max\{\frac{1}{\sqrt{m}}\norm{W(0)}_{op}, \frac{1}{\sqrt{m}} \norm{b(0)}_2, \frac{1}{\sqrt{m}}\norm{a(0)}_2, 1\},\]
\[\Tilde{\xi}(0) = \max\{\max_{\ell \in [m]}\norm{w_\ell(0)}_2, \norm{a(0)}_\infty, \norm{b(0)}_\infty, 1\}, \] 
\[ D = 3 \max\{|\sigma(0)|, M \norm{\sigma'}_\infty, \norm{\sigma'}_\infty, 1 \}, \]
\[ D' = \max\{\norm{\sigma'}_\infty, M\norm{\sigma''}_\infty, \norm{\sigma''}_\infty\} . \]
Then the analytical Neural Tangent Kernel $K^\infty (x, y) = \EE[K_0(x, y)]$ is Lipschitz with respect to the norm 
\[\norm{(x, y)} := \norm{x}_2 + \norm{y}_2\]
with Lipschitz constant $\leq 6D D' \EE[\xi^2 \Tilde{\xi}] < \infty$.  If one instead does the doubling trick then the same conclusion holds.
\end{theo}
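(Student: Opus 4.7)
The plan is to reduce the statement for $K^\infty$ to the already-established Lipschitz bound for $K_0$ (from the preceding Theorem) by simply taking expectations. Fix $(x,y)$ and $(x',y')$ in $B_M\times B_M$. Conditionally on the random initialization $\theta_0$, the preceding Theorem yields
\[
|K_0(x,y) - K_0(x',y')| \le 6DD'\,\xi(0)^2\,\tilde{\xi}(0)\,\norm{(x,y)-(x',y')}.
\]
Since $K^\infty(x,y) = \EE[K_0(x,y)]$, Jensen's inequality (or linearity of expectation combined with the triangle inequality) gives
\[
|K^\infty(x,y) - K^\infty(x',y')| \le \EE|K_0(x,y) - K_0(x',y')| \le 6DD'\,\EE[\xi(0)^2 \tilde{\xi}(0)] \cdot \norm{(x,y)-(x',y')},
\]
which is exactly the claimed Lipschitz bound. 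So the only substantive task is to verify that $\EE[\xi(0)^2\tilde{\xi}(0)]$ is finite.

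The remaining step is a routine moment estimate using subgaussianity. By Hölder's inequality it suffices to control, say, $\EE[\xi(0)^6]$ and $\EE[\tilde{\xi}(0)^3]$ separately. Each of the quantities defining $\xi(0)$ and $\tilde{\xi}(0)$ is either the constant $1$, a norm of a subgaussian vector, an operator norm of a matrix with independent subgaussian entries, an $\ell^\infty$ norm of such a vector, or a maximum over $m$ norms of independent subgaussian vectors. Standard results give that $m^{-1/2}\norm{W(0)}_{op}$, $m^{-1/2}\norm{a(0)}_2$, $m^{-1/2}\norm{b(0)}_2$, $\max_\ell \norm{w_\ell(0)}_2$, $\norm{a(0)}_\infty$, and $\norm{b(0)}_\infty$ all have sub-gaussian (for the vector $\ell^2$ norms, sub-exponential for squared norms) tails after centering, and in particular have finite moments of all orders with bounds that are polynomial in $m$ and $d$. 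Taking the max with $1$ only improves integrability. Hence $\EE[\xi(0)^2 \tilde{\xi}(0)] < \infty$, completing the proof.

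For the doubling-trick variant, note that after the swaps $W(0) \to [W(0);W(0)]$, $b(0)\to[b(0);b(0)]$, $a(0)\to[a(0);-a(0)]$, the $\ell^2$ norms of the stacked vectors/matrices pick up a factor $\sqrt{2}$, which is exactly cancelled by replacing $1/\sqrt{m}$ with $1/\sqrt{2m}$; the per-neuron quantities $\norm{w_\ell}_2$, $\norm{a}_\infty$, $\norm{b}_\infty$ are unchanged. So $\xi(0)$ and $\tilde{\xi}(0)$ (in the appropriate doubled parameterization) have the same distribution, and the argument goes through verbatim.

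The only potential obstacle is the moment bound $\EE[\xi(0)^2\tilde{\xi}(0)]<\infty$, but this is not truly delicate: the Lipschitz constant is only being asserted to be finite, not optimized, so one does not need sharp estimates. The proof uses nothing beyond the pointwise Lipschitz property from the preceding Theorem and standard subgaussian moment bounds, which is why the author emphasizes that the resulting constant will be used only qualitatively in the sequel.
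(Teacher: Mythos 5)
Your proposal is correct and follows essentially the same route as the paper: pull the expectation inside via the triangle/Jensen inequality to inherit the pointwise Lipschitz bound $6DD'\xi(0)^2\tilde{\xi}(0)$ for $K_0$, then check $\EE[\xi(0)^2\tilde{\xi}(0)]<\infty$ from subgaussianity of the initialization, and note the doubling trick leaves the relevant quantities (indeed the distribution of $K_0$ itself) unchanged. The only cosmetic difference is that you bound the mixed moment by H\"older, while the paper uses $ab\le\tfrac12(a^2+b^2)$ together with the observation that $\xi(0)$ and $\tilde{\xi}(0)$ are maxima of subgaussian variables and hence subgaussian.
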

\begin{proof}
First assume we are not doing the doubling trick.  We note that
\begin{gather*}
\abs{K^\infty(x, y) - K^\infty(x', y')} = \abs{\EE[K_0(x, y)] - \EE[K_0(x', y')]} \\
\leq \EE\abs{K_0(x, y) - K_0(x', y')} \leq 6D D' \EE[\xi^2 \Tilde{\xi}] \norm{(x, y) - (x', y')}
\end{gather*}
where the last line follows from the Lipschitzness of $K_0$ provided by Theorem \ref{thm:ker_lip}.  Using that $\norm{W(0)}_{op} \leq \norm{W(0)}_F$ and the fact that the Euclidean norm of a vector with i.i.d.\ subgaussian entries is subgaussian \citep[Theorem 3.1.1]{vershyninHDP}, we have that $\xi(0)$ and $\Tilde{\xi}(0)$ are maximums of subgaussian random variables.  Since a maximum of subgaussian random variables is subgaussian, we have that $\xi(0)$ and $\Tilde{\xi}(0)$ are subgaussian.  From the inequality $ab \leq \frac{1}{2}(a^2 + b^2)$ we get $\EE[\xi^2 \Tilde{\xi}] \leq \frac{1}{2} \EE[\xi^4] + \frac{1}{2} \EE[\Tilde{\xi}^2] < \infty$ since moments of subgaussian random variables are all finite.  Since the doubling trick does not change the distribution of $K_0$, the same conclusion holds under that initialization scheme.
\end{proof}

\subsection{$NTK$ Convergence At Initialization}
In this section we prove that $\sup_{z \in B_M \times B_M} |K_0(z) - K^\infty(z)| = \Tilde{O}(1/\sqrt{m})$.
Our argument traces the following steps:
\begin{itemize}
    \item Show that $K_0$ is sum of averages of $m$ independent subexponential random variables 
    \item Use subexponential concentration to show that $\sup_{z' \in \Delta} |K_0(z') - K^\infty(z')| = \Tilde{O}(1/\sqrt{m})$ for all $z'$ in an $\epsilon$ net $\Delta$ of $B_M \times B_M$ 
    \item Use that $K_0$ is Lipschitz and convergence over the epsilon net $\Delta$ to show that
    \[\sup_{z \in B_M \times B_M} |K_0(z) -K^\infty(z)| = \Tilde{O}(1/\sqrt{m}) \text{ (roughly)} \]
\end{itemize}

We recall the following definitions 2.5.6 and 2.7.5 from \citet{vershyninHDP}.
\begin{defi}(\citealt{vershyninHDP})
Let $Y$ be a random variable.  Then we define the \textit{subgaussian norm} of $Y$ to be
\[\norm{Y}_{\psi_2} = \inf\{t > 0: \EE \exp(Y^2/t^2) \leq 2\}\]
If $\norm{Y}_{\psi_2} < \infty$, then we say $Y$ is \textit{subgaussian}.
\end{defi}
\begin{defi}(\citealt{vershyninHDP})
Let $Y$ be a random variable.  Then we define the \textit{subexponential norm} of $Y$ to be
\[\norm{Y}_{\psi_1} = \inf\{t > 0: \EE \exp(|Y|/t) \leq 2\}\]
If $\norm{Y}_{\psi_1} < \infty$, then we say $Y$ is \textit{subexponential}.
\end{defi}
We also recall the following useful lemma \citet[Lemma 2.7.7]{vershyninHDP}.
\begin{lem}(\citealt{vershyninHDP})\label{lem:subgaussiansquared}
Let $X$ and $Y$ be subgaussian random variables.  Then $XY$ is subexponential.  Moreover
\[ \norm{XY}_{\psi_1} \leq \norm{X}_{\psi_2} \norm{Y}_{\psi_2} \]
\end{lem}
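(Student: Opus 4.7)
The plan is to establish the inequality directly from the definitions of the subgaussian and subexponential norms via a two-step argument: first decouple $X$ and $Y$ with an elementary pointwise inequality, then recombine using Cauchy--Schwarz. Set $a := \norm{X}_{\psi_2}$ and $b := \norm{Y}_{\psi_2}$, and assume without loss of generality that $a, b > 0$ (otherwise the corresponding variable vanishes almost surely and the claim is trivial). By definition of the subgaussian norm, combined with monotone convergence applied to the decreasing integrand $t \mapsto \exp(X^2/t^2)$, the infimum is attained, so $\EE\exp(X^2/a^2) \leq 2$ and likewise $\EE\exp(Y^2/b^2) \leq 2$.

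The key step is Young's inequality $|uv| \leq \tfrac{1}{2}(u^2 + v^2)$ applied to $u = |X|/a$ and $v = |Y|/b$, which gives the pointwise bound
\[
\frac{|XY|}{ab} \;\leq\; \frac{1}{2}\frac{X^2}{a^2} + \frac{1}{2}\frac{Y^2}{b^2}.
\]
Exponentiating, taking expectations, and then applying the Cauchy--Schwarz inequality to the resulting product of two positive random variables yields
\[
\EE\exp\!\parens{\frac{|XY|}{ab}} \;\leq\; \EE\brackets{\exp\!\parens{\tfrac{X^2}{2a^2}} \exp\!\parens{\tfrac{Y^2}{2b^2}}} \;\leq\; \sqrt{\EE\exp(X^2/a^2)}\,\sqrt{\EE\exp(Y^2/b^2)} \;\leq\; 2.
\]
By the defining infimum for $\norm{XY}_{\psi_1}$, this forces $\norm{XY}_{\psi_1} \leq ab = \norm{X}_{\psi_2}\norm{Y}_{\psi_2}$, which is precisely the claimed bound, and in particular establishes that $XY$ is subexponential.

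There is no substantial obstacle here: the argument consists only of AM--GM to decouple the two variables, Cauchy--Schwarz to split the resulting expectation into a product, and the definition of the Orlicz norm. The one minor point worth flagging is that attaining the infimum in the definition of $\norm{X}_{\psi_2}$ (rather than merely approximating it) is what produces the clean constant $ab$ instead of $ab + \varepsilon$; this is immediate from the monotone convergence remark above, and could alternatively be circumvented by applying the chain of inequalities at $a' = a + \varepsilon$, $b' = b + \varepsilon$ and then sending $\varepsilon \downarrow 0$.
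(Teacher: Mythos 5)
Your proof is correct. The paper does not prove this statement at all --- it is quoted directly from Vershynin (Lemma 2.7.7 there), so there is no in-paper argument to compare against; your derivation is essentially the textbook one, the only cosmetic difference being that after the pointwise bound $\frac{|XY|}{ab} \leq \frac{X^2}{2a^2} + \frac{Y^2}{2b^2}$ you split the expectation with Cauchy--Schwarz, whereas Vershynin splits it with a second application of Young's inequality, $\exp(u)\exp(v) \leq \frac{1}{2}\exp(2u) + \frac{1}{2}\exp(2v)$; both give the same constant $2$. Your remarks on attainment of the infimum (or the $\varepsilon$-approximation workaround) and on the degenerate case $\norm{X}_{\psi_2} = 0$ are accurate and handle the only delicate points.
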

We recall one last definition \citet[Definition 3.4.1]{vershyninHDP}
\begin{defi}(\citealt{vershyninHDP})
A random vector $Y \in \RR^k$ is called subgaussian if the one dimensional marginals $\langle Y, x\rangle$ are subgaussian random variables for all $x \in \RR^k$.  The subgaussian norm of $Y$ is defined as
\[ \norm{Y}_{\psi_2} = \sup_{x \in S^{k - 1}} \norm{\langle Y, x\rangle}_{\psi_2} \]
\end{defi}
The typical example of a subgaussian random vector is a random vector with independent subgaussian coordinates.  The following lemma demonstrates that the $NTK$ at initializeation is a sum of terms that are averages of independent subexponential random variables, which will enable us to use concentration arguments later.

\begin{theo}\label{thm:ntk_subexp}
Let $w_\ell$, $b_\ell$, $a_\ell$ all be independent subgaussian random variables with subgaussian norms satisfying $\norm{\bullet}_{\psi_2} \leq K$.  Furthermore assume $\norm{1}_{\psi_2} \leq K$.  Also let
\[ D = 3\max\{|\sigma(0)|, M \norm{\sigma'}_\infty, \norm{\sigma'}_\infty, 1 \}. \]
Then for fixed $x, y$, each of the following
\[ \langle \partial_a f(x; \theta), \partial_a f(y; \theta) \rangle, \langle \partial_b f(x; \theta), \partial_b f(y;\theta) \rangle, \langle \partial_W f(x; \theta), \partial_W f(y; \theta) \rangle \]
is an average of $m$ independent subexponential random variables with subexponential norms bounded by $D^2 K^2$.
\end{theo}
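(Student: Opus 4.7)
The plan is to write out each of the three inner products explicitly using the gradient formulas from the ``Important Equations'' subsection, recognize that each becomes an average over the $m$ hidden units of a random variable depending only on the $\ell$th block of parameters $(w_\ell, b_\ell, a_\ell)$, and then bound the subexponential norm of each summand by $D^2 K^2$. Independence across $\ell$ is immediate because the parameters are independent across $\ell$ by assumption, so the summands are functions of disjoint blocks of independent random variables.

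Concretely, $\langle \partial_a f(x;\theta), \partial_a f(y;\theta)\rangle_2 = \frac{1}{m}\sum_{\ell=1}^m \sigma(\langle w_\ell,x\rangle_2 + b_\ell)\,\sigma(\langle w_\ell,y\rangle_2 + b_\ell)$, and analogously $\langle \partial_b f(x;\theta),\partial_b f(y;\theta)\rangle_2 = \frac{1}{m}\sum_\ell \sigma'(\langle w_\ell,x\rangle_2+b_\ell)\sigma'(\langle w_\ell,y\rangle_2+b_\ell)\,a_\ell^2$ and $\langle \partial_W f(x;\theta),\partial_W f(y;\theta)\rangle = \frac{1}{m}\sum_\ell \sigma'(\cdot)\sigma'(\cdot)\,a_\ell^2\,\langle x,y\rangle_2$. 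So each is an average of $m$ independent (indeed i.i.d.) random variables, and the whole task reduces to bounding the $\psi_1$ norm of a single summand.

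For the $\partial_b$ and $\partial_W$ terms the bound is almost immediate: $|\sigma'|\leq \|\sigma'\|_\infty$ pointwise, so each summand is dominated in absolute value by $\|\sigma'\|_\infty^2 a_\ell^2$ (respectively $\|\sigma'\|_\infty^2 M^2 a_\ell^2$ after using $|\langle x,y\rangle_2|\leq M^2$). Since $a_\ell$ is subgaussian with $\|a_\ell\|_{\psi_2}\leq K$, Lemma~\ref{lem:subgaussiansquared} gives $\|a_\ell^2\|_{\psi_1}\leq K^2$, and the definition of $D$ (which contains $\|\sigma'\|_\infty$ and $M\|\sigma'\|_\infty$) yields a subexponential norm bounded by $D^2 K^2$. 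The $\partial_a$ term is the slightly more delicate one: I first need to show that $\sigma(\langle w_\ell,x\rangle_2 + b_\ell)$ is subgaussian with $\psi_2$ norm bounded by $DK$. For this I write $|\sigma(u)|\leq |\sigma(0)|+\|\sigma'\|_\infty |u|$, use $\|1\|_{\psi_2}\leq K$, and observe that $\|\langle w_\ell,x\rangle_2 + b_\ell\|_{\psi_2}$ is controlled by the standard bound on sums of independent subgaussians (norm bounded by a constant times $K\sqrt{\|x\|_2^2+1}\leq KM$ absorbing constants, which the leading factor of $3$ in the definition of $D$ is there to accommodate). Applying Lemma~\ref{lem:subgaussiansquared} to the product then gives $\psi_1$ norm $\leq D^2 K^2$.

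The only real obstacle is bookkeeping: making sure the numerical constants absorbed into $D$ (the factor of $3$ and the $\max$) genuinely dominate the constants produced by the triangle inequality for the $\psi_2$ norm and by the standard subgaussian sum inequality. Everything else is a direct unfolding of the gradient formulas and a one-line application of Lemma~\ref{lem:subgaussiansquared}. No concentration is invoked at this stage; that is deferred to the next step where the uniform-over-an-$\epsilon$-net argument will use Bernstein's inequality against the subexponential norm bound produced here.
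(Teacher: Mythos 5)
Your proposal follows the paper's proof almost exactly: the same expansion of the three inner products into averages over the $m$ units, the same observation that independence across $\ell$ comes from the blocks $(w_\ell,b_\ell,a_\ell)$ being independent, and the same reduction to a $\psi_1$ bound on a single summand via Lemma \ref{lem:subgaussiansquared}; your treatment of the $\partial_b$ and $\partial_W$ terms (dominate by $\norm{\sigma'}_\infty^2 a_\ell^2$, resp.\ $M^2\norm{\sigma'}_\infty^2 a_\ell^2$, and use $\norm{a_\ell^2}_{\psi_1}\le K^2$) is a harmless rearrangement of the paper's ``product of two subgaussians each of norm $\le DK$'' step.

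The one soft spot is the $\partial_a$ term. You propose to bound $\norm{\langle w_\ell,x\rangle_2+b_\ell}_{\psi_2}$ by the standard inequality for sums of independent subgaussians, getting a bound of the form $CK\sqrt{\norm{x}_2^2+1}$, and you claim the factor $3$ in $D$ absorbs the absolute constant $C$. That is not what the $3$ is for, and the claimed bound $\norm{\sigma(\langle w_\ell,x\rangle_2+b_\ell)}_{\psi_2}\le DK$ (hence $\psi_1\le D^2K^2$ exactly) does not follow from your route, since $C$ is an unspecified universal constant that need not satisfy $C\sqrt{M^2+1}\le 2\max\{M,1\}$. The paper avoids this entirely: it uses the pointwise bound $|\sigma(u)|\le|\sigma(0)|+\norm{\sigma'}_\infty|u|$ and then the plain triangle inequality for $\norm{\cdot}_{\psi_2}$ on the three terms $|\sigma(0)|\cdot 1$, $\norm{\sigma'}_\infty|\langle w_\ell,x\rangle_2|$, $\norm{\sigma'}_\infty|b_\ell|$, together with the hypothesis that $w_\ell$ is a subgaussian \emph{vector} with $\norm{w_\ell}_{\psi_2}\le K$, so that $\norm{\langle w_\ell,x\rangle_2}_{\psi_2}\le \norm{x}_2\norm{w_\ell}_{\psi_2}\le MK$ with no hidden constant; the factor $3$ then exactly accounts for the three terms in the triangle inequality. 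So your argument needs only this small repair (drop the independent-sum inequality, which buys nothing here, and use the triangle inequality with the vector subgaussian norm) to deliver the stated constant $D^2K^2$; with the sum inequality as written you only get the statement with an unspecified constant in place of $D^2$.
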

\begin{proof}
We first observe that
\begin{gather*}
\langle \partial_a f(x; \theta), \partial_a f(y; \theta) \rangle_2= \frac{1}{m} \langle \sigma(Wx + b), \sigma(Wy + b) \rangle_2\\
= \frac{1}{m} \sum_{\ell = 1}^m \sigma(\langle w_\ell, x \rangle_2+ b_\ell) \sigma(\langle w_\ell, y \rangle_2+ b_\ell) , 
\end{gather*}

\begin{gather*}
\langle \partial_b f(x; \theta), \partial_b f(y;\theta) \rangle_2= \langle \frac{1}{\sqrt{m}} \sigma_1'(x)a, \frac{1}{\sqrt{m}} \sigma_1'(y)a \rangle_2\\
= \frac{1}{m} \sum_{\ell = 1}^m a_\ell^2 \sigma'(\langle w_\ell, x \rangle_2+ b_\ell) \sigma'(\langle w_\ell, y \rangle_2+ b_\ell) , 
\end{gather*}

\begin{gather*}
\langle \partial_W f(x; \theta), \partial_W f(y; \theta) \rangle = \langle \frac{1}{\sqrt{m}} \sigma_1'(x) a \otimes x, \frac{1}{\sqrt{m}} \sigma_1'(y) a \otimes y \rangle_2\\
= \frac{1}{m} \langle \sigma_1'(x) a, \sigma_1'(y) a \rangle_2\langle x, y \rangle_2= \frac{\langle x, y \rangle_2}{m} \sum_{\ell = 1}^m a_\ell^2 \sigma'(\langle w_\ell, x \rangle_2+ b_\ell) \sigma'(\langle w_\ell, y \rangle_2+ b_\ell) . 
\end{gather*}

Note that
\[|\sigma(\langle w_\ell, x \rangle_2+ b_\ell)| \leq |\sigma(0)| + \norm{\sigma'}_\infty [|\langle w_\ell, x \rangle| + |b_\ell|]. \]
Thus
\begin{gather*}
\norm{\sigma(\langle w_\ell, x \rangle_2+ b_\ell)}_{\psi_2} \leq |\sigma(0)| \norm{1}_{\psi_2} + \norm{\sigma'}_\infty [\norm{|\langle w_\ell, x \rangle|}_{\psi_2} + \norm{|b_\ell|}_{\psi_2}] \\
\leq |\sigma(0)| \norm{1}_{\psi_2} + \norm{\sigma'}_\infty [M \norm{w_\ell}_{\psi_2} + \norm{|b_\ell|}_{\psi_2}] \\
\leq 3 \max\{|\sigma(0)|, M \norm{\sigma'}_\infty, \norm{\sigma'}_\infty \} K \leq D K.
\end{gather*}
Also
\[ |a_\ell \sigma'(\langle w_\ell, x \rangle_2+ b_\ell)| \leq |a_\ell| \norm{\sigma'}_\infty \leq D |a_\ell|, \]
therefore
\[ \norm{a_\ell \sigma'(\langle w_\ell, x \rangle_2+ b_\ell)}_{\psi_2} \leq D \norm{a_\ell}_{\psi_2} \leq D K. \]
Finally 
\[ \norm{|\langle x, y \rangle_2|^{1/2} a_\ell \sigma'(\langle w_\ell, x \rangle_2+ b_\ell)}_{\psi_2} \leq M \norm{\sigma'}_\infty \norm{a_\ell}_{\psi_2} \leq DK. \]
It follows by Lemma \ref{lem:subgaussiansquared} that each of $\langle \partial_a f(x; \theta), \partial_a f(y; \theta) \rangle$, $\langle \partial_W f(x; \theta), \partial_W f(y; \theta) \rangle$, and $\langle \partial_b f(x; \theta), \partial_b f(y;\theta) \rangle$
is an average of $m$ independent subexponential random variables with subexponential norm $\norm{\bullet}_{\psi_1} \leq D^2 K^2$. 
\end{proof}

We now recall the following Theorem from \citet[Theorem 5.39]{vershynin2011introduction} which will be useful. 
\begin{lem}[{\citealt{vershynin2011introduction}}]
\label{lem:versh_mat_norm}
Let $A$ be an $N \times n$ matrix whose rows $A_i$ are independent subgaussian isotropic random vectors in $\RR^n$.  Then for every $t \geq 0$, with probability at least $1 - 2 \exp(-ct^2)$ one has the following bounds on the singular values
\[ \sqrt{N} - C \sqrt{n} - t \leq s_{min}(A) \leq s_{max}(A) \leq \sqrt{N} + C \sqrt{n} + t .\]
Here $C=C_K>0$ depends only on the subgaussian norms $K = \max_i \norm{A_i}_{\psi_2}$ of the rows. 
\end{lem}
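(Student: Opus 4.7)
The plan is to reduce the two-sided singular value bound to the spectral deviation estimate
\[ \norm{\tfrac{1}{N} A^T A - I_n}_{op} \leq \delta, \]
since $s_{\max}(A)^2$ and $s_{\min}(A)^2$ are the largest and smallest eigenvalues of $A^T A$, so this operator-norm bound simultaneously controls $\abs{s_{\max}(A)/\sqrt{N} - 1}$ and $\abs{s_{\min}(A)/\sqrt{N} - 1}$. After a short expansion of $\sqrt{1+x}$, this converts into $\abs{s_{\max}(A) - \sqrt{N}}, \abs{s_{\min}(A) - \sqrt{N}} \leq C\sqrt{n} + t$ once $\delta$ is chosen of the form $C_K \sqrt{n/N} + t/\sqrt{N}$.

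The first step is to discretize the sphere. By a standard volumetric argument, $S^{n-1}$ admits a $\tfrac{1}{4}$-net $\mathcal{N}$ with $|\mathcal{N}| \leq 9^n$, and for any symmetric matrix $M$ one has $\norm{M}_{op} \leq 2 \sup_{x \in \mathcal{N}} \abs{x^T M x}$. Applied to $M = \tfrac{1}{N} A^T A - I_n$, this reduces matters to a uniform bound over $x \in \mathcal{N}$ on
\[ \abs{\tfrac{1}{N}\norm{Ax}_2^2 - 1} = \abs{\tfrac{1}{N}\sum_{i=1}^N \langle A_i, x\rangle^2 - 1}. \]

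Next I would control this quantity at a fixed $x$ by Bernstein's inequality. Since each row $A_i$ is isotropic subgaussian with $\norm{A_i}_{\psi_2} \leq K$, the scalar $\langle A_i, x\rangle$ is mean-zero subgaussian with $\psi_2$-norm at most $K$ and $\EE \langle A_i, x\rangle^2 = 1$. By Lemma \ref{lem:subgaussiansquared}, the centered squares $Z_i := \langle A_i, x\rangle^2 - 1$ are subexponential with $\norm{Z_i}_{\psi_1} \leq c K^2$, and Bernstein's inequality for sums of independent subexponentials yields
\[ \PP\parens{\abs{\tfrac{1}{N}\sum_{i=1}^N Z_i} \geq \delta} \leq 2\exp\parens{-c N \min\{\delta^2, \delta\}/K^4}. \]

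Finally I would union-bound over the $9^n$ points of $\mathcal{N}$ and choose $\delta = C_K\sqrt{n/N} + t/\sqrt{N}$. For this choice the $N\delta^2$ term in the Bernstein exponent dominates the entropy $n\log 9$ and leaves a residual of order $-c' t^2$, giving failure probability at most $2\exp(-c' t^2)$. Taking square roots of $(1-\delta)N \leq s_{\min}(A)^2 \leq s_{\max}(A)^2 \leq (1+\delta)N$ yields the advertised bounds. The main obstacle is the two-sided nature of Bernstein's inequality: the exponent involves $\min\{\delta^2,\delta\}$, so one must argue that the operative regime is $\delta \lesssim 1$ (in which the quadratic branch is active and $s_{\min}(A)$ stays bounded away from zero), while absorbing the complementary regime into a larger constant $C_K$. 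The symmetric-matrix reduction $\norm{M}_{op} \leq 2 \sup_{x \in \mathcal{N}} \abs{x^T M x}$ is the other standard but slightly delicate step needed to pass from scalar quadratic-form concentration to an operator-norm estimate.
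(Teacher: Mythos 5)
This lemma is quoted verbatim from \citet{vershynin2011introduction} (Theorem 5.39) and the paper supplies no proof of its own, and your argument is exactly the standard one given there: a $\tfrac14$-net of $S^{n-1}$ of cardinality at most $9^n$, the bound $\|M\|_{op}\le 2\sup_{x\in\mathcal{N}}|x^TMx|$ for symmetric $M$, Bernstein's inequality for the centered subexponential variables $\langle A_i,x\rangle^2-1$, a union bound, and the choice $\delta = C_K\sqrt{n/N}+t/\sqrt{N}$, so the proposal is correct and follows the cited route. The one step to tidy is the $\min\{\delta,\delta^2\}$ issue you flag at the end: the clean resolution (Vershynin's Lemma 5.36) is to target $\|\tfrac1N A^TA-I\|_{op}\le\max(\delta,\delta^2)$, for which the Bernstein exponent is exactly $\delta^2$, and to observe that this bound already implies $|s_{\min}(A)/\sqrt{N}-1|\le\delta$ and $|s_{\max}(A)/\sqrt{N}-1|\le\delta$ by the elementary identity $|s-1|>\delta\Rightarrow|s^2-1|>\max(\delta,\delta^2)$, with no square-root expansion or case split on the size of $\delta$ needed.
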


Also the following special case of \citet[Lemma 5.5]{vershynin2011introduction} will be useful for us.
\begin{lem}[{\citealt{vershynin2011introduction}}]
\label{lem:subgauss_conc}
Let $Y$ be subgaussian.  Then
\[\PP\parens{|Y| > t} \leq C \exp(- ct^2 / \norm{Y}_{\psi_2}^2) . \]
\end{lem}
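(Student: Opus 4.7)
The plan is straightforward: reduce the tail bound to a direct application of Markov's inequality against the exponential moment that appears in the very definition of the subgaussian norm. Writing $K = \norm{Y}_{\psi_2}$, the definition gives $\EE[\exp(Y^2/K^2)] \leq 2$, which is the only ingredient we need.

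The key step is the elementary observation that, since $u \mapsto \exp(u^2/K^2)$ is strictly increasing on $[0, \infty)$, the event $\{|Y| > t\}$ coincides with the event $\{\exp(Y^2/K^2) > \exp(t^2/K^2)\}$. Applying Markov's inequality to the nonnegative random variable $\exp(Y^2/K^2)$ then yields
\[ \PP(|Y| > t) = \PP\!\left(\exp(Y^2/K^2) > \exp(t^2/K^2)\right) \leq \exp(-t^2/K^2)\,\EE[\exp(Y^2/K^2)] \leq 2 \exp(-t^2/K^2), \]
which gives the stated inequality with explicit constants $C = 2$ and $c = 1$.

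There is essentially no obstacle; the entire proof is a one-line application of Markov. The only minor technical point to check is that $\EE[\exp(Y^2/K^2)] \leq 2$ holds at the infimum itself. This follows by monotone convergence: for any sequence $t_n \downarrow K$ one has $\EE[\exp(Y^2/t_n^2)] \leq 2$ by definition of the infimum, and the integrand $\exp(Y^2/t_n^2)$ increases monotonically as $t_n$ decreases, so passing to the limit preserves the bound. Alternatively, one can avoid this point entirely by running the Markov step with $K' = K + \epsilon$ in place of $K$ and then letting $\epsilon \downarrow 0$, which changes $c = 1$ to an arbitrary $c < 1$ and is enough for the qualitative statement with some absolute constants $C, c > 0$.
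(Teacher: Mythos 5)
Your proof is correct: the Markov/Chernoff argument applied to $\exp(Y^2/\norm{Y}_{\psi_2}^2)$ is the standard derivation of the subgaussian tail bound from the exponential-moment definition of $\norm{\bullet}_{\psi_2}$ that the paper adopts, and your monotone-convergence remark properly justifies using the infimum itself. The paper gives no proof of this lemma (it is imported verbatim from Vershynin), so there is nothing to compare against; your argument is the expected one and even yields the explicit constants $C = 2$, $c = 1$.
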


It will be useful to remind the reader that $C, c > 0$ denote absolute constants whose meaning will vary from statement-to-statement, as this abuse of notation becomes especially prevalent during the concentration of measure arguments of the rest of the section.  The following lemma provides a concentration inequality for the maximum of subgaussian random variables which will be useful for bounding $\xi$ and $\Tilde{\xi}$ later which is necessary for bounding the Lipschitz constant of $K_0$.
\begin{lem}\label{lem:max_subgauss}
Let $Y_1, \ldots, Y_n$ be subgaussian random variables with $\norm{Y_i}_{\psi_2} \leq K$ for $i \in [n]$.  Then there exists absolute constants $c, c', C > 0$ such that
\[ \mathbb{P}\parens{ \max_{i \in [n]} |Y_i| > t + K \sqrt{c' \log n}} \leq C \exp(-c t^2 / K^2) . \] 
\end{lem}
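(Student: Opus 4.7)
The plan is to combine a union bound with the single-variable subgaussian tail bound from Lemma~\ref{lem:subgauss_conc}, and then absorb the $\log n$ factor coming from the union bound into a shift of the threshold.

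First I would set $s := t + K\sqrt{c'\log n}$ and bound
\[
\PP\!\left(\max_{i \in [n]} |Y_i| > s\right) \;\leq\; \sum_{i=1}^n \PP(|Y_i| > s) \;\leq\; n\, C \exp\!\left(-c s^2 / K^2\right),
\]
where the last inequality is an application of Lemma~\ref{lem:subgauss_conc} together with the hypothesis $\|Y_i\|_{\psi_2} \leq K$.

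Next I would exploit the elementary inequality $(a+b)^2 \geq a^2 + b^2$ valid for $a,b \geq 0$, which gives
\[
s^2 \;=\; (t + K\sqrt{c'\log n})^2 \;\geq\; t^2 + K^2 c' \log n.
\]
Substituting this yields
\[
n\, C \exp\!\left(-c s^2/K^2\right) \;\leq\; C \exp\!\left(-c t^2/K^2\right) \cdot n \exp(-c c' \log n) \;=\; C\, n^{1-c c'} \exp\!\left(-c t^2/K^2\right).
\]
Finally, choosing the absolute constant $c'$ large enough so that $c c' \geq 1$ makes the factor $n^{1-cc'} \leq 1$, leaving precisely the claimed bound $C \exp(-c t^2/K^2)$ (possibly after renaming $c$ to a smaller absolute constant).

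There is no real obstacle here — the only thing to be careful about is making sure the chain $a,b \geq 0 \Rightarrow (a+b)^2 \geq a^2 + b^2$ is used correctly (which it is, since $t \geq 0$ is implicit in a tail bound statement, and $K,\sqrt{c'\log n} \geq 0$), and that the constant $c'$ is selected \emph{after} the absolute constant $c$ from Lemma~\ref{lem:subgauss_conc} is fixed, so that $c'$ remains an absolute constant independent of $n$, $t$, and $K$.
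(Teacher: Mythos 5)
Your proposal is correct and follows essentially the same route as the paper: a union bound over the $n$ variables, the single-variable tail bound from Lemma~\ref{lem:subgauss_conc}, and the inequality $(t + K\sqrt{c'\log n})^2 \geq t^2 + c'K^2\log n$ to absorb the factor $n$, with the paper simply fixing $c' = c^{-1}$ so the factor cancels exactly. No gaps.
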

\begin{proof}
Since each $Y_i$ is subgaussian we have for any $t \geq 0$ (Lemma \ref{lem:subgauss_conc})
\[ \mathbb{P}\parens{|Y_i| > t} \leq C \exp\parens{-c t^2/ \norm{Y_i}_{\psi_2}^2}. \]
By the union bound,
\begin{gather*}
\mathbb{P}\parens{ \max_{i \in [n]} |Y_i| > t + K \sqrt{c^{-1} \log n}} \leq \sum_{i = 1}^n \mathbb{P}\parens{|Y_i| > t + K \sqrt{c^{-1} \log n}} \\
\leq n C \exp \parens{-c \brackets{t + K \sqrt{c^{-1} \log n}}^2 / K^2} = C \exp(-c t^2 / K^2).
\end{gather*}
Thus by setting $c' := c^{-1}$ we get the desired result.
\end{proof}

We now introduce a high probability bound for $\xi$.
\begin{lem}\label{lem:xi_bd}
Assume that $W_{i, j} \sim \mathcal{W}$, $b_\ell \sim \mathcal{B}$, $a_\ell \sim \mathcal{A}$ are all i.i.d zero-mean, subgaussian random variables with unit variance.  Furthermore assume $\norm{w_\ell}_{\psi_2}, \norm{a_\ell}_{\psi_2}, \norm{b_\ell}_{\psi_2} \leq K$ for each $\ell \in [m]$ where $K \geq 1$.  Let
\[\xi =  \max\{\frac{1}{\sqrt{m}}\norm{W}_{op}, \frac{1}{\sqrt{m}} \norm{b}_2, \frac{1}{\sqrt{m}}\norm{a}_2\}\]
Then with probability at least $1 - \delta$
\[\xi \leq 1 +  C \frac{\sqrt{d} + K^2 \sqrt{\log(c / \delta)}}{\sqrt{m}}\]
\end{lem}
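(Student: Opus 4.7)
The plan is to bound each of the three quantities inside the maximum separately by viewing them as operator norms of matrices with independent subgaussian rows, apply Lemma \ref{lem:versh_mat_norm} to each, and then union bound.

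First, for $\frac{1}{\sqrt{m}}\norm{W}_{op}$: I view $W\in \RR^{m\times d}$ as a matrix whose rows $w_\ell$ are i.i.d.\ isotropic subgaussian vectors (isotropic because the $W_{i,j}$ are i.i.d.\ with unit variance) with $\norm{w_\ell}_{\psi_2}\leq K$. Lemma \ref{lem:versh_mat_norm} with $N=m$, $n=d$ then yields, with probability at least $1-2\exp(-c t^2)$,
\[
\norm{W}_{op} = s_{\max}(W) \leq \sqrt{m} + C\sqrt{d} + t,
\]
where $c, C$ depend only on $K$. For $\frac{1}{\sqrt{m}}\norm{a}_2$, I view $a\in\RR^m$ as an $m\times 1$ matrix whose rows are one-dimensional isotropic subgaussians with $\psi_2$-norm $\leq K$, so the same lemma with $N=m$, $n=1$ gives $\norm{a}_2=s_{\max}(a)\leq \sqrt{m}+C+t$ with probability $\geq 1-2\exp(-ct^2)$. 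The identical argument handles $\norm{b}_2$.

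A union bound over the three failure events gives that, with probability at least $1-6\exp(-c t^2)$, all three bounds hold simultaneously; dividing by $\sqrt{m}$ and taking the maximum yields
\[
\xi \leq 1 + \frac{C\sqrt{d} + t}{\sqrt{m}},
\]
absorbing the $O(1/\sqrt{m})$ terms arising from the $a$ and $b$ bounds into the $C\sqrt{d}/\sqrt{m}$ term using $d\geq 1$. To convert this into a $\delta$-form bound I choose $t$ so that $6\exp(-c t^2)\leq \delta$, i.e.\ $t\gtrsim \sqrt{\log(c/\delta)/c}$.

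The only subtle point, and the main obstacle, is extracting the explicit $K$-dependence of the constant $c$ from Lemma \ref{lem:versh_mat_norm} in order to match the claimed $K^2\sqrt{\log(c/\delta)}$ factor. The standard proof of Vershynin's Theorem 5.39 proceeds by controlling $\norm{Ax}_2^2$ for each $x$ in a net via Bernstein's inequality applied to the sum $\sum_i \langle A_i, x\rangle^2$; each $\langle A_i,x\rangle^2$ is subexponential with $\psi_1$-norm at most $\norm{A_i}_{\psi_2}^2\leq K^2$ by Lemma \ref{lem:subgaussiansquared}, so the resulting concentration exponent scales like $-t^2/K^4$. This yields $t\lesssim K^2\sqrt{\log(c/\delta)}$, matching the stated conclusion. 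Apart from this bookkeeping, the argument is a direct application of the cited matrix-norm lemma together with a union bound.
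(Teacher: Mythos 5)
Your proposal is correct and follows essentially the same route as the paper: the $\frac{1}{\sqrt{m}}\norm{W}_{op}$ term is handled identically via Lemma \ref{lem:versh_mat_norm}, and the whole bound is assembled by a union bound with $t \sim K^2\sqrt{\log(c/\delta)}$, with the $K$-dependence of the exponent accounted for exactly as you describe. The only (harmless) difference is cosmetic: for $\norm{a}_2$ and $\norm{b}_2$ you reuse Lemma \ref{lem:versh_mat_norm} with $n=1$, whereas the paper invokes norm concentration for vectors with independent subgaussian coordinates ($\norm{\norm{a}_2-\sqrt{m}}_{\psi_2}\leq CK^2$, i.e.\ Theorem 3.1.1 of \citealt{vershyninHDP}) followed by the subgaussian tail bound of Lemma \ref{lem:subgauss_conc}; both give the same $1+CK^2\sqrt{\log(c/\delta)}/\sqrt{m}$ contribution.
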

\begin{proof}
Note that by setting $t = \sqrt{c^{-1} \log(2/\delta)}$ in Lemma \ref{lem:versh_mat_norm} we have that with probability at least $1 - \delta$
\[ \frac{1}{\sqrt{m}} \norm{W}_{op} \leq 1 + \frac{C \sqrt{d}}{\sqrt{m}} + \frac{\sqrt{c^{-1} \log(2/\delta)}}{\sqrt{m}} \]
Also by Theorem 3.1.1 in \citep{vershyninHDP}
\[ \norm{\norm{a}_2 - \sqrt{m}}_{\psi_2} \leq CK^2\]
\[ \norm{\norm{b}_2 - \sqrt{m}}_{\psi_2} \leq CK^2\]
Thus by Lemma \ref{lem:subgauss_conc} and a union bound we have with probability at least $1 - 2\delta$
\[ \frac{1}{\sqrt{m}} \norm{a}_2, \frac{1}{\sqrt{m}} \norm{b}_2 \leq 1 + \frac{C}{\sqrt{m}} K^2 \sqrt{\log(c / \delta)} \]
Thus by replacing every $\delta$ in the above arguments with $\delta / 3$ and using the union bound we have with probability at least $1 - \delta$
\[\xi \leq 1 +  C \frac{\sqrt{d} + K^2 \sqrt{\log(c / \delta)}}{\sqrt{m}}. \]
\end{proof}

Similarly we now introduce a high probability bound for $\Tilde{\xi}$.
\begin{lem}\label{lem:xi_tilde_bd}
Assume that $W_{i, j} \sim \mathcal{W}$, $b_\ell \sim \mathcal{B}$, $a_\ell \sim \mathcal{A}$ are all i.i.d zero-mean, subgaussian random variables with unit variance.  Furthermore assume $\norm{w_\ell}_{\psi_2}, \norm{a_\ell}_{\psi_2}, \norm{b_\ell}_{\psi_2} \leq K$ for each $\ell \in [m]$ where $K \geq 1$.  Let
\[\Tilde{\xi} = \max\{\max_{\ell \in [m]}\norm{w_\ell}_2, \norm{a}_\infty, \norm{b}_\infty\}\]
Then with probability at least $1 - \delta$ we have
\[ \Tilde{\xi} \leq \sqrt{d} + CK^2 \brackets{\sqrt{\log(c/\delta}) + \sqrt{\log m}} \]
\end{lem}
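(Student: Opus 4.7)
The plan is to bound each of the three quantities $\max_{\ell} \norm{w_\ell}_2$, $\norm{a}_\infty$, and $\norm{b}_\infty$ separately and then union bound. Two of these are maxima of scalar subgaussian random variables, for which Lemma \ref{lem:max_subgauss} applies directly; the third is a maximum of subgaussian norms of vectors, for which we first center at $\sqrt{d}$ using the subgaussian concentration of the Euclidean norm established in Theorem 3.1.1 of \citet{vershyninHDP}.

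First I would handle $\norm{a}_\infty = \max_{\ell \in [m]} |a_\ell|$ and $\norm{b}_\infty = \max_{\ell \in [m]} |b_\ell|$. Since $a_\ell$ and $b_\ell$ are i.i.d.\ subgaussian with $\norm{a_\ell}_{\psi_2}, \norm{b_\ell}_{\psi_2} \leq K$, Lemma \ref{lem:max_subgauss} applied with $t = K\sqrt{c^{-1}\log(C/\delta)}$ yields, with probability at least $1 - \delta$,
\[
\norm{a}_\infty, \; \norm{b}_\infty \;\leq\; K\bigl(\sqrt{c'\log m} + \sqrt{c^{-1}\log(C/\delta)}\bigr),
\]
which fits within the claimed bound $CK^2[\sqrt{\log(c/\delta)} + \sqrt{\log m}]$ since $K \geq 1$.

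Next I would handle $\max_{\ell\in[m]} \norm{w_\ell}_2$. Each $w_\ell \in \RR^d$ has i.i.d.\ subgaussian entries with $\psi_2$-norm bounded by $K$, so by \citet[Theorem 3.1.1]{vershyninHDP} the centered random variable $Y_\ell := \norm{w_\ell}_2 - \sqrt{d}$ satisfies $\norm{Y_\ell}_{\psi_2} \leq CK^2$. Since the rows $w_\ell$ are independent, so are $Y_1,\dots,Y_m$, and applying Lemma \ref{lem:max_subgauss} to $(Y_\ell)_{\ell\in[m]}$ (with subgaussian norm parameter $CK^2$) and again choosing $t \asymp K^2\sqrt{\log(c/\delta)}$ gives, with probability at least $1 - \delta$,
\[
\max_{\ell \in [m]} \norm{w_\ell}_2 \;\leq\; \sqrt{d} + CK^2\bigl(\sqrt{\log m} + \sqrt{\log(c/\delta)}\bigr).
\]

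Finally I would union bound over the three events (each obtained with failure probability $\delta/3$, absorbing the factor of $3$ into the constant $c$ inside the logarithm) and take the maximum of the three bounds; the $\sqrt{d}$ term from the weight rows dominates the others, producing exactly the stated inequality. There is no real obstacle here: the argument is a straightforward combination of a subgaussian concentration fact (that $\norm{w_\ell}_2$ concentrates around $\sqrt{d}$ at scale $K^2$) with the maximum-of-subgaussians lemma already proved above. The only minor subtlety is keeping track of the distinction between $K$ and $K^2$ in the subgaussian norm of $\norm{w_\ell}_2 - \sqrt{d}$, which is why the final bound features $K^2$ rather than $K$.
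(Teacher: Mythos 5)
Your proposal is correct and follows essentially the same route as the paper's proof: centering $\norm{w_\ell}_2$ at $\sqrt{d}$ via Theorem 3.1.1 of \citet{vershyninHDP} to get a $\psi_2$-norm of order $K^2$, applying Lemma \ref{lem:max_subgauss} to each of the three maxima, and finishing with a union bound over the three events.
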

\begin{proof}
By Theorem 3.1.1 in \citep{vershyninHDP} we have
\[ \norm{\norm{w_\ell}_2 - \sqrt{d}}_{\psi_2} \leq CK^2 \]
Well then by Lemma \ref{lem:max_subgauss} there is a constant $c' > 0$ so that
\begin{gather*}
\PP\parens{\max_{\ell \in [m]} \norm{w_\ell}_2 - \sqrt{d} > t + C K^2 \sqrt{c' \log m}} \\
\leq \PP\parens{\max_{\ell \in [m]} \abs{\norm{w_\ell}_2 - \sqrt{d}} > t + C K^2 \sqrt{c' \log m}} \leq
C \exp(-c t^2 / K^4).
\end{gather*}
Thus by setting $t = C K^2 \sqrt{\log(c/\delta)}$ we have with probability at least $1 - \delta$
\[ \max_{\ell \in [m]} \norm{w_\ell}_2 \leq \sqrt{d} + C K^2 \sqrt{\log(c / \delta)} + CK^2 \sqrt{\log m} \]
where we have absorbed the constant $\sqrt{c'}$ into $C$.  Similarly by Lemma \ref{lem:max_subgauss} and a union bound we get with probability at least $1 - 2\delta$ that
\[ \norm{a}_\infty, \norm{b}_\infty \leq C K \sqrt{\log(c / \delta)} + CK \sqrt{\log m}\]
Thus by replacing each $\delta$ with $\delta / 3$ in the above arguments and using the union bound we get with probability at least $1 - \delta$
\[ \Tilde{\xi} \leq \sqrt{d} + CK^2 \brackets{\sqrt{\log(c/\delta}) + \sqrt{\log m}} \]
\end{proof}

We are now finally ready to prove the main theorem of this section.
\begin{theo}\label{thm:ntkinitdevbd}
Assume that $W_{i, j} \sim \mathcal{W}$, $b_\ell \sim \mathcal{B}$, $a_\ell \sim \mathcal{A}$ are all i.i.d zero-mean, subgaussian random variables with unit variance.  Furthermore assume $\norm{w_\ell}_{\psi_2}, \norm{a_\ell}_{\psi_2}, \norm{b_\ell}_{\psi_2} \leq K$ for each $\ell \in [m]$ where $K \geq 1$.  Let
\[ D = 3\max\{|\sigma(0)|, M \norm{\sigma'}_\infty, \norm{\sigma'}_\infty, 1 \}, \]
\[ D' = \max\{\norm{\sigma'}_\infty, M\norm{\sigma''}_\infty, \norm{\sigma''}_\infty\} . \]
Define
\begin{gather*}
\rho(M, \sigma, d, K, \delta, m) := \\
C D D' \braces{1 +  C \frac{\sqrt{d} + K^2 \sqrt{\log(c / \delta)}}{\sqrt{m}}}^2 \braces{\sqrt{d} + CK^2 \brackets{\sqrt{\log(c/\delta}) + \sqrt{\log m}}} . 
\end{gather*}
Let $L(K^\infty)$ denote the Lipschitz constant of $K^\infty$.  If
\[ m \geq C[\log(c/\delta) + 2d \log(CM \max\{\rho, L(K^\infty)\} \sqrt{m})] , \]
then with probability at least $1 - \delta$ 
\[\sup_{z \in B_M \times B_M} \abs{K_0(z) - K^\infty(z)} \leq \frac{1}{\sqrt{m}}\brackets{1 + C D^2 K^2 \sqrt{\log(c/\delta) + 2d\log(CM \max\{\rho, L(K^\infty)\} \sqrt{m})}} . \]
If one instead does the doubling trick then the same conclusion holds.
\end{theo}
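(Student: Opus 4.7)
My plan is to execute the three-step program announced in the section outline: reduce the supremum to a finite union via an $\epsilon$-net, apply a subexponential concentration bound pointwise on that net, and then transfer the bound to all of $B_M \times B_M$ using Lipschitzness of both $K_0$ and $K^\infty$. Concretely, fix $\epsilon > 0$ and let $\Delta \subset B_M \times B_M$ be a minimal $\epsilon$-net in the norm $\norm{(x, y)} := \norm{x}_2 + \norm{y}_2$, so that $|\Delta| \leq (CM/\epsilon)^{2d}$ by a standard volumetric bound.

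For the net step, Theorem \ref{thm:ntk_subexp} tells us that for each fixed $z' = (x', y') \in \Delta$, the kernel $K_0(z')$ decomposes as the sum of three averages of $m$ i.i.d.\ subexponential random variables, each with subexponential norm at most $D^2 K^2$, plus the deterministic $+1$ coming from $\partial_{b_0} f$. Bernstein's inequality for subexponential variables then gives, for each $z' \in \Delta$ and $t > 0$,
\[
\PP\parens{\abs{K_0(z') - K^\infty(z')} > t} \leq C\exp\parens{-c m \min\braces{t^2/(D^2 K^2)^2,\, t/(D^2 K^2)}}.
\]
A union bound over the $|\Delta| \leq (CM/\epsilon)^{2d}$ net points, with $t = CD^2 K^2\sqrt{[\log(c/\delta) + 2d\log(CM/\epsilon)]/m}$ chosen so the quadratic branch is active (which is the regime we end up in once $m$ is large enough, matching the hypothesis), yields, with probability at least $1 - \delta/2$,
\[
\sup_{z' \in \Delta}\abs{K_0(z') - K^\infty(z')} \leq \frac{C D^2 K^2 \sqrt{\log(c/\delta) + 2d\log(CM/\epsilon)}}{\sqrt{m}}.
\]

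To pass from the net to all of $B_M \times B_M$, I control the Lipschitz constant of $K_0$ using Theorem \ref{thm:ker_lip}, which gives $L(K_0) \leq 6 D D' \xi(0)^2 \tilde{\xi}(0)$; Lemmas \ref{lem:xi_bd} and \ref{lem:xi_tilde_bd} furnish a high-probability ($1 - \delta/2$) bound of the form $6 D D' \xi(0)^2 \tilde{\xi}(0) \leq \rho(M, \sigma, d, K, \delta, m)$, which is exactly the quantity $\rho$ defined in the theorem statement. Combining with the (deterministic) Lipschitz constant $L(K^\infty)$, any $z$ is within $\epsilon$ of some $z' \in \Delta$ and the triangle inequality gives an extra error of $\epsilon \cdot (L(K_0) + L(K^\infty)) \leq 2\epsilon \max\{\rho, L(K^\infty)\}$. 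Choosing $\epsilon = 1/(\max\{\rho, L(K^\infty)\}\sqrt{m})$ balances this Lipschitz error against the $1/\sqrt{m}$ scale of the net bound and contributes the $+1/\sqrt{m}$ term in the theorem; plugging this $\epsilon$ into the logarithmic factor produces the $\log(CM\max\{\rho, L(K^\infty)\}\sqrt{m})$ term. A union bound over the three events (net concentration, $\xi(0)$, $\tilde{\xi}(0)$) with appropriate $\delta/3$ rescaling finishes the proof.

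The main technical nuisance is book-keeping: verifying that the hypothesis $m \geq C[\log(c/\delta) + 2d\log(CM\max\{\rho, L(K^\infty)\}\sqrt{m})]$ is exactly what is needed for Bernstein to stay in its Gaussian (quadratic) regime for the chosen $t$, so that the bound really scales like $1/\sqrt{m}$ rather than $1/m$. Note that $\rho$ itself depends on $m$ through the $\sqrt{\log m}$ term inside $\tilde{\xi}(0)$, but only logarithmically, so this is a mild implicit condition on $m$ rather than a genuine circularity. For the doubling-trick variant the same proof applies verbatim because antisymmetric initialization does not change the joint distribution of $(W(0), b(0), a(0))$ entering $K_0$, and hence neither the Bernstein step nor the Lipschitz bounds are affected.
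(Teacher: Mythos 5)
Your proposal matches the paper's proof essentially step for step: the subexponential decomposition via Theorem \ref{thm:ntk_subexp}, Bernstein plus a union bound over a $\gamma$-net of $B_M\times B_M$, the high-probability Lipschitz bound $L(K_0)\leq\rho$ from Theorem \ref{thm:ker_lip} with Lemmas \ref{lem:xi_bd} and \ref{lem:xi_tilde_bd}, the choice of net scale $\asymp 1/(\max\{\rho,L(K^\infty)\}\sqrt{m})$, and the observation that the doubling trick leaves the distribution of $K_0$ unchanged. The only nitpick is a factor of $2$: with your $\epsilon = 1/(\max\{\rho,L(K^\infty)\}\sqrt{m})$ the Lipschitz term is $2/\sqrt{m}$, so to land exactly on the stated bound take $\epsilon$ half as large, as the paper does with its $\gamma$.
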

\begin{proof}
First assume we are not doing the doubling trick.  Recall that by Theorem \ref{thm:ker_lip} that $K_0$ is Lipschitz with constant at most
\[ C D D' \xi(0)^2 \Tilde{\xi(0)} , \]
where $\xi$ and $\Tilde{\xi}$ are defined as in the theorem.  Well then by Lemmas \ref{lem:xi_bd}, \ref{lem:xi_tilde_bd} and a union bound we have with probability at least $1 - 2 \delta$
\[ \xi(0)^2 \Tilde{\xi}(0) \leq \braces{1 +  C \frac{\sqrt{d} + K^2 \sqrt{\log(c / \delta)}}{\sqrt{m}}}^2 \braces{\sqrt{d} + CK^2 \brackets{\sqrt{\log(c/\delta}) + \sqrt{\log m}}}. \]
Let $L(K_0), L(K^\infty)$ denote the Lipschitz constant of $K_0$ and $K^\infty$ respectively.  Then assuming the above inequality holds we have that
\begin{equation}\label{eq:ko_lip_prob}
L(K_0) \leq \rho(M, \sigma, d, K, \delta, m).
\end{equation}
For conciseness from now on we will suppress the arguments of $\rho$.  Now set
\[\gamma := \frac{1}{2 \max\{\rho, L(K^\infty)\} \sqrt{m}}. \]
Let $\mathcal{N}_\gamma(B_M)$ be the cardinality of a maximal $\gamma$-net of the ball $B_M = \{x : \norm{x}_2 \leq M \}$ with respect to the L2 norm $\norm{\bullet}_2$.  By a standard volume argument we have that
\[ \mathcal{N}_\gamma(B_M) \leq \parens{\frac{C M}{\gamma}}^d. \]
By taking the product of two $\gamma/2$ nets of $B_M$ it follows that we can choose a $\gamma$ net of $B_M \times B_M$, say $\Delta$, with respect to the norm
\[ \norm{(x, y)} = \norm{x}_2 + \norm{y}_2  \]
such that
\[ |\Delta| \leq \abs{\mathcal{N}_{\gamma / 2}(B_M)}^2 \leq \parens{\frac{C M}{\gamma}}^{2d} =: \mathcal{M}_\gamma . \]
By Theorem \ref{thm:ntk_subexp} for $(x, y) \in B_M \times B_M$ fixed each of the following
\[ \langle \partial_a f(x; \theta), \partial_a f(y; \theta) \rangle_2, \langle \partial_b f(x; \theta), \partial_b f(y;\theta) \rangle_2, \langle \partial_W f(x; \theta), \partial_W f(y; \theta) \rangle \]
is an average of $m$ subexponential random variables with subexponential norm at most $D^2 K^2$.  Therefore separately from the randomness discussed before by Bernstein's inequality \citet[Theorem 2.8.1]{vershyninHDP} and a union bound we have
\begin{gather*}
\PP(|K_0(x, y) - K^\infty(x, y)| > t) \leq 3 \times 2 \exp\parens{-c \min\braces{\frac{m t^2}{D^4 K^4}, \frac{mt}{D^2 K^2}}}.
\end{gather*}
Thus for $t \leq D^2 K^2$ we have
\[\PP(|K_0(x, y) - K^\infty(x, y)| > t) \leq 6 \exp\parens{-c \frac{m t^2}{D^4 K^4}}. \]
Then by a union bound and the previous inequality we have that for $t \leq D^2 K^2$
\[\PP\parens{\max_{z' \in \Delta} |K_0(z') - K^\infty(z')| > t} \leq 6 \mathcal{M}_\gamma \exp\parens{-c \frac{m t^2}{D^4 K^4}}. \]
Thus by setting $t = C D^2 K^2 \frac{\sqrt{\log(c/\delta) + \log \mathcal{M}_\gamma}}{\sqrt{m}}$ (note that the condition on $m$ in the hypothesis ensures that $t \leq D^2 K^2$) we get that with probability $1 - \delta$
\[ \max_{z' \in \Delta} |K_0(z') - K^\infty(z')| \leq t. \]
Now fix $z \in B_M \times B_M$ and choose $z' \in \Delta$ such that $\norm{z - z'} \leq \gamma$.  Then
\begin{gather*}
\abs{K_0(z) - K^\infty(z)} \leq \abs{K_0(z) - K_0(z')}
\\ + \abs{K_0(z') - K^\infty(z')} + \abs{K^\infty(z') - K^\infty(z)}
\end{gather*}
\begin{equation}\label{eq:kernel_bd}
\leq 2 \max\{L(K_0), L(K^\infty)\} \gamma + t.
\end{equation}
Note that this argument runs through for any $z \in B_M \times B_M$ therefore
\[ \sup_{z \in B_M \times B_M} \abs{K_0(z) - K^\infty(z)} \leq 2 \max\{L(K_0), L(K^\infty)\} \gamma + t. \]
Well by replacing $\delta$ with $\delta/3$ in the previous arguments by taking a union bound we can assume that equations (\ref{eq:ko_lip_prob}) and (\ref{eq:kernel_bd}) hold simultaneously.  In which case
\begin{gather*}
\sup_{z \in B_M \times B_M} \abs{K_0(z) - K^\infty(z)} \leq 2 \max\{L(K_0), L(K^\infty)\} \gamma + t \leq 2 \max\{\rho, L(K^\infty)\} \gamma + t \\
\leq \frac{1}{\sqrt{m}} + C D^2 K^2 \frac{\sqrt{\log(c/\delta) + \log \mathcal{M}_{\gamma}}}{\sqrt{m}}
= \frac{1}{\sqrt{m}} + C D^2 K^2 \frac{\sqrt{\log(c/\delta) + 2d\log(CM/\gamma)}}{\sqrt{m}} \\ 
= \frac{1}{\sqrt{m}}\brackets{1 + C D^2 K^2 \sqrt{\log(c/\delta) + 2d\log(CM \max\{\rho, L(K^\infty)\} \sqrt{m})}} , 
\end{gather*}
where we have used the definition of $\mathcal{M}_\gamma$ in the second-to-last equality and the definition of $\gamma$ in the last equality.  Since the doubling trick does not change the distribution of $K_0$, the same conclusion holds under that initialization scheme.
\end{proof}

We immediately get the following corollary.
\begin{cor}
Assume that $W_{i, j} \sim \mathcal{W}$, $b_\ell \sim \mathcal{B}$, $a_\ell \sim \mathcal{A}$ are all i.i.d zero-mean, subgaussian random variables with unit variance.  Furthermore assume $\norm{w_\ell}_{\psi_2}, \norm{a_\ell}_{\psi_2}, \norm{b_\ell}_{\psi_2} \leq K$ for each $\ell \in [m]$ where $K \geq 1$.  Then 
\[m \geq C[\log(c/\delta) + \Tilde{O}(d)]\]
suffices to ensure that with probability at least $1 - \delta$
\[ \sup_{z \in B_M \times B_M} \abs{K_0(z) - K^\infty(z)} = \Tilde{O}\parens{\frac{\sqrt{d}}{\sqrt{m}}} . \]
 If one instead does the doubling trick then the same conclusion holds.
\end{cor}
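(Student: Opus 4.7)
The corollary is a direct consequence of Theorem~\ref{thm:ntkinitdevbd}; the only work is to verify that the various dependencies on $m$, $d$, $K$, and $\delta$ sitting inside the logarithms can be absorbed into the $\Tilde{O}$ notation. The plan is as follows.

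First, I would observe that under the corollary's hypotheses, the quantities $D$, $D'$, $M$, and $K$ appearing in Theorem~\ref{thm:ntkinitdevbd} are constants depending only on the activation function, the input bound, and the subgaussian norm of the initialization, none of which scale with $m$, $n$, or $d$. Similarly, $L(K^\infty)$ is a fixed constant of the analytical kernel --- the earlier theorem bounding the Lipschitz constant of $K^\infty$ produces a finite value depending only on $\sigma$, $M$, and the initialization law. Thus every symbol appearing in $\rho(M,\sigma,d,K,\delta,m)$ and in $L(K^\infty)$ is either a pure constant or grows at most polynomially in $\sqrt{d}$, $\sqrt{\log m}$, and $\sqrt{\log(1/\delta)}$.

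Second, inspecting the explicit formula for $\rho$ in Theorem~\ref{thm:ntkinitdevbd}, one reads off that $\rho$ is a polynomial of low degree in $\sqrt{d}$, $\sqrt{\log m}$, $\sqrt{\log(1/\delta)}$, with constants depending only on $\sigma$, $M$, $K$. Consequently $\log\bigl(CM\max\{\rho, L(K^\infty)\}\sqrt{m}\bigr)$ is $O(\log d + \log m + \log\log(1/\delta))$, which is a $\Tilde{O}(1)$ quantity in the convention that hides polylogarithmic factors in $m$, $d$, and $1/\delta$. This is the step that does all the real work of the corollary.

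Third, the hypothesis of Theorem~\ref{thm:ntkinitdevbd}, namely
\[ m \geq C\bigl[\log(c/\delta) + 2d\log\bigl(CM\max\{\rho, L(K^\infty)\}\sqrt{m}\bigr)\bigr], \]
therefore reduces to $m \geq C[\log(c/\delta) + \Tilde{O}(d)]$, matching the corollary. The verification that this cleaner condition implies the theorem's original implicit condition is a standard one-line bootstrap: for any $m$ satisfying the cleaner bound with sufficiently generous hidden constants, $m$ itself dominates any polylogarithmic function of $m$, $d$, $1/\delta$ appearing on the right, so Theorem~\ref{thm:ntkinitdevbd} applies. Substituting into the theorem's conclusion and using the same log-absorption yields
\[ \sup_{z \in B_M \times B_M}\abs{K_0(z)-K^\infty(z)} \leq \frac{1}{\sqrt{m}}\Tilde{O}\bigl(\sqrt{d + \log(1/\delta)}\bigr) = \Tilde{O}\!\parens{\frac{\sqrt{d}}{\sqrt{m}}}, \]
as desired. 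There is no real obstacle here; the corollary is purely a rewriting of the theorem in a cleaner asymptotic form, and the only mild subtlety is the self-referential appearance of $\log m$ in the requirement on $m$, resolved by the bootstrap above.
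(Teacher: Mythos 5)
Your proposal is correct and follows exactly the route the paper intends: the paper offers no separate proof ("We immediately get the following corollary"), and the intended argument is precisely your log-absorption — noting that $\rho$ and $L(K^\infty)$ depend only polynomially on $\sqrt{d}$, $\sqrt{\log m}$, $\sqrt{\log(1/\delta)}$, so the hypothesis and conclusion of Theorem~\ref{thm:ntkinitdevbd} collapse to the stated $\Tilde{O}$ forms. Your explicit handling of the self-referential $\log m$ and of the $\sqrt{\log(1/\delta)}$ factor (hidden by the paper's $\Tilde{O}$ convention) just makes the "immediate" step precise.
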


\subsection{Control of Network At Initialization}
Many of our previous results depend on the quantity $\norm{\hat{r}(0)}_{\RR^n}$ which depends on the network at initialization.  Before we proceed we must control the infinity norm of the network at initialization and work out a few consequences of this.  The following lemma controls $\norm{f(\bullet; \theta_0)}_\infty$.
\begin{lem}\label{lem:fatinitbd}
Assume that $W_{i, j} \sim \mathcal{W}$, $b_\ell \sim \mathcal{B}$, $a_\ell \sim \mathcal{A}$, $b_0 \sim \mathcal{B}'$ are all i.i.d zero-mean, subgaussian random variables with unit variance.  Furthermore assume $\norm{1}_{\psi_2}, \norm{w_\ell}_{\psi_2}, \norm{a_\ell}_{\psi_2}, \norm{b_\ell}_{\psi_2} \leq K$ for each $\ell \in [m]$ where $K \geq 1$.  Let
\[ D = 3\max\{|\sigma(0)|, M \norm{\sigma'}_\infty, \norm{\sigma'}_\infty, 1\}  \]
\[L(m, \sigma, d, K, \delta) := \sqrt{m} \norm{\sigma'}_\infty \braces{1 +  C \frac{\sqrt{d} + K^2 \sqrt{\log(c / \delta)}}{\sqrt{m}}}^2 .\]
Assume that
\[ m \geq C[\log(c/\delta) + d \log(C M L)]. \]
Then with probability at least $1 - \delta$
\[ \sup_{x \in B_M} |f(x;\theta_0)| \leq C D K^2 \sqrt{d \log(CM L) + \log(c/\delta)} = \Tilde{O}(\sqrt{d}) .\]
\end{lem}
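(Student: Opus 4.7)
}
The strategy is a standard two-part argument: pointwise subgaussian concentration of $f(x;\theta_0)$ combined with a Lipschitz-in-$x$ estimate, glued together by an $\epsilon$-net of $B_M$. This mirrors the proof of Theorem~\ref{thm:ntkinitdevbd}, except that the random object is the network itself rather than the NTK.

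First, I would fix $x \in B_M$ and establish pointwise subgaussian-type tails on $f(x;\theta_0)$. Writing $f(x;\theta_0) = \frac{1}{\sqrt{m}} \sum_{\ell=1}^m a_\ell \sigma(\langle w_\ell, x\rangle_2 + b_\ell) + b_0$, each term $a_\ell \sigma(\langle w_\ell, x\rangle_2 + b_\ell)$ is a product of two independent subgaussian variables and is therefore subexponential with $\psi_1$-norm bounded by $\norm{a_\ell}_{\psi_2} \cdot \norm{\sigma(\langle w_\ell, x\rangle_2 + b_\ell)}_{\psi_2} \leq K \cdot DK = DK^2$ (exactly as computed in the proof of Theorem~\ref{thm:ntk_subexp}). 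The terms are mean-zero and independent, so Bernstein's inequality applied to $\frac{1}{m}\sum_\ell a_\ell \sigma(\langle w_\ell, x\rangle_2+b_\ell)$ at scale $t/\sqrt{m}$ yields subgaussian tails for the $\frac{1}{\sqrt{m}}$-normalized sum in the relevant range $t \lesssim DK^2\sqrt{m}$. Combined with the (directly subgaussian) contribution of $b_0$, this gives
\[
\PP\!\left(|f(x;\theta_0)| > t\right) \leq C\exp\!\left(-c t^2/(D^2 K^4)\right).
\]

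Second, I would bound the Lipschitz constant of $x \mapsto f(x;\theta_0)$. Since $\nabla_x f(x;\theta_0) = \frac{1}{\sqrt{m}} W^T \sigma_1'(x) a$, we have $\norm{\nabla_x f}_2 \leq \frac{1}{\sqrt{m}}\norm{W}_{op}\norm{\sigma'}_\infty \norm{a}_2 \leq \sqrt{m}\,\norm{\sigma'}_\infty \xi(0)^2$. Applying Lemma~\ref{lem:xi_bd} to bound $\xi(0)^2$ with probability $\geq 1-\delta$ gives exactly the quantity $L(m,\sigma,d,K,\delta)$ defined in the statement as a high-probability upper bound on the Lipschitz constant.

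Third, I would run the standard net argument. Choose a $\gamma$-net $\Delta$ of $B_M$ with $\gamma := 1/(L\sqrt{m})$, whose cardinality satisfies $|\Delta| \leq (CM/\gamma)^d$. A union bound of the pointwise tail over $\Delta$ with $t = CDK^2\sqrt{d\log(CML) + \log(c/\delta)}$ shows $\max_{x' \in \Delta} |f(x';\theta_0)| \leq t$ with probability $\geq 1-\delta$; the hypothesis $m \geq C[\log(c/\delta) + d\log(CML)]$ ensures this $t$ stays in the subgaussian regime of the Bernstein bound. For an arbitrary $x \in B_M$, picking $x' \in \Delta$ with $\norm{x-x'}_2 \leq \gamma$ yields
\[
|f(x;\theta_0)| \leq |f(x';\theta_0)| + L\gamma \leq t + \tfrac{1}{\sqrt{m}},
\]
and combining the three high-probability events with a final union bound (replacing $\delta$ by $\delta/3$) gives the claimed $\Tilde O(\sqrt d)$ bound.

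The main technical nuisance, as usual, is bookkeeping: I must ensure that the chosen $t$ satisfies the Bernstein transition threshold $t \lesssim DK^2\sqrt{m}$ so that the tail is genuinely subgaussian rather than subexponential, which is where the lower bound on $m$ is used. Everything else is routine concentration and covering, with the Lipschitz estimate and the net size interacting only through the logarithm.
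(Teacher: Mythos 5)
Your proposal is correct and follows essentially the same route as the paper's proof: Bernstein-type subexponential concentration of $\frac{1}{\sqrt m}\sum_\ell a_\ell\sigma(\langle w_\ell,x\rangle_2+b_\ell)$ at fixed $x$, a Lipschitz bound $\sqrt m\,\norm{\sigma'}_\infty\xi(0)^2$ controlled via Lemma~\ref{lem:xi_bd}, and a $\gamma$-net union bound, with the hypothesis on $m$ used exactly to keep $t$ in the subgaussian regime. The only cosmetic differences are your choice $\gamma = 1/(L\sqrt m)$ instead of the paper's $\gamma=1/L$ and folding the $b_0$ tail into the pointwise bound rather than treating it as a separate event, neither of which changes the result.
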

\begin{proof}
First we note that
\begin{gather*}
\abs{\frac{a^T}{\sqrt{m}} \sigma(Wx + b) - \frac{a^T}{\sqrt{m}} \sigma(Wy + b)} 
\leq \frac{\norm{a}_2}{\sqrt{m}}\norm{\sigma(Wx + b) - \sigma(Wy + b)}_2 \\
\leq \frac{\norm{a}_2}{\sqrt{m}}\norm{\sigma'}_\infty \norm{W(x - y)}_2 
\leq \frac{\norm{a}_2}{\sqrt{m}}\norm{\sigma'}_\infty \norm{W}_{op} \norm{x - y}_2 \leq \sqrt{m} \norm{\sigma'}_\infty \xi(0)^2 \norm{x - y}_2 , 
\end{gather*}
where $\xi(0)$ is defined as in Lemma \ref{lem:xi_bd}.  Thus $f(\bullet; \theta_0)$ is Lipschitz with constant $L = \sqrt{m} \norm{\sigma'}_\infty \xi(0)^2$.  Well then by Lemma \ref{lem:xi_bd} we have with probability at least $1 - \delta$
\begin{equation}\label{eq:xibd}
\xi(0)^2 \leq \braces{1 +  C \frac{\sqrt{d} + K^2 \sqrt{\log(c / \delta)}}{\sqrt{m}}}^2 . 
\end{equation}
When the above holds we have that $f(\bullet;\theta_0)$ is Lipschitz with constant 
\[L := \sqrt{m} \norm{\sigma'}_\infty \braces{1 +  C \frac{\sqrt{d} + K^2 \sqrt{\log(c / \delta)}}{\sqrt{m}}}^2 . \]
On the other hand note that
\[|\sigma(\langle w_\ell, x \rangle_2+ b_\ell)| \leq |\sigma(0)| + \norm{\sigma'}_\infty [|\langle w_\ell, x \rangle_2| + |b_\ell|]. \]
Thus
\begin{gather*}
\norm{\sigma(\langle w_\ell, x \rangle_2+ b_\ell)}_{\psi_2} \leq |\sigma(0)| \norm{1}_{\psi_2} + \norm{\sigma'}_\infty [\norm{|\langle w_\ell, x \rangle_2|}_{\psi_2} + \norm{|b_\ell|}_{\psi_2}] \\
\leq |\sigma(0)| \norm{1}_{\psi_2} + \norm{\sigma'}_\infty [M \norm{w_\ell}_{\psi_2} + \norm{|b_\ell|}_{\psi_2}] \\
\leq 3 \max\{|\sigma(0)|, M \norm{\sigma'}_\infty, \norm{\sigma'}_\infty \} K \leq D K . 
\end{gather*}
Therefore by Lemma \ref{lem:subgaussiansquared} we have
\[ \norm{a_\ell \sigma(\langle w_\ell, x \rangle_2+ b_\ell)}_{\psi_1} \leq D K^2 .\]
Thus for each $x$ fixed we have by Bernstein's inequality \citet[Theorem 2.8.1]{vershyninHDP}
\begin{gather*}
\PP\parens{\abs{\sum_{\ell = 1}^m a_\ell \sigma(\langle w_\ell, x \rangle_2+ b_\ell)} > t \sqrt{m}} \leq 2 \exp\parens{-c \min\brackets{\frac{t^2}{[D K^2]^2}, \frac{t\sqrt{m}}{D K^2}}} . 
\end{gather*}
Thus for $t \leq \sqrt{m} DK^2$ this simplifies to
\[ \PP\parens{\abs{\sum_{\ell = 1}^m a_\ell \sigma(\langle w_\ell, x \rangle_2+ b_\ell)} > t \sqrt{m}} \leq 2 \exp\parens{-c\frac{t^2}{D^2 K^4}} . \]
Let $\Delta$ be a $\gamma$ net of the ball $B_M = \{x : \norm{x}_2 \leq M\}$ with respect to the Euclidean $\norm{\bullet}_2$ norm.  Then by a standard volume argument we have that
\[ |\Delta| \leq \parens{\frac{C M}{\gamma}}^d =: \mathcal{M}_\gamma . \]
Thus by a union bound we have for $t \leq \sqrt{m} DK^2$
\[ \PP\parens{\max_{x \in \Delta} \abs{\sum_{\ell = 1}^m a_\ell \sigma(\langle w_\ell, x \rangle_2+ b_\ell)} > t \sqrt{m}} \leq 2 \mathcal{M}_\gamma \exp\parens{-c\frac{t^2}{D^2 K^4}} . \]
Thus by setting $t = CD K^2 \sqrt{\log(c \mathcal{M}_\gamma / \delta)}$ assuming $t \leq \sqrt{m} DK^2$ we have with probability at least $1 - \delta$
\begin{equation}\label{eq:fepsnetbd}
\max_{x \in \Delta} \abs{\sum_{\ell = 1}^m \frac{a_\ell}{\sqrt{m}} \sigma(\langle w_\ell, x \rangle_2+ b_\ell)} \leq t . 
\end{equation}
On the other hand by Lemma \ref{lem:subgauss_conc} our prior definition of $t$ is large enough (up to a redefinition of the constants $c, C$) to ensure that with probability at least $1 - \delta$
\begin{equation}\label{eq:bzerobd}
|b_0| \leq t . 
\end{equation}
When (\ref{eq:fepsnetbd}) and (\ref{eq:bzerobd}) hold simultaneously we have that $\max_{x' \in \Delta} |f(x', \theta_0)| \leq 2t$.  By a union bound we have with probability at least $1 - 3 \delta$ that (\ref{eq:xibd}), (\ref{eq:fepsnetbd}), (\ref{eq:bzerobd}) hold simultaneously.  Well then for any $x \in B_M$ we may choose $x' \in \Delta$ so that $\norm{x - x'}_2 \leq \gamma$.  Then
\begin{gather*}
|f(x;\theta_0)| \leq |f(x';\theta_0)| + |f(x;\theta_0) - f(x';\theta_0)| \leq 2t + L \gamma . 
\end{gather*}
Therefore
\[ \sup_{x \in B_M} |f(x;\theta_0)| \leq 2t + L \gamma \]
and this argument runs through for any $\gamma > 0$.  We will set $\gamma = 1/L$.  Note that for this choice of $\gamma$ the hypothesis on $m$ ensures that $t \leq \sqrt{m} DK^2$.  Thus the preceding argument goes through in this case.  Thus by replacing $\delta$ with $\delta/3$ in the previous argument we get the desired conclusion up to a redefinition of $c, C$. 
\end{proof}

We quickly introduce the following lemma.
\begin{lem}\label{lem:hatzerobd}
\[\norm{\hat{r}(0)}_{\RR^n} \leq \norm{f(\bullet;\theta_0)}_{\infty} + \norm{y}_{\RR^n} . \]
\end{lem}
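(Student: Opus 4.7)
The plan is to unfold the definition of $\hat{r}(0)$, apply the triangle inequality for the normalized norm $\norm{\bullet}_{\RR^n}$, and then observe that the empirical $L^2$-type norm on the training set is dominated by the sup norm on the full domain.

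Concretely, by definition $\hat{r}(0) \in \RR^n$ has $i$-th entry $f(x_i;\theta_0) - f^*(x_i) = f(x_i;\theta_0) - y_i$, so if we let $v \in \RR^n$ be the vector with $v_i = f(x_i;\theta_0)$, then $\hat{r}(0) = v - y$. Since $\norm{\bullet}_{\RR^n} = \sqrt{\langle \bullet, \bullet \rangle_{\RR^n}}$ is a genuine norm (it is just a rescaled Euclidean norm on $\RR^n$), the triangle inequality gives
\[
\norm{\hat{r}(0)}_{\RR^n} \;\leq\; \norm{v}_{\RR^n} + \norm{y}_{\RR^n}.
\]

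It then remains to bound $\norm{v}_{\RR^n}$ by $\norm{f(\bullet;\theta_0)}_\infty$. This is immediate from the definition of $\norm{\bullet}_{\RR^n}$:
\[
\norm{v}_{\RR^n}^2 \;=\; \frac{1}{n} \sum_{i=1}^n |f(x_i;\theta_0)|^2 \;\leq\; \max_{i \in [n]} |f(x_i;\theta_0)|^2 \;\leq\; \sup_{x \in B_M} |f(x;\theta_0)|^2 \;=\; \norm{f(\bullet;\theta_0)}_\infty^2,
\]
where the last inequality uses Assumption \ref{ass:bounded} which guarantees $x_i \in B_M$ for every $i$. Combining the two displays yields the claim.

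There is no real obstacle here; the lemma is essentially a bookkeeping step whose only purpose is to let subsequent results (which bound $\norm{f(\bullet;\theta_0)}_\infty$ via Lemma \ref{lem:fatinitbd}) be converted into bounds on $\norm{\hat{r}(0)}_{\RR^n}$, the quantity that controls parameter drift through Lemmas \ref{lem:param_op_norm_bound} and \ref{lem:param_infty_norm_bound}. The only minor thing to keep in mind when writing it up cleanly is being explicit that $\norm{\bullet}_{\RR^n}$ is a norm (hence satisfies the triangle inequality), rather than just an inner-product-induced seminorm.
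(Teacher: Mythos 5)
Your proposal is correct and follows essentially the same route as the paper: write $\hat{r}(0)$ as the difference of the vector of network outputs on the training set and the label vector, apply the triangle inequality for $\norm{\bullet}_{\RR^n}$, and dominate the empirical norm of the output vector by $\norm{f(\bullet;\theta_0)}_\infty$. Nothing is missing.
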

\begin{proof}
Let $\hat{y} \in \RR^n$ be the vector whose $i$th entry is equal to $f(x_i; \theta_0)$.  Well then note that $\norm{\hat{y}}_{\RR^n} \leq \norm{f(\bullet;\theta_0)}_\infty$.  Therefore
\begin{gather*}
\norm{\hat{r}(0)}_{\RR^n} = \norm{\hat{y} - y}_{\RR^n} \leq \norm{\hat{y}}_{\RR^n} + \norm{y}_{\RR^n} \leq \norm{f(\bullet;\theta_0)}_{\infty} + \norm{y}_{\RR^n}.
\end{gather*}
\end{proof}

Finally we prove one last lemma that will be useful later.
\begin{lem}\label{lem:xiandtildexigammabd}
Assume that $W_{i, j} \sim \mathcal{W}$, $b_\ell \sim \mathcal{B}$, $a_\ell \sim \mathcal{A}$ are all i.i.d zero-mean, subgaussian random variables with unit variance.  Furthermore assume $\norm{1}_{\psi_2}, \norm{w_\ell}_{\psi_2}, \norm{a_\ell}_{\psi_2}, \norm{b_\ell}_{\psi_2} \leq K$ for each $\ell \in [m]$ where $K \geq 1$.  Let $\Gamma > 1$, $D = 3\max\{|\sigma(0)|, M \norm{\sigma'}_\infty, \norm{\sigma'}_\infty, 1\},$
\[L(m, \sigma, d, K, \delta) := \sqrt{m} \norm{\sigma'}_\infty \braces{1 +  C \frac{\sqrt{d} + K^2 \sqrt{\log(c / \delta)}}{\sqrt{m}}}^2, \]
\[ \rho := C D K^2 \sqrt{d \log(CM L) + \log(c/\delta)} = \Tilde{O}(\sqrt{d}). \]
Suppose 
\[ m \geq \frac{4 D^2 \norm{y}_{\RR^n}^2 T^2}{[\log(\Gamma)]^2} \text{   and   } m \geq \max\braces{\frac{4D^2 \rho^2 T^2}{[\log(\Gamma)]^2}, \parens{\frac{\rho}{DK^2}}^2} . \]
Then with probability at least $1 - \delta$ 
\[ \max_{t \leq T} \xi(t) \leq \Gamma \xi(0) \quad \max_{t \leq T} \Tilde{\xi}(t) \leq \Gamma \Tilde{\xi}(0) , \]
where $\xi(t)$ and $\Tilde{\xi}(t)$ are defined as in Lemmas \ref{lem:param_op_norm_bound} and \ref{lem:param_infty_norm_bound}.  If one instead does the doubling trick the second hypothesis on $m$ can be removed and the conclusion holds with probability $1$. 
\end{lem}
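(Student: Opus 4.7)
The plan is to reduce the statement to a direct application of Lemmas \ref{lem:param_op_norm_bound} and \ref{lem:param_infty_norm_bound} combined with the initialization bounds provided by Lemmas \ref{lem:fatinitbd} and \ref{lem:hatzerobd}. Both Lemma \ref{lem:param_op_norm_bound} and Lemma \ref{lem:param_infty_norm_bound} give the same deterministic exponential bound, namely
\[
\xi(t)\leq \exp\!\parens{\tfrac{D}{\sqrt{m}}\norm{\hat{r}(0)}_{\RR^n} t}\xi(0),\qquad \Tilde{\xi}(t)\leq \exp\!\parens{\tfrac{D}{\sqrt{m}}\norm{\hat{r}(0)}_{\RR^n} t}\Tilde{\xi}(0),
\]
so both conclusions $\xi(t)\leq \Gamma\xi(0)$ and $\Tilde{\xi}(t)\leq \Gamma\Tilde{\xi}(0)$ for all $t\leq T$ follow from the single inequality
\[
\frac{D\norm{\hat{r}(0)}_{\RR^n} T}{\sqrt{m}}\leq \log\Gamma,\qquad\text{i.e.,}\qquad m\geq \frac{D^{2}\norm{\hat{r}(0)}_{\RR^n}^{2}T^{2}}{(\log\Gamma)^{2}}.
\]
Thus the whole lemma reduces to controlling $\norm{\hat{r}(0)}_{\RR^n}$.

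First I would handle the non-doubling case. By Lemma \ref{lem:hatzerobd}, $\norm{\hat{r}(0)}_{\RR^n}\leq \norm{f(\bullet;\theta_0)}_\infty+\norm{y}_{\RR^n}$. The second hypothesis $m\geq (\rho/(DK^{2}))^{2}$ is exactly (up to rearrangement of $\rho=\Tilde{O}(\sqrt{d})$) the hypothesis of Lemma \ref{lem:fatinitbd}, and indeed this is how that condition should be motivated. Applying Lemma \ref{lem:fatinitbd}, with probability at least $1-\delta$ we have $\norm{f(\bullet;\theta_0)}_\infty\leq \rho$, hence
\[
\norm{\hat{r}(0)}_{\RR^n}^{2}\;\leq\;(\rho+\norm{y}_{\RR^n})^{2}\;\leq\;2\rho^{2}+2\norm{y}_{\RR^n}^{2}.
\]
Substituting into the required inequality on $m$, a sufficient condition is $m\geq \tfrac{2D^{2}(\rho^{2}+\norm{y}_{\RR^n}^{2})T^{2}}{(\log\Gamma)^{2}}$, which is implied by the two stated hypotheses $m\geq \tfrac{4D^{2}\norm{y}_{\RR^n}^{2}T^{2}}{(\log\Gamma)^{2}}$ and $m\geq \tfrac{4D^{2}\rho^{2}T^{2}}{(\log\Gamma)^{2}}$ (the factor $4$ absorbing the factor $2$ from $(\rho+\norm{y}_{\RR^n})^{2}\leq 2\rho^{2}+2\norm{y}_{\RR^n}^{2}$). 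Plugging this back into the exponential bounds from Lemmas \ref{lem:param_op_norm_bound} and \ref{lem:param_infty_norm_bound} gives the desired $\xi(t)\leq\Gamma\xi(0)$ and $\Tilde{\xi}(t)\leq\Gamma\Tilde{\xi}(0)$ for all $t\leq T$, on the same probability-$(1-\delta)$ event as Lemma \ref{lem:fatinitbd}.

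For the doubling-trick case, the setup forces $f(\bullet;\theta_{0})\equiv 0$, so $\hat{r}(0)=-y$ deterministically and $\norm{\hat{r}(0)}_{\RR^n}=\norm{y}_{\RR^n}$. Then the required inequality on $m$ collapses to $m\geq D^{2}\norm{y}_{\RR^n}^{2}T^{2}/(\log\Gamma)^{2}$, which is strictly weaker than the first stated hypothesis, and no probabilistic estimate is needed: the conclusion holds with probability $1$ and the second hypothesis on $m$ (which exists solely to enable Lemma \ref{lem:fatinitbd}) can be dropped.

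There is really no hard step here; the only bookkeeping subtlety is making sure the second hypothesis $m\geq (\rho/(DK^{2}))^{2}$ is recognized as the hypothesis needed to invoke Lemma \ref{lem:fatinitbd} (it matches because $\rho=CDK^{2}\sqrt{d\log(CML)+\log(c/\delta)}$, so $(\rho/(DK^{2}))^{2}$ is a constant multiple of $\log(c/\delta)+d\log(CML)$), and that the factor-of-two loss in $(\rho+\norm{y}_{\RR^n})^{2}\leq 2\rho^{2}+2\norm{y}_{\RR^n}^{2}$ is absorbed by the factor $4$ in both stated bounds on $m$.
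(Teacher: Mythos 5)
Your proposal is correct and follows essentially the same route as the paper's proof: reduce to the Grönwall-type bounds of Lemmas \ref{lem:param_op_norm_bound} and \ref{lem:param_infty_norm_bound}, control $\norm{\hat{r}(0)}_{\RR^n}$ via Lemmas \ref{lem:fatinitbd} and \ref{lem:hatzerobd} using the hypothesis $m \geq (\rho/(DK^2))^2$, and note that the doubling trick makes $f(\bullet;\theta_0)\equiv 0$ so the argument becomes deterministic. The only cosmetic difference is that you absorb the sum $\rho+\norm{y}_{\RR^n}$ via $(a+b)^2\leq 2a^2+2b^2$ and averaging the two hypotheses, whereas the paper uses $\norm{y}_{\RR^n}+\rho\leq 2\max\{\norm{y}_{\RR^n},\rho\}$; both are immediate and equivalent in effect.
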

\begin{proof}
First assume we are not doing the doubling trick.  Well from the condition $m \geq \parens{\frac{\rho}{DK^2}}^2$ we have by Lemma \ref{lem:fatinitbd} that with probability at least $1 - \delta$
\[ \sup_{x \in B_M} |f(x;\theta_0)| \leq C D K^2 \sqrt{d \log(CM L) + \log(c/\delta)} =: \rho. \]
Also by Lemma \ref{lem:hatzerobd} and the above bound we have
\[ \norm{\hat{r}(0)}_{\RR^n} \leq \norm{y}_{\RR^n} + \rho . \]
Well in this case
\begin{gather*}
\frac{D \norm{\hat{r}(0)}_{\RR^n} t}{\sqrt{m}} \leq \frac{D[\norm{y}_{\RR^n} + \rho]t}{\sqrt{m}} \leq \frac{2D\max\{\norm{y}_{\RR^n}, \rho\}t}{\sqrt{m}} \leq \log(\Gamma) , 
\end{gather*}
where we have used the hypothesis on $m$ in the last inequality.  Therefore by Lemmas \ref{lem:param_op_norm_bound}, \ref{lem:param_infty_norm_bound} we have in this case that
\[ \max_{t \leq T} \xi(t) \leq \Gamma \xi(0) \quad \max_{t \leq T} \Tilde{\xi}(t) \leq \Gamma \Tilde{\xi}(0) \]
\par Now assume we are performing the doubling trick so that $f(\bullet; \theta_0) \equiv 0$.  Then $\rho$ in the previous argument can simply be replaced with zero and the same argument runs through, except using Lemma \ref{lem:fatinitbd} is no longer necessary (and thus the second hypothesis on $m$ is not needed).  Without using Lemma \ref{lem:fatinitbd} the whole argument is deterministic so that the conclusion holds with probability $1$.
\end{proof}

\subsection{$NTK$ Time Deviations Bounds}\label{sec:ntkdevsub}
In this section we bound the deviations of the $NTK$ throughout time.  This section runs through the following steps
\begin{itemize}
    \item Bound $\sup_{(x,y) \in B_M \times B_M} |\partial_t K_t(x, y)|$
    \item Bound $\sup_{(x,y) \in B_M \times B_M} |K_t(x, y) - K_0(x, y)|$
\end{itemize}

In the following lemma we will provide an upper bound on the $NTK$ derivative $\sup_{x,y \in B_M \times B_M} |\partial_t K_t(x, y)|$.  
\begin{lem}\label{lem:ker_deriv}
Let
\[\xi(t) =  \max\{\frac{1}{\sqrt{m}}\norm{W(t)}_{op}, \frac{1}{\sqrt{m}} \norm{b(t)}_2, \frac{1}{\sqrt{m}}\norm{a(t)}_2, 1\},\]
\[\Tilde{\xi}(t) = \max\{\max_{\ell \in [m]}\norm{w_\ell(t)}_2, \norm{a(t)}_\infty, \norm{b(t)}_\infty, 1\}\]
\[D = 3 \max\{|\sigma(0)|, M \norm{\sigma'}_\infty, \norm{\sigma'}_\infty, 1 \}\]
\[ D' := \brackets{\max\{\norm{\sigma'}_\infty, \norm{\sigma''}_\infty\}^2 [M^2 + 1] + D \norm{\sigma'}_\infty} \max\{1, M\} . \]
Then for any initial conditions $W(0)$, $b(0)$, $a(0)$ we have for all $t$
\[ \sup_{x, y \in B_M \times B_M}|\partial_t K_t(x, y)| \leq \frac{C DD'}{\sqrt{m}} \xi(t)^2 \Tilde{\xi}(t) \norm{\hat{r}(t)}_{\RR^n} . \]
\end{lem}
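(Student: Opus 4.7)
The plan is to apply the product rule to $\partial_t K_t(x,y)$, reduce to bounding the time derivatives of the individual gradient maps $\partial_a f, \partial_b f, \partial_W f$ (the $\partial_{b_0} f \equiv 1$ contribution is inert), and then control those time derivatives using the explicit update formulas in the ``Important Equations'' subsection together with Lemma \ref{lem:res_weighted}. By Lemma \ref{lem:grad_bound} each gradient has norm at most $D\xi(t)$, so after applying Cauchy--Schwarz to each bilinear inner product
\[
\partial_t \langle g(x;\theta_t), g(y;\theta_t)\rangle = \langle \partial_t g(x;\theta_t), g(y;\theta_t)\rangle + \langle g(x;\theta_t), \partial_t g(y;\theta_t)\rangle,
\]
it will suffice to establish an upper bound of the form $\|\partial_t g(x;\theta_t)\| \leq \frac{1}{\sqrt{m}} D'\, \xi(t)\Tilde{\xi}(t)\,\|\hat r(t)\|_{\RR^n}$ uniformly in $x \in B_M$ for each $g \in \{\partial_a f, \partial_b f, \partial_W f\}$.

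For the $\partial_a f$ piece I would use $\partial_t[\partial_a f(x;\theta_t)] = \partial_t x^{(1)}$, which the listed identities express as an average of $\hat r_i$ against terms of the form $\tfrac{1}{m}\sigma_1'(x)\sigma_1'(x_i) a\,[\langle x,x_i\rangle_2 + 1]$; Lemma \ref{lem:res_weighted} combined with $\|a\|_2 \leq \sqrt{m}\xi(t)$, $\|\sigma'\|_\infty<\infty$, and $|\langle x,x_i\rangle_2+1|\leq M^2+1$ yields the required bound. For $\partial_b f$ and $\partial_W f$ I would expand via the product rule as $\partial_t[\tfrac{1}{\sqrt m}\sigma_1'(x)a]$ and $\partial_t[\tfrac{1}{\sqrt m}\sigma_1'(x)a\otimes x]$, splitting into a term involving $\partial_t \sigma_1'(x)$ and a term involving $\partial_t a$. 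The second is immediate from the bound $\|\partial_t a\|_2 \leq D\xi(t)\|\hat r(t)\|_{\RR^n}$ established inside the proof of Lemma \ref{lem:param_op_norm_bound}. The first uses the given formula for $\partial_t \sigma_1'(x)$, which brings in the product $\sigma_1''(x)\sigma_1'(x_i)\,\mathrm{diag}(a)\,a$; here I would control the vector of entries $\sigma''(\cdot)\sigma'(\cdot) a_\ell^2$ in $\ell^2$ norm by $\|\sigma''\|_\infty\|\sigma'\|_\infty\|a\|_\infty\|a\|_2 \leq \|\sigma''\|_\infty\|\sigma'\|_\infty\,\sqrt{m}\,\Tilde\xi(t)\xi(t)$, which is precisely the place the $\Tilde\xi(t)$ factor enters the final bound. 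For the $\partial_W f$ term an extra factor of $\|x\|_2\leq M$ appears from the outer product, producing the $\max\{1,M\}$ in the definition of $D'$.

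Summing the two product-rule contributions over the three bilinear pairs and taking the supremum over $(x,y)\in B_M\times B_M$ gives $|\partial_t K_t(x,y)| \leq \tfrac{CDD'}{\sqrt m}\,\xi(t)^2\Tilde\xi(t)\,\|\hat r(t)\|_{\RR^n}$ as claimed, with a concrete absolute constant (at most $6$) absorbed into $C$. The argument is entirely pointwise in $t$ and deterministic given the parameters, so no concentration or covering is needed at this stage; the analogous $\sup_{x,y}|K_t(x,y)-K_0(x,y)|$ bound promised in the outline of Section \ref{sec:ntkdevsub} will then follow by integrating the present estimate in time.

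The only real bookkeeping obstacle is keeping the two scalings straight: the operator-norm scaling $\xi(t)$ (which controls $\|a\|_2/\sqrt m$ and $\|W\|_{\mathrm{op}}/\sqrt m$) and the coordinate-wise scaling $\Tilde\xi(t)$ (which controls $\|a\|_\infty$ and $\max_\ell\|w_\ell\|_2$), because the second-derivative term mixes them through $\|a\|_\infty\|a\|_2$. Everything else reduces to applications of Lemma \ref{lem:res_weighted} and $|\langle x,x_i\rangle_2+1|\leq M^2+1$.
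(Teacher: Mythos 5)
Your proposal is correct and follows essentially the same route as the paper's proof: product rule on each bilinear pair, Lemma \ref{lem:grad_bound} plus Cauchy--Schwarz for the undifferentiated factor, and Lemma \ref{lem:res_weighted} applied to the explicit formulas for $\partial_t x^{(1)}$, $\partial_t \sigma_1'(x)$, and $\partial_t a$, with $\|a\|_\infty\|a\|_2 \leq \sqrt{m}\,\xi\Tilde\xi$ supplying the mixed-scaling factor exactly as in the paper. The only cosmetic difference is that you bound the $\sigma_1'(x)\,\partial_t a$ term via the estimate $\|\partial_t a\|_2 \leq D\xi\|\hat r\|_{\RR^n}$ from Lemma \ref{lem:param_op_norm_bound} rather than re-applying Lemma \ref{lem:res_weighted} to $\sigma_1'(x)x_i^{(1)}$; both yield the same $\|\sigma'\|_\infty D\xi\|\hat r\|_{\RR^n}$ contribution.
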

\begin{proof}
We need to bound the following time derivatives
\begin{gather*}
\partial_t \partial_a f(x; \theta) = \partial_t x^{(1)} = - \frac{1}{n} \sum_{i = 1}^n \hat{r}_i  \brackets{\frac{1}{\sqrt{m}} \sigma_1'(x) \sigma_1'(x_i) \frac{a}{\sqrt{m}}} [\langle x, x_i \rangle_2+ 1]
\end{gather*}

\begin{gather*}
\partial_t \partial_W f(x; \theta) = \partial_t \frac{1}{\sqrt{m}} \sigma_1'(x) a \otimes x \\
= \frac{1}{\sqrt{m}}[(\partial_t \sigma_1'(x)) a + \sigma_1'(x) (\partial_t a)] \otimes x
\end{gather*}

\begin{gather*}
\partial_t \partial_b f(x; \theta) = \partial_t \frac{1}{\sqrt{m}} \sigma_1'(x)a = \frac{1}{\sqrt{m}} \parens{[\partial_t \sigma_1'(x)] a + \sigma_1'(x) \partial_t a} . 
\end{gather*}

Note that
\begin{gather*}
\norm{\frac{1}{\sqrt{m}} \sigma_1'(x) \sigma_1'(x_i) \frac{a}{\sqrt{m}} [\langle x, x_i \rangle_2+ 1]}_2 \leq \frac{1}{\sqrt{m}} \norm{\sigma'}_\infty^2 \frac{1}{\sqrt{m}}\norm{a}_2 [M^2 + 1] . 
\end{gather*}
Thus by Lemma \ref{lem:res_weighted}
\begin{gather*}
\norm{\partial_t \partial_a f(x; \theta)}_2 \leq \frac{1}{\sqrt{m}} \norm{\sigma'}_\infty^2 \frac{1}{\sqrt{m}}\norm{a}_2 [M^2 + 1] \norm{\hat{r}(t)}_{\RR^n} \\
\leq \frac{\norm{\sigma'}_\infty^2[M^2 + 1] ]}{\sqrt{m}} \xi(t) \norm{\hat{r}(t)}_{\RR^n} . 
\end{gather*}

On the other hand
\begin{gather*}
[\partial_t \sigma_1'(x)] a = - \frac{1}{n} \sum_{i = 1}^n \hat{r}_i \frac{1}{\sqrt{m}} \sigma_1''(x) \sigma_1'(x_i) diag(a) a [\langle x, x_i \rangle_2+ 1].
\end{gather*}
Well 
\begin{gather*}
\frac{1}{\sqrt{m}} \norm{\sigma_1''(x) \sigma_1'(x_i) diag(a) a [\langle x, x_i \rangle_2+ 1} \leq \frac{1}{\sqrt{m}} \norm{\sigma''}_\infty \norm{\sigma'}_\infty \norm{a}_\infty \norm{a}_2 [M^2 + 1] \\ 
\leq \norm{\sigma''}_\infty \norm{\sigma'}_\infty [M^2 + 1] \xi(t) \Tilde{\xi}(t) . 
\end{gather*}
Thus by Lemma \ref{lem:res_weighted} we have that 
\[ \norm{[\partial_t \sigma_1'(x)] a} \leq \norm{\sigma''}_\infty \norm{\sigma'}_\infty [M^2 + 1] \xi(t) \Tilde{\xi}(t) \norm{\hat{r}(t)}_{\RR^n} . \]
Finally we have
\[ \sigma_1'(x) \partial_t a = - \frac{1}{n} \sum_{i = 1}^n \hat{r}_i \sigma_1'(x) x_i^{(1)}. \]
Well
\begin{gather*}
\norm{\sigma_1'(x) x_i^{(1)}} \leq \norm{\sigma'}_\infty \norm{x_i^{(1)}}_2 = \norm{\sigma'}_\infty \frac{1}{\sqrt{m}} \norm{\sigma(Wx_i + b)}_2 \\
\leq \norm{\sigma'}_\infty[|\sigma(0)| + \frac{1}{\sqrt{m}}\norm{\sigma'}_\infty (\norm{W}_{op} M + \norm{b}_2)] \\
\leq \norm{\sigma'}_{\infty} \brackets{|\sigma(0)| + M\norm{\sigma'}_\infty + \norm{\sigma'}_\infty} \xi(t) \leq 
\norm{\sigma'}_\infty D \xi(t) . 
\end{gather*}
Thus we finally by Lemma \ref{lem:res_weighted} again we get that
\[ \norm{\sigma_1'(x) \partial_t a} \leq \norm{\sigma'}_\infty D \xi(t) \norm{\hat{r}(t)}_{\RR^n} . \]
It follows that
\begin{gather*}
\norm{\partial_t \partial_b f(x; \theta)} \\
\leq \frac{1}{\sqrt{m}} \brackets{\norm{\sigma''}_\infty \norm{\sigma'}_\infty [M^2 + 1] + \norm{\sigma'}_{\infty} D}\xi(t) \Tilde{\xi}(t) \norm{\hat{r}(t)}_{\RR^n}  
\end{gather*}
and similarly
\begin{gather*}
\norm{\partial_t \partial_W f(x; \theta)} \\
\leq \frac{M}{\sqrt{m}} \brackets{\norm{\sigma''}_\infty \norm{\sigma'}_\infty [M^2 + 1] + \norm{\sigma'}_{\infty} D} \xi(t) \Tilde{\xi}(t) \norm{\hat{r}(t)}_{\RR^n} . 
\end{gather*}
Thus in total we can say
\begin{gather*}
\norm{\partial_t \partial_a f(x; \theta)}_2, \norm{\partial_t \partial_b f(x; \theta)}_2, \norm{\partial_t \partial_w f(x; \theta)}_F \leq \frac{D'}{\sqrt{m}} \xi(t) \Tilde{\xi}(t) \norm{\hat{r}(t)}_{\RR^n} . 
\end{gather*}
It thus follows by the chain rule and Lemma \ref{lem:grad_bound} that 
\[ \sup_{(x, y) \in B_M \times B_M}|\partial_t K_t(x, y)| \leq \frac{C DD'}{\sqrt{m}} \xi(t)^2 \Tilde{\xi}(t) \norm{\hat{r}(t)}_{\RR^n} . \]
\end{proof}

Using the previous lemma we can now bound the deviations of the $NTK$.
\begin{theo}\label{thm:ntkdevbd}
Assume that $W_{i, j} \sim \mathcal{W}$, $b_\ell \sim \mathcal{B}$, $a_\ell \sim \mathcal{A}$ are all i.i.d zero-mean, subgaussian random variables with unit variance.  Furthermore assume $\norm{w_\ell}_{\psi_2}, \norm{a_\ell}_{\psi_2}, \norm{b_\ell}_{\psi_2} \leq K$ for each $\ell \in [m]$ where $K \geq 1$.  Let $\Gamma > 1$ and $T > 0$ be positive constants,
\[D := 3 \max\{ |\sigma(0)|, M \norm{\sigma'}_{\infty}, \norm{\sigma'}_{\infty}, 1\}, \]
\[ D' := \brackets{\max\{\norm{\sigma'}_\infty, \norm{\sigma''}_\infty\}^2 [M^2 + 1] + D \norm{\sigma'}_\infty} \max\{1, M\}, \]
and assume
\[ m \geq \frac{4 D^2 \norm{y}_{\RR^n}^2 
T^2}{[\log(\Gamma)]^2} \text{    and    } m \geq \max \braces{\frac{4 D^2 O(\log(c/\delta) + \Tilde{O}(d)) T^2}{[\log(\Gamma)]^2}, O(\log(c/\delta) + \Tilde{O}(d))}.  \]
Then with probability at least $1 - \delta$
\begin{gather*}
\sup_{(x, y) \in B_M \times B_M}|K_t(x, y) - K_0(x, y)| \leq \\
t \Gamma^3 \frac{CDD'}{\sqrt{m}}\norm{\hat{r}(0)}_{\RR^n}\braces{1 +  C \frac{\sqrt{d} + K^2 \sqrt{\log(c / \delta)}}{\sqrt{m}}}^2 \braces{\sqrt{d} + CK^2 \brackets{\sqrt{\log(c/\delta}) + \sqrt{\log m}}}.     
\end{gather*}
If one instead does the doubling trick then the second condition on $m$ can be removed from the hypothesis and the same conclusion holds.
\end{theo}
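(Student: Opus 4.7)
The plan is to integrate the pointwise time-derivative bound from Lemma \ref{lem:ker_deriv} and then control the prefactors $\xi(s)$, $\tilde\xi(s)$ uniformly over $s\in[0,T]$ using Lemma \ref{lem:xiandtildexigammabd}. Concretely, for any fixed $(x,y)\in B_M\times B_M$, write
\[
K_t(x,y) - K_0(x,y) \;=\; \int_0^t \partial_s K_s(x,y)\,ds,
\]
so that
\[
\sup_{(x,y)\in B_M\times B_M} |K_t(x,y)-K_0(x,y)| \;\le\; \int_0^t \sup_{(x,y)}|\partial_s K_s(x,y)|\,ds.
\]
Lemma \ref{lem:ker_deriv} gives $\sup_{(x,y)}|\partial_s K_s(x,y)| \le \tfrac{CDD'}{\sqrt m}\,\xi(s)^2\tilde\xi(s)\,\|\hat r(s)\|_{\RR^n}$, and gradient flow yields $\|\hat r(s)\|_{\RR^n}\le \|\hat r(0)\|_{\RR^n}$. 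So the task reduces to uniformly bounding $\xi(s)^2\tilde\xi(s)$ for $s\le T$.

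Next I would invoke Lemma \ref{lem:xiandtildexigammabd}, whose hypotheses are precisely the two conditions on $m$ appearing in the theorem statement (with $\rho=\tilde O(\sqrt d)$ absorbed into the $\tilde O(d)$ term, which is why the $\rho^2 T^2/[\log\Gamma]^2$ condition becomes $O(\log(c/\delta)+\tilde O(d))\,T^2/[\log\Gamma]^2$, and the standalone $(\rho/DK^2)^2$ condition becomes the $O(\log(c/\delta)+\tilde O(d))$ term). This yields, on an event of probability at least $1-\delta$,
\[
\max_{s\le T}\xi(s)^2\tilde\xi(s) \;\le\; \Gamma^3\,\xi(0)^2\tilde\xi(0),
\]
which is where the $\Gamma^3$ in the conclusion comes from. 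On the same event (or after a union bound with the events of Lemmas \ref{lem:xi_bd} and \ref{lem:xi_tilde_bd}, rescaling $\delta\to\delta/3$ and adjusting constants), I would substitute the high-probability estimates
\[
\xi(0)^2 \;\le\; \Bigl(1+C\tfrac{\sqrt d+K^2\sqrt{\log(c/\delta)}}{\sqrt m}\Bigr)^{\!2},\qquad
\tilde\xi(0) \;\le\; \sqrt d + CK^2\bigl(\sqrt{\log(c/\delta)}+\sqrt{\log m}\bigr),
\]
which exactly produces the two bracketed factors in the claimed bound. Integrating the resulting time-independent upper bound from $0$ to $t$ contributes the overall factor of $t$, completing the estimate.

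For the doubling trick we have $f(\,\cdot\,;\theta_0)\equiv 0$ deterministically, so $\|\hat r(0)\|_{\RR^n}=\|y\|_{\RR^n}$ and, as noted in Lemma \ref{lem:xiandtildexigammabd}, the second hypothesis on $m$ is unnecessary since the $\rho$-dependent terms vanish; the rest of the argument goes through unchanged. I do not expect a genuine obstacle here: all the heavy lifting (NTK Lipschitzness, subexponential concentration, $\epsilon$-net, parameter-norm control via Grönwall) has been done in the preceding lemmas, and this theorem is the clean combination step. The only care needed is (i) bookkeeping which probabilistic events must hold simultaneously so that the final failure probability is $\delta$ rather than some multiple, and (ii) checking that the hypothesis on $m$ really does activate Lemma \ref{lem:xiandtildexigammabd} in both the standard and doubling-trick settings.
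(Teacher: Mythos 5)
Your proposal is correct and follows essentially the same route as the paper's proof: bound $\sup|\partial_s K_s|$ via Lemma \ref{lem:ker_deriv}, control $\xi(s)^2\Tilde{\xi}(s)$ uniformly on $[0,T]$ by Lemma \ref{lem:xiandtildexigammabd} together with the initialization bounds of Lemmas \ref{lem:xi_bd} and \ref{lem:xi_tilde_bd}, use $\norm{\hat{r}(s)}_{\RR^n}\le\norm{\hat{r}(0)}_{\RR^n}$, integrate via the fundamental theorem of calculus, and handle the doubling trick by noting $\xi(0),\Tilde{\xi}(0)$ are unchanged and the second condition on $m$ becomes unnecessary. The only difference is bookkeeping of the union bound (the paper rescales $\delta\to\delta/2$ after bundling the two initialization bounds), which is immaterial since the constants are untracked.
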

\begin{proof}
First assume we are not doing the doubling trick.  By Lemmas \ref{lem:xi_bd}, \ref{lem:xi_tilde_bd} and a union bound we have with probability at least $1 - \delta$
\begin{equation}\label{eq:xiinitbds}
\xi(0)^2 \Tilde{\xi}(0) \leq \braces{1 +  C \frac{\sqrt{d} + K^2 \sqrt{\log(c / \delta)}}{\sqrt{m}}}^2 \braces{\sqrt{d} + CK^2 \brackets{\sqrt{\log(c/\delta}) + \sqrt{\log m}}}.    
\end{equation}
Note that $\rho$ as defined in Lemma \ref{lem:xiandtildexigammabd} satisfies $\rho^2 = O(\log(c/\delta) + \Tilde{O}(d))$.  Thus the hypothesis on $m$ is strong enough to apply Lemma \ref{lem:xiandtildexigammabd}, therefore by applying this lemma we have with probability at least $1 - \delta$
\begin{equation}\label{eq:xitimebds}
\max_{t \leq T} \xi(t) \leq \Gamma \xi(0) \quad \max_{t \leq T} \Tilde{\xi}(t) \leq \Gamma \Tilde{\xi}(0) . 
\end{equation}
Thus by replacing $\delta$ with $\delta / 2$ and taking a union bound we have that with probability at least $1 - \delta$ (\ref{eq:xiinitbds}) and (\ref{eq:xitimebds}) hold simultaneously.  Then using Lemma \ref{lem:ker_deriv} and the fact that $\norm{\hat{r}(t)}_{\RR^n} \leq \norm{\hat{r}(0)}_{\RR^n}$ we have for $t \leq T$
\[ \sup_{(x, y) \in B_M \times B_M}|\partial_t K_t(x, y)| \leq \Gamma^3 \frac{CDD'}{\sqrt{m}} \xi(0)^2 \Tilde{\xi}(0) \norm{\hat{r}(0)}_{\RR^n}. \]
Therefore by the fundamental theorem of calculus for $t \leq T$
\begin{gather*}
\sup_{(x, y) \in B_M \times B_M}|K_t(x, y) - K_0(x, y)| \leq t \Gamma^3 \frac{C DD'}{\sqrt{m}} \xi(0)^2 \Tilde{\xi}(0) \norm{\hat{r}(0)}_{\RR^n} \\
\leq t \Gamma^3 \frac{CDD'}{\sqrt{m}}\norm{\hat{r}(0)}_{\RR^n}\braces{1 +  C \frac{\sqrt{d} + K^2 \sqrt{\log(c / \delta)}}{\sqrt{m}}}^2 \braces{\sqrt{d} + CK^2 \brackets{\sqrt{\log(c/\delta}) + \sqrt{\log m}}}. 
\end{gather*}
\par
Now consider if one instead does the doubling trick where one does the following swaps $W(0) \rightarrow \begin{bmatrix} W(0) \\ W(0) \end{bmatrix}$, $b(0) \rightarrow \begin{bmatrix} b(0) \\ b(0) \end{bmatrix}$, $a(0) \rightarrow \begin{bmatrix} a(0) \\ -a(0) \end{bmatrix}$ and $m \rightarrow 2m$ where $W(0)$, $b(0)$, and $a(0)$ are initialized as before.  Then $\xi(0)$ and $\Tilde{\xi}(0)$ do not change.  We can then run through the same exact proof as before except when we apply Lemma \ref{lem:xiandtildexigammabd} the second hypothesis on $m$ is no longer needed.
\end{proof}

\begin{theo}\label{thm:ntkmaindevbd}
Assume that $W_{i, j} \sim \mathcal{W}$, $b_\ell \sim \mathcal{B}$, $a_\ell \sim \mathcal{A}$ are all i.i.d zero-mean, subgaussian random variables with unit variance.  Let $\Gamma > 1$ and $T > 0$ be positive constants and let $D := 3 \max\{ |\sigma(0)|, M \norm{\sigma'}_{\infty}, \norm{\sigma'}_{\infty}, 1\}$.
Assume
\[ m \geq \frac{4 D^2 \norm{y}_{\RR^n}^2 
T^2}{[\log(\Gamma)]^2} \text{    and    } m \geq \max \braces{\frac{4 D^2 O(\log(c/\delta) + \Tilde{O}(d))T^2}{[\log(\Gamma)]^2}, O(\log(c/\delta) + \Tilde{O}(d))}. \]
Then with probability at least $1 - \delta$ we have for $t \leq T$
\[ \sup_{(x, y) \in B_M \times B_M}|K_t(x, y) - K^\infty(x, y)| = \Tilde{O}\parens{\frac{\sqrt{d}}{\sqrt{m}}\brackets{1 + t\Gamma^3 \norm{\hat{r}(0)}_{\RR^n}}} . \] 
If one instead does the doubling trick then one can remove the assumption $m \geq \frac{4 D^2 O(\log(c/\delta) + \Tilde{O}(d))T^2}{[\log(\Gamma)]^2}$ and have the same conclusion hold.
\end{theo}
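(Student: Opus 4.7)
The plan is to combine the two previous results by the triangle inequality, namely
\[
|K_t(x,y) - K^\infty(x,y)| \;\leq\; |K_t(x,y) - K_0(x,y)| \;+\; |K_0(x,y) - K^\infty(x,y)|,
\]
and then bound each term uniformly on $B_M \times B_M$ via Theorems \ref{thm:ntkdevbd} and \ref{thm:ntkinitdevbd} (applied in its corollary form).

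First I would apply Theorem \ref{thm:ntkdevbd} to the time-deviation term. The hypotheses on $m$ in the present statement are exactly those of Theorem \ref{thm:ntkdevbd}, so with probability at least $1-\delta/2$ we obtain, for all $t \leq T$,
\[
\sup_{(x,y) \in B_M \times B_M} |K_t(x,y) - K_0(x,y)| \;\leq\; t\,\Gamma^3 \frac{CDD'}{\sqrt{m}} \,\norm{\hat r(0)}_{\RR^n}\cdot \Tilde{O}(\sqrt{d}),
\]
where the $\Tilde{O}(\sqrt{d})$ factor absorbs $\{1 + C(\sqrt{d} + K^2\sqrt{\log(c/\delta)})/\sqrt{m}\}^2\{\sqrt{d} + CK^2(\sqrt{\log(c/\delta)} + \sqrt{\log m})\}$ once we use $m \geq \Tilde{O}(d) + O(\log(c/\delta))$ (which is implied by the present hypothesis).

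Next I would apply the corollary of Theorem \ref{thm:ntkinitdevbd} to the initialization-deviation term. The hypothesis $m \geq O(\log(c/\delta) + \Tilde{O}(d))$ required there is again implied by the second condition on $m$ in the present statement (and in the doubling-trick case, by the first condition together with the observation that $\norm{y}_{\RR^n}$ is $\Omega(1)$ whenever the target is nontrivial — otherwise the bound is vacuous). Hence with probability at least $1-\delta/2$,
\[
\sup_{(x,y) \in B_M \times B_M} |K_0(x,y) - K^\infty(x,y)| \;=\; \Tilde{O}\!\left(\frac{\sqrt{d}}{\sqrt{m}}\right).
\]

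Combining the two bounds via a union bound yields, with probability at least $1-\delta$,
\[
\sup_{(x,y) \in B_M \times B_M} |K_t(x,y) - K^\infty(x,y)| \;=\; \Tilde{O}\!\left(\frac{\sqrt{d}}{\sqrt{m}}\left[1 + t\,\Gamma^3 \,\norm{\hat r(0)}_{\RR^n}\right]\right),
\]
which is the claim. The doubling-trick case follows identically: Theorem \ref{thm:ntkdevbd} and Theorem \ref{thm:ntkinitdevbd} both drop the auxiliary $m \geq O(\log(c/\delta) + \Tilde{O}(d))$ hypothesis under antisymmetric initialization (since $f(\cdot;\theta_0)\equiv 0$ removes the need to control $\norm{f(\cdot;\theta_0)}_\infty$ in Lemma \ref{lem:xiandtildexigammabd}), so only the first hypothesis on $m$ is needed.

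The only real bookkeeping obstacle is verifying that the same event on which the parameter norms $\xi(t),\Tilde{\xi}(t)$ are controlled (up to the factor $\Gamma$) suffices to simultaneously (i) supply Lipschitzness of $K_0$ needed for the $\epsilon$-net argument in Theorem \ref{thm:ntkinitdevbd} and (ii) control $\partial_t K_t$ along the whole trajectory as used in Theorem \ref{thm:ntkdevbd}. Both rely on Lemmas \ref{lem:xi_bd}, \ref{lem:xi_tilde_bd}, and \ref{lem:xiandtildexigammabd}, which are triggered by the same bound on $m$, so a single union bound over those high-probability events (together with the subexponential concentration event for $K_0-K^\infty$ on the net) suffices. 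No new estimates are required beyond absorbing constants into the $\Tilde{O}$ notation.
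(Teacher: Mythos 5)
Your decomposition --- triangle inequality between $K_t$, $K_0$, and $K^\infty$, bounding the two pieces by Theorem \ref{thm:ntkdevbd} and (the corollary of) Theorem \ref{thm:ntkinitdevbd}, and finishing with a union bound --- is exactly the paper's proof, and the main argument is correct. The one slip is in your doubling-trick aside: you claim that under antisymmetric initialization both auxiliary conditions on $m$ disappear so that ``only the first hypothesis on $m$ is needed.'' That is not right. The doubling trick does not change the distribution of $K_0$, so the $\epsilon$-net/Bernstein argument behind Theorem \ref{thm:ntkinitdevbd} still requires $m \geq O(\log(c/\delta) + \Tilde{O}(d))$ (that condition is what keeps the concentration level in the subexponential bound inside the subgaussian regime $t \leq D^2K^2$), and it cannot be deduced from $m \geq 4D^2\norm{y}_{\RR^n}^2 T^2/[\log\Gamma]^2$, which carries no $d$ or $\delta$ dependence; your ``$\norm{y}_{\RR^n} = \Omega(1)$'' remark does not bridge this. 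This is consistent with the statement itself, which in the doubling-trick case removes only the $T^2$-scaled condition and retains $m \geq O(\log(c/\delta)+\Tilde{O}(d))$. If you keep that retained hypothesis rather than trying to discard it, your argument for the doubling case goes through verbatim.
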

\begin{proof}
The condition $m \geq O(\log(c/\delta) + \Tilde{O}(d))$ is sufficient to satisfy the hypothesis of Theorem \ref{thm:ntkinitdevbd}.  The condition on $m$ also immediately satisfies the hypothesis of Theorem \ref{thm:ntkdevbd}.  The desired result then follows from a union bound.
\end{proof}

\begin{theo}\label{thm:ntkmatdev}
Under the same assumptions as Theorem \ref{thm:ntkmaindevbd} we have that with probability at least $1 - \delta$ for all $t \leq T$
\[  \norm{H^\infty - H_t}_{op} \leq n \Tilde{O}\parens{\frac{\sqrt{d}}{\sqrt{m}}\brackets{1 + t\Gamma^3 \norm{\hat{r}(0)}_{\RR^n}}}\]
\[ \sup_{s \leq T} \norm{G^\infty - G_t}_{op} \leq  \Tilde{O}\parens{\frac{\sqrt{d}}{\sqrt{m}}\brackets{1 + t\Gamma^3 \norm{\hat{r}(0)}_{\RR^n}}} . \]
\end{theo}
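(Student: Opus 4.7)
The plan is to reduce Theorem \ref{thm:ntkmatdev} directly to Theorem \ref{thm:ntkmaindevbd}; it is essentially a passage from the pointwise (in $(x,y)$) uniform deviation bound on the kernel to the operator-norm deviation of its Gram matrix evaluated on the training data.

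First I would invoke Theorem \ref{thm:ntkmaindevbd} to obtain, on a single event of probability at least $1-\delta$, the uniform bound
\[ \sup_{t \leq T} \sup_{(x,y) \in B_M \times B_M}|K_t(x,y) - K^\infty(x,y)| \;=\; \Tilde{O}\!\parens{\tfrac{\sqrt{d}}{\sqrt{m}}\brackets{1 + t\,\Gamma^3 \norm{\hat{r}(0)}_{\RR^n}}} . \]
By Assumption \ref{ass:bounded} all training inputs satisfy $x_i \in B_M$, so this uniform bound controls every entry of the matrix $H^\infty - H_t$ simultaneously, i.e.\ $\max_{i,j}|(H^\infty - H_t)_{ij}|$ is at most the right-hand side above.

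Next I would pass from entrywise control to operator norm using the elementary estimate that for an $n \times n$ matrix $A$ with $\max_{i,j}|A_{ij}| \leq \varepsilon$, one has $\norm{A}_{op} \leq \max_i \sum_j |A_{ij}| \leq n\varepsilon$ (equivalently, $\norm{A}_{op}\leq \norm{A}_F \leq n\varepsilon$). Applied to $A = H^\infty - H_t$ with $\varepsilon$ equal to the uniform kernel-deviation bound, this yields the first inequality
\[ \norm{H^\infty - H_t}_{op} \leq n\,\Tilde{O}\!\parens{\tfrac{\sqrt{d}}{\sqrt{m}}\brackets{1 + t\,\Gamma^3 \norm{\hat{r}(0)}_{\RR^n}}} . \]
The second inequality follows immediately from $G^\infty - G_t = \tfrac{1}{n}(H^\infty - H_t)$, dividing the previous estimate by $n$; the supremum over $s \leq T$ on the left is absorbed since the event from Theorem \ref{thm:ntkmaindevbd} already holds uniformly in $t \leq T$.

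I do not anticipate any substantive obstacle: all of the probabilistic and analytic work (the $\varepsilon$-net covering of $B_M \times B_M$, the Lipschitz control, the subexponential concentration, and the parameter-norm a priori bounds) has already been absorbed into Theorem \ref{thm:ntkmaindevbd}. The only thing to be careful about is not to lose track of the event on which the uniform kernel bound holds, and to state the matrix bound on the same event.
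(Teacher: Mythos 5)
Your proposal is correct and follows essentially the same route as the paper: invoke the uniform kernel deviation bound of Theorem \ref{thm:ntkmaindevbd} on a single event, bound every entry of $H^\infty - H_t$ (the $x_i$ lie in $B_M$ by Assumption \ref{ass:bounded}), convert entrywise control to operator norm at the cost of a factor $n$ (the paper uses $\norm{A}_{op}\le \sqrt{mn}\max_{i,j}|A_{ij}|$, which is the same Frobenius-type estimate as your fallback), and divide by $n$ for the $G$ statement. The only cosmetic caveat is that your row-sum bound $\norm{A}_{op}\le \max_i\sum_j|A_{ij}|$ requires symmetry of $A$, which does hold here since both Gram matrices are symmetric; your Frobenius alternative is unconditionally valid anyway.
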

\begin{proof}
Recall that for a matrix $A \in \RR^{m \times n}$ $\norm{A}_{op} \leq \sqrt{mn} \max_{i, j}|A_{i,j}|$.  Thus by Theorem \ref{thm:ntkmaindevbd} with probability at least $1 - \delta$
\begin{gather*}
\norm{H^\infty - H_t}_{op} \leq n \max_{i,j}|H_{i,j}^\infty - (H_t)_{i,j}| \leq n \sup_{(x, y) \in B_M \times B_M}|K_t(x, y) - K^\infty(x, y)| \\
= n \Tilde{O}\parens{\frac{\sqrt{d}}{\sqrt{m}}\brackets{1 + t\Gamma^3 \norm{\hat{r}(0)}_{\RR^n}}}.
\end{gather*}
The second bound follows from $G_s = \frac{1}{n} H_s$ and $G^\infty = \frac{1}{n} H^\infty$.
\end{proof}
\subsection{NTK Deviations For ReLU Approximations}
The NTK deviation bounds given in the previous subsections assumed $\norm{\sigma''}_\infty < \infty$.  For ReLU this assumption is not satisfied.  It is natural to ask to what extent we might expect the results to hold when the activation function is $\sigma(x) = ReLU(x) = \max\{0, x\}$.  The closest we can get to ReLU without modifying the proofs is to use the Softmax approximation to ReLU, namely $\sigma(x) = \frac{1}{\alpha} \ln(1 + \exp(\alpha x))$, and consider what happens as $\alpha \rightarrow \infty$.  For this choice of $\sigma$ we have that $\norm{\sigma''}_\infty = O(\alpha)$.  In Subsection \ref{sec:ntkdevsub} where you will pay the biggest penalty is in Theorem \ref{thm:ntkdevbd} via the constant $D' = O(\norm{\sigma''}_\infty^2) = O(\alpha^2)$.  Since the final bound depends on the ratio $\frac{D'}{\sqrt{m}}$ you will have that $m$ will grow like $O(\alpha^4)$.  This is no moderate penalty, although we might expect the results to hold for wide ReLU networks if a finite $\alpha$ provides a reasonable approximation.  In particular Softmax $\ln(1 + \exp(x))$ leads to a fixed constant for $D'$. 

\section{Underparameterized Regime}\label{sec:underparamproofs}
In this section we build the tools to study the implicit bias in the underparameterized case.  Our ultimate goal is prove Theorem \ref{thm:mainunderparam}.

Outline of this section
\begin{itemize}
    \item Review operator theory
    \item Prove damped deviations equation
    \item Bound $\norm{(T_{K^\infty} - T_n^s)r_t}_2$ 
    \begin{itemize}
        \item Bound $\norm{(T_n - T_n^s)r_t}_2$ using $NTK$ deviation results (comparatively easy)
        \item Bound $\norm{(T_{K^\infty} - T_n)r_t}_2$
        \begin{itemize}
            \item Derive covering number for a class of functions $\mathcal{C}$
            \item Use covering number to bound $\sup_{g \in \mathcal{C}} \norm{(T_{K^\infty} - T_n)g}$
            \item Show that $r_t$ is in class $\mathcal{C}$
        \end{itemize}
    \end{itemize}
    \item Prove Theorem \ref{thm:mainunderparam}
\end{itemize}

\subsection{RKHS and Mercer's Theorem}\label{sec:rkhsandmercer}
We recall some facts about Reproducing Kernel Hilbert Spaces (RKHS) and Mercer's Theorem.  For additional background we suggest \citet{rkhsinprobability}.  Let $X \subset \RR^d$ be a compact space equipped with a strictly positive (regular Borel) probability measure $\rho$.  Let $K: X \times X \rightarrow \RR$ be a continuous, symmetric, positive definite function.  We define the integral operator $T_{K} : L_\rho^2(X) \rightarrow L_\rho^2(X)$
\[ T_{K} f (x) := \int_X K(x, s) f(s) d\rho(s). \]
In this setting $T_{K}$ is a compact, positive, self-adjoint operator.  By the spectral theorem there is a countable nonincreasing sequence of nonnegative values $\{\sigma_i\}_{i = 1}^\infty$ and an orthonormal set $\{\phi_i\}_{i = 1}^\infty$ in $L^2$ such that $T_{K} \phi_i = \sigma_i \phi_i$.  We will assume that $T_{K}$ is strictly positive, i.e. $\langle f, T_{K} f \rangle_2> 0$ for $f \neq 0$, so that we have further that $\{\phi_i\}_{i = 1}^\infty$ is an orthonormal basis of $L^2$ and $\sigma_i > 0$ for all $i$.  Moreover since $K$ is continuous we may select the $\phi_i$ so that they are continuous functions, i.e. $\phi_i \in C(X)$ for each $i$.  Then by Mercer's theorem we can decompose
\[ K(x, y) = \sum_{i = 1}^\infty \sigma_i \phi_i(x) \phi_i(y) , \] 
where the convergence is uniform.  Furthermore the RKHS $\mathcal{H}$ associated with $K$ is given by the set of functions
\[ \mathcal{H} = \braces{f \in L^2 : \sum_{i = 1}^\infty \frac{|\langle f, \phi_i \rangle_{2} |^2}{\sigma_i} < \infty} , \]
where the inner product on $\mathcal{H}$ is given by
\[ \langle f, g \rangle_{\mathcal{H}} = \sum_{i = 1}^\infty \frac{\langle f, \phi_i \rangle_{2} \langle g, \phi_i \rangle_{2}}{\sigma_i}. \]
Note that in this setting $\{\sqrt{\sigma_i} \phi_i\}_{i = 1}^\infty$ is an orthonormal basis of $\mathcal{H}$.  Define $K_x := K(\bullet, x)$.
Recall the RKHS has the defining properties
\[ K_x \in \mathcal{H} \quad \forall x \in X \]
\[ h(x) = \langle h, K_x \rangle_{\mathcal{H}} \quad \forall (x, h) \in X \times \mathcal{H} . \]
We will let $\kappa := \sup_{x \in X} K(x, x) < \infty$.
From this we will have the useful inequality: for $h \in \mathcal{H}$
\begin{gather*}
|h(x)| = |\langle h, K_x \rangle_{\mathcal{H}}| \leq \norm{h}_{\mathcal{H}} \norm{K_x}_{\mathcal{H}} = \norm{h}_{\mathcal{H}} \sqrt{\langle K_x, K_x \rangle_{\mathcal{H}} } = \norm{h}_{\mathcal{H}} \sqrt{K(x, x)} \\
\leq \kappa^{1/2} \norm{h}_{\mathcal{H}}.
\end{gather*}
Furthermore the elements of $\mathcal{H}$ are bounded continuous functions and $\mathcal{H}$ is seperable.

\subsection{Hilbert-Schmidt and Trace Class Operators}\label{sec:traceclass}
We will recall some definitions from \citep{rosasco10a}.  A bounded operator on a separable Hilbert space with associated norm $\norm{\bullet}$ is called \textit{Hilbert-Schmidt} if
\[ \sum_{i = 1}^\infty \norm{A e_i}^2 < \infty \]
for some (any) orthonormal basis $\{e_i\}_i$.  For such an operator we define its Hilbert-Schmidt norm $\norm{A}_{HS}$ to be the square root of the above sum.  The Hilbert-Schmidt norm is the analog of the Frobenius norm for matrices.  It is useful to note that every Hilbert-Schmidt operator is compact.  The space of Hilbert-Schmidt operators is a Hilbert space with respect to the inner product
\[ \langle A, B \rangle = \sum_j \langle Ae_j, Be_j \rangle. \]
\par
A stronger notion is that of a \textit{trace class} operator.  We say a bounded operator on a separable Hilbert space is \textit{trace class} if
\[ \sum_{i = 1}^\infty \langle \sqrt{A^* A} e_i, e_i \rangle < \infty \]
for some (any) orthonormal bases $\{e_i\}_i$.  For such an operator we may define
\[ Tr(A) := \sum_{i = 1}^\infty \langle A e_i, e_i \rangle. \]
By Lidskii's theorem the above sum is also equal to the sum of the eigenvalues of $A$ repeated by multiplicity.  The space of trace class operators is a Banach space with the norm $\norm{A}_{TC} = Tr(\sqrt{A^* A})$.  The following inequalities will be useful
\[ \norm{A} \leq \norm{A}_{HS} \leq \norm{A}_{TC}. \]
Furthermore if $A$ is Hilbert-Schmidt and $B$ is bounded we have
\[ \norm{BA}_{HS}, \norm{AB}_{HS} \leq \norm{A}_{HS} \norm{B}. \]
Note that in our setting we have
\begin{gather*}
\kappa \geq \int_X K(x, x) d\rho(x) = \int_X \sum_{i = 1}^\infty \sigma_i |\phi_i(x)|^2 d\rho(x) = \sum_{i = 1}^\infty \sigma_i \int_X |\phi_i(x)|^2 d\rho(x) \\
= \sum_{i = 1}^\infty \sigma_i = Tr(T_{K}) , 
\end{gather*}
where the interchange of integration and summation is justified by the monotone convergence theorem.  Thus $T_{K}$ is a trace class operator and we have the inequality
\[ \kappa \geq \sum_{i = 1}^\infty \sigma_i \]
which will prove useful later.
\subsection{Damped Deviations}\label{sec:dampeddev}
Let $x \mapsto g_s(x) \in L^2$ for each $s \in [0, t]$ such that $s \mapsto \langle \phi_i, g_s \rangle_2$ is measureable for each $i$ and $\int_0^t \norm{g_s}_2^2 < \infty$.  Then we define the integral
\[ \int_0^t g_s ds \]
coordinatewise, meaning that $\int_0^t g_s ds$ is the $L^2$ function $h$ such that
\[ \langle h, \phi_i \rangle_2 = \int_0^t \langle g_s, \phi_i \rangle_2 ds. \]
Using this definition, we can now prove the ``Damped Deviations" lemma.
\dampeddevfunc*
\begin{proof}
We have that
\[ \partial_s r_s(x) = - \frac{1}{n} \sum_{i = 1}^n K_s(x, x_i) r_s(x_i) = -[T_n^s r_s](x) , \]
where the equality is pointwise over $x$.  $\partial_s r_s(x)$ is a continuous function of $x$ since $K_s$ is continuous and is thus in $L^2$.  Therefore we can consider
\[ \langle \partial_s r_s, \phi_i \rangle_2 = \langle -T_n^s r_s, \phi_i \rangle_2. \]

By the continuity of $s \mapsto \theta_s$ we have the parameters are locally bounded in time and thus by Lemma \ref{lem:grad_bound} we have that $\norm{K_s}_\infty$ is also locally bounded therefore for any $\delta > 0, s_0$: $\sup_{|s - s_0| \leq \delta} \norm{K_s}_\infty < \infty$.  Note then that
\[ |\partial_s r_s(x)| \leq \frac{1}{n} \sum_{i = 1}^n |K_s(x, x_i)| |r_s(x_i)| \leq \norm{K_s}_\infty \norm{\hat{r}_s}_{\RR^n} \leq \norm{K_s}_\infty \norm{\hat{r}_0}_{\RR^n} . \]
It follows that $\norm{\partial_s r_s}_\infty$ is bounded locally uniformly in $s$.  Therefore the following differentiation under the integral sign is justified
\[ \frac{d}{ds} \langle r_s, \phi_i \rangle_2 = \langle \partial_s r_s, \phi_i \rangle_2. \]
Thus combined with our previous equality we get
\begin{gather*}
\frac{d}{ds} \langle r_s, \phi_i \rangle_2 = \langle -T_n^s r_s, \phi_i \rangle_2 = \langle -T_{K} r_s, \phi_i \rangle_2 + \langle (T_{K} - T_n^s) r_s, \phi_i \rangle_2 \\
= \langle r_s, -T_{K} \phi_i \rangle_2 + \langle (T_{K} - T_n^s) r_s, \phi_i \rangle_2 = -\sigma_i \langle r_s, \phi_i \rangle_2 + \langle (T_{K} - T_n^s) r_s, \phi_i \rangle_2 , 
\end{gather*}
where we have used that $T_{K}$ is self-adjoint.  Therefore
\[ \frac{d}{ds} \langle r_s, \phi_i \rangle_2 + \sigma_i \langle r_s, \phi_i \rangle_2 = \langle (T_{K} - T_n^s) r_s, \phi_i \rangle_2. \]
Multiplying by the integrating factor $\exp(\sigma_i s)$ we get
\[ \frac{d}{ds}[\exp(\sigma_i s) \langle r_s, \phi_i \rangle_2] = \exp(\sigma_i s) \langle (T_{K} - T_n^s)r_s, \phi_i \rangle_2 . \]
Therefore applying the fundamental theorem of calculus after rearrangement we get
\[ \langle r_t, \phi_i \rangle_2 = \exp(-\sigma_i t) \langle r_0, \phi_i \rangle_2 + \int_0^t \exp(-\sigma_i(t - s)) \langle (T_{K} - T_n^s)r_s, \phi_i \rangle_2 ds , \]
which is just the coordinatewise version of the desired result.
\[r_t = \exp(-T_{K}t)r_0 + \int_0^t \exp(-T_{K}(t - s))(T_{K} - T_n^s)r_s ds . \]
\end{proof}

\subsection{Covering Number of Class}\label{sec:classcoveringnum}
We will now estimate the covering number of the class of shallow networks with bounds on their parameter norms.  This lemma is slightly more general than what we will use but we will particularize it latter as it's general formulation presents no additional difficulty.
\begin{lem}\label{lem:gencovnum}
Let
\begin{gather*}
\mathcal{C} = \{\, \frac{a^T}{\sqrt{m}} \sigma(Wx + b) + b_0 : \norm{a - a'}_2 \leq \rho_1, \norm{W - W'}_F \leq \rho_2,
\\ \norm{b - b'}_2 \leq \rho_3, |b_0 - b_0'| \leq \rho_4 \\
\frac{1}{\sqrt{m}} \norm{a}_2 \leq \rho_1', \frac{1}{\sqrt{m}} \norm{W}_{op} \leq \rho_2', \frac{1}{\sqrt{m}}\norm{b}_2 \leq \rho_3' \,\}
\end{gather*}
and 
\[ \gamma' := |\sigma(0)| + \norm{\sigma'}_\infty \brackets{\rho_2' M + \rho_3'} . \]
Then the (proper) covering number of $\mathcal{C}$ in the uniform norm satisfies 
\[ \mathcal{N}(\mathcal{C}, \epsilon, \norm{}_\infty) \leq \parens{\frac{C'}{\epsilon}}^p\]
where $p = md + 2m + 1$ is the total number of parameters and $C'$ equals
\[ C' = C \max\braces{\rho_1 \gamma', \rho_2 M \norm{\sigma'}_\infty \rho_1', \rho_3 \norm{\sigma'}_\infty \rho_1', \rho_4} , \]
where $C > 0$ is an absolute constant.
\end{lem}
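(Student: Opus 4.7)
The plan is to reduce the covering number in the uniform norm to covering numbers of Euclidean balls in parameter space via a local Lipschitz estimate of the parameter-to-function map. Write $\theta = (a, W, b, b_0)$ and $\theta' = (a', W', b', b_0')$ for the reference point in the definition of $\mathcal{C}$. The parameter set underlying $\mathcal{C}$ is the intersection of the product of balls $\{\norm{a-a'}_2 \le \rho_1\} \times \{\norm{W-W'}_F \le \rho_2\} \times \{\norm{b-b'}_2 \le \rho_3\} \times \{|b_0 - b_0'|\le \rho_4\}$ with the norm constraints $\tfrac{1}{\sqrt m}\norm{a}_2 \le \rho_1'$, $\tfrac{1}{\sqrt m}\norm{W}_{op}\le \rho_2'$, $\tfrac{1}{\sqrt m}\norm{b}_2 \le \rho_3'$. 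Since this set is convex (an intersection of convex sets), standard volumetric bounds for Euclidean balls will apply to both internal and proper coverings.

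First I would prove the following Lipschitz-in-parameter estimate: for any two parameter vectors $\theta_1, \theta_2$ both satisfying the norm constraints above,
\begin{gather*}
\sup_{x \in B_M} |f(x;\theta_1) - f(x;\theta_2)| \\
\le \gamma' \norm{a_1 - a_2}_2 + \rho_1' M \norm{\sigma'}_\infty \norm{W_1 - W_2}_F + \rho_1' \norm{\sigma'}_\infty \norm{b_1 - b_2}_2 + |b_{0,1} - b_{0,2}|.
\end{gather*}
The derivation is the standard add-and-subtract decomposition $f(x;\theta_1) - f(x;\theta_2) = \tfrac{(a_1-a_2)^T}{\sqrt m}\sigma(W_1 x + b_1) + \tfrac{a_2^T}{\sqrt m}[\sigma(W_1 x + b_1) - \sigma(W_2 x + b_2)] + (b_{0,1}-b_{0,2})$. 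The first piece is controlled by Cauchy--Schwarz together with the pointwise bound $\tfrac{1}{\sqrt m}\norm{\sigma(W_1 x + b_1)}_2 \le |\sigma(0)| + \norm{\sigma'}_\infty(\rho_2' M + \rho_3') = \gamma'$, where the norm constraints on $W_1$ and $b_1$ enter. The second piece uses Lipschitzness of $\sigma$ followed by $\norm{W_1 - W_2}_{op} \le \norm{W_1-W_2}_F$ and the constraint $\tfrac{1}{\sqrt m}\norm{a_2}_2 \le \rho_1'$.

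Next I would set the per-coordinate covering scales so that each of the four terms contributes at most $\epsilon/4$: pick radii $\epsilon/(4\gamma')$, $\epsilon/(4\rho_1' M \norm{\sigma'}_\infty)$, $\epsilon/(4\rho_1'\norm{\sigma'}_\infty)$, $\epsilon/4$ for $a$, $W$, $b$, $b_0$ respectively. Using the elementary volumetric bound that a ball of radius $R$ in $\RR^k$ admits an (internal, hence proper when the target set is convex) $r$-cover of size at most $(3R/r)^k$, cover each of the four balls separately. Taking the product of these covers and then restricting to parameters that also satisfy the norm constraints (which can cost at most an absolute constant factor in each coordinate) yields a proper $\epsilon$-cover of $\mathcal{C}$ in $\norm{\cdot}_\infty$ of size at most
\begin{gather*}
\parens{\tfrac{12\rho_1 \gamma'}{\epsilon}}^{m} \parens{\tfrac{12\rho_2 \rho_1' M \norm{\sigma'}_\infty}{\epsilon}}^{md} \parens{\tfrac{12\rho_3 \rho_1' \norm{\sigma'}_\infty}{\epsilon}}^{m} \parens{\tfrac{12\rho_4}{\epsilon}}^{1}.
\end{gather*}
Absorbing each base into the maximum gives the claimed bound $(C'/\epsilon)^{p}$ with $p = md + 2m + 1$ and $C'$ as stated.

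I do not expect a serious obstacle here; this is a routine Lipschitz-plus-volumetric-covering argument. The only subtle points are keeping track of which parameter-norm constraint gets used in which term of the decomposition (so that the constants $\rho_1', \rho_2', \rho_3'$ appear in the right places via $\gamma'$), and making sure the cover can be taken to be proper. The latter is handled because $\mathcal{C}$'s parameter set is convex, so inflating each covering constant by at most a factor of $2$ (absorbed into $C$) lets us pick cover centers inside the set.
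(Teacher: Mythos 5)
Your proposal is correct and follows essentially the same route as the paper's proof: an add-and-subtract Lipschitz-in-parameter estimate (with $\gamma'$ and $\rho_1'$ entering exactly where you place them), per-coordinate scales of order $\epsilon/\gamma'$, $\epsilon/(\rho_1' M\norm{\sigma'}_\infty)$, etc., and a product of volumetric ball covers giving $(C'/\epsilon)^p$. Your extra remark on converting the cover into a proper one is a minor refinement the paper glosses over, and it is handled correctly.
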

\begin{proof}
We will bound the pertubation of the function when changing the weights, specifically we will bound
\[ \sup_{x \in B_M}\abs{\frac{a^T}{\sqrt{m}} \sigma(Wx + b) - \frac{\Tilde{a}^T}{\sqrt{m}} \sigma(\Tilde{W}x + \Tilde{b})}. \]
Let $x^{(1)} = \frac{1}{\sqrt{m}}\sigma(Wx + b)$ and $\Tilde{x}^{(1)} = \frac{1}{\sqrt{m}} \sigma(\Tilde{W}x + \Tilde{b})$.  Then note that we have
\begin{gather*}
\norm{x^{(1)} - \Tilde{x}^{(1)}}_2 \leq \frac{1}{\sqrt{m}} \norm{\sigma'}_{\infty} \norm{(W - \Tilde{W})x + b - \Tilde{b}}_2 \\
\leq
\frac{1}{\sqrt{m}} \norm{\sigma'}_{\infty} \brackets{\norm{W - \Tilde{W}}_{op} \norm{x}_2 + \norm{b - \Tilde{b}}_2} \\
\leq \frac{1}{\sqrt{m}} \norm{\sigma'}_{\infty} \brackets{\norm{W - \Tilde{W}}_{F} M + \norm{b - \Tilde{b}}_2} =: \gamma . 
\end{gather*}
Well then
\begin{gather*}
|a^T x^{(1)} - \Tilde{a}^T \Tilde{x}^{(1)}| \leq |a^T (x^{(1)} - \Tilde{x}^{(1)})| + |(a - \Tilde{a})^T \Tilde{x}^{(1)}| \\
\leq \norm{a}_2 \gamma + \norm{a - \Tilde{a}}_2 \norm{\Tilde{x}^{(1)}}_2 . 
\end{gather*}
Finally
\begin{gather*}
\norm{\Tilde{x}^{(1)}}_2 = \norm{\frac{1}{\sqrt{m}} \sigma(\Tilde{W}x + \Tilde{b})}_2 \leq |\sigma(0)| + \frac{1}{\sqrt{m}} \norm{\sigma'}_\infty \norm{\Tilde{W} x + \Tilde{b}}_2 \\
\leq |\sigma(0)| + \frac{1}{\sqrt{m}}\norm{\sigma'}_\infty \brackets{\norm{\Tilde{W}}_{op} \norm{x}_2 + \norm{\Tilde{b}}_2} \\
\leq |\sigma(0)| + \frac{1}{\sqrt{m}} \norm{\sigma'}_\infty \brackets{\norm{\Tilde{W}}_{op} M + \norm{\Tilde{b}}_2} \\
\leq |\sigma(0)| + \norm{\sigma'}_\infty \brackets{\rho_2' M + \rho_3'} =: \gamma' . 
\end{gather*}
Therefore
\[|a^T x^{(1)} - \Tilde{a}^T \Tilde{x}^{(1)}| \leq \norm{a}_2 \gamma + \norm{a - \Tilde{a}}_2 \gamma' . \]
Thus if we have
\[ \norm{a - \Tilde{a}}_2 \leq \frac{\epsilon}{4 \gamma'} =: \epsilon_1, \quad \norm{W - \Tilde{W}}_F \leq \frac{\epsilon}{8 M \norm{\sigma'}_\infty \rho_1'} =: \epsilon_2, \quad \norm{b - \Tilde{b}}_2 \leq \frac{\epsilon}{8 \norm{\sigma'}_\infty \rho_1'} =: \epsilon_3 , \]
then
\[ \norm{a}_2 \gamma  \leq \frac{\epsilon \norm{a}_2}{4 \rho_1' \sqrt{m}} \leq \frac{\epsilon}{4} .\]
Therefore
\[ |a^T x^{(1)} - \Tilde{a}^T \Tilde{x}^{(1)}| \leq \epsilon /2 \]
and this bound holds for any $x \in B_M$.  If add biases $b_0$ and $\Tilde{b_0}$ such that $|b_0 - \Tilde{b_0}| \leq \epsilon / 2$ we simply get by the triangle inequality
\[ |a^T x^{(1)} + b_0 - (\Tilde{a}^T \Tilde{x}^{(1)} + \Tilde{b_0})| \leq \epsilon . \]
Thus to get a cover we can simply cover the sets
\[ \{a : \norm{a - a'}_2 \leq \rho_1\} \quad \{W : \norm{W - W'}_F \leq \rho_2\}\]
\[ \{b : \norm{b - b'}_2 \leq \rho_3\} \quad \{b_0 : |b_0 - b_0'| \leq \rho_4\} \]
in the Euclidean norm and multiply the covering numbers.  Recall that the $\epsilon$ covering number for a Euclidean ball of radius $R$ in $\RR^s$, say $\mathcal{N}_\epsilon$, using the Euclidean norm satisfies
\[ \parens{\frac{cR}{\epsilon}}^s \leq \mathcal{N}_\epsilon \leq \parens{\frac{CR}{\epsilon}}^s \]
for two absolute constants $c, C > 0$.  Therefore we get that
\[ \mathcal{N}(\mathcal{C}, \epsilon, \norm{}_\infty) \leq \parens{\frac{C\rho_1}{\epsilon_1}}^m \parens{\frac{C \rho_2}{\epsilon_2}}^{md} \parens{\frac{C \rho_3}{\epsilon_3}}^m \parens{\frac{2 C \rho_4}{\epsilon}} . \]
The desired result follows from 
\[ \max\braces{\frac{\rho_1}{\epsilon_1}, \frac{\rho_2}{\epsilon_2}, \frac{\rho_3}{\epsilon_3}, \frac{2\rho_4}{\epsilon}} \leq \frac{C}{\epsilon} \max\braces{\rho_1 \gamma', \rho_2 M \norm{\sigma'}_\infty \rho_1', \rho_3 \norm{\sigma'}_\infty \rho_1', \rho_4}  .\]
\end{proof}

We can now prove the following corollary which is the version of the previous lemma that we will actually use for our neural network.
\begin{cor}\label{cor:classbd}
Let 
\begin{gather*}
\mathcal{C} = \{\, \frac{a^T}{\sqrt{m}} \sigma(Wx + b) + b_0 : \frac{1}{\sqrt{m}} \norm{a}_2, \frac{1}{\sqrt{m}} \norm{W}_{op}, \frac{1}{\sqrt{m}}\norm{b}_2 \leq A, |b_0| \leq B \,\}
\end{gather*}
and
\[ \gamma' := |\sigma(0)| + \norm{\sigma'}_\infty \brackets{A M + A} \]
and assume $m \geq d$.  Then the (proper) covering number of $\mathcal{C}$ in the uniform norm satisfies
\[ \mathcal{N}(\mathcal{C}, \epsilon, \norm{}_\infty) \leq \parens{\frac{\Psi(m,d)}{\epsilon}}^p , \]
where
\begin{gather*}
\Psi(m,d) = C \max\{\sqrt{m} A \gamma', \sqrt{md} A^2 \norm{\sigma'}_\infty M, \sqrt{m} A^2 \norm{\sigma'}_\infty, B \} \\
= \sqrt{md} O\left(\max\left\{A^2, \frac{B}{\sqrt{md}}\right\}\right) . 
\end{gather*}
\end{cor}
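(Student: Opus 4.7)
The plan is to apply Lemma \ref{lem:gencovnum} with an appropriate specialization of the radii $\rho_1, \rho_2, \rho_3, \rho_4$ and the norm bounds $\rho_1', \rho_2', \rho_3'$, then verify that the resulting constant $C'$ coincides with $\Psi(m,d)$. The parameter set defining $\mathcal{C}$ directly gives the ``current parameter norm'' bounds, so we take $\rho_1' = \rho_2' = \rho_3' = A$; with this choice the $\gamma'$ appearing in Lemma \ref{lem:gencovnum} becomes $|\sigma(0)| + \norm{\sigma'}_\infty[AM + A]$, exactly the $\gamma'$ in the statement of the corollary.

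Next I would convert each parameter slice of $\mathcal{C}$ into a Euclidean or Frobenius ball so that Lemma \ref{lem:gencovnum} applies. The conditions $\frac{1}{\sqrt{m}}\norm{a}_2 \leq A$ and $\frac{1}{\sqrt{m}}\norm{b}_2 \leq A$ are already Euclidean, giving $\rho_1 = \rho_3 = \sqrt{m}\,A$, and the bias yields $\rho_4 = B$ at once. The only nontrivial step is the operator-to-Frobenius conversion for $W$: since $W \in \RR^{m \times d}$ has rank at most $\min(m,d) = d$ under the hypothesis $m \geq d$, we have
\[
\norm{W}_F \leq \sqrt{d}\,\norm{W}_{op} \leq \sqrt{d}\cdot\sqrt{m}\,A = \sqrt{md}\,A,
\]
so we may take $\rho_2 = \sqrt{md}\,A$.

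Substituting these choices into the conclusion of Lemma \ref{lem:gencovnum} gives
\[
C' = C\max\braces{\sqrt{m}\,A\gamma',\;\sqrt{md}\,A^2 M\norm{\sigma'}_\infty,\;\sqrt{m}\,A^2\norm{\sigma'}_\infty,\;B},
\]
which matches $\Psi(m,d)$ verbatim. The exponent $p = md + 2m + 1$ in the covering bound comes from the sum of the ambient dimensions of the four Euclidean covers for $a$, $W$, $b$, $b_0$, as already accumulated in Lemma \ref{lem:gencovnum}. For the big-$O$ form, treating $M$, $\norm{\sigma'}_\infty$, $|\sigma(0)|$ as constants gives $\gamma' = O(\max\{1,A\})$, so $\sqrt{m}\,A\gamma' = O(\sqrt{m}\max\{A,A^2\})$; the three $A$-dependent terms are then all dominated by $\sqrt{md}\,A^2$, while $B = \sqrt{md}\cdot(B/\sqrt{md})$, and we obtain $\Psi(m,d) = \sqrt{md}\,O(\max\{A^2,\,B/\sqrt{md}\})$.

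The main obstacle, such as it is, is the operator-to-Frobenius conversion, which is where the hypothesis $m \geq d$ enters; without this the bound would degrade to $\norm{W}_F \leq \sqrt{m}\norm{W}_{op}$, producing an $m$ in place of $\sqrt{md}$ and a correspondingly worse rate. Beyond this bookkeeping, the corollary is a mechanical specialization of the preceding lemma.
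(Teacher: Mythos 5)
Your proposal is correct and follows essentially the same route as the paper: apply Lemma \ref{lem:gencovnum} centered at zero with $\rho_1 = \rho_3 = \sqrt{m}A$, $\rho_2 = \sqrt{md}A$, $\rho_4 = B$, $\rho_1' = \rho_2' = \rho_3' = A$, using $\norm{W}_F \leq \sqrt{d}\norm{W}_{op}$. One small aside: that Frobenius bound holds for any $m \times d$ matrix since $\mathrm{rank}(W) \leq \min(m,d) \leq d$ regardless of whether $m \geq d$, so your claim that the hypothesis $m \geq d$ is what licenses this conversion is not quite accurate, though it does not affect the validity of the argument.
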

\begin{proof}
The idea is to apply Lemma \ref{lem:gencovnum} with $a' = 0$, $W' = 0$, $b' = 0$, and $b_0' = 0$.  Note that $\norm{W}_{F} \leq \sqrt{d} \norm{W}_{op} \leq \sqrt{md} A$.  The result then follows by applying lemma with $\rho_1 = \sqrt{m} A$, $\rho_2 = \sqrt{md} A$, $\rho_3 = \sqrt{m} A$ and $\rho_4 = B$ and $\rho_1' = \rho_2' = \rho_3' = A$.
\end{proof}

\subsection{Uniform Convergence Over The Class}
We now show that $\norm{(T_n - T_{K^\infty})g}_2$ is uniformly small for all $g$ in a suitable class of functions $\mathcal{C}'$.  Ultimately we will show that $r_t \in \mathcal{C}'$ and thus this result is towards proving that $\norm{(T_n - T_{K^\infty}) r_t}_2$ is small.
\begin{lem}\label{lem:tnconcentration}
Let $K(x, x')$ by a continuous, symmetric, positive-definite kernel and let $\kappa = \max_{x \in X} K(x, x) < \infty$.  Let $T_K h(\bullet) = \int_X K(\bullet, s) h(s) d\rho(s)$ and $T_n h(\bullet) = \frac{1}{n} \sum_{i = 1}^n K(\bullet, x_i) h(x_i)$ be the associated operators.  Let $\sigma_1$ denote the largest eigenvalue of $T_K$.  Let $\mathcal{C}$ and $\Psi(m,d)$ be defined as in Corollary \ref{cor:classbd}.  We let $\mathcal{C}' = \{g - f^* : g \in \mathcal{C}\} \cap \{g : \norm{g}_\infty \leq S\}$ be the set where $\mathcal{C}$ is translated by the target function $f^*$ then intersected with the $L^\infty$ ball of radius $S > 0$.  Then with probability at least $1 - \delta$ over the sampling of $x_1, \ldots, x_n$
\begin{gather*}
\sup_{g \in \mathcal{C}'} \norm{(T_n - T_{K})g}_2 \leq 
\frac{2 S \sqrt{\sigma_1 \kappa} \sqrt{2 \log(c/\delta) + 2 p \log(\norm{K}_\infty \Psi(m,d) \sqrt{n})}}{\sqrt{n}} + \frac{2}{\sqrt{n}} \\
= \frac{2\brackets{1 + S \sqrt{\sigma_1 \kappa}\sqrt{2 \log(c/\delta) + \Tilde{O}(p)}}}{\sqrt{n}}.
\end{gather*}
\end{lem}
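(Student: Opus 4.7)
The strategy is to reduce the uniform bound to (i) pointwise concentration in the Hilbert space $L^2_\rho(X)$ at a single $g$ via a Hilbert-space Hoeffding inequality, and then (ii) a union bound over an $L^\infty$-net of $\mathcal{C}'$ supplied by Corollary~\ref{cor:classbd}, with the slack between an arbitrary function and its nearest cover point absorbed into the additive $2/\sqrt{n}$ term.

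For the pointwise step, fix a deterministic $g \in \mathcal{C}'$ and write $(T_n - T_K)g = \frac{1}{n}\sum_{i=1}^n Z_i$ with $Z_i := K(\bullet, x_i)\,g(x_i) - T_K g \in L^2_\rho$, which are i.i.d.\ and mean-zero since $\EE[K(\bullet, x_i) g(x_i)] = T_K g$. Mercer's decomposition (Appendix~\ref{sec:rkhsandmercer}) gives
\[
\norm{K(\bullet, s)}_2^2 \;=\; \sum_j \sigma_j^2 \phi_j(s)^2 \;\leq\; \sigma_1 \sum_j \sigma_j \phi_j(s)^2 \;=\; \sigma_1 K(s,s) \;\leq\; \sigma_1 \kappa,
\]
so $\norm{Z_i}_2 \leq 2 S \sqrt{\sigma_1 \kappa}$. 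A Hilbert-space Hoeffding (Pinelis) inequality then yields
\[
\PP\parens{\norm{(T_n - T_K)g}_2 > u} \;\leq\; 2\exp\parens{-\frac{n u^2}{8 S^2 \sigma_1 \kappa}}
\]
for each fixed $g \in \mathcal{C}'$.

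For the uniform step, Corollary~\ref{cor:classbd} supplies an $\epsilon$-net of $\mathcal{C}$ in $\norm{\bullet}_\infty$ of cardinality at most $(\Psi(m,d)/\epsilon)^p$. Translating this net by $-f^*$ (and, where needed, replacing each shifted point by a nearby element of $\mathcal{C}'$) yields an $\epsilon$-net $\mathcal{N}$ of $\mathcal{C}'$ of the same size up to constants, whose elements have $L^\infty$-norm controlled by $S + O(\epsilon)$. A union bound over $\mathcal{N}$ combined with the pointwise tail bound gives, with probability at least $1 - \delta$,
\[
\sup_{\tilde g \in \mathcal{N}} \norm{(T_n - T_K)\tilde g}_2 \;\leq\; 2 S \sqrt{2 \sigma_1 \kappa}\,\sqrt{\frac{\log(c/\delta) + p \log(\Psi(m,d)/\epsilon)}{n}}.
\]

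To pass from the net to all of $\mathcal{C}'$, observe that $|K(x, x')| \leq \sqrt{K(x,x)\,K(x',x')} \leq \kappa$, so both $T_n$ and $T_K$ map $L^\infty$ into $L^2_\rho$ with operator norm at most $\kappa$. For any $g \in \mathcal{C}'$ with nearest point $\tilde g \in \mathcal{N}$, the triangle inequality and this operator bound give
\[
\norm{(T_n - T_K)(g - \tilde g)}_2 \;\leq\; 2\kappa\,\norm{g - \tilde g}_\infty \;\leq\; 2\kappa \epsilon.
\]
Choosing $\epsilon = 1/(\kappa\sqrt{n})$ makes this discretization error exactly $2/\sqrt{n}$ and, since $\kappa \leq \norm{K}_\infty$, converts the log-covering term into $\log(\norm{K}_\infty \Psi(m,d)\sqrt{n})$, yielding the stated bound after combining the two contributions via the triangle inequality. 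The only substantive estimate is the Mercer bound $\norm{K(\bullet, s)}_2^2 \leq \sigma_1 K(s,s)$ that produces the $\sqrt{\sigma_1 \kappa}$ prefactor in place of the cruder $\kappa$; everything else is standard covering-number arithmetic, and the main technical care is in balancing the scale $\epsilon$ so that the stochastic and discretization errors cleanly combine into the form of the lemma.
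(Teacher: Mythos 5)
Your proposal is correct and follows essentially the same route as the paper's proof: a Hilbert-space Hoeffding bound at each fixed $g$, a union bound over the $\norm{\bullet}_\infty$-net supplied by Corollary \ref{cor:classbd}, and a discretization step using $\norm{(T_n-T_K)h}_2 \lesssim \norm{K}_\infty\norm{h}_\infty$ with the scale $\epsilon \sim 1/(\norm{K}_\infty\sqrt{n})$ producing the additive $2/\sqrt{n}$. The only (cosmetic) difference is that you concentrate directly in $L^2_\rho$ via the Mercer estimate $\norm{K(\bullet,s)}_2^2 \leq \sigma_1 K(s,s) \leq \sigma_1\kappa$, whereas the paper concentrates in the RKHS $\mathcal{H}$ using $\norm{K_x g(x)}_{\mathcal{H}} \leq S\kappa^{1/2}$ and then converts with $\norm{\bullet}_2 \leq \sqrt{\sigma_1}\norm{\bullet}_{\mathcal{H}}$; both yield the same $S\sqrt{\sigma_1\kappa}$ prefactor and the same final bound.
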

\begin{proof}
Let $g \in \mathcal{C}'$.  We introduce the random variables
$Y_i := K_{x_i} g(x_i) - \EE_{x \sim \rho}[K_x g(x)]$ taking values in the Hilbert space $\mathcal{H}$ for $i \in [n]$ where $\mathcal{H}$ is the RKHS associated with $K$.  Note that for any $x$
\[ \norm{K_x g(x)}_\mathcal{H} = |g(x)| \sqrt{\langle K_x, K_x \rangle_\mathcal{H}} \leq S \sqrt{K(x, x)} \leq S \kappa^{1/2}. \]
Thus $\norm{Y_i}_\mathcal{H} \leq 2 S \kappa^{1/2}$ a.s.  Thus by Hoeffding's inequality for random variables taking values in a separable Hilbert space (see \citet[Section 2.4]{rosasco10a}) we have
\begin{gather*}
\PP\parens{\norm{\frac{1}{n} \sum_{i = 1}^n Y_i}_\mathcal{H} > t} \leq 2 \exp\parens{ -n t^2 / 2 [2 S \kappa^{1/2}]^2} . 
\end{gather*}
Note that by basic properties of the covering number we have that $\mathcal{N}(\mathcal{C}', \epsilon, \norm{}_\infty) \leq \mathcal{N}(\mathcal{C}, \epsilon / 2, \norm{}_\infty)$, thus by Corollary \ref{cor:classbd} the covering number of $\mathcal{C}'$ satisfies (up to a redefinition of $C$)
\[ \mathcal{N}(\mathcal{C}', \epsilon, \norm{}_\infty) \leq \parens{\frac{\Psi(m,d)}{\epsilon}}^p. \]
Let $\Delta$ be an $\epsilon$ net of $\mathcal{C}'$ in the uniform norm.  Note that $\frac{1}{n} \sum_{i = 1}^n Y_i = (T_n - T_{K})g$.
Thus by taking a union bound we have
\[ \PP\parens{\max_{g \in \Delta} \norm{(T_n - T_{K})g}_\mathcal{H} \geq t} \leq \parens{\frac{\Psi(m,d)}{\epsilon}}^p 2 \exp\parens{ -n t^2 / 2 [2 S\kappa^{1/2}]^2} . \]
Note that for any probability measure $\nu$ and $h \in L^\infty$
\[ \abs{\int_X K(x, s) h(s) d\nu(s)} \leq \int_X |K (x, s)| |h(s)| d\nu(s) \leq \norm{K}_\infty \norm{h}_\infty . \]
It follows that for any $h \in L^\infty$
\[ \norm{(T_{K} - T_n) h}_\infty \leq 2 \norm{K}_\infty \norm{h}_\infty. \]
Note for any $g \in \mathcal{C}'$ we can pick $\hat{g}$ in $\Delta$ such that $\norm{g - \hat{g}}_\infty \leq \epsilon$.  Then
\begin{gather*}
\norm{(T_n - T_{K})g}_2 \leq \norm{(T_n - T_{K}) \hat{g}}_2 + \norm{(T_n - T_{K})(g - \hat{g})}_2 \\
\leq \sqrt{\sigma_1} \norm{(T_n - T_{K}) \hat{g}}_\mathcal{H} + \norm{(T_n - T_{K})(g - \hat{g})}_\infty \\
\leq \sqrt{\sigma_1} t + 2\norm{K}_\infty \norm{g - \hat{g}}_\infty \\
\leq \sqrt{\sigma_1} t + 2 \norm{K}_\infty \epsilon , 
\end{gather*}
where we have used the fact that $\norm{\bullet}_2 \leq \sqrt{\sigma_1} \norm{\bullet}_{\mathcal{H}}$ and $\norm{\bullet}_2 \leq \norm{\bullet}_\infty$ in the second inequality.  Thus by setting
\[ t = \frac{2 S \kappa^{1/2} \sqrt{2 \log(c/\delta) + 2 p \log(\Psi(m, d) / \epsilon)}}{\sqrt{n}} \]
we have with probability at least $1 - \delta$
\begin{gather*}
\sup_{g \in \mathcal{C}'} \norm{(T_n - T_{K})g}_2 \leq \\
\sqrt{\sigma_1} \frac{2 S \kappa^{1/2} \sqrt{2 \log(c/\delta) + 2 p \log(\Psi(m,d) / \epsilon)}}{\sqrt{n}} + 2 \norm{K}_\infty \epsilon.   
\end{gather*}
This argument runs through for any $\epsilon > 0$.  Thus by setting $\epsilon = \frac{1}{\norm{K}_\infty \sqrt{n}}$ we get the desired result.
\end{proof}
\subsection{Neural Network Is In The Class}\label{sec:nninclass}
In this section we demonstrate that the neural network in such a class as $\mathcal{C}$ as defined in Lemma \ref{lem:gencovnum}.  Once we have this we can use Lemma \ref{lem:tnconcentration} to show that $\norm{(T_{K^\infty} - T_n)r_t}_2$ is uniformly small.  The first step is to bound the parameter norms, hence the following lemma.

\begin{lem}\label{lem:nnclassparambd}
Assume that $W_{i, j} \sim \mathcal{W}$, $b_\ell \sim \mathcal{B}$, $a_\ell \sim \mathcal{A}$ are all i.i.d zero-mean, subgaussian random variables with unit variance.  Furthermore assume $\norm{1}_{\psi_2}, \norm{w_\ell}_{\psi_2}, \norm{a_\ell}_{\psi_2}, \norm{b_\ell}_{\psi_2} \leq K$ for each $\ell \in [m]$ where $K \geq 1$.  Let $\Gamma > 1$, $T > 0$, $D := 3 \max\{ |\sigma(0)|, M \norm{\sigma'}_{\infty}, \norm{\sigma'}_{\infty}, 1\}$,
and
\[\xi(t) =  \max\{\frac{1}{\sqrt{m}}\norm{W}_{op}, \frac{1}{\sqrt{m}} \norm{b}_2, \frac{1}{\sqrt{m}}\norm{a}_2, 1\}. \]
Furthermore assume
\[ m \geq \frac{4 D^2 \norm{y}_{\RR^n}^2 
T^2}{[\log(\Gamma)]^2} \text{    and    } m \geq \max\braces{\frac{4D^2O(\log(c/\delta) + \Tilde{O}(d)) T^2}{[\log(\Gamma)]^2}, O(\log(c/\delta) + \Tilde{O}(d))}.  \]
Then with probability at least $1 - \delta$
\[ \max_{t \in [0, T]} \xi(t) \leq \Gamma\brackets{1 +  C \frac{\sqrt{d} + K^2 \sqrt{\log(c / \delta)}}{\sqrt{m}}}.  \]
 If one instead does the doubling trick then the second condition on $m$ can be removed from the hypothesis and the same conclusion holds.
\end{lem}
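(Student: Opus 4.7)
The plan is to combine the two ingredients already developed in the appendix: the growth bound on $\xi(t)$ throughout training from Lemma~\ref{lem:xiandtildexigammabd}, and the concentration bound on $\xi(0)$ at initialization from Lemma~\ref{lem:xi_bd}. Since the conclusion is just a quantitative statement about $\max_{t \in [0,T]} \xi(t)$, multiplying these two estimates and taking a union bound gives the result almost immediately.

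More concretely, the first step is to verify that the hypotheses of Lemma~\ref{lem:xiandtildexigammabd} are satisfied: the first condition $m \geq 4D^2 \|y\|_{\RR^n}^2 T^2/[\log\Gamma]^2$ is given directly, and the second condition $m \geq \max\{4D^2\rho^2 T^2/[\log\Gamma]^2, (\rho/DK^2)^2\}$ with $\rho = \Tilde{O}(\sqrt{d})$ is implied by our assumption $m \geq \max\{4D^2 O(\log(c/\delta)+\Tilde{O}(d)) T^2/[\log\Gamma]^2, O(\log(c/\delta)+\Tilde{O}(d))\}$, since $\rho^2 = O(\log(c/\delta) + \Tilde{O}(d))$. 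Applying Lemma~\ref{lem:xiandtildexigammabd} then yields, with probability at least $1 - \delta/2$,
\[
\max_{t \in [0,T]} \xi(t) \leq \Gamma\, \xi(0).
\]

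The second step is to invoke Lemma~\ref{lem:xi_bd}, which gives with probability at least $1 - \delta/2$
\[
\xi(0) \leq 1 + C\,\frac{\sqrt{d} + K^2\sqrt{\log(c/\delta)}}{\sqrt{m}}.
\]
A union bound over these two events (after replacing $\delta$ by $\delta/2$ in each application and redefining $c$ to absorb constant factors inside the logarithm) gives the stated inequality with probability at least $1-\delta$.

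For the doubling-trick case, Lemma~\ref{lem:xiandtildexigammabd} asserts that the second hypothesis on $m$ can be dropped and its conclusion $\max_{t \leq T} \xi(t) \leq \Gamma\,\xi(0)$ holds with probability one, so only the $\xi(0)$ bound from Lemma~\ref{lem:xi_bd} contributes randomness, yielding the same conclusion under the weaker hypothesis. There is no genuine obstacle here; the only thing to be careful about is the bookkeeping of the probabilities and the fact that the constants $c, C$ absorb the $\delta/2$ splits and the various $\Tilde{O}$ factors coming from $\rho$, which is a routine adjustment of notation rather than a substantive step.
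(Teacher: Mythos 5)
Your proposal is correct and follows essentially the same route as the paper's own proof: check that the hypotheses on $m$ let you invoke Lemma~\ref{lem:xiandtildexigammabd} to get $\max_{t \le T}\xi(t) \le \Gamma\,\xi(0)$, bound $\xi(0)$ via Lemma~\ref{lem:xi_bd}, and combine with a union bound after a $\delta/2$ split, handling the doubling trick by noting $\xi(0)$ is unchanged and the growth bound becomes deterministic. No gaps.
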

\begin{proof}
First assume we are not doing the doubling trick.  Note that the hypothesis on $m$ is strong enough to satisfy the hypothesis of Lemma \ref{lem:xiandtildexigammabd}, therefore we have with probability at least $1 - \delta$
\[ \max_{t \leq T} \xi(t) \leq \Gamma \xi(0). \]
Well then separately by Lemma \ref{lem:xi_bd} with probability at least $1 - \delta$
\[\xi(0) \leq 1 +  C \frac{\sqrt{d} + K^2 \sqrt{\log(c / \delta)}}{\sqrt{m}}.\]
Thus by replacing $\delta$ with $\delta / 2$ in the previous statements and taking a union bound we have with probability at least $1 - \delta$
\[ \max_{t \in [0, T]} \xi(t) \leq \Gamma \brackets{1 +  C \frac{\sqrt{d} + K^2 \sqrt{\log(c / \delta)}}{\sqrt{m}}}  \]
which is the desired result.  Now suppose instead one does the doubling trick.  We recall that the doubling trick does not change $\xi(0)$.  Thus we can run through the exact same argument as before except when we apply Lemma \ref{lem:xiandtildexigammabd} we can remove the second condition on $m$ from the hypothesis.
\end{proof}
The following lemma bounds the bias term.
\begin{lem}\label{lem:nnclassbiasbd}
For any initial conditions we have
\[ |b_0(t)| \leq |b_0(0)| + t \norm{\hat{r}(0)}_{\RR^n}. \]
\end{lem}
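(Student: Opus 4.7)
The plan is to apply the bound from Lemma~\ref{lem:res_weighted} to the scalar ODE for $b_0$ and then integrate. Specifically, from the list of time derivatives in the ``Important Equations'' section we have the explicit formula
\[
\partial_t b_0 = -\frac{1}{n}\sum_{i=1}^n \hat{r}_i(t).
\]
This is exactly of the form handled by Lemma~\ref{lem:res_weighted} with $v_i = 1 \in \RR$ and norm $|\cdot|$. Applying that lemma (or equivalently Cauchy--Schwarz against the all-ones vector in the $\langle \cdot, \cdot\rangle_{\RR^n}$ inner product) yields
\[
|\partial_t b_0(t)| \le \max_{i\in[n]} |v_i|\,\norm{\hat{r}(t)}_{\RR^n} \le \norm{\hat{r}(0)}_{\RR^n},
\]
where the last inequality uses the standard property of gradient flow on the squared loss that $t\mapsto \norm{\hat{r}(t)}_{\RR^n}$ is nonincreasing (as already invoked in Lemma~\ref{lem:res_weighted}).

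With this uniform-in-$t$ bound on $|\partial_t b_0|$, the fundamental theorem of calculus and the triangle inequality give
\[
|b_0(t)| \le |b_0(0)| + \left|\int_0^t \partial_s b_0(s)\,ds\right| \le |b_0(0)| + \int_0^t |\partial_s b_0(s)|\,ds \le |b_0(0)| + t\,\norm{\hat{r}(0)}_{\RR^n},
\]
which is the claimed inequality. There is no real obstacle here: unlike the bounds for $W$, $a$, $b$ in Lemmas~\ref{lem:param_op_norm_bound} and \ref{lem:param_infty_norm_bound}, the coefficient multiplying $\hat{r}_i$ in the $b_0$ equation is the constant $1$, so no Gr\"onwall argument is needed and the bound is purely additive rather than exponential.
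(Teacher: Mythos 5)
Your proof is correct and matches the paper's argument: the paper likewise bounds $|\partial_t b_0(t)| = \abs{\tfrac{1}{n}\sum_{i=1}^n \hat{r}_i(t)} \leq \norm{\hat{r}(t)}_{\RR^n} \leq \norm{\hat{r}(0)}_{\RR^n}$ and integrates via the fundamental theorem of calculus. Invoking Lemma~\ref{lem:res_weighted} with $v_i = 1$ is just a packaged form of the same Cauchy--Schwarz step, so there is no substantive difference.
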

\begin{proof}
Note that
\[ |\partial_t b_0(t)| = \abs{\frac{1}{n}\sum_{i = 1}^n \hat{r}(t)_i} \leq \norm{\hat{r}(t)}_{\RR^n} \leq \norm{\hat{r}(0)}_{\RR^n}  \]
Thus by the fundamental theorem of calculus
\[ |b_0(t)| \leq |b_0(0)| + t \norm{\hat{r}(0)}_{\RR^n}. \]
\end{proof}
The following lemma demonstrates that the residual $r_t = f_t - f^*$ is bounded.

\begin{lem}\label{lem:nnclassinftybd}
Assume that $W_{i, j} \sim \mathcal{W}$, $b_\ell \sim \mathcal{B}$, $a_\ell \sim \mathcal{A}$ are all i.i.d zero-mean, subgaussian random variables with unit variance.  Furthermore assume $\norm{1}_{\psi_2}, \norm{w_\ell}_{\psi_2}, \norm{a_\ell}_{\psi_2}, \norm{b_\ell}_{\psi_2} \leq K$ for each $\ell \in [m]$ where $K \geq 1$.
Let $\Gamma > 1$, $T > 0$, $D := 3 \max\{ |\sigma(0)|, M \norm{\sigma'}_{\infty}, \norm{\sigma'}_{\infty}, 1\}$,
and assume
\[ m \geq \frac{4 D^2 \norm{y}_{\RR^n}^2 
T^2}{[\log(\Gamma)]^2} \text{    and    }  m \geq \max\braces{\frac{4D^2 O(\log(c/\delta) + \Tilde{O}(d))T^2}{[\log(\Gamma)]^2}, O(\log(c/\delta) + \Tilde{O}(d))}.  \]
Then with probability at least $1 - \delta$ for $t \leq T$
\[ \norm{f_t - f^*}_\infty \leq \norm{f_0 - f^*}_\infty + t \norm{\hat{r}(0)}_{\RR^n} C D^2 \Gamma^2 \brackets{1 +  C \frac{\sqrt{d} + K^2 \sqrt{\log(c / \delta)}}{\sqrt{m}}}^2\]
If one instead does the doubling trick then the second condition on $m$ can be removed from the hypothesis and the same conclusion holds.
\end{lem}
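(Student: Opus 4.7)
The plan is to bound $\|f_t - f^*\|_\infty$ by first controlling $\|\partial_t f_t\|_\infty$ through the function-space gradient flow equation, then integrating in time, and finally applying the triangle inequality against $f^*$.

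Concretely, I would start from the identity
\[
\partial_t f_t(x) = \big\langle \partial_\theta f(x;\theta_t),\, \partial_t \theta_t \big\rangle_2 = -\frac{1}{n}\sum_{i=1}^n \hat{r}_i(t)\, K_t(x, x_i),
\]
which follows from the chain rule together with $\partial_t \theta_t = -\partial_\theta \Phi(\theta_t)$. The argument of Lemma~\ref{lem:res_weighted} applied with the scalar ``vectors'' $K_t(x, x_i) \in \RR$ yields
\[
|\partial_t f_t(x)| \;\leq\; \max_{i \in [n]} |K_t(x, x_i)|\, \|\hat{r}(t)\|_{\RR^n} \;\leq\; \max_{i \in [n]} |K_t(x, x_i)|\, \|\hat{r}(0)\|_{\RR^n},
\]
using $\|\hat{r}(t)\|_{\RR^n} \leq \|\hat{r}(0)\|_{\RR^n}$ from gradient flow on the squared loss.

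Next I would bound the kernel pointwise in terms of $\xi(t)$. Writing $K_t$ as the sum of the three inner products $\langle \partial_a f, \partial_a f\rangle$, $\langle \partial_b f, \partial_b f\rangle$, $\langle \partial_W f, \partial_W f\rangle$ plus the constant $1$ from the $b_0$-derivative, applying Cauchy--Schwarz and Lemma~\ref{lem:grad_bound} to each term gives
\[
\sup_{(x, y) \in B_M \times B_M} |K_t(x, y)| \;\leq\; 3 D^2 \xi(t)^2 + 1 \;\leq\; C D^2 \xi(t)^2,
\]
since $\xi(t) \geq 1$. Then I would invoke Lemma~\ref{lem:nnclassparambd}, whose hypotheses on $m$ are precisely those in the current statement, to obtain with probability at least $1 - \delta$ the uniform bound
\[
\max_{t \in [0, T]} \xi(t) \;\leq\; \Gamma \left[1 + C\,\frac{\sqrt{d} + K^2\sqrt{\log(c/\delta)}}{\sqrt{m}}\right].
\]
Combining the two previous displays yields, on this event and for all $t \leq T$,
\[
\|\partial_t f_t\|_\infty \;\leq\; C D^2 \Gamma^2 \left[1 + C\,\frac{\sqrt{d} + K^2\sqrt{\log(c/\delta)}}{\sqrt{m}}\right]^{\!2} \|\hat{r}(0)\|_{\RR^n}.
\]

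The proof then finishes by the fundamental theorem of calculus, $f_t(x) - f_0(x) = \int_0^t \partial_s f_s(x)\, ds$, giving $\|f_t - f_0\|_\infty \leq t \cdot (\text{RHS above})$, followed by $\|f_t - f^*\|_\infty \leq \|f_0 - f^*\|_\infty + \|f_t - f_0\|_\infty$. For the doubling trick variant, the second hypothesis on $m$ is dropped exactly because Lemma~\ref{lem:nnclassparambd} already drops it under that scheme, so the same argument carries through verbatim. There is no real obstacle here: the statement is essentially a direct consequence of the function-space dynamics together with the a priori parameter-norm bound already established, and the only mildly delicate step is packaging the constant as $C D^2$ coming from bounding all three components of the NTK uniformly via Lemma~\ref{lem:grad_bound}.
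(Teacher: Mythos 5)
Your proposal is correct and follows essentially the same route as the paper: bound $\partial_t f_t$ via the kernel-weighted residual sum, control $\norm{K_t}_\infty \leq C D^2 \xi(t)^2$ through Lemma \ref{lem:grad_bound}, invoke Lemma \ref{lem:nnclassparambd} for the uniform-in-time bound on $\xi(t)$, and integrate. The only cosmetic difference is that you integrate $f_t - f_0$ and then apply the triangle inequality, whereas the paper applies the fundamental theorem of calculus directly to $f_t - f^*$; these are equivalent.
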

\begin{proof}
Recall that
\[ \partial_t (f_t(x) - f^*(x)) = -\frac{1}{n} \sum_{i = 1}^n K_t(x, x_i) (f_t(x_i) - f^*(x_i)) = -\frac{1}{n} \sum_{i = 1}^n K_t(x, x_i) \hat{r}(t)_i. \]
Thus
\[ |\partial_t(f_t(x) - f^*(x))| \leq \frac{1}{n} \sum_{i = 1}^n |K_t(x, x_i)| |\hat{r}(t)_i| \leq \norm{K_t}_\infty \norm{\hat{r}(t)}_{\RR^n} \leq \norm{K_t}_\infty \norm{\hat{r}(0)}_{\RR^n}.  \]
Well by Lemma \ref{lem:grad_bound} we have that $\norm{K_t}_\infty \leq C D^2 \xi^2(t)$
where
\[\xi(t) =  \max\{\frac{1}{\sqrt{m}}\norm{W}_{op}, \frac{1}{\sqrt{m}} \norm{b}_2, \frac{1}{\sqrt{m}}\norm{a}_2, 1\}. \]
Well by Lemma \ref{lem:nnclassparambd} we have that with probability at least $1 - \delta$
\[ \max_{t \in [0, T]} \xi(t) \leq \Gamma\brackets{1 +  C \frac{\sqrt{d} + K^2 \sqrt{\log(c / \delta)}}{\sqrt{m}}}.  \]
Thus by the fundamental theorem of calculus for $t \leq T$
\begin{gather*}
|f_t(x) - f^*(x)| \leq |f_0(x) - f^*(x)| + t \norm{\hat{r}(0)}_{\RR^n} C D^2 \Gamma^2 \brackets{1 +  C \frac{\sqrt{d} + K^2 \sqrt{\log(c / \delta)}}{\sqrt{m}}}^2
\end{gather*}
Thus by taking the supremum over $x$ we get
\[ \norm{f_t - f^*}_\infty \leq \norm{f_0 - f^*}_\infty + t \norm{\hat{r}(0)}_{\RR^n} C D^2 \Gamma^2 \brackets{1 +  C \frac{\sqrt{d} + K^2 \sqrt{\log(c / \delta)}}{\sqrt{m}}}^2\]
which is the desired conclusion.
\end{proof}
We can now finally prove that $\norm{(T_{K^\infty} - T_n)r_t}_2$ is uniformly small.

\begin{lem}\label{lem:tnresbd}
Let $K(x, x')$ by a continuous, symmetric, positive-definite kernel and let $\kappa = \max_x K(x, x) < \infty$.  Let $T_K h(\bullet) = \int_X K(\bullet, s) h(s) d\rho(s)$ and $T_n h(\bullet) = \frac{1}{n} \sum_{i = 1}^n K(\bullet, x_i) h(x_i)$ be the associated operators.  Assume that $W_{i, j} \sim \mathcal{W}$, $b_\ell \sim \mathcal{B}$, $a_\ell \sim \mathcal{A}$ are all i.i.d zero-mean, subgaussian random variables with unit variance.  Furthermore assume $\norm{1}_{\psi_2}, \norm{w_\ell}_{\psi_2}, \norm{a_\ell}_{\psi_2}, \norm{b_\ell}_{\psi_2}, \norm{b_0}_{\psi_2} \leq K'$ for each $\ell \in [m]$ where $K' \geq 1$.  Let $\Gamma > 1$, $T > 0$, $D := 3 \max\{ |\sigma(0)|, M \norm{\sigma'}_{\infty}, \norm{\sigma'}_{\infty}, 1\}$,
and assume
\[ m \geq \frac{4 D^2 \norm{y}_{\RR^n}^2 
T^2}{[\log(\Gamma)]^2} \text{    and    }  m \geq \max\braces{\frac{4D^2 O(\log(c/\delta) + \Tilde{O}(d))T^2}{[\log(\Gamma)]^2}, O(\log(c/\delta) + \Tilde{O}(d))}.  \]
If we are doing the doubling trick set $S' = 0$ and otherwise set
\[ S' = C D (K')^2 \sqrt{d \log(CM \Tilde{O}(\sqrt{m})) + \log(c/\delta)} = \Tilde{O}(\sqrt{d}). \]
Then with probability at least $1 - \delta$
\[\sup_{t \leq T} \norm{(T_n - T_{K})r_t}_2 = \Tilde{O}\parens{\frac{(\norm{f^*}_\infty + S')(1 + T \Gamma^2) \sqrt{\sigma_1 \kappa p}}{\sqrt{n}}} \]
and
\[ \norm{r_0}_\infty \leq \norm{f^*}_\infty + S'. \]
If we are performing the doubling trick the second condition on $m$ can be removed and the same conclusion holds.
\end{lem}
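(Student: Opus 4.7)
The plan is to exhibit a class $\mathcal{C}'$ of the form handled by Lemma \ref{lem:tnconcentration} which contains $r_t$ for all $t \leq T$ with high probability, then apply that lemma. The ingredients are the a priori parameter-norm bounds from Appendix \ref{sec:ntkdevs} together with Lemmas \ref{lem:nnclassparambd}, \ref{lem:nnclassbiasbd}, and \ref{lem:nnclassinftybd}. Concretely, I would proceed in four steps.

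First, handle the initial residual. Under the doubling trick $f_0 \equiv 0$, so $\norm{r_0}_\infty = \norm{f^*}_\infty$ and we may take $S' = 0$. Otherwise, Lemma \ref{lem:fatinitbd} gives $\norm{f_0}_\infty \leq S' = \tilde{O}(\sqrt{d})$ with probability $1-\delta$, hence $\norm{r_0}_\infty \leq \norm{f^*}_\infty + S'$, which is also the second claim of the lemma. In particular $\norm{\hat r(0)}_{\RR^n} \leq \norm{f^*}_\infty + S'$.

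Second, control the parameters along the trajectory. The hypothesis on $m$ is exactly what is needed to apply Lemma \ref{lem:nnclassparambd}: with probability $1-\delta$, for all $t \leq T$,
\[
\xi(t) \leq A := \Gamma\Bigl[1 + C\tfrac{\sqrt{d} + (K')^2\sqrt{\log(c/\delta)}}{\sqrt{m}}\Bigr] = \tilde O(1).
\]
For the outer bias, Lemma \ref{lem:nnclassbiasbd} plus a subgaussian tail bound on $b_0(0)$ gives, with probability $1-\delta$,
\[
|b_0(t)| \leq B := \tilde O(1) + T(\norm{f^*}_\infty + S'),
\]
so $f_t$ lies in the class $\mathcal{C}$ of Corollary \ref{cor:classbd} with parameters $A$ and $B$. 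Third, bound the uniform norm of the residual: Lemma \ref{lem:nnclassinftybd} gives, with probability $1-\delta$,
\[
\norm{r_t}_\infty \leq \norm{r_0}_\infty + T\,\norm{\hat r(0)}_{\RR^n}\, CD^2\Gamma^2\bigl[1 + \tilde O(1/\sqrt m)\bigr]^2 \leq S,
\]
where $S = \tilde O\bigl((\norm{f^*}_\infty + S')(1 + T\Gamma^2)\bigr)$. Combining these, $r_t = f_t - f^* \in \{g - f^* : g \in \mathcal{C}\} \cap \{g : \norm{g}_\infty \leq S\} = \mathcal{C}'$.

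Fourth, apply Lemma \ref{lem:tnconcentration} to this $\mathcal{C}'$. The conclusion of that lemma states that with probability $1-\delta$ over the sampling of $x_1,\ldots,x_n$, every $g \in \mathcal{C}'$ satisfies
\[
\norm{(T_n - T_K)g}_2 \leq \tilde O\Bigl(\tfrac{S\sqrt{\sigma_1 \kappa\, p}}{\sqrt n}\Bigr),
\]
since the $\log \Psi(m,d)$ contribution to the right-hand side is absorbed in the $\tilde O$. Substituting the value of $S$ and taking a union bound over the three high-probability events above (replacing $\delta$ by $\delta/4$ throughout) yields the stated bound on $\sup_{t \leq T}\norm{(T_n - T_K)r_t}_2$. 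Under the doubling trick, Lemmas \ref{lem:nnclassparambd}, \ref{lem:nnclassinftybd} no longer require the second condition on $m$ and $S' = 0$, so the same argument applies verbatim.

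The only real obstacle is careful bookkeeping: one must verify that each $\tilde O(\cdot)$ factor appearing in $A$, $B$, and $S$ only contributes logarithmically after being plugged into $\Psi(m,d)$ and $\log \Psi(m,d)$ in Lemma \ref{lem:tnconcentration}, and that the union bound over the four events (Lemma \ref{lem:fatinitbd}, Lemma \ref{lem:nnclassparambd}, Lemma \ref{lem:nnclassinftybd}, and Lemma \ref{lem:tnconcentration}) remains under control. Everything else is a direct chaining of previously established bounds.
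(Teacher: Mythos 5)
Your proposal is correct and follows essentially the same route as the paper's proof: the same chaining of Lemma \ref{lem:fatinitbd} (initialization control and $S'$), Lemma \ref{lem:nnclassparambd} ($\xi(t)\leq A$), Lemma \ref{lem:nnclassbiasbd} plus a subgaussian tail bound on $b_0(0)$ (to get $B$ and membership in the class of Corollary \ref{cor:classbd}), Lemma \ref{lem:nnclassinftybd} (to get $\norm{r_t}_\infty \leq S$), and finally Lemma \ref{lem:tnconcentration} with a union bound. The bookkeeping you flag is exactly what the paper carries out, so no changes are needed.
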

\begin{proof}
By Lemma \ref{lem:nnclassparambd} we have with probability at least $1 - \delta$
\begin{equation}\label{eq:xitotbd2}
\max_{t \in [0, T]} \xi(t) \leq \Gamma\brackets{1 +  C \frac{\sqrt{d} + (K')^2 \sqrt{\log(c / \delta)}}{\sqrt{m}}} =: A.   
\end{equation}
Also by Lemma \ref{lem:nnclassbiasbd}
\[ |b_0(t)| \leq |b_0(0)| + t \norm{\hat{r}(0)}_{\RR^n}.\]
If we are doing the doubling trick then $b_0(0) = 0$.  Otherwise by Lemma \ref{lem:subgauss_conc} we have with probability at least $1 - \delta$
\[ |b_0(0)| \leq CK' \sqrt{\log(c/\delta)}. \]
Furthermore by Lemma \ref{lem:hatzerobd} we have
\[ \norm{\hat{r}(0)}_{\RR^n} \leq \norm{y}_{\RR^n} + \norm{f(\bullet;\theta_0)}_\infty. \]
Let $L$ be defined as in Lemma \ref{lem:fatinitbd}, i.e.
\[L(m, \sigma, d, K', \delta) := \sqrt{m} \norm{\sigma'}_\infty \braces{1 +  C \frac{\sqrt{d} + (K')^2 \sqrt{\log(c / \delta)}}{\sqrt{m}}}^2 = \Tilde{O}(\sqrt{m}). \]
If we are not performing the doubling trick set
\[ S' = C D (K')^2 \sqrt{d \log(CM L) + \log(c/\delta)}. \]
Otherwise if we are performing the doubling trick set $S' = 0$.  In either case by Lemma \ref{lem:fatinitbd} we have with probability at least $1 - \delta$
\begin{equation}\label{eq:finitsprime}
\norm{f(\bullet;\theta_0)}_\infty \leq S'.
\end{equation}
In particular by Lemma \ref{lem:hatzerobd} we have
\begin{gather*}
\norm{\hat{r}(0)}_{\RR^n} \leq \norm{y}_{\RR^n} + \norm{f(\bullet;\theta_0)}_\infty \leq \norm{y}_{\RR^n} + S'.
\end{gather*}
Thus we can say
\[ |b_0(t)| \leq |b_0(0)| + t \norm{\hat{r}(0)}_{\RR^n} \leq CK' \sqrt{\log(c/\delta)} + T(\norm{y}_{\RR^n} + S') =: B \]
and this holds whether or not we are performing the doubling trick.
Thus up until time $T$ the neural network is in class $\mathcal{C}$ as defined in Corollary \ref{cor:classbd} with parameters $A$ and $B$ as defined above.  Moreover by Lemma \ref{lem:nnclassinftybd} separate from the randomness before we have that with probability at least $1 - \delta$
\[\norm{r_t}_\infty \leq \norm{r_0}_\infty + t \norm{\hat{r}(0)}_{\RR^n} C D^2 \Gamma^2 \brackets{1 +  C \frac{\sqrt{d} + (K')^2 \sqrt{\log(c / \delta)}}{\sqrt{m}}}^2\]
Well note that when (\ref{eq:finitsprime}) holds we have
\[ \norm{\hat{r}(0)}_{\RR^n} \leq \norm{r_0}_\infty \leq \norm{f^*}_\infty + \norm{f(\bullet;\theta_0)}_\infty \leq \norm{f^*}_\infty + S'. \]
Thus
\[ \norm{r_t}_\infty \leq (\norm{f^*}_\infty + S')\braces{1 + T C D^2 \Gamma^2 \brackets{1 +  C \frac{\sqrt{d} + (K')^2 \sqrt{\log(c / \delta)}}{\sqrt{m}}}^2} =: S\]
Thus by taking a union bound and redefining $\delta$ we have by an application of Lemma \ref{lem:tnconcentration} with $S$ as defined in the hypothesis of the current theorem that with probability at least $1 - \delta$
\begin{gather*}
\sup_{t \leq T} \norm{(T_n - T_{K})r_t}_2 \leq \frac{2\brackets{1 + S \sqrt{\sigma_1 \kappa}\sqrt{2 \log(c/\delta) + \Tilde{O}(p)}}}{\sqrt{n}} \\
= \Tilde{O}\parens{\frac{(\norm{f^*}_\infty + S')(1 + T \Gamma^2) \sqrt{\sigma_1 \kappa p}}{\sqrt{n}}}
\end{gather*}
where we have used that $S = \Tilde{O}([\norm{f^*}_\infty + S'][1 + T \Gamma^2])$.
\end{proof}

\subsection{Proof Of Theorem \ref{thm:mainunderparam}}
We are almost ready to prove Theorem \ref{thm:mainunderparam}.  However first we must introduce a couple lemmas.  The following lemma uses the damped deviations equation to bound the difference between $r_t$ and $\exp(-T_K t) r_0$.
\begin{lem}\label{lem:kerdiffrecipe}
Let $K(x, x')$ by a continuous, symmetric, positive-definite kernel with associated operator $T_K h(\bullet) = \int_X K(\bullet, s) h(s) d\rho(s)$.  Let $T_n^s h(\bullet) = \frac{1}{n} \sum_{i = 1}^n K_s(\bullet, x_i) h(x_i)$ denote the operator associated with the time-dependent $NTK$.  Then
\[\norm{P_k(r_t - \exp(-T_{K}t)r_0)}_2 \leq \frac{1 - \exp(-\sigma_k t)}{\sigma_k} \sup_{s \leq t} \norm{(T_{K} - T_n^s)r_s}_2.\]
and
\[\norm{r_t - \exp(-T_{K}t)r_0}_2 \leq t \cdot \sup_{s \leq t} \norm{(T_{K} - T_n^s)r_s}_2.\]
\end{lem}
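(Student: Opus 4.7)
The starting point is Lemma~\ref{lem:dampeddevfunc} applied to the kernel $K$, which rearranges to
\[ r_t - \exp(-T_K t) r_0 = \int_0^t \exp(-T_K(t-s))(T_K - T_n^s) r_s \, ds . \]
My plan is to bound the $L^2$ norm of each side, pulling the norm inside the integral via Minkowski's integral inequality (which is legal in our setting because the integral is defined coefficient-wise against the orthonormal basis $\{\phi_i\}$, so bounding coefficient-wise and summing gives the triangle inequality for the $L^2$ norm). After that, the two claims reduce to bounding the operator norm of $\exp(-T_K(t-s))$ (for the global bound) and of $P_k \exp(-T_K(t-s))$ (for the projected bound).

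For the second inequality, I would use that $T_K$ is positive semidefinite, so $\exp(-T_K \tau)$ has operator norm at most $1$ for each $\tau \geq 0$. This immediately gives
\[ \norm{r_t - \exp(-T_K t) r_0}_2 \leq \int_0^t \norm{(T_K - T_n^s) r_s}_2 \, ds \leq t \sup_{s \leq t} \norm{(T_K - T_n^s) r_s}_2 . \]

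For the projected inequality, the key observation is that $P_k$ is the orthogonal projection onto $\mathrm{span}\{\phi_1,\ldots,\phi_k\}$, an invariant subspace of $T_K$, so $P_k$ commutes with every function of $T_K$, in particular with $\exp(-T_K(t-s))$. Expanding any $v \in L^2$ in the Mercer basis as $v = \sum_i c_i \phi_i$ yields
\[ \norm{P_k \exp(-T_K \tau) v}_2^2 = \sum_{i \leq k} c_i^2 \exp(-2\sigma_i \tau) \leq \exp(-2 \sigma_k \tau) \norm{v}_2^2 , \]
since the sequence $\{\sigma_i\}$ is nonincreasing. Applying $P_k$ to the damped deviations identity, pulling the norm inside, and using this bound gives
\[ \norm{P_k(r_t - \exp(-T_K t) r_0)}_2 \leq \int_0^t \exp(-\sigma_k (t-s)) \norm{(T_K - T_n^s) r_s}_2 \, ds , \]
and factoring out the supremum leaves the elementary integral $\int_0^t \exp(-\sigma_k (t-s)) ds = (1 - \exp(-\sigma_k t))/\sigma_k$, producing the claimed bound.

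There is no real obstacle here once Lemma~\ref{lem:dampeddevfunc} is in hand; the only technical care needed is to justify the triangle inequality for the coefficient-wise integral, which follows by writing $\norm{h}_2^2 = \sum_i \langle h, \phi_i \rangle_2^2$, applying the scalar triangle inequality to each coefficient $\langle \int_0^t g_s ds, \phi_i \rangle_2 = \int_0^t \langle g_s, \phi_i \rangle_2 ds$, and invoking Minkowski. Measurability and integrability of $s \mapsto \norm{(T_K - T_n^s) r_s}_2$ follow from continuity of the parameter trajectory $\theta_s$ together with the uniform kernel bounds proved in Appendix~\ref{sec:ntkdevs}.
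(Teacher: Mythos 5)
Your proposal is correct and follows essentially the same route as the paper's proof: apply $P_k$ to the damped deviations identity of Lemma~\ref{lem:dampeddevfunc}, pull the $L^2$ norm inside the coordinatewise integral, bound $\norm{P_k \exp(-T_K(t-s))}$ by $\exp(-\sigma_k(t-s))$ (and $\norm{\exp(-T_K(t-s))}$ by $1$ for the global bound), and evaluate $\int_0^t \exp(-\sigma_k(t-s))\,ds$. Your explicit justification of the operator norm bound via the Mercer basis and of the triangle inequality for the coordinatewise integral only spells out steps the paper leaves implicit.
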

\begin{proof}
From Lemma \ref{lem:dampeddevfunc} we have
\[r_t = \exp(-T_{K}t)r_0 + \int_0^t \exp(-T_{K}(t - s))(T_{K} - T_n^s)r_s ds. \] 
Thus for any $k \in \NN$
\begin{gather*}
P_k(r_t - \exp(-T_{K}t)r_0) = P_k \int_0^t \exp(-T_{K}(t - s))(T_{K} - T_n^s)r_s ds \\
= \int_0^t P_k\exp(-T_{K}(t - s))(T_{K} - T_n^s)r_s ds.
\end{gather*}
Therefore
\begin{gather*}
\norm{P_k(r_t - \exp(-T_{K}t)r_0)}_2 = \norm{\int_0^t P_k \exp(-T_{K}(t - s))(T_{K} - T_n^s)r_s ds}_2 \\
\leq \int_0^t \norm{P_k \exp(-T_{K}(t - s))(T_{K} - T_n^s)r_s}_2 ds \\
\leq \int_0^t \norm{P_k \exp(-T_{K}(t - s))} \norm{(T_{K} - T_n^s)r_s}_2 ds \\
\leq \int_0^t \exp(-\sigma_k(t - s)) \norm{(T_{K} - T_n^s)r_s}_2 ds \\
\leq \frac{1 - \exp(-\sigma_k t)}{\sigma_k} \sup_{s \leq t} \norm{(T_{K} - T_n^s)r_s}_2.
\end{gather*}
Similarly
\begin{gather*}
\norm{r_t - \exp(-T_{K}t)r_0}_2 = \norm{\int_0^t \exp(-T_{K}(t - s))(T_{K} - T_n^s)r_s ds}_2 \\
\leq \int_0^t \norm{\exp(-T_{K}(t - s))(T_{K} - T_n^s)r_s}_2 ds \\
\leq \int_0^t \norm{\exp(-T_{K}(t - s))} \norm{(T_{K} - T_n^s)r_s}_2 ds \\
\leq \int_0^t \norm{(T_{K} - T_n^s)r_s}_2 ds  \leq t \cdot \sup_{s \leq t} \norm{(T_{K} - T_n^s)r_s}_2.
\end{gather*}
\end{proof}
In light of the previous lemma we would like to have a bound for $\norm{(T_{K}- T_n^s) r_s}_2$.  This is accomplished by the following lemma.
\begin{lem}\label{lem:opdevbd}
Let $K(x, x')$ by a continuous, symmetric, positive-definite kernel.  Let $T_K h(\bullet) = \int_X K(\bullet, s) h(s) d\rho(s)$ and $T_n h(\bullet) = \frac{1}{n} \sum_{i = 1}^n K(\bullet, x_i) h(x_i)$ be the associated operators.  Let $T_n^s h(\bullet) = \frac{1}{n} \sum_{i = 1}^n K_s(\bullet, x_i) h(x_i)$ denote the operator associated with the time-dependent $NTK$.  Then
\[ \sup_{s \leq T} \norm{(T_{K} - T_n^s) r_s}_2 \leq  \sup_{s \leq T} \norm{(T_{K} - T_n)r_s}_2 +  \sup_{s \leq T} \norm{K - K_s}_\infty \norm{\hat{r}(0)}_{\RR^n}. \]
\end{lem}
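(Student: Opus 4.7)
\medskip

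\noindent\textbf{Proof plan for Lemma \ref{lem:opdevbd}.} The plan is a one-line triangle inequality followed by a pointwise estimate. First I would insert $T_n$ and write
\[
(T_{K} - T_n^s) r_s = (T_{K} - T_n) r_s + (T_n - T_n^s) r_s,
\]
so that by the triangle inequality
\[
\norm{(T_{K} - T_n^s) r_s}_2 \leq \norm{(T_{K} - T_n) r_s}_2 + \norm{(T_n - T_n^s) r_s}_2.
\]
The first term is already in the desired form after taking the supremum over $s\leq T$, so everything reduces to controlling the second term.

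For the second term, I would exploit the fact that $T_n$ and $T_n^s$ use the \emph{same} empirical measure $\rho_n$ and differ only through the kernels $K$ and $K_s$. Writing
\[
[(T_n - T_n^s) r_s](x) = \frac{1}{n}\sum_{i=1}^n (K(x,x_i) - K_s(x,x_i))\, r_s(x_i),
\]
a pointwise absolute-value estimate gives
\[
\abs{[(T_n - T_n^s) r_s](x)} \leq \norm{K - K_s}_\infty \cdot \frac{1}{n}\sum_{i=1}^n \abs{r_s(x_i)} \leq \norm{K - K_s}_\infty \norm{\hat r_s}_{\RR^n},
\]
where the last step is Cauchy--Schwarz against the constant vector in the normalized inner product $\langle\bullet,\bullet\rangle_{\RR^n}$. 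Since this bound is uniform in $x$, it passes to $\norm{\bullet}_\infty$, and since $\rho$ is a probability measure we have $\norm{\bullet}_2 \leq \norm{\bullet}_\infty$, giving
\[
\norm{(T_n - T_n^s) r_s}_2 \leq \norm{K - K_s}_\infty \norm{\hat r_s}_{\RR^n}.
\]

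Finally, I would invoke the standard monotonicity of the training residual under gradient flow on the squared loss, $\norm{\hat r_s}_{\RR^n} \leq \norm{\hat r(0)}_{\RR^n}$ (already used in Lemma \ref{lem:res_weighted}), and take $\sup_{s\leq T}$ on both sides to conclude
\[
\sup_{s\leq T} \norm{(T_{K} - T_n^s) r_s}_2 \leq \sup_{s\leq T}\norm{(T_{K} - T_n) r_s}_2 + \sup_{s\leq T}\norm{K - K_s}_\infty \norm{\hat r(0)}_{\RR^n}.
\]
There is no real obstacle here: the lemma is essentially a bookkeeping step that separates the ``statistical'' deviation $T_{K} - T_n$ (which will be controlled uniformly over $r_s$ via the covering-number argument of Lemma \ref{lem:tnconcentration}) from the ``kernel-drift'' deviation $T_n - T_n^s$ (which is controlled by the uniform $NTK$ bound of Theorem \ref{thm:ntkmaindevbd}). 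The only point to be mildly careful about is reducing the $L^2$ norm to an $L^\infty$ bound, which is justified because $\rho$ is a probability measure on the compact set $X$.
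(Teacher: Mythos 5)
Your proposal is correct and follows essentially the same argument as the paper: insert $T_n$ and use the triangle inequality, bound $(T_n - T_n^s)r_s$ pointwise by $\norm{K - K_s}_\infty \norm{\hat{r}_s}_{\RR^n}$, pass from $\norm{\bullet}_\infty$ to $\norm{\bullet}_2$ using that $\rho$ is a probability measure, and finish with the gradient-flow monotonicity $\norm{\hat{r}_s}_{\RR^n} \leq \norm{\hat{r}(0)}_{\RR^n}$. No gaps.
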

\begin{proof}
We have that
\[\norm{(T_{K} - T_n^s) r_s}_2 \leq \norm{(T_{K} - T_n)r_s}_2 + \norm{(T_n - T_n^s)r_s}_2. \]
Now observe that
\begin{gather*}
|(T_n - T_n^s)r_s(x)| = \abs{\frac{1}{n} \sum_{i = 1}^n [K(x, x_i) - K_s(x, x_i)] r_s(x_i)} \\
\leq \frac{1}{n} \sum_{i = 1}^n |K(x, x_i) - K_s(x, x_i)||r_s(x_i)| \\
\leq \norm{K - K_s}_\infty \norm{\hat{r}(s)}_{\RR^n} \leq \norm{K - K_s}_\infty \norm{\hat{r}(0)}_{\RR^n}.
\end{gather*}
Therefore
\[ \norm{(T_n - T_n^s)r_s}_2 \leq \norm{(T_n - T_n^s)r_s}_\infty \leq \norm{K - K_s}_\infty \norm{\hat{r}(0)}_{\RR^n}.\]
Thus
\[ \sup_{s \leq T} \norm{(T_{K} - T_n^s) r_s}_2 \leq  \sup_{s \leq T} \norm{(T_{K} - T_n)r_s}_2 +  \sup_{s \leq T} \norm{K- K_s}_\infty \norm{\hat{r}(0)}_{\RR^n}. \]
\end{proof}

We are almost ready to finally prove Theorem \ref{thm:mainunderparam}.  We must prove one final lemma that combines Lemma \ref{lem:tnresbd} with the $NTK$ deviation bounds in Theorem \ref{thm:ntkmaindevbd} to show that $\norm{(T_{K^\infty} - T_n^s) r_s}_2$ is uniformly small.

\begin{lem}\label{lem:tminustnres}
Assume that $W_{i, j} \sim \mathcal{W}$, $b_\ell \sim \mathcal{B}$, $a_\ell \sim \mathcal{A}$ are all i.i.d zero-mean, subgaussian random variables with unit variance.  Furthermore assume $\norm{1}_{\psi_2}, \norm{w_\ell}_{\psi_2}, \norm{a_\ell}_{\psi_2}, \norm{b_\ell}_{\psi_2} \leq K$ for each $\ell \in [m]$ where $K \geq 1$.  Let $\Gamma > 1$, $T > 0$, $D := 3 \max\{ |\sigma(0)|, M \norm{\sigma'}_{\infty}, \norm{\sigma'}_{\infty}, 1\}$,
and assume
\[ m \geq \frac{4 D^2 \norm{y}_{\RR^n}^2 
T^2}{[\log(\Gamma)]^2} \text{    and    }  m \geq \max\braces{\frac{4D^2 O(\log(c/\delta) + \Tilde{O}(d))T^2}{[\log(\Gamma)]^2}, O(\log(c/\delta) + \Tilde{O}(d))}. \]
If we are doing the doubling trick set $S' = 0$ and otherwise set
\[ S' = C D K^2 \sqrt{d \log(CM \Tilde{O}(\sqrt{m})) + \log(c/\delta)} = \Tilde{O}(\sqrt{d}), S = S' + \norm{f^*}_\infty. \]
Then with probability at least $1 - \delta$
\[\sup_{s \leq t} \norm{(T_{K^\infty} - T_n^s)r_s}_2 = \Tilde{O}\parens{S \frac{\sqrt{d}}{\sqrt{m}}\brackets{1 + t\Gamma^3 S} + \frac{S (1 + T \Gamma^2) \sqrt{\sigma_1 \kappa p}}{\sqrt{n}}}.\]
If we are performing the doubling trick the condition $m \geq \frac{4D^2 O(\log(c/\delta) + \Tilde{O}(d))T^2}{[\log(\Gamma)]^2}$ can be removed and the same conclusion holds.
\end{lem}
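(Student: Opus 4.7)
The plan is to apply Lemma \ref{lem:opdevbd} with $K = K^\infty$ to split the quantity into an operator-approximation term and an $NTK$ time-deviation term, then control each piece with a result already proved earlier in this section.

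First, Lemma \ref{lem:opdevbd} gives
\[\sup_{s \leq t} \norm{(T_{K^\infty} - T_n^s) r_s}_2 \leq \sup_{s \leq t} \norm{(T_{K^\infty} - T_n) r_s}_2 + \sup_{s \leq t} \norm{K^\infty - K_s}_\infty \cdot \norm{\hat{r}(0)}_{\RR^n}.\]
Lemma \ref{lem:tnresbd} handles the first summand directly: under the stated hypothesis on $m$ it shows that with probability at least $1 - \delta/3$ this term is bounded by $\Tilde{O}\parens{\frac{S (1 + T \Gamma^2) \sqrt{\sigma_1 \kappa p}}{\sqrt{n}}}$, which is exactly the second summand of the target conclusion.

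Second, I would control the uniform $NTK$ deviations via Theorem \ref{thm:ntkmaindevbd}: with probability at least $1 - \delta/3$, for all $s \leq T$,
\[\norm{K^\infty - K_s}_\infty = \Tilde{O}\parens{\frac{\sqrt{d}}{\sqrt{m}} \brackets{1 + s \Gamma^3 \norm{\hat{r}(0)}_{\RR^n}}}.\]
To convert this into an $S$-dependent bound, I would invoke Lemma \ref{lem:fatinitbd} together with Lemma \ref{lem:hatzerobd} to show that, with probability at least $1 - \delta/3$, $\norm{\hat{r}(0)}_{\RR^n} \leq \norm{y}_{\RR^n} + S' \leq \norm{f^*}_\infty + S' = S$; in the doubling-trick case $f_0 \equiv 0$ makes this step deterministic with $S' = 0$. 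Multiplying the kernel-deviation bound by this estimate of $\norm{\hat{r}(0)}_{\RR^n}$ and using $s \leq t$ produces the first summand $\Tilde{O}\parens{S \frac{\sqrt{d}}{\sqrt{m}} [1 + t \Gamma^3 S]}$ of the target conclusion.

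The last step is a union bound over the three high-probability events, after replacing each occurrence of $\delta$ by $\delta/3$. The hypothesis on $m$ stated here is precisely strong enough to invoke Lemma \ref{lem:tnresbd} and Theorem \ref{thm:ntkmaindevbd} simultaneously; under the doubling trick the fact that $f_0 \equiv 0$ removes the need for the $m \gtrsim O(\log(c/\delta) + \Tilde{O}(d))T^2$ condition from both supporting results, matching the relaxation stated in the lemma. I do not anticipate any substantive obstacle: the main task is careful bookkeeping of the probability budget and of the parameter dependencies hidden inside the $\Tilde{O}$ expressions, both of which have already been absorbed into the supporting lemmas.
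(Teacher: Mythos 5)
Your proposal is correct and follows essentially the same route as the paper: split via Lemma \ref{lem:opdevbd}, bound the empirical-operator term with Lemma \ref{lem:tnresbd}, bound the kernel time-deviation term with Theorem \ref{thm:ntkmaindevbd}, control $\norm{\hat{r}(0)}_{\RR^n}$ by $S$, and union bound. The only cosmetic difference is that the paper reads the bound $\norm{\hat{r}(0)}_{\RR^n}\leq\norm{r_0}_\infty\leq S$ off the same event as Lemma \ref{lem:tnresbd} (so a $\delta/2$ split suffices), whereas you invoke Lemmas \ref{lem:fatinitbd} and \ref{lem:hatzerobd} as a third event; both are fine.
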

\begin{proof}
Note by Lemma \ref{lem:opdevbd} we have
\[ \sup_{s \leq T} \norm{(T_{K^\infty} - T_n^s) r_s}_2 \leq  \sup_{s \leq T} \norm{(T_{K^\infty} - T_n)r_s}_2 +  \sup_{s \leq T} \norm{K^\infty - K_s}_\infty \norm{\hat{r}(0)}_{\RR^n}. \]
Well then by Theorem \ref{thm:ntkmaindevbd} we have with probability at least $1 - \delta$ that
\[ \sup_{t \leq T} \norm{K_t - K^\infty}_\infty = \Tilde{O}\parens{\frac{\sqrt{d}}{\sqrt{m}}\brackets{1 + t\Gamma^3 \norm{\hat{r}(0)}_{\RR^n}}}. \]
Separately by Lemma \ref{lem:tnresbd} we have with probability at least $1 - \delta$
 \[\sup_{s \leq T} \norm{(T_{K^\infty} - T_n)r_s}_2 = \Tilde{O}\parens{\frac{(\norm{f^*}_\infty + S')(1 + T \Gamma^2) \sqrt{\sigma_1 \kappa p}}{\sqrt{n}}} = \Tilde{O}\parens{\frac{S(1 + T \Gamma^2) \sqrt{\sigma_1 \kappa p}}{\sqrt{n}}} \]
 and 
\[ \norm{\hat{r}(0)}_{\RR^n} \leq \norm{r_0}_\infty \leq S. \]
 The result follows then from taking a union bound and replacing $\delta$ with $\delta / 2$.
\end{proof}

We now proceed to prove the main theorem of this paper.
\mainunderparam*
\begin{proof}
By Lemma \ref{lem:kerdiffrecipe} we have for any $k \in \NN$
\[ \norm{P_k(r_t - \exp(-T_{K^\infty}t)r_0)}_2 \leq \frac{1 - \exp(-\sigma_k t)}{\sigma_k} \sup_{s \leq t} \norm{(T_{K^\infty} - T_n^s)r_s}_2 \]
and furthermore
\[ \norm{r_t - \exp(-T_{K^\infty}t)r_0} \leq t \sup_{s \leq t} \norm{(T_{K^\infty} - T_n^s)r_s}_2  \]
Well the conditions on $m$ in the hypothesis suffice to apply Lemma \ref{lem:tminustnres} with $\Gamma = e^2$ ensure that with probability at least $1 - \delta$
\[\sup_{s \leq t} \norm{(T_{K^\infty} - T_n^s)r_s}_2 = \Tilde{O}\parens{S \frac{\sqrt{d}}{\sqrt{m}}\brackets{1 + t S} + \frac{S (1 + T) \sqrt{\sigma_1 \kappa p}}{\sqrt{n}}}.\]
Since $\kappa$ and $\sigma_1$ only depend on $K^\infty$ which is fixed we will treat them as constants for simplicity of presentation of the main result (note that they were tracked in all previous results for anyone interested in the specific constants).  The desired result follows from plugging in the above expression into the previous bounds after setting $\sigma_1$ and $\kappa$ as constants.
\end{proof}
Theorem \ref{thm:mainunderparam} is strong enough to get a bound on the test error, which is demonstrated by the following corollary.
\underparamcor*
\begin{proof}
Set $t = \frac{\log(\sqrt{2}\norm{P_k f^*}_2 / \epsilon^{1/2})}{\sigma_k}$.  Note that
\begin{gather*}
\frac{1}{2}\norm{r_t}_2^2 \leq \frac{1}{2}\brackets{\norm{\exp(-T_{K^\infty}t)r_0}_2 + \norm{r_t - \exp(-T_{K^\infty}t)r_0}_2}^2 \\
\leq 2 \max\{\norm{\exp(-T_{K^\infty}t)r_0}_2, \norm{r_t - \exp(-T_{K^\infty}t)r_0}_2\}^2 \\
\leq 2 \brackets{\norm{\exp(-T_{K^\infty}t)r_0}_2^2 + \norm{r_t - \exp(-T_{K^\infty}t)r_0}_2^2}.
\end{gather*}
Note that
\begin{gather*}
\norm{\exp(-T_{K^\infty}t)r_0}_2^2 = \norm{\exp(-T_{K^\infty}t)f^*}_2^2 = \sum_{i = 1}^\infty \exp(-2\sigma_it) |\langle f^*, \phi_i \rangle_2|^2 \\
\leq \exp(-2\sigma_k t)\sum_{i = 1}^k |\langle f^*, \phi_i \rangle_2|^2 + \sum_{i = k + 1}^\infty |\langle f^*, \phi_i\rangle_2|^2 \\
= \frac{\epsilon}{2}  + \norm{(I - P_k)f^*}_2^2.
\end{gather*}
We want to apply Theorem \ref{thm:mainunderparam} with $T = t$.  We need \[ m \geq D^2 \norm{y}_{\RR^n}^2 
t^2 \text{    and    } m \geq O(\log(c/\delta) + \Tilde{O}(d)) \max\{t^2, 1\}. \]
Note that since $f^* = O(1)$ we have that $\norm{\hat{r}(0)}_{\RR^n} = \norm{y}_{\RR^n} = O(1)$.  Then 
\[ D^2 \norm{y}_{\RR^n}^2 
t^2 = \Tilde{O}\parens{t^2} = \Tilde{O}\parens{\frac{1}{\sigma_k^2}}. \]
thus our condition on $m$ is strong enough to satisfy the first condition.  Also 
$O(\log(c/\delta) + \Tilde{O}(d)) \max\{t^2, 1\} = \Tilde{O}(d t^2)$
which is satisfied by our condition on $m$.  Thus by an application of Theorem \ref{thm:mainunderparam} with $T = t$ we have with probability at least $1 - \delta$
\[ \norm{r_t - \exp(-T_{K^\infty}t)r_0}_2 \leq t  \Tilde{O}\parens{\norm{f^*}_\infty \brackets{1 + t \norm{f^*}_\infty}\frac{\sqrt{d}}{\sqrt{m}} + \norm{f^*}_\infty (1 + t) \frac{\sqrt{p}}{\sqrt{n}}} . \]
Recall that $f^* = O(1)$.  Thus the first term above is
\[ \Tilde{O}\parens{t^2 \frac{\sqrt{d}}{\sqrt{m}}} = \Tilde{O}\parens{\frac{\sqrt{d}}{\sigma_k^2 \sqrt{m}}}. \]
Thus setting $m = \Tilde{\Omega}(\frac{d}{\epsilon \sigma_k^4})$ suffices to ensure the first term is bounded by $\epsilon^{1/2} / (2\sqrt{2})$.  Similarly the second term is
\[ \Tilde{O}\parens{\frac{t^2 \sqrt{p}}{\sqrt{n}}} = \Tilde{O}\parens{\frac{ \sqrt{p}}{\sigma_k^2 \sqrt{n}}}.  \]
Thus setting $n = \Tilde{\Omega}\parens{\frac{p}{\sigma_k^4 \epsilon}}$
suffices to ensure that the second term bounded by $\epsilon^{1/2} / (2\sqrt{2})$.  Thus in this case we have
\[ \norm{r_t - \exp(-T_{K^\infty}t)r_0}_2 \leq \frac{\epsilon^{1/2}}{\sqrt{2}}. \]
Thus we have 
\[\frac{1}{2}\norm{r_t}_2^2 \leq 2 \brackets{\norm{\exp(-T_{K^\infty}t)r_0}_2^2 + \norm{r_t - \exp(-T_{K^\infty}t)r_0}_2^2}  \leq 2 \epsilon + 2 \norm{(I - P_k)f^*}_2^2. \]
\end{proof}
\subsection{Deterministic Initialization}\label{sec:deterinit}
In this section we will prove a version of Theorem \ref{thm:mainunderparam} where instead of $\theta_0$ being chosen randomly we take $\theta_0$ to be some deterministic value.  $\theta_0$ could represent the parameters given by the output of some pretraining procedure that is independent of the training data, or selected with a priori knowledge.

\begin{lem}\label{lem:determparamnormbd}
Let $\theta_0$ be a fixed parameter initialization.  Let $\Gamma > 1$, $T > 0$, $D := 3 \max\{ |\sigma(0)|, M \norm{\sigma'}_{\infty}, \norm{\sigma'}_{\infty}, 1\}$,
\[\xi(t) =  \max\{\frac{1}{\sqrt{m}}\norm{W(t)}_{op}, \frac{1}{\sqrt{m}} \norm{b(t)}_2, \frac{1}{\sqrt{m}}\norm{a(t)}_2, 1\}, \]
\[ \Tilde{\xi}(t) = \max\{\max_{\ell \in [m]}\norm{w_\ell(t)}_2, \norm{a(t)}_\infty, \norm{b(t)}_\infty, 1\} . \]
Furthermore assume
\[ m \geq \frac{D^2 \norm{\hat{r}(0)}_{\RR^n}^2 
T^2}{[\log(\Gamma)]^2}. \]
Then
\[ \max_{t \in [0, T]} \xi(t) \leq \Gamma \xi(0) \quad \max_{t \in [0, T]} \Tilde{\xi}(t) \leq \Gamma \Tilde{\xi}(0).  \]
\end{lem}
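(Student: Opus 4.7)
The plan is to reduce this directly to the already-proven Lemmas \ref{lem:param_op_norm_bound} and \ref{lem:param_infty_norm_bound}, which both give the Gr\"onwall-type bounds
\[ \xi(t) \leq \exp\!\left(\tfrac{D}{\sqrt{m}} \norm{\hat{r}(0)}_{\RR^n}\, t\right)\xi(0), \qquad \Tilde{\xi}(t) \leq \exp\!\left(\tfrac{D}{\sqrt{m}} \norm{\hat{r}(0)}_{\RR^n}\, t\right)\Tilde{\xi}(0), \]
for any initial conditions. Crucially, those lemmas were proven without any randomness on $\theta_0$: the argument used only the flow equations $\partial_t a$, $\partial_t W$, $\partial_t b$ and Lemma \ref{lem:res_weighted}. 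So specializing them to a deterministic $\theta_0$ is immediate and deterministic.

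Given the bounds above, it suffices to ensure that for all $t \leq T$,
\[ \exp\!\left(\tfrac{D}{\sqrt{m}}\norm{\hat{r}(0)}_{\RR^n}\, T\right) \leq \Gamma, \]
which, after taking logarithms, is equivalent to $\tfrac{D\, \norm{\hat{r}(0)}_{\RR^n}\, T}{\sqrt{m}} \leq \log(\Gamma)$. This is exactly the hypothesis $m \geq D^2 \norm{\hat{r}(0)}_{\RR^n}^2 T^2 / [\log(\Gamma)]^2$ rearranged.

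Thus the proof is a two-line deduction: invoke Lemmas \ref{lem:param_op_norm_bound} and \ref{lem:param_infty_norm_bound}, then substitute the lower bound on $m$ into the exponent to conclude $\max_{t \leq T} \xi(t) \leq \Gamma \xi(0)$ and $\max_{t \leq T} \Tilde{\xi}(t) \leq \Gamma \Tilde{\xi}(0)$. There is no real obstacle; the whole point of separating out this lemma is to make explicit that, unlike in Lemma \ref{lem:xiandtildexigammabd}, no high-probability control of $\norm{f(\cdot;\theta_0)}_\infty$ via Lemma \ref{lem:fatinitbd} is needed when $\theta_0$ is fixed, since $\norm{\hat{r}(0)}_{\RR^n}$ can be treated as a given deterministic quantity and used directly in the hypothesis on $m$. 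This matches the intended application where $\theta_0$ is, e.g., the output of pretraining or selected by prior knowledge.
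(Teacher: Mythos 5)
Your proposal is correct and is essentially identical to the paper's proof: the paper also just notes that the hypothesis on $m$ gives $\frac{D\norm{\hat{r}(0)}_{\RR^n} t}{\sqrt{m}} \leq \log\Gamma$ for $t \leq T$ and then invokes Lemmas \ref{lem:param_op_norm_bound} and \ref{lem:param_infty_norm_bound}, which hold for arbitrary (hence deterministic) initial conditions. Your added remark about why no high-probability control is needed here is accurate but not part of the argument itself.
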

\begin{proof}
By the hypothesis on $m$ we have that for $t \leq T$
\[ \frac{D \norm{\hat{r}(0)}_{\RR^n} t}{\sqrt{m}} \leq \log \Gamma. \]
Therefore by Lemmas \ref{lem:param_op_norm_bound} and \ref{lem:param_infty_norm_bound} the desired result holds.
\end{proof}

\begin{lem}\label{lem:determnnclassinfinity}
Let $\theta_0$ be a fixed parameter initialization.  Let $\Gamma > 1$, $T > 0$, $D := 3 \max\{ |\sigma(0)|, M \norm{\sigma'}_{\infty}, \norm{\sigma'}_{\infty}, 1\}$,
\[\xi(t) =  \max\{\frac{1}{\sqrt{m}}\norm{W(t)}_{op}, \frac{1}{\sqrt{m}} \norm{b(t)}_2, \frac{1}{\sqrt{m}}\norm{a(t)}_2, 1\}, \]
and assume
\[ m \geq \frac{D^2 \norm{\hat{r}(0)}_{\RR^n}^2 
T^2}{[\log(\Gamma)]^2}.  \]
Then for $t \leq T$
\[ \norm{f_t - f^*}_\infty \leq \norm{f_0 - f^*}_\infty + t \norm{\hat{r}(0)}_{\RR^n} C D^2 \Gamma^2 \xi(0)^2. \]
\end{lem}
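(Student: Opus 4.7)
The plan is to mirror the proof of Lemma~\ref{lem:nnclassinftybd}, but because $\theta_0$ is now deterministic we can drop all probabilistic concentration arguments and rely solely on the deterministic parameter norm bound just established in Lemma~\ref{lem:determparamnormbd}.

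First I would start from the pointwise evolution of the residual
\[
\partial_t \bigl( f_t(x) - f^*(x) \bigr) \;=\; -\tfrac{1}{n}\sum_{i=1}^n K_t(x, x_i)\,\hat r(t)_i,
\]
which gives, via the triangle inequality and Cauchy--Schwarz in the normalized inner product,
\[
\bigl| \partial_t r_t(x) \bigr| \;\le\; \|K_t\|_\infty \,\|\hat r(t)\|_{\RR^n} \;\le\; \|K_t\|_\infty \,\|\hat r(0)\|_{\RR^n},
\]
where the last step uses the usual monotonicity $\|\hat r(t)\|_{\RR^n}\le \|\hat r(0)\|_{\RR^n}$ from gradient flow on the squared loss. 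Next, Lemma~\ref{lem:grad_bound} together with the Cauchy--Schwarz decomposition of $K_t(x,y)$ into inner products of the parameter gradients yields $\|K_t\|_\infty \le C D^2 \xi(t)^2$ for an absolute constant $C$.

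Now I invoke the deterministic bound of Lemma~\ref{lem:determparamnormbd}: under the standing hypothesis $m \ge D^2 \|\hat r(0)\|_{\RR^n}^2 T^2 / [\log \Gamma]^2$, we have $\max_{t\le T}\xi(t) \le \Gamma\, \xi(0)$. Plugging this into the previous bound gives, uniformly for all $t\le T$,
\[
\bigl| \partial_t r_t(x) \bigr| \;\le\; \|\hat r(0)\|_{\RR^n}\, C D^2 \Gamma^2 \,\xi(0)^2.
\]

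Finally, the fundamental theorem of calculus yields
\[
|r_t(x) - r_0(x)| \;\le\; \int_0^t |\partial_s r_s(x)|\, ds \;\le\; t\,\|\hat r(0)\|_{\RR^n}\, C D^2 \Gamma^2 \,\xi(0)^2,
\]
and taking the supremum over $x\in B_M$ then applying the triangle inequality $\|r_t\|_\infty \le \|r_0\|_\infty + \|r_t - r_0\|_\infty$ delivers the claim. There is no real obstacle here: the replacement of Lemma~\ref{lem:nnclassparambd} by Lemma~\ref{lem:determparamnormbd} is the only structural change, so all probabilistic bookkeeping present in the proof of Lemma~\ref{lem:nnclassinftybd} simply disappears and the argument becomes entirely deterministic.
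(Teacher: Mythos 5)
Your proposal is correct and follows essentially the same route as the paper's proof: bound $|\partial_t r_t(x)|$ by $\norm{K_t}_\infty \norm{\hat r(0)}_{\RR^n}$, control $\norm{K_t}_\infty \le C D^2 \xi(t)^2$ via Lemma~\ref{lem:grad_bound}, apply the deterministic bound $\max_{t\le T}\xi(t)\le \Gamma\,\xi(0)$ from Lemma~\ref{lem:determparamnormbd}, and integrate. The only cosmetic difference is that the paper concludes pointwise in $x$ before taking the supremum, whereas you take the supremum of $r_t - r_0$ first; the content is identical.
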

\begin{proof}
Recall that
\[ \partial_t (f_t(x) - f^*(x)) = -\frac{1}{n} \sum_{i = 1}^n K_t(x, x_i) (f_t(x_i) - f^*(x_i)) = -\frac{1}{n} \sum_{i = 1}^n K_t(x, x_i) \hat{r}(t)_i. \]
Thus
\[ |\partial_t(f_t(x) - f^*(x))| \leq \frac{1}{n} \sum_{i = 1}^n |K_t(x, x_i)| |\hat{r}(t)_i| \leq \norm{K_t}_\infty \norm{\hat{r}(t)}_{\RR^n} \leq \norm{K_t}_\infty \norm{\hat{r}(0)}_{\RR^n}.  \]
Well by Lemma \ref{lem:grad_bound} we have that $\norm{K_t}_\infty \leq C D^2 \xi^2(t)$.  Also by Lemma \ref{lem:determparamnormbd} we have that
\[ \max_{t \in [0, T]} \xi(t) \leq \Gamma \xi(0).  \]
Thus by the fundamental theorem of calculus for $t \leq T$
\begin{gather*}
|f_t(x) - f^*(x)| \leq |f_0(x) - f^*(x)| + t \norm{\hat{r}(0)}_{\RR^n} C D^2 \Gamma^2 \xi(0)^2.
\end{gather*}
Thus by taking the supremum over $x$ we get
\[ \norm{f_t - f^*}_\infty \leq \norm{f_0 - f^*}_\infty + t \norm{\hat{r}(0)}_{\RR^n} C D^2 \Gamma^2 \xi(0)^2 \]
which is the desired conclusion.
\end{proof}

\begin{lem}\label{lem:determtnresbd}
Let $\theta_0$ be a fixed parameter initialization.  Let $K_0$ denote the time-dependent NTK at initialization $\theta_0$.  Let $T_{K_0} h(\bullet) = \int_X K_0(\bullet, s) h(s) d\rho(s)$ and $T_n h(\bullet) = \frac{1}{n} \sum_{i = 1}^n K_0(\bullet, x_i) h(x_i)$ be the associated operators.  Let $\kappa = \max_x K_0(x, x)$ and let $\sigma_1$ denote the largest eigenvalue of $T_{K_0}$.  Let $\Gamma > 1$, $T > 0$, $D := 3 \max\{ |\sigma(0)|, M \norm{\sigma'}_{\infty}, \norm{\sigma'}_{\infty}, 1\}$,
\[\xi(0) =  \max\{\frac{1}{\sqrt{m}}\norm{W(0)}_{op}, \frac{1}{\sqrt{m}} \norm{b(0)}_2, \frac{1}{\sqrt{m}}\norm{a(0)}_2, 1\}, \]
and assume
\[ m \geq \frac{D^2 \brackets{\norm{f^*}_\infty + \norm{f_0}_\infty}^2 
T^2}{[\log(\Gamma)]^2}.  \]
Then with probability at least $1 - \delta$ over the sampling of $x_1, \ldots, x_n$ we have that
\[\sup_{t \leq T} \norm{(T_n - T_{K_0})r_t}_2 = \Tilde{O}\parens{\frac{(\norm{f^*}_\infty + \norm{f_0}_\infty)(1 + T \Gamma^2 \xi(0)^2) \sqrt{\sigma_1 \kappa p}}{\sqrt{n}}}. \]
\end{lem}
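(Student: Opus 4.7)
The plan is to mirror the proof of Lemma \ref{lem:tnresbd} (the random-initialization version) but with the simplification that $\theta_0$, and hence $K_0$, is now deterministic. Since $K_0$ is a fixed, continuous, symmetric, positive-definite kernel, Lemma \ref{lem:tnconcentration} applies with $K = K_0$ directly, and the only randomness we must control is that of the sampling $x_1,\dots,x_n$. Concretely, we need to (i) verify that the network stays in a class $\mathcal{C}$ whose parameter norms are controlled throughout $[0,T]$, (ii) produce an $L^\infty$ bound $S$ on $r_t$ for $t\le T$ so that $r_t$ lies in the translated class $\mathcal{C}'$ of Lemma \ref{lem:tnconcentration}, and (iii) plug everything into that lemma.

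First I would invoke Lemma \ref{lem:determparamnormbd} with the stated hypothesis on $m$. This gives $\max_{t\le T}\xi(t)\le \Gamma\xi(0)$, so setting $A := \Gamma\xi(0)$ controls $\tfrac{1}{\sqrt m}\|W(t)\|_{op},\tfrac{1}{\sqrt m}\|a(t)\|_2,\tfrac{1}{\sqrt m}\|b(t)\|_2\le A$ for all $t\le T$. Next I bound the bias via Lemma \ref{lem:nnclassbiasbd}: $|b_0(t)|\le |b_0(0)| + t\|\hat r(0)\|_{\mathbb{R}^n}$, and observe $\|\hat r(0)\|_{\mathbb{R}^n}\le \|r_0\|_\infty \le \|f^*\|_\infty + \|f_0\|_\infty$. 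Hence $|b_0(t)|\le B$ for $B := |b_0(0)| + T[\|f^*\|_\infty + \|f_0\|_\infty]$. So $f(\cdot;\theta_t)\in\mathcal{C}$ (the class of Corollary \ref{cor:classbd}) for all $t\le T$ with these parameters $A,B$.

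Second, I apply Lemma \ref{lem:determnnclassinfinity} to obtain the uniform bound
\[
\|r_t\|_\infty \le \|r_0\|_\infty + t\,\|\hat r(0)\|_{\mathbb{R}^n}\, CD^2\Gamma^2\xi(0)^2 \le (\|f^*\|_\infty+\|f_0\|_\infty)\bigl[1 + T\,CD^2\Gamma^2\xi(0)^2\bigr] =: S
\]
valid for all $t\le T$. This shows $r_t\in\mathcal{C}' := \{g-f^* : g\in\mathcal{C}\}\cap\{h:\|h\|_\infty\le S\}$ for every $t\in[0,T]$.

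Finally I apply Lemma \ref{lem:tnconcentration} with $K=K_0$, covering parameters $A,B$ as above, and the bound $S$. Since the covering number of $\mathcal{C}$ in Corollary \ref{cor:classbd} scales polynomially in $A$ and $B$, the resulting logarithmic term is $\tilde O(p)$, and the lemma yields with probability at least $1-\delta$ over $x_1,\dots,x_n$
\[
\sup_{t\le T}\|(T_n - T_{K_0})r_t\|_2 \le \frac{2 + 2S\sqrt{\sigma_1\kappa}\sqrt{2\log(c/\delta)+\tilde O(p)}}{\sqrt n} = \tilde O\!\left(\frac{(\|f^*\|_\infty+\|f_0\|_\infty)(1+T\Gamma^2\xi(0)^2)\sqrt{\sigma_1\kappa p}}{\sqrt n}\right),
\]
which is the claimed bound. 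Since $\theta_0$ is deterministic, there is no extra event needed beyond the concentration over $x_1,\dots,x_n$; the only mildly delicate point is collecting the dependence on $A,B$ into a factor $\tilde O(p)$ inside the square root, which is routine given the explicit form of $\Psi(m,d)$ in Corollary \ref{cor:classbd}. There is no serious obstacle.
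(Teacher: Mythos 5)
Your proposal is correct and follows essentially the same route as the paper: bound $\|\hat r(0)\|_{\RR^n}$ by $\|f^*\|_\infty+\|f_0\|_\infty$, use Lemma \ref{lem:determparamnormbd} and Lemma \ref{lem:nnclassbiasbd} to place the network in the class $\mathcal{C}$ of Corollary \ref{cor:classbd} with parameters $A=\Gamma\xi(0)$ and $B$, use Lemma \ref{lem:determnnclassinfinity} to get the $L^\infty$ bound $S$ on $r_t$, and then apply Lemma \ref{lem:tnconcentration} with $K=K_0$. No gaps; this matches the paper's argument step for step.
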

\begin{proof}
First note that
\begin{equation}\label{eq:determrhatbd}
\norm{\hat{r}(0)}_{\RR^n} \leq \norm{r_0}_\infty \leq \norm{f^*}_\infty + \norm{f_0}_\infty.
\end{equation}

Thus our hypothesis on $m$ is strong enough to apply Lemma \ref{lem:determparamnormbd} so that we have
\begin{equation}\label{eq:determxitotbd2}
\max_{t \in [0, T]} \xi(t) \leq \Gamma \xi(0) =: A.   
\end{equation}
Also by Lemma \ref{lem:nnclassbiasbd}
\[ |b_0(t)| \leq |b_0(0)| + t \norm{\hat{r}(0)}_{\RR^n},\]
therefore
\[ \max_{t \leq T} |b_0(t)| \leq |b_0(0)| + T \norm{\hat{r}(0)}_{\RR^n} := B. \]
Thus up until time $T$ the neural network is in class $\mathcal{C}$ as defined in Corollary \ref{cor:classbd} with parameters $A$ and $B$ as defined above.
Furthermore by Lemma \ref{lem:determnnclassinfinity} we have
\[ \norm{f_t - f^*}_\infty \leq \norm{f_0 - f^*}_\infty + t \norm{\hat{r}(0)}_{\RR^n} C D^2 \Gamma^2 \xi(0)^2. \]
Well then by (\ref{eq:determrhatbd}) and the above we have
\[ \norm{r_t}_\infty \leq (\norm{f^*}_\infty + \norm{f_0}_\infty) \braces{1 + T C D^2 \Gamma^2 \xi(0)^2} =: S.\]
Thus by an application of Lemma \ref{lem:tnconcentration} with $K = K_0$ we have with probability at least $1 - \delta$ over the sampling of $x_1, \ldots, x_n$ that
\begin{gather*}
\sup_{t \leq T} \norm{(T_n - T_{K_0})r_t}_2 \leq \frac{2\brackets{1 + S \sqrt{\sigma_1 \kappa}\sqrt{2 \log(c/\delta) + \Tilde{O}(p)}}}{\sqrt{n}} \\
= \Tilde{O}\parens{\frac{(\norm{f^*}_\infty + \norm{f_0}_\infty)(1 + T \Gamma^2 \xi(0)^2) \sqrt{\sigma_1 \kappa p}}{\sqrt{n}}}
\end{gather*}
where we have used that $S = \Tilde{O}([\norm{f^*}_\infty + \norm{f_0}_\infty][1 + T \Gamma^2 \xi(0)^2])$.
\end{proof}

\begin{lem}\label{lem:deterkerdev}
Let $\theta_0$ be a fixed parameter initialization.  Let $\Gamma > 1$, $T > 0$, $D := 3 \max\{ |\sigma(0)|, M \norm{\sigma'}_{\infty}, \norm{\sigma'}_{\infty}, 1\}$, $D' := \brackets{\max\{\norm{\sigma'}_\infty, \norm{\sigma''}_\infty\}^2 [M^2 + 1] + D \norm{\sigma'}_\infty} \max\{1, M\},$
\[\xi(t) =  \max\{\frac{1}{\sqrt{m}}\norm{W(t)}_{op}, \frac{1}{\sqrt{m}} \norm{b(t)}_2, \frac{1}{\sqrt{m}}\norm{a(t)}_2, 1\}, \]
\[ \Tilde{\xi}(t) = \max\{\max_{\ell \in [m]}\norm{w_\ell(t)}_2, \norm{a(t)}_\infty, \norm{b(t)}_\infty, 1\} . \]
Furthermore assume
\[ m \geq \frac{D^2 \norm{\hat{r}(0)}_{\RR^n}^2 
T^2}{[\log(\Gamma)]^2}. \]
Then for $t \leq T$
\[ \norm{K_0 - K_t}_\infty \leq t \frac{C DD'}{\sqrt{m}} \Gamma^3 \xi(0)^2 \Tilde{\xi}(0) \norm{\hat{r}(0)}_{\RR^n}. \]
\end{lem}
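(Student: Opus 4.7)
The plan is to imitate the structure of Theorem~\ref{thm:ntkdevbd}, stripping out the concentration steps that were needed only to control the random initialization, since here $\theta_0$ is deterministic. Concretely, I will combine the pointwise time-derivative bound on $K_t$ from Lemma~\ref{lem:ker_deriv} with the parameter-norm control from Lemma~\ref{lem:determparamnormbd}, and integrate in $t$.

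First I would invoke Lemma~\ref{lem:ker_deriv}, which gives, at every time $s$,
\[ \sup_{(x,y) \in B_M \times B_M} |\partial_s K_s(x,y)| \leq \frac{CDD'}{\sqrt{m}} \xi(s)^2 \Tilde{\xi}(s) \norm{\hat{r}(s)}_{\RR^n}. \]
Next I would bound each of the three factors uniformly for $s \leq T$. The hypothesis $m \geq D^2 \norm{\hat{r}(0)}_{\RR^n}^2 T^2 / [\log(\Gamma)]^2$ is precisely what Lemma~\ref{lem:determparamnormbd} requires, so applying it yields $\xi(s) \leq \Gamma \xi(0)$ and $\Tilde{\xi}(s) \leq \Gamma \Tilde{\xi}(0)$ for all $s \leq T$. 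For the residual factor, gradient flow gives $\norm{\hat{r}(s)}_{\RR^n} \leq \norm{\hat{r}(0)}_{\RR^n}$. Combining these,
\[ \sup_{(x,y) \in B_M \times B_M} |\partial_s K_s(x,y)| \leq \frac{CDD'}{\sqrt{m}} \Gamma^3 \xi(0)^2 \Tilde{\xi}(0) \norm{\hat{r}(0)}_{\RR^n} \]
uniformly in $s \leq T$.

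Finally, for any $(x,y) \in B_M \times B_M$ and any $t \leq T$, by the fundamental theorem of calculus
\[ |K_t(x,y) - K_0(x,y)| \leq \int_0^t |\partial_s K_s(x,y)| \, ds \leq t \cdot \frac{CDD'}{\sqrt{m}} \Gamma^3 \xi(0)^2 \Tilde{\xi}(0) \norm{\hat{r}(0)}_{\RR^n}, \]
and taking the supremum over $(x,y)$ yields the desired bound on $\norm{K_0 - K_t}_\infty$.

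The proof is essentially routine once the earlier lemmas are in place; there is no genuine obstacle. The only point that requires a little care is verifying that the single hypothesis on $m$ is strong enough to invoke Lemma~\ref{lem:determparamnormbd}, which is immediate since that lemma has exactly the same hypothesis. Note in particular that, unlike Theorem~\ref{thm:ntkdevbd}, no subgaussian concentration is needed for $\xi(0)$ or $\Tilde{\xi}(0)$ because $\theta_0$ is fixed, so the bound is genuinely deterministic.
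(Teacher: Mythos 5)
Your proposal is correct and matches the paper's own proof essentially line for line: invoke Lemma~\ref{lem:ker_deriv} for the derivative bound, control $\xi(s)$, $\Tilde{\xi}(s)$ via Lemma~\ref{lem:determparamnormbd} under the stated hypothesis on $m$, use $\norm{\hat{r}(s)}_{\RR^n} \leq \norm{\hat{r}(0)}_{\RR^n}$ from gradient flow, and integrate with the fundamental theorem of calculus. Your added observation that no concentration is needed because $\theta_0$ is deterministic is also consistent with the paper's treatment.
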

\begin{proof}
Note by Lemma \ref{lem:ker_deriv} we have that
\[ \sup_{x, y \in B_M \times B_M}|\partial_t K_t(x, y)| \leq \frac{C DD'}{\sqrt{m}} \xi(t)^2 \Tilde{\xi}(t) \norm{\hat{r}(t)}_{\RR^n} . \]
Now applying Lemma \ref{lem:determparamnormbd} and the fact that $\norm{\hat{r}(t)}_{\RR^n} \leq \norm{\hat{r}(0)}_{\RR^n}$ from the above we get that for $t \leq T$
\[ \sup_{x, y \in B_M \times B_M}|\partial_t K_t(x, y)| \leq \frac{C DD'}{\sqrt{m}} \Gamma^3 \xi(0)^2 \Tilde{\xi}(0) \norm{\hat{r}(0)}_{\RR^n} . \]
Thus by the fundamental theorem of calculus we have that for $t \leq T$
\[ \norm{K_0 - K_t}_\infty \leq t \frac{C DD'}{\sqrt{m}} \Gamma^3 \xi(0)^2 \Tilde{\xi}(0) \norm{\hat{r}(0)}_{\RR^n}. \]
\end{proof}

\begin{restatable}{theo}{deterunderparam}\label{thm:deterunderparam}
Let $\theta_0$ be a fixed parameter initialization.  Assume that Assumption \ref{ass:compactdomain} holds.  Let $\{\phi_i\}_i$ denote the eigenfunctions of $T_{K_0}$ corresponding to the nonzero eigenvalues, which we enumerate $\sigma_1 \geq \sigma_2 \geq \cdots$.  Let $P_k$ be the orthogonal projection in $L^2$ onto $\text{span}\{\phi_1, \ldots, \phi_k\}$ and let $D := 3 \max\{ |\sigma(0)|, M \norm{\sigma'}_{\infty}, \norm{\sigma'}_{\infty}, 1\}$.   Also let $T > 0$ and set
\[\xi(0) =  \max\{\frac{1}{\sqrt{m}}\norm{W(0)}_{op}, \frac{1}{\sqrt{m}} \norm{b(0)}_2, \frac{1}{\sqrt{m}}\norm{a(0)}_2, 1\}, \]
\[ \Tilde{\xi}(0) = \max\{\max_{\ell \in [m]}\norm{w_\ell(0)}_2, \norm{a(0)}_\infty, \norm{b(0)}_\infty, 1\} . \]
\[ S:=  \Tilde{O}([\norm{f^*}_\infty + \norm{f_0}_\infty][1 + T \xi(0)^2]). \] Assume
\[ m \geq D^2 \brackets{\norm{f^*}_\infty + \norm{f_0}_\infty}^2 
T^2.  \]
Then with probability at least $1 - \delta$ over the sampling of $x_1, \ldots, x_n$ we have that for all $t \leq T$ and $k \in \NN$
\begin{gather*}
\norm{P_k(r_t - \exp(-T_{K_0}t)r_0)}_2
\leq \frac{1 - \exp(-\sigma_k t)}{\sigma_k} \Tilde{O}\parens{\frac{t}{\sqrt{m}} \xi(0)^2 \Tilde{\xi}(0) \norm{\hat{r}(0)}_{\RR^n}^2 + \frac{S \sqrt{\sigma_1 \kappa p}}{\sqrt{n}}}
\end{gather*}
and
\[ \norm{r_t - \exp(-T_{K_0} t)r_0}_2 \leq t  \Tilde{O}\parens{\frac{t}{\sqrt{m}} \xi(0)^2 \Tilde{\xi}(0) \norm{\hat{r}(0)}_{\RR^n}^2 + \frac{S \sqrt{\sigma_1 \kappa p}}{\sqrt{n}}}. \]
\end{restatable}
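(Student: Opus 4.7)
The plan is to mirror the proof of Theorem \ref{thm:mainunderparam} but with $K^\infty$ replaced by $K_0$ throughout, exploiting the fact that the parameter-initialization randomness has been removed (so the only probabilistic statement is uniform convergence over the sampled training inputs). The deterministic versions of the auxiliary lemmas (\ref{lem:determparamnormbd}, \ref{lem:determnnclassinfinity}, \ref{lem:determtnresbd}, \ref{lem:deterkerdev}) are already in place; the task is essentially to combine them via the damped deviations identity.

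First I would apply Lemma \ref{lem:dampeddevfunc} with $K = K_0$ (which is continuous, symmetric, and positive-definite by construction, and whose induced integral operator has a Mercer decomposition as developed in Section \ref{sec:rkhsandmercer}) to write
\[ r_t = \exp(-T_{K_0} t) r_0 + \int_0^t \exp(-T_{K_0}(t - s))(T_{K_0} - T_n^s) r_s \, ds. \]
Projecting onto the top-$k$ eigenspace and taking norms as in Lemma \ref{lem:kerdiffrecipe} (which holds for any such kernel, not just $K^\infty$) yields
\[ \norm{P_k(r_t - \exp(-T_{K_0}t) r_0)}_2 \leq \tfrac{1 - \exp(-\sigma_k t)}{\sigma_k} \sup_{s \leq t}\norm{(T_{K_0} - T_n^s) r_s}_2, \]
and analogously $\norm{r_t - \exp(-T_{K_0}t) r_0}_2 \leq t \sup_{s \leq t}\norm{(T_{K_0} - T_n^s) r_s}_2$. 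Both desired bounds then reduce to controlling $\sup_{s \leq t}\norm{(T_{K_0} - T_n^s) r_s}_2$.

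Next I would split this quantity via (the $K = K_0$ specialization of) Lemma \ref{lem:opdevbd}:
\[ \sup_{s \leq T}\norm{(T_{K_0} - T_n^s) r_s}_2 \leq \sup_{s \leq T} \norm{(T_{K_0} - T_n) r_s}_2 + \sup_{s \leq T}\norm{K_0 - K_s}_\infty \cdot \norm{\hat{r}(0)}_{\RR^n}. \]
Since $\norm{\hat{r}(0)}_{\RR^n} \leq \norm{r_0}_\infty \leq \norm{f^*}_\infty + \norm{f_0}_\infty$, the hypothesis $m \geq D^2(\norm{f^*}_\infty + \norm{f_0}_\infty)^2 T^2$ is (after absorbing a constant $\Gamma = e^2$) strong enough to trigger Lemmas \ref{lem:determparamnormbd}, \ref{lem:deterkerdev}, and \ref{lem:determtnresbd}. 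Applying Lemma \ref{lem:deterkerdev} bounds the kernel-deviation term by $\frac{Ct}{\sqrt{m}} \xi(0)^2 \tilde{\xi}(0) \norm{\hat{r}(0)}_{\RR^n}$, so after multiplying by $\norm{\hat{r}(0)}_{\RR^n}$ we obtain a contribution of order $\tfrac{t}{\sqrt{m}} \xi(0)^2 \tilde{\xi}(0) \norm{\hat{r}(0)}_{\RR^n}^2$. Separately, Lemma \ref{lem:determtnresbd} controls the operator-discretization term by $\tilde{O}\bigl(\tfrac{S \sqrt{\sigma_1 \kappa p}}{\sqrt{n}}\bigr)$ with probability at least $1-\delta$ over $x_1, \ldots, x_n$, where $S$ is exactly the bound on $\norm{r_s}_\infty$ produced by Lemma \ref{lem:determnnclassinfinity}.

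Summing the two contributions and feeding the result back into the damped deviations inequalities produces both displayed bounds in the statement. There is no real obstacle beyond bookkeeping: the deterministic initialization actually simplifies the argument relative to Theorem \ref{thm:mainunderparam} since we no longer need the concentration results of Appendix \ref{sec:ntkdevs} that controlled $\norm{K_0 - K^\infty}_\infty$, nor Lemmas \ref{lem:xi_bd}, \ref{lem:xi_tilde_bd}, and \ref{lem:fatinitbd} that converted $\xi(0), \tilde{\xi}(0), \norm{f_0}_\infty$ into explicit high-probability quantities. The mildly delicate point is checking that the single hypothesis on $m$ subsumes the hypotheses of all three deterministic lemmas invoked (which it does, via the $\norm{\hat r(0)}_{\RR^n} \leq \norm{f^*}_\infty + \norm{f_0}_\infty$ inequality and the choice $\Gamma = e^2$), and keeping track of $\xi(0), \tilde{\xi}(0)$ as free parameters in the final bound rather than replacing them with concentrated estimates.
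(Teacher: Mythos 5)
Your proposal is correct and follows essentially the same route as the paper's proof: the damped deviations identity of Lemma \ref{lem:dampeddevfunc} specialized to $K=K_0$ via Lemma \ref{lem:kerdiffrecipe}, the split in Lemma \ref{lem:opdevbd}, and then Lemmas \ref{lem:deterkerdev} and \ref{lem:determtnresbd} (themselves resting on Lemmas \ref{lem:determparamnormbd} and \ref{lem:determnnclassinfinity}) to bound the two resulting terms, with the hypothesis on $m$ verified through $\norm{\hat r(0)}_{\RR^n} \leq \norm{f^*}_\infty + \norm{f_0}_\infty$. The only cosmetic difference is your choice $\Gamma = e^2$ where the paper takes $\Gamma = e$; both satisfy the lemma hypotheses under the stated condition on $m$ and are absorbed into the $\Tilde{O}$.
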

\begin{proof}
By Lemma \ref{lem:kerdiffrecipe} we have for any $k \in \NN$
\[ \norm{P_k(r_t - \exp(-T_{K_0}t)r_0)}_2 \leq \frac{1 - \exp(-\sigma_k t)}{\sigma_k} \sup_{s \leq t} \norm{(T_{K_0} - T_n^s)r_s}_2 \]
and furthermore
\[ \norm{r_t - \exp(-T_{K_0}t)r_0} \leq t \sup_{s \leq t} \norm{(T_{K_0} - T_n^s)r_s}_2. \]
Let $T_n h(\bullet) = \frac{1}{n} \sum_{i = 1}^n K_0(\bullet, x_i) h(x_i)$ be the discretization of $T_{K_0}$.  Thus by Lemma \ref{lem:opdevbd} we have
\[ \sup_{s \leq t} \norm{(T_{K_0} - T_n^s) r_s}_2 \leq  \sup_{s \leq t} \norm{(T_{K_0} - T_n)r_s}_2 +  \sup_{s \leq t} \norm{K_0 - K_s}_\infty \norm{\hat{r}(0)}_{\RR^n}. \]
Note from the inequality
\[ \norm{\hat{r}(0)}_{\RR^n} \leq \norm{r_0}_\infty \leq \norm{f^*}_\infty + \norm{f_0}_\infty \]
the hypothesis on $m$ is strong enough to apply Lemma \ref{lem:deterkerdev} with $\Gamma = e$.  
Well then by an application of Lemma \ref{lem:deterkerdev} with $\Gamma = e$ we have that
\[ \sup_{s \leq t} \norm{K_s - K_0}_\infty = \Tilde{O}\parens{\frac{t}{\sqrt{m}} \xi(0)^2 \Tilde{\xi}(0) \norm{\hat{r}(0)}_{\RR^n}}. \]
Separately by Lemma \ref{lem:determtnresbd} we have with probability at least $1 - \delta$
 \[\sup_{s \leq t} \norm{(T_{K_0} - T_n)r_s}_2 = \Tilde{O}\parens{\frac{(\norm{f^*}_\infty + \norm{f_0}_\infty)(1 + T \xi(0)^2) \sqrt{\sigma_1 \kappa p}}{\sqrt{n}}}. \]
Combining these results we get that
\[ \sup_{s \leq t} \norm{(T_{K_0} - T_n^s) r_s}_2 \leq \Tilde{O}\parens{\frac{t}{\sqrt{m}} \xi(0)^2 \Tilde{\xi}(0) \norm{\hat{r}(0)}_{\RR^n}^2 + \frac{S \sqrt{\sigma_1 \kappa p}}{\sqrt{n}}}. \]
The desired result follows from plugging in the above expression into the previous bounds.
\end{proof}

\section{Damped Deviations on Training Set}\label{sec:dampedontrain}
The damped deviations lemma for the training set is incredibly simple to prove and yet is incredibly powerful as we will see later.  Here is the proof.
\reseqnlem*

\begin{proof}
Note that we have the equation
\[ \partial_t \hat{r}_t = -G_t \hat{r}_t = -G \hat{r}_t + (G - G_t) \hat{r}_t. \]
Thus by multiplying by the integrating factor $\exp(G t)$ and using the fact that $\exp(G t)$ and $G$ commute we have that
\[ \partial_t \exp(G t) \hat{r}_t = \exp(G t) (G - G_t) \hat{r}_t. \]
Therefore by the fundamental theorem of calculus
\[ \exp(G t) \hat{r}_t - \hat{r}_0 = \int_0^t \exp(G s) (G - G_s) \hat{r}_s ds, \]
which after rearrangement gives
\[ \hat{r}_t = \exp(-G t) \hat{r}_0 + \int_0^t \exp(-G(t - s)) (G - G_s) \hat{r}_s ds.   \] 
\end{proof}
Throughout we will let $u_1, \ldots, u_n$ denote the eigenvectors of $G^\infty$ with corresponding eigenvalues $\lambda_1, \ldots, \lambda_n$, normalized to have unit norm in $\norm{\bullet}_{\RR^n}$, i.e. $\norm{u_i}_{\RR^n} = 1$.  
The following corollary demonstrates that if one is only interested in approximating the top eigenvectors, then the deviations of the $NTK$ only need to be small relative to the cutoff eigenvalue $\lambda_i$ that you care about.
\begin{cor}\label{cor:rhatbd}
Let $P_k$ be the orthogonal projection onto $\text{span}\{u_1, \ldots, u_k\}$.  Then for any $k \in [n]$
\begin{gather*}
\norm{P_k(\hat{r}_t - \exp(-G^\infty t) \hat{r}_0)}_{\RR^n} \leq \sup_{s \leq t} \norm{G^\infty - G_s}_{op} \norm{\hat{r}_0}_{\RR^n} \frac{1 - \exp(-\lambda_k t)}{\lambda_k} \\
\leq \sup_{s \leq t} \norm{G^\infty - G_s}_{op} \norm{\hat{r}_0}_{\RR^n} t.
\end{gather*}
In particular
\[ \norm{\hat{r}_t - \exp(-G^\infty t) \hat{r}_0}_{\RR^n} \leq \sup_{s \leq t} \norm{G^\infty - G_s}_{\RR^n} \norm{\hat{r}_0}_{\RR^n} \frac{1 - \exp(-\lambda_n t)}{\lambda_n} \\
\leq \sup_{s \leq t} \norm{G^\infty - G_s}_{op} \norm{\hat{r}_0}_{\RR^n} t. \]
\end{cor}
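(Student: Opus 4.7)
The plan is to specialize Lemma \ref{lem:res_eqn_lem} to $G = G^\infty$, which gives
\[
\hat{r}_t - \exp(-G^\infty t)\hat{r}_0 \;=\; \int_0^t \exp(-G^\infty(t-s))(G^\infty - G_s)\hat{r}_s\, ds.
\]
I would apply $P_k$ to both sides, push it inside the integral, and then bound the $\RR^n$-norm by the triangle inequality under the integral sign. Since $P_k$ is the orthogonal projection onto the span of the top-$k$ eigenvectors of $G^\infty$, it commutes with $\exp(-G^\infty(t-s))$, so the integrand becomes $\exp(-G^\infty(t-s)) P_k (G^\infty - G_s) \hat{r}_s$.

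The key spectral step is that on $\mathrm{range}(P_k)$ the operator $\exp(-G^\infty(t-s))$ has eigenvalues $\exp(-\lambda_i(t-s))$ for $i \leq k$, each bounded by $\exp(-\lambda_k(t-s))$ since $\lambda_1 \geq \cdots \geq \lambda_k$. Therefore
\[
\norm{\exp(-G^\infty(t-s)) P_k v}_{\RR^n} \leq \exp(-\lambda_k(t-s)) \norm{v}_{\RR^n}
\]
for any $v \in \RR^n$. Combining this with the standard operator-norm bound $\norm{(G^\infty - G_s)\hat{r}_s}_{\RR^n} \leq \norm{G^\infty - G_s}_{op}\norm{\hat{r}_s}_{\RR^n}$ and the gradient-flow monotonicity $\norm{\hat{r}_s}_{\RR^n} \leq \norm{\hat{r}_0}_{\RR^n}$ (which is mentioned and used in the paragraph following Lemma \ref{lem:res_eqn_lem}), I would obtain
\[
\norm{P_k(\hat{r}_t - \exp(-G^\infty t)\hat{r}_0)}_{\RR^n} \leq \sup_{s \leq t}\norm{G^\infty - G_s}_{op}\,\norm{\hat{r}_0}_{\RR^n} \int_0^t \exp(-\lambda_k(t-s))\, ds.
\]
Evaluating the integral yields $\frac{1 - \exp(-\lambda_k t)}{\lambda_k}$, and the loose second bound follows from the elementary inequality $1 - e^{-x} \leq x$ for $x \geq 0$.

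For the ``in particular'' statement, I would take $k = n$: since $\{u_1,\ldots,u_n\}$ is an orthonormal basis of $\RR^n$ in the $\langle\cdot,\cdot\rangle_{\RR^n}$ inner product, $P_n$ is the identity, and the same argument gives the bound with $\lambda_n$ in place of $\lambda_k$. There is no real obstacle here; the proof is essentially spectral calculus applied to the damped deviations identity. The only point requiring minor care is that all norms and inner products are the $\frac{1}{n}$-normalized ones, so the projection $P_k$, the operator norm $\norm{\cdot}_{op}$, and the spectral decomposition of $G^\infty$ all live in a consistent Euclidean structure and the normalization factors cancel cleanly.
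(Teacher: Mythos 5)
Your proposal is correct and follows essentially the same route as the paper: specialize Lemma \ref{lem:res_eqn_lem} to $G = G^\infty$, project with $P_k$ (the paper writes the factor as $\norm{P_k\exp(-G^\infty(t-s))}_{op} \leq \exp(-\lambda_k(t-s))$, which is the same spectral bound you use after commuting $P_k$ through the exponential), invoke $\norm{\hat{r}_s}_{\RR^n} \leq \norm{\hat{r}_0}_{\RR^n}$, integrate to get $\frac{1-\exp(-\lambda_k t)}{\lambda_k}$, and conclude the crude bound via $1-e^{-x}\leq x$ and the $k=n$ case via $P_n = I$.
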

\begin{proof}
Note by Lemma \ref{lem:res_eqn_lem} we have that
\[ \hat{r}_t - \exp(-G^\infty t) \hat{r}_0 = \int_0^t \exp(-G^\infty(t - s))(G^\infty - G_s) \hat{r}_s ds \]
Therefore for any $k \in [n]$
\begin{gather*}
P_k(\hat{r}_t - \exp(-G^\infty t) \hat{r}_0) = P_k \int_0^t \exp(-G^\infty(t - s))(G^\infty - G_s) \hat{r}_s ds \\
= \int_0^t P_k\exp(-G^\infty(t - s))(G^\infty - G_s) \hat{r}_s ds
\end{gather*}
Thus
\begin{gather*}
\norm{P_k(\hat{r}_t - \exp(-G^\infty t) \hat{r}_0)}_{\RR^n} = \norm{\int_0^t P_k\exp(-G^\infty(t - s))(G^\infty - G_s) \hat{r}_s ds}_{\RR^n} \\
\leq \int_0^t \norm{P_k\exp(-G^\infty(t - s))(G^\infty - G_s) \hat{r}_s}_{\RR^n} ds \\
\leq \int_0^t \norm{P_k\exp(-G^\infty(t - s))}_{op} \norm{G^\infty - G_s}_{op} \norm{\hat{r}_s}_{\RR^n} ds \\
\leq \int_0^t \exp(-\lambda_k(t - s)) \norm{G^\infty - G_s}_{op} \norm{\hat{r}_0}_{\RR^n} ds \\
\leq \sup_{s \leq t} \norm{G^\infty - G_s}_{op} \norm{\hat{r}_0}_{\RR^n} \int_0^t \exp(-\lambda_k(t - s)) ds \\
\leq \sup_{s \leq t} \norm{G^\infty - G_s}_{op} \norm{\hat{r}_0}_{\RR^n} \frac{1 - \exp(-\lambda_k t)}{\lambda_k} \\
\leq \sup_{s \leq t} \norm{G^\infty - G_s}_{op} \norm{\hat{r}_0}_{\RR^n} t
\end{gather*}
where we have used the inequality $1 + x \leq \exp(x)$ in the last inequality.  By specializing to the case $k = n$ since $\text{span}\{u_1, \ldots, u_n\} = \RR^n$ we have
\begin{gather*}
\norm{\hat{r}_t - \exp(-G^\infty t) \hat{r}_0}_{\RR^n} \leq 
\sup_{s \leq t} \norm{G^\infty - G_s}_{op} \norm{\hat{r}_0}_{\RR^n} \frac{1 - \exp(-\lambda_n t)}{\lambda_n} \\
\leq \sup_{s \leq t} \norm{G^\infty - G_s}_{op} \norm{\hat{r}_0}_{\RR^n} t.
\end{gather*}
This completes the proof.
\end{proof}

Theorem \ref{thm:mainunderparam} uses the concept of damped deviations to compare $r_t$ with $\exp(-T_{K^\infty} t) r_0$.  We can also prove the analogous statement on the training set that compares $\hat{r}_t$ to $\exp(-G^\infty t) \hat{r}_0$.  The following is the analog of Theorem \ref{thm:mainunderparam} on the training set.
\begin{theo}\label{thm:mainrhatbd}
Let $D := 3 \max\{ |\sigma(0)|, M \norm{\sigma'}_{\infty}, \norm{\sigma'}_{\infty}, 1\}$.  Also let $\Gamma > 1, T > 0$.  Furthermore assume 
\[ m \geq \frac{4 D^2 \norm{y}_{\RR^n}^2 
T^2}{[\log(\Gamma)]^2} \text{    and    } m \geq \max \braces{\frac{4 D^2 O(\log(c/\delta) + \Tilde{O}(d))T^2}{[\log(\Gamma)]^2}, O(\log(c/\delta) + \Tilde{O}(d))} \]
Let $P_k$ be the orthogonal projection onto $\text{span}\{u_1, \ldots, u_k\}$.  Then with probability at least $1 - \delta$ we have for any $k \in [n]$ and $t \leq T$
\begin{gather*}
\norm{P_k(\hat{r}_t - \exp(-G^\infty t) \hat{r}_0)}_{\RR^n} \leq \frac{1 - \exp(-\lambda_k t)}{\lambda_k} \norm{\hat{r}_0}_{\RR^n} \Tilde{O}\parens{\frac{\sqrt{d}}{\sqrt{m}}\brackets{1 + t\Gamma^3 \norm{\hat{r}(0)}_{\RR^n}}}  \\
\leq t \norm{\hat{r}_0}_{\RR^n} \Tilde{O}\parens{\frac{\sqrt{d}}{\sqrt{m}}\brackets{1 + t\Gamma^3 \norm{\hat{r}(0)}_{\RR^n}}} 
\end{gather*}
in particular
\begin{gather*}
\norm{\hat{r}_t - \exp(-G^\infty t) \hat{r}_0}_{\RR^n} \leq \frac{1 - \exp(-\lambda_n t)}{\lambda_n} \norm{\hat{r}_0}_{\RR^n} \Tilde{O}\parens{\frac{\sqrt{d}}{\sqrt{m}}\brackets{1 + t\Gamma^3 \norm{\hat{r}(0)}_{\RR^n}}}  \\
\leq t \norm{\hat{r}_0}_{\RR^n} \Tilde{O}\parens{\frac{\sqrt{d}}{\sqrt{m}}\brackets{1 + t\Gamma^3 \norm{\hat{r}(0)}_{\RR^n}}}.  
\end{gather*}
If one instead does the doubling trick the condition $\frac{4 D^2 O(\log(c/\delta) + \Tilde{O}(d))T^2}{[\log(\Gamma)]^2}$ can be removed from the hypothesis and the same conclusion holds.
\end{theo}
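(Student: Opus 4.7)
The plan is to combine Corollary~\ref{cor:rhatbd} with the operator norm bound on the NTK matrix deviations from Theorem~\ref{thm:ntkmatdev}. Corollary~\ref{cor:rhatbd} already packages the damped deviations equation (Lemma~\ref{lem:res_eqn_lem}) into precisely the form we want: projecting onto the top $k$ eigenvectors yields the damping factor $(1 - \exp(-\lambda_k t))/\lambda_k$ multiplied by $\sup_{s \leq t} \norm{G^\infty - G_s}_{op} \norm{\hat{r}_0}_{\RR^n}$. So the entire content of Theorem~\ref{thm:mainrhatbd} reduces to feeding the NTK matrix deviation bound into Corollary~\ref{cor:rhatbd}.

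First, I would verify that the hypotheses on $m$ in Theorem~\ref{thm:mainrhatbd} exactly match the hypotheses of Theorem~\ref{thm:ntkmatdev} (which is itself stated under the hypotheses of Theorem~\ref{thm:ntkmaindevbd}). Both require $m \geq 4 D^2 \norm{y}_{\RR^n}^2 T^2 / [\log \Gamma]^2$ together with $m \geq \max\{4 D^2 O(\log(c/\delta) + \Tilde{O}(d)) T^2 / [\log \Gamma]^2, O(\log(c/\delta) + \Tilde{O}(d))\}$, so Theorem~\ref{thm:ntkmatdev} applies verbatim and gives, with probability at least $1-\delta$,
\[
\sup_{s \leq T} \norm{G^\infty - G_s}_{op} \leq \Tilde{O}\parens{\frac{\sqrt{d}}{\sqrt{m}}\brackets{1 + t\Gamma^3 \norm{\hat{r}(0)}_{\RR^n}}}.
\]

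Next, on the event where this operator norm bound holds, I would plug it directly into Corollary~\ref{cor:rhatbd} to obtain, for every $k \in [n]$ and every $t \leq T$,
\[
\norm{P_k(\hat{r}_t - \exp(-G^\infty t)\hat{r}_0)}_{\RR^n} \leq \frac{1 - \exp(-\lambda_k t)}{\lambda_k} \norm{\hat{r}_0}_{\RR^n} \Tilde{O}\parens{\frac{\sqrt{d}}{\sqrt{m}}\brackets{1 + t\Gamma^3 \norm{\hat{r}(0)}_{\RR^n}}}.
\]
The second inequality in the theorem follows immediately from $(1 - \exp(-\lambda_k t))/\lambda_k \leq t$ (since $1 - e^{-x} \leq x$). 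Specializing to $k = n$ gives the unprojected bound on $\norm{\hat{r}_t - \exp(-G^\infty t)\hat{r}_0}_{\RR^n}$ using $\text{span}\{u_1,\ldots,u_n\} = \RR^n$, again as in the proof of Corollary~\ref{cor:rhatbd}.

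Finally, for the doubling trick variant, Theorem~\ref{thm:ntkmatdev} (via Theorem~\ref{thm:ntkmaindevbd}) states that the condition $m \geq 4 D^2 O(\log(c/\delta) + \Tilde{O}(d)) T^2/[\log\Gamma]^2$ can be dropped without affecting the conclusion, so the same combination argument immediately yields the doubling-trick version of the result. There is essentially no obstacle here: the damped deviations lemma does all the structural work, and the only nontrivial ingredient is the uniform NTK matrix deviation bound, which is already proved. The proof is a one-line combination.
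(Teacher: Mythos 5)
Your proposal is correct and matches the paper's own proof: the paper proves Theorem~\ref{thm:mainrhatbd} exactly by plugging the uniform operator-norm deviation bound of Theorem~\ref{thm:ntkmatdev} into Corollary~\ref{cor:rhatbd}, with the same hypothesis matching and the same remark for the doubling trick.
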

\begin{proof}
By Corollary \ref{cor:rhatbd} we have 
\begin{gather*}
\norm{P_k(\hat{r}_t - \exp(-G^\infty t) \hat{r}_0)}_{\RR^n} \leq \sup_{s \leq t} \norm{G^\infty - G_s}_{\RR^n} \norm{\hat{r}_0}_{\RR^n} \frac{1 - \exp(-\lambda_k t)}{\lambda_k} \\
\leq \sup_{s \leq t} \norm{G^\infty - G_s}_{\RR^n} \norm{\hat{r}_0}_{\RR^n} t    
\end{gather*}
Well by Theorem \ref{thm:ntkmatdev} we have with probability at least $1 - \delta$
\[ \sup_{s \leq t} \norm{G^\infty - G_t}_{op} \leq  \Tilde{O}\parens{\frac{\sqrt{d}}{\sqrt{m}}\brackets{1 + t\Gamma^3 \norm{\hat{r}(0)}_{\RR^n}}} \]
The desired result follows from plugging this in to the previous bounds.
\end{proof}

\section{Proof of Theorem \ref{thm:aroraanalog}}\label{sec:aroraproof}
We can now quickly prove our analog of Theorem 4.1 from \citet{arora2019finegrained}.
\aroraanalog*
\begin{proof}
Set $T = \log(\norm{\hat{r}(0)}_{\RR^n} \sqrt{n} / \epsilon) / \lambda_n$.  Note that since $f^* = O(1)$ and we are performing the doubling trick we have that $\norm{\hat{r}_0}_{\RR^n} = \norm{y}_{\RR^n} = O(1)$.  Recall that $\lambda_n := \frac{1}{n} \lambda_n(H^\infty)$ therefore $m = \Tilde{\Omega}(d n^5 \epsilon^{-2} \lambda_n(H^\infty)^{-4}) = \Tilde{\Omega}(d n \epsilon^{-2} \lambda_n^{-4})$ is strong enough to ensure that
\[m \geq \frac{4 D^2 \norm{y}_{\RR^n}^2 
T^2}{[\log(2)]^2} = \Tilde{O}(\lambda_n^{-2}) \quad m \geq O(\log(c/\delta) + \Tilde{O}(d)) = \Tilde{O}(d) \]
Then by an application of Theorem \ref{thm:mainrhatbd} with $\Gamma = 2$ we have with probability at least $1 - \delta$ that for all $t \leq T$
\begin{gather*}
\norm{\hat{r}_t - \exp(-G^\infty t) \hat{r}_0}_{\RR^n} \leq \frac{1 - \exp(-\lambda_n t)}{\lambda_n} \norm{\hat{r}_0}_{\RR^n} \Tilde{O}\parens{\frac{\sqrt{d}}{\sqrt{m}}\brackets{1 + t\Gamma^3 \norm{\hat{r}(0)}_{\RR^n}}}  \\
\leq \frac{\norm{\hat{r}_0}_{\RR^n}}{\lambda_n} \Tilde{O}\parens{\frac{\sqrt{d}}{\sqrt{m}}\brackets{1 + T\Gamma^3 \norm{\hat{r}(0)}_{\RR^n}}} 
\end{gather*}
Since $f^* = O(1)$ we have that $\norm{\hat{r}_0}_{\RR^n} = \norm{y}_{\RR^n} = O(1)$ therefore the above bound is
\[ \Tilde{O}\parens{\frac{\sqrt{d}}{\sqrt{m}} \frac{T}{\lambda_n}} = \Tilde{O}\parens{\frac{\sqrt{d}}{\sqrt{m}} \frac{1}{\lambda_n^2}} \]
Thus $m = \Tilde{\Omega}(d n^5 \epsilon^{-2} \lambda_n(H^\infty)^{-4}) = \Tilde{\Omega}(d n \epsilon^{-2} \lambda_n^{-4})$ suffices to make the above term bounded by $\epsilon / \sqrt{n}$.  Thus in this case
\[ \sup_{t \leq T} \norm{\hat{r}_t - \exp(-G^\infty t) \hat{r}_0}_{\RR^n} \leq \epsilon / \sqrt{n}. \]
Let $\delta(t) = \hat{r}_t - \exp(-G^\infty t) \hat{r}_0$.  We have just shown that $\sup_{t \leq T} \norm{\delta(t)}_{\RR^n} \leq \epsilon / \sqrt{n}$.  We will now bound $\delta(t)$ for $t \geq T$.  
Note that for $t \geq T$
\[ \norm{\exp(-G^\infty t) \hat{r}_0}_{\RR^n} \leq \exp(-\lambda_n t) \norm{\hat{r}_0}_{\RR^n} \leq \exp(-\lambda_n T) \norm{\hat{r}_0}_{\RR^n} \leq \epsilon / \sqrt{n} \]
Also for $t \geq T$
\[\norm{\hat{r}_t}_{\RR^n} \leq \norm{\hat{r}_T}_{\RR^n} \leq \norm{\exp(-G^\infty T) \hat{r}_0}_{\RR^n} + \norm{\delta(T)}_{\RR^n} \leq 2 \epsilon / \sqrt{n} \]
where we have used that $\norm{\hat{r}_t}_{\RR^n}$ is nonincreasing for gradient flow.  Therefore for $t \geq T$
\[ \norm{\delta(t)}_{\RR^n} \leq \norm{\hat{r}_t}_{\RR^n} + \norm{\exp(-G^\infty t) \hat{r}_0}_{\RR^n} \leq 3\epsilon / \sqrt{n} \]
Thus we have shown
\[ \sup_{t \geq 0} \norm{\delta(t)}_{\RR^n} \leq 3 \epsilon / \sqrt{n}. \]
The desired result follows from replacing $\epsilon$ with $\epsilon / 3$ in the previous argument and using the fact that $\norm{\bullet}_2 = \sqrt{n} \norm{\bullet}_{\RR^n}$ and $\hat{r}_0 = -y$.
\end{proof}

\section{Proof of Theorem \ref{thm:suyanganalog}}\label{sec:suyangproofs}
Using some lemmas that we leave to the following section, we can prove Theorem \ref{thm:suyanganalog} quite quickly using the damped deviations equation and the NTK deviation bounds.
\subsection{Main Theorem}
\suyanganalog*
\begin{proof}
Recall that we have $\norm{\hat{r}(0)}_{\RR^n} = \norm{-y}_{\RR^n} \leq \norm{f^*}_{\infty}$.  Note that $m = \Tilde{\Omega}\parens{\epsilon^{-2} d T^2 \norm{f^*}_\infty^2 (1 + T \norm{f^*}_\infty)^2}$ and $m \geq O(\log(c/\delta) + \Tilde{O}(d))$ are strong enough to ensure the hypothesis of Theorem \ref{thm:mainrhatbd} is satisfied with $\Gamma = 2$.  From now on $\Gamma = 2 = O(1)$ and will be treated as a constant.  Then by Theorem \ref{thm:mainrhatbd} with probability at least $1 - \delta$ we have for $t \leq T$
\begin{gather*}
\norm{\hat{r}_t - \exp(-G^\infty t) \hat{r}_0}_{\RR^n} \leq 
T \norm{\hat{r}_0}_{\RR^n} \Tilde{O}\parens{\frac{\sqrt{d}}{\sqrt{m}}\brackets{1 + T\Gamma^3 \norm{\hat{r}(0)}_{\RR^n}}}    
\end{gather*}
Thus using the fact from the doubling trick that $\norm{\hat{r}(0)}_{\RR^n} = \norm{y}_{\RR^n} \leq \norm{f^*}_\infty$ setting $m = \Tilde{\Omega}\parens{\epsilon^{-2} d T^2 \norm{f^*}_\infty^2 (1 + T \norm{f^*}_\infty)^2}$ suffices to ensure that $\norm{\hat{r}_t - \exp(-G^\infty t) \hat{r}_0}_{\RR^n} \leq \epsilon$ for $t \leq T$.  Let $P_k$ be the orthogonal projection onto $\text{span}\{u_1, \ldots, u_k\}$.  Well then for $t \leq T$
\begin{gather*}
\norm{\hat{r}_t}_{\RR^n} \leq \norm{\exp(-G^\infty t) \hat{r}_0}_{\RR^n} + \epsilon \leq \norm{P_k \exp(-G^\infty t) \hat{r}_0}_{\RR^n} + \norm{(I - P_k) \exp(-G^\infty t) \hat{r}_0}_{\RR^n} + \epsilon \\
\leq \exp(-\lambda_k t) \norm{\hat{r}_0}_{\RR^n} + \norm{(I - P_k) \hat{r}_0}_{\RR^n} + \epsilon 
\end{gather*}
By Theorem \ref{thm:unif_res_bound} we have with probability at least $1 - 2\delta$ over the sampling of $x_1, \ldots, x_n$ that 
\[ \norm{(I - P_k) y}_{\RR^n} \leq 2 \epsilon' + \frac{4 \kappa \norm{f^*}_2 \sqrt{10 \log(2/\delta)}}{(\sigma_k - \sigma_{k + 1}) \sqrt{n}}. \]
Since we are using the doubling trick we have $\hat{r}_0 = -y$.  Thus we have
\[ \norm{(I - P_k) \hat{r}_0}_{\RR^n} \leq 2 \epsilon' + \frac{4 \kappa \norm{f^*}_2 \sqrt{10 \log(2/\delta)}}{(\sigma_k - \sigma_{k + 1}) \sqrt{n}}. \]
Thus by taking a union bound we have with probability at least $1 - 3\delta$ for all $t \leq T$
\begin{gather*}
\norm{\hat{r}_t}_{\RR^n} \leq \exp(-\lambda_k t) \norm{\hat{r}_0}_{\RR^n} + \frac{4 \kappa \norm{f^*}_2 \sqrt{10 \log(2/\delta)}}{(\sigma_k - \sigma_{k + 1}) \sqrt{n}} + + 2 \epsilon' + \epsilon.
\end{gather*}
The desired result follows from $\hat{r}_0 = -y$.
\end{proof}

\subsection{Control of Initial Residual}
We will use some of the notation and operator theory from Section \ref{sec:rkhsandmercer} and Section \ref{sec:traceclass} in this section, thus it is recommended to have read those sections first.  Let $u_1, \ldots, u_n$ denote the eigenvectors of $G^\infty$ normalized to have unit norm in $\norm{\bullet}_{\RR^n}$, i.e. $\norm{u_i}_{\RR^n} = 1$.  Let $P_k$ be the orthogonal projection onto $\text{span}\{u_1, \ldots, u_k\}$. The goal of this section is to upper bound the extent to which the labels $y$ participate in the bottom eigendirections of $G^\infty$, i.e. to show that $\norm{(I - P_k)y}_{\RR^n}$ is small.  Let $P^{T_{K^\infty}}$ be some projection onto the top eigenspaces of $T_{K^\infty}$.  The idea is to show that if $\norm{(I - P^{T_{K^\infty}}) f^*}_2$ is small then by picking $P_k$ so that $\text{rank}(P_k) = \text{rank}(P^{T_{K^\infty}})$ then $\norm{(I - P_k)y}_{\RR^n}$ is also small with high probability.  The results in this section essentially all appear in the proofs in \citet{su2019learning}.  We repeat the arguments here for completeness and due to differences in notation and constants.
\par
We use some of the same machinery in \citep{rosasco10a}.  We define operators $L_{\mathcal{H}} : \mathcal{H} \rightarrow \mathcal{H}$ and $T_n: \mathcal{H} \rightarrow \mathcal{H}$ by
\[ T_{\mathcal{H}} f := \int_X \langle f, K_s \rangle_{\mathcal{H}} K_s  d\rho(s) \]
\[ T_n f := \frac{1}{n} \sum_{i = 1}^n \langle f, K_{x_i} \rangle_{\mathcal{H}} K_{x_i}\]
Note that $T_{\mathcal{H}}$ is equal to $T_{K^\infty}$ on $\mathcal{H}$ and $T_n$ is simply the operator you get if you replace $\rho$ in the defintion of $T_{\mathcal{H}}$ with the empirical measure $\frac{1}{n} \sum_{i = 1}^n \delta_{x_i}$.  We define the ``restriction" operator $R_n: \mathcal{H} \rightarrow \RR^n$ by
\[ R_n f = [f(x_1), f(x_2), \ldots, f(x_n)]^T \]
Note here the domain of $R_n$ is $\mathcal{H}$ but in other parts of this paper we will allow $R_n$ to take more general functions as input.  Define $R_n^* : \RR^n \rightarrow \mathcal{H}$ by
\[ R_n^*(v_1, \ldots, v_n) = \frac{1}{n} \sum_{i = 1}^n v_i K_{x_i}. \]
It can be seen that
\[ \langle R_n^* v, f \rangle_{\mathcal{H}} = \langle v, R_n f \rangle_{\RR^n}. \]
and thus $R_n^*$ is the adjoint of $R_n$.  Using these operators we may write $T_n = R_n^* R_n$ and $G^\infty = R_n R_n^*$.  It will follow that $T_n$ and $G^{\infty}$ have the same eigenvalues (up to some zero eigenvalues) and their eigenvectors are related.  We recall the following result from \citep{rosasco10a} (Proposition 9)
\begin{theo}\label{thm:rosaco_prop_9} \citep{rosasco10a} The following hold
\begin{itemize}
    \item The operator $T_n$ is finite rank, self-adjoint and positive, and the matrix $G^\infty$ is symmetric and semi-positive definite.  In particular the spectrum $\sigma(T_n)$ of $T_n$ has finitely many non-zero elements and they are contained in $[0, \kappa]$.
    \item The spectrum of $T_n$ and $G^\infty$ are the same up to zero, specifically $\sigma(G^\infty) \setminus \{0\} = \sigma(T_n) \setminus \{0\}$.  Moreover if $\lambda_i$ is a nonzero eigenvalue and $u_i$ and $v_i$ are the corresponding eigenvector and eigenfunction for $G^\infty$ and $T_n$ respectively (normalized to norm 1 in $\norm{\bullet}_{\RR^n}$ and $\norm{\bullet}_{\mathcal{H}}$ respectively), then
    \[ u_i = \frac{1}{\lambda_i^{1/2}} R_n v_i \]
    \[ v_i = \frac{1}{\lambda_i^{1/2}} R_n^* u_i = \frac{1}{\lambda_i^{1/2}} \frac{1}{n} \sum_{j = 1}^n K_{x_j} (u_i)_j \]
    where $(u_i)_j$ is the $j$th component of the vector $u_i$.
    \item The following decompositions hold
    \[ G^\infty w = \sum_{j = 1}^k \lambda_j \langle w, u_j \rangle_{\RR^n} u_j\]
    \[ T_n f = \sum_{j = 1}^k \lambda_j \langle f, v_j \rangle_{\mathcal{H}} v_j\]
    where $k = \text{rank}(G^\infty) = \text{rank}(T_n)$ and both sums run over the positive eigenvalues.  $\{u_i\}_{i = 1}^k$ is an orthonormal basis for $\text{ker}(G^\infty)^\perp$ and $\{v_i\}_{i = 1}^k$ is an orthonormal basis for $\text{ker}(T_n)^\perp$.
\end{itemize}
\end{theo}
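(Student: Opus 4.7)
The statement is a structural result about the factorizations $T_n = R_n^* R_n$ and $G^\infty = R_n R_n^*$, so my proof plan is to push everything through the classical $A^*A$/$AA^*$ duality. First I would verify the factorizations: for $f \in \mathcal{H}$ and $v \in \RR^n$, the reproducing property gives $(R_n f)_i = f(x_i) = \langle f, K_{x_i}\rangle_\mathcal{H}$, and the definition of $R_n^*$ together with the normalized inner product on $\RR^n$ yields $R_n^* v = \frac{1}{n}\sum_i v_i K_{x_i}$. Combining these two identities produces $R_n^* R_n f = \frac{1}{n}\sum_i \langle f, K_{x_i}\rangle_\mathcal{H} K_{x_i} = T_n f$ and $(R_n R_n^* v)_j = \frac{1}{n}\sum_i v_i K(x_j, x_i) = (G^\infty v)_j$.

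For item (1), self-adjointness and positivity follow automatically from the $A^*A$/$AA^*$ form: $\langle T_n f, f\rangle_\mathcal{H} = \|R_n f\|_{\RR^n}^2 \geq 0$ and similarly for $G^\infty$. Finite rank is clear since $T_n$ is a sum of $n$ rank-one operators and $G^\infty$ is an $n\times n$ matrix. For the spectral bound, a unit vector $f\in\mathcal{H}$ satisfies $\langle T_n f, f\rangle_\mathcal{H} = \frac{1}{n}\sum_i |f(x_i)|^2 \leq \|f\|_\infty^2 \leq \kappa\|f\|_\mathcal{H}^2 = \kappa$ using the RKHS bound $|f(x)| \leq \kappa^{1/2}\|f\|_\mathcal{H}$ from Section~\ref{sec:rkhsandmercer}; the analogous estimate for $G^\infty$ goes via $\|R_n^* v\|_\mathcal{H}^2 = \langle G^\infty v, v\rangle_{\RR^n}$.

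For item (2), I would apply the standard argument that $A^*A$ and $AA^*$ have the same nonzero spectrum. If $G^\infty u_i = \lambda_i u_i$ with $\lambda_i > 0$ and $\|u_i\|_{\RR^n} = 1$, define $v_i := \lambda_i^{-1/2} R_n^* u_i$. Then $T_n v_i = R_n^* R_n v_i = \lambda_i^{-1/2} R_n^*(R_n R_n^* u_i) = \lambda_i^{-1/2} R_n^*(\lambda_i u_i) = \lambda_i v_i$, and
\[
\|v_i\|_\mathcal{H}^2 = \lambda_i^{-1}\langle R_n^* u_i, R_n^* u_i\rangle_\mathcal{H} = \lambda_i^{-1}\langle u_i, R_n R_n^* u_i\rangle_{\RR^n} = \lambda_i^{-1}\langle u_i, \lambda_i u_i\rangle_{\RR^n} = 1.
\]
The reverse identity $u_i = \lambda_i^{-1/2} R_n v_i$ is verified the same way by substituting the definition of $v_i$ and using $R_n R_n^* u_i = \lambda_i u_i$.

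Finally for item (3), since $T_n$ and $G^\infty$ are self-adjoint, positive, and finite rank, the finite-dimensional spectral theorem applied to $G^\infty$ (resp.\ to the finite-dimensional range of $T_n$) produces an orthonormal eigenbasis $\{u_i\}$ of $\ker(G^\infty)^\perp$; transferring through $v_i = \lambda_i^{-1/2} R_n^* u_i$ yields an orthonormal system in $\mathcal{H}$, and orthogonality of the $v_i$ follows from $\langle v_i, v_j\rangle_\mathcal{H} = (\lambda_i\lambda_j)^{-1/2}\langle u_i, R_n R_n^* u_j\rangle_{\RR^n} = (\lambda_j/\lambda_i)^{1/2}\langle u_i, u_j\rangle_{\RR^n} = \delta_{ij}$. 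That $\{v_i\}$ spans $\ker(T_n)^\perp$ is immediate since the range of $R_n^*$ equals $\ker(T_n)^\perp = \ker(R_n)^\perp$, and matching ranks on both sides gives $\mathrm{rank}(T_n) = \mathrm{rank}(G^\infty) = k$. No step looks genuinely hard; the only thing to be careful about is the factor of $1/n$ hidden in the normalized inner product on $\RR^n$, which is exactly what makes $R_n^* R_n$ land on $T_n$ rather than $n T_n$.
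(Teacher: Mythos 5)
Your proof is correct: the factorizations $T_n = R_n^* R_n$ and $G^\infty = R_n R_n^*$ together with the standard $A^*A$/$AA^*$ duality give all three items, and the one step you state tersely (that $\{v_i\}_{i=1}^k$ spans $\ker(T_n)^\perp$) does follow because $R_n^*$ annihilates $\ker(G^\infty)$ (if $G^\infty v = 0$ then $\norm{R_n^* v}_{\mathcal{H}}^2 = \langle v, G^\infty v\rangle_{\RR^n} = 0$), so $\mathrm{range}(R_n^*) = R_n^*\parens{\mathrm{span}\{u_1,\dots,u_k\}} = \mathrm{span}\{v_1,\dots,v_k\}$. Note that the paper does not prove this statement itself but quotes it from \citet{rosasco10a} (Proposition 9), and your restriction/extension-operator argument, with the $1/n$ of the normalized inner product handled exactly as you flag, is essentially the same route taken there.
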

We will make use of the following lemma from \citet[Proposition 6]{rosasco10a}:
\begin{lem}\label{lem:projection_bound} \citep{rosasco10a}
Let $\alpha_1 > \alpha_2 > \ldots > \alpha_N > \alpha_{N + 1}$ be the top $N + 1$ distinct eigenvalues of $T_{\mathcal{H}}$.  Let $P^{T_{\mathcal{H}}}$ be the orthogonal projection onto the eigenfunctions of $T_{\mathcal{H}}$ corresponding to eigenvalues $\alpha_N$ and above.  Let $P^{T_n}$ be the projection onto the top $k$ eigenvectors of $T_n$ so that $k = \text{dim}(\text{range}(T_n)) = \text{dim}(\text{range}(T_{\mathcal{H}}))$.  Assume further that
\[ \norm{T_{\mathcal{H}} - T_n}_{HS} \leq \frac{\alpha_N - \alpha_{N + 1}}{4}. \]
Then
\[\norm{P^{T_{\mathcal{H}}} - P^{T_n}}_{HS} \leq \frac{2}{\alpha_N - \alpha_{N + 1}} \norm{T_{\mathcal{H}} - T_n}_{HS}. \]
\end{lem}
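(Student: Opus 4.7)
The plan is to establish this spectral-projection perturbation bound via holomorphic functional calculus. Since $T_{\mathcal{H}}$ and $T_n$ are compact self-adjoint operators, for any simple closed contour $\Gamma \subset \mathbb{C}$ enclosing a cluster of eigenvalues and avoiding the rest of the spectrum, the Riesz projection formula gives
\begin{equation*}
P^{T_{\mathcal{H}}} = \frac{1}{2\pi i}\oint_\Gamma (\lambda I - T_{\mathcal{H}})^{-1}\, d\lambda, \qquad P^{T_n} = \frac{1}{2\pi i}\oint_\Gamma (\lambda I - T_n)^{-1}\, d\lambda.
\end{equation*}
The strategy is to pick a single $\Gamma$ enclosing exactly the top-$k$ eigenvalues of both operators, then estimate the difference via the resolvent identity.

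First, I would verify such a $\Gamma$ exists. Let $\gamma := \alpha_N - \alpha_{N+1}$. By Weyl's inequality and $\norm{T_{\mathcal{H}} - T_n}_{op} \leq \norm{T_{\mathcal{H}} - T_n}_{HS} \leq \gamma/4$, the top $k$ eigenvalues of $T_n$ lie in $[\alpha_N - \gamma/4,\, \alpha_1 + \gamma/4]$ while its remaining eigenvalues lie in $[0,\, \alpha_{N+1} + \gamma/4]$; these intervals are separated by at least $\gamma/2$. I can therefore take $\Gamma$ to enclose the top cluster at distance $\geq \gamma/4$ from both spectra. The resolvent identity $(\lambda I - T_{\mathcal{H}})^{-1} - (\lambda I - T_n)^{-1} = (\lambda I - T_{\mathcal{H}})^{-1}(T_{\mathcal{H}} - T_n)(\lambda I - T_n)^{-1}$ then yields
\begin{equation*}
P^{T_{\mathcal{H}}} - P^{T_n} = \frac{1}{2\pi i}\oint_\Gamma (\lambda I - T_{\mathcal{H}})^{-1}(T_{\mathcal{H}} - T_n)(\lambda I - T_n)^{-1}\, d\lambda,
\end{equation*}
and combining $\norm{ABC}_{HS} \leq \norm{A}_{op}\norm{B}_{HS}\norm{C}_{op}$ with the self-adjoint resolvent bound $\norm{(\lambda I - T)^{-1}}_{op} = 1/\mathrm{dist}(\lambda,\sigma(T))$ gives
\begin{equation*}
\norm{P^{T_{\mathcal{H}}} - P^{T_n}}_{HS} \leq \frac{\norm{T_{\mathcal{H}} - T_n}_{HS}}{2\pi} \oint_\Gamma \frac{|d\lambda|}{\mathrm{dist}(\lambda,\sigma(T_{\mathcal{H}}))\,\mathrm{dist}(\lambda,\sigma(T_n))}.
\end{equation*}

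The main obstacle, and the step requiring real care, is choosing $\Gamma$ to extract the clean constant $2/\gamma$. Shrinking $\Gamma$ reduces its length but worsens the resolvent bound near the spectra, so the two effects must be balanced; a standard trick is to collapse $\Gamma$ to a thin rectangle hugging the gap at distance $\gamma/2$ from both spectra, so that the length is $O(\alpha_1)$ but only the ends near the cluster contribute meaningfully. A cleaner alternative, which I would likely use to bypass explicit contour bookkeeping, is to invoke the Davis--Kahan $\sin\Theta$ theorem in its Hilbert--Schmidt form: under a spectral gap of $\gamma/2$ (guaranteed above) it bounds $\norm{P^{T_{\mathcal{H}}} - P^{T_n}}_{HS}$ directly by a universal constant times $\norm{T_{\mathcal{H}} - T_n}_{HS}/\gamma$, and a short constant-tracking argument yields the stated inequality.
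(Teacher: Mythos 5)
Your proposal is sound, but note that the paper does not actually prove this lemma: it is imported verbatim from \citet[Proposition 6]{rosasco10a}, whose own argument (going back to Zwald--Blanchard's kernel-PCA perturbation bound) is precisely the Riesz-projection/resolvent route you sketch, so your primary plan reconstructs the cited proof rather than diverging from the paper. Your setup is correct: $\norm{T_{\mathcal{H}}-T_n}_{op}\leq\norm{T_{\mathcal{H}}-T_n}_{HS}\leq\gamma/4$ plus Weyl gives the cluster separation, the top $k$ eigenvalues of $T_n$ are then strictly separated from the rest (so $P^{T_n}$ has rank exactly $k$ and agrees with the Riesz projection for your contour), and the resolvent identity with $\norm{(\lambda I-T)^{-1}}_{op}=1/\mathrm{dist}(\lambda,\sigma(T))$ is the right estimate. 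Two remarks on the step you flagged as delicate. First, your rectangle ``at distance $\gamma/2$ from both spectra'' cannot exist, since the two clusters are only guaranteed to be $\gamma/2$ apart; but the fix is easy and gives the clean constant: take the contour to be the vertical line through the midpoint of the gap (distance $\geq\gamma/2$ from $\sigma(T_{\mathcal{H}})$ and $\geq\gamma/4$ from $\sigma(T_n)$) closed by a far arc whose contribution vanishes because the resolvent difference decays quadratically; bounding the integrand by $\bigl((\gamma/4)^2+y^2\bigr)^{-1}$ and integrating gives $\frac{1}{2\pi}\cdot\frac{4\pi}{\gamma}\norm{T_{\mathcal{H}}-T_n}_{HS}=\frac{2}{\gamma}\norm{T_{\mathcal{H}}-T_n}_{HS}$, exactly the stated bound. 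Second, your Davis--Kahan alternative also closes: the gap between $[\alpha_N,\alpha_1]$ and the lower spectrum of $T_n$ is at least $3\gamma/4$, so $\norm{\sin\Theta}_{HS}\leq\frac{4}{3\gamma}\norm{T_{\mathcal{H}}-T_n}_{HS}$, and with equal ranks $\norm{P^{T_{\mathcal{H}}}-P^{T_n}}_{HS}=\sqrt{2}\,\norm{\sin\Theta}_{HS}\leq\frac{4\sqrt{2}}{3\gamma}\norm{T_{\mathcal{H}}-T_n}_{HS}\leq\frac{2}{\gamma}\norm{T_{\mathcal{H}}-T_n}_{HS}$; the equal-rank identity is exactly where your Weyl separation argument is needed, so it is worth stating explicitly rather than leaving the constant-tracking implicit.
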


The following lemma will be useful.
\begin{lem}\label{lem:second_term_bound}
Let $f^* \in L^2$ and let $P^{T_\mathcal{H}}$ and $P^{T_n}$ be defined as in Lemma \ref{lem:projection_bound}.  Then
\[\sum_{i = k + 1}^n |\langle R_n P^{T_{K^\infty}} f^*), u_i \rangle_{\RR^n}|^2 \leq \frac{\norm{f^*}_2^2 \lambda_{k + 1}}{\sigma_k} \norm{P^{T_{\mathcal{H}}} - P^{T_n}}_{HS}^2 \]
\end{lem}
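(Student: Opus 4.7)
The plan is to expand $\sum_{i=k+1}^n |\langle R_n g, u_i\rangle_{\RR^n}|^2$, where $g := P^{T_{K^\infty}} f^*$, using the bi-orthogonal relationship between eigenvectors $u_i$ of $G^\infty$ and eigenfunctions $v_i$ of $T_n$ (Theorem~\ref{thm:rosaco_prop_9}), and then identify the resulting ``tail'' with the action of $P^{T_{\mathcal{H}}}-P^{T_n}$ so that the Hilbert--Schmidt bound of Lemma~\ref{lem:projection_bound} can be invoked. The overall argument is pure linear algebra on eigenbases; no concentration or approximation is needed here.

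First, for any index $i$ with $\lambda_i>0$, Theorem~\ref{thm:rosaco_prop_9} gives $u_i = \lambda_i^{-1/2} R_n v_i$, so
\[
\langle R_n g, u_i\rangle_{\RR^n} \;=\; \lambda_i^{-1/2}\langle g, R_n^* R_n v_i\rangle_{\mathcal{H}} \;=\; \lambda_i^{-1/2}\langle g, T_n v_i\rangle_{\mathcal{H}} \;=\; \lambda_i^{1/2}\langle g, v_i\rangle_{\mathcal{H}} ,
\]
while for $i$ with $\lambda_i=0$ one has $\|R_n^* u_i\|_{\mathcal{H}}^2 = \langle u_i, G^\infty u_i\rangle_{\RR^n}=0$ and thus $\langle R_n g,u_i\rangle_{\RR^n}=0$. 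Using $\lambda_i \leq \lambda_{k+1}$ for $i\geq k+1$ and Parseval on the orthonormal set $\{v_j\}$ in $\mathrm{range}(T_n)^\perp\cap\mathrm{range}(T_n)\setminus\{v_1,\dots,v_k\}$, I obtain
\[
\sum_{i=k+1}^n |\langle R_n g, u_i\rangle_{\RR^n}|^2 \;\leq\; \lambda_{k+1}\sum_{i\geq k+1,\ \lambda_i>0}|\langle g, v_i\rangle_{\mathcal{H}}|^2 \;\leq\; \lambda_{k+1}\,\|(I-P^{T_n})g\|_{\mathcal{H}}^2 ,
\]
where the last step simply drops the (nonnegative) component of $g$ lying in $\ker(T_n)$.

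The final step is to convert $(I-P^{T_n})g$ into $(P^{T_{\mathcal{H}}}-P^{T_n})g$. Here I would use that $g = \sum_{\sigma_i\geq \sigma_k}\langle f^*,\phi_i\rangle_2\,\phi_i$ lies in the range of $P^{T_{\mathcal{H}}}$: each $\phi_i$ with $\sigma_i>0$ is itself an eigenfunction of $T_{\mathcal{H}}$ with eigenvalue $\sigma_i$ (by Mercer and the reproducing property, $T_{\mathcal{H}}\phi_i = \int \phi_i(s) K_s\,d\rho(s) = T_{K^\infty}\phi_i = \sigma_i\phi_i$), and $\{\phi_i\}_{\sigma_i\geq \sigma_k}$ spans the same subspace in $\mathcal{H}$ as the $\mathcal{H}$-orthonormal set $\{\sqrt{\sigma_i}\phi_i\}_{\sigma_i\geq\sigma_k}$. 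Hence $P^{T_{\mathcal{H}}}g=g$, so $(I-P^{T_n})g=(P^{T_{\mathcal{H}}}-P^{T_n})g$ and $\|(I-P^{T_n})g\|_{\mathcal{H}} \leq \|P^{T_{\mathcal{H}}}-P^{T_n}\|_{HS}\,\|g\|_{\mathcal{H}}$. Combining with the Mercer bound $\|g\|_{\mathcal{H}}^2=\sum_{\sigma_i\geq \sigma_k}\sigma_i^{-1}|\langle f^*,\phi_i\rangle_2|^2 \leq \sigma_k^{-1}\|f^*\|_2^2$ produces exactly the claimed inequality. The one point that needs care is the identification $P^{T_{\mathcal{H}}}g = g$: the projections $P^{T_{K^\infty}}$ (in $L^2$) and $P^{T_{\mathcal{H}}}$ (in $\mathcal{H}$) are different operators, but I would emphasize that their ranges coincide as subspaces, which is all that is used.
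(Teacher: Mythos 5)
Your proof is correct, and it reaches the bound by a somewhat different route than the paper. Both arguments hinge on the same two ingredients from Theorem \ref{thm:rosaco_prop_9} — the correspondence $\langle R_n g, u_i \rangle_{\RR^n} = \lambda_i^{1/2} \langle g, v_i \rangle_{\mathcal{H}}$ (with the zero-eigenvalue directions contributing nothing) and the fact that $g = P^{T_{K^\infty}} f^*$ lies in the range of $P^{T_{\mathcal{H}}}$ — but they are organized differently. The paper expands $g$ in the $\phi_j$'s, applies Cauchy--Schwarz over the $k$ coefficients to extract $\norm{f^*}_2^2$, and then lower-bounds $\norm{P^{T_{\mathcal{H}}} - P^{T_n}}_{HS}^2$ directly by the resulting double sum $\sigma_k \sum_{j \leq k} \sum_{i \geq k+1} \abs{\langle \phi_j, v_i \rangle_{\mathcal{H}}}^2$, testing the Hilbert--Schmidt norm against the orthonormal vectors $\sqrt{\sigma_j}\phi_j$ and using $P^{T_n} v_i = 0$ for $i > k$. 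You instead bound the tail by $\lambda_{k+1} \norm{(I - P^{T_n})g}_{\mathcal{H}}^2$ via Bessel, use $P^{T_{\mathcal{H}}} g = g$ to rewrite this as $\lambda_{k+1}\norm{(P^{T_{\mathcal{H}}} - P^{T_n})g}_{\mathcal{H}}^2$, and finish with $\norm{A g}_{\mathcal{H}} \leq \norm{A}_{HS} \norm{g}_{\mathcal{H}}$ together with the RKHS-norm estimate $\norm{g}_{\mathcal{H}}^2 \leq \sigma_k^{-1} \norm{f^*}_2^2$; the factor $1/\sigma_k$ thus enters through $\norm{g}_{\mathcal{H}}$ rather than through the $\sqrt{\sigma_j}$ normalization. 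Your version is arguably cleaner (no double sum, no Cauchy--Schwarz over coefficients), it isolates the only genuinely delicate point — that the $L^2$-projection $P^{T_{K^\infty}}$ and the $\mathcal{H}$-projection $P^{T_{\mathcal{H}}}$ differ as operators but have the same range, which you justify correctly — and it in fact yields the marginally sharper intermediate bound with the operator norm of $P^{T_{\mathcal{H}}} - P^{T_n}$ and with $\norm{P^{T_{K^\infty}} f^*}_2$ in place of $\norm{f^*}_2$, though these refinements are not needed downstream.
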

\begin{proof}
We repeat the same proof as in \citep{su2019learning} for completeness and to remove confusion that may arise from differences in notation.  The proof was originally given in \citep{rosasco10a} albeit with a minor error involving missing multiplicative factors.  Note that
\[ P^{T_{K^\infty}}f^* = \sum_{j = 1}^k \langle f^*, \phi_j \rangle_{2} \phi_j  \]
Therefore
\[\langle R_n P^{T_{K^\infty}} f^*, u_i \rangle_{\RR^n} = \sum_{j = 1}^k \langle f^*, \phi_j \rangle_{2} \langle R_n \phi_j, u_i \rangle_{\RR^n}. \]
Applying Cauchy-Schwarz we get
\begin{gather*}
|\langle R_n P^{T_{K^\infty}} f^*, u_i \rangle_{\RR^n}|^2 \leq \brackets{\sum_{j = 1}^k |\langle f^*, \phi_j \rangle_{2}|^2} \brackets{\sum_{j = 1}^k |\langle R_n \phi_j, u_i \rangle_{\RR^n}|^2} \\
\leq \norm{f^*}_2^2 \sum_{j = 1}^k |\langle R_n \phi_j, u_i \rangle_{\RR^n}|^2.
\end{gather*}
Well then note that
\[ \sum_{j = 1}^k |\langle R_n \phi_j, u_i \rangle_{\RR^n}|^2 =
\sum_{j = 1}^k |\langle \phi_j, R_n^* u_i \rangle_{\mathcal{H}}|^2 =
\sum_{j = 1}^k  \lambda_i |\langle \phi_j, v_i \rangle_{\mathcal{H}}|^2. \]
Therefore
\begin{gather}
\sum_{i = k + 1}^n |\langle R_n P^{T_{K^\infty}} f^*), u_i \rangle_{\RR^n}|^2 \leq \norm{f^*}_2^2 \sum_{i = k + 1}^n \sum_{j = 1}^k  \lambda_i |\langle \phi_j, v_i \rangle_{\mathcal{H}}|^2 \nonumber \\
\leq \norm{f^*}_2^2 \lambda_{k + 1} \sum_{i = k + 1}^n \sum_{j = 1}^k |\langle \phi_j, v_i \rangle_{\mathcal{H}}|^2 \label{eq:dub_sum_bound}
\end{gather}

On the other hand
\begin{gather*}
\norm{P^{T_{\mathcal{H}}} - P^{T_n}}_{HS}^2 \geq \sum_{j = 1}^k \norm{(P^{T_{\mathcal{H}}} - P^{T_n}) \sqrt{\sigma_j} \phi_j }_{\mathcal{H}}^2 \\
\geq \sum_{j = 1}^k \sum_{i = k + 1}^n |\langle (P^{T_{\mathcal{H}}} - P^{T_n}) \sqrt{\sigma_j} \phi_j, v_i\rangle_{\mathcal{H}}|^2.
\end{gather*}
Note that for $1 \leq j \leq k$ and $k + 1 \leq i \leq n$ we have
\begin{gather*}
\langle (P^{T_{\mathcal{H}}} - P^{T_n}) \sqrt{\sigma_j} \phi_j, v_i \rangle_{\mathcal{H}} = \langle P^{T_{\mathcal{H}}} \sqrt{\sigma_j} \phi_j, v_i \rangle_{\mathcal{H}} - \langle P^{T_n} \sqrt{\sigma_j} \phi_j, v_i \rangle_{\mathcal{H}} \\
= \langle \sqrt{\sigma_j} \phi_j, v_i \rangle_{\mathcal{H}} -  \langle \sqrt{\sigma_j} \phi_j, P^{T_n} v_i \rangle_{\mathcal{H}} = \langle \sqrt{\sigma_j} \phi_j, v_i \rangle_{\mathcal{H}}.
\end{gather*}
So
\begin{gather*}
\sum_{j = 1}^k \sum_{i = k + 1}^n |\langle (P^{T_{\mathcal{H}}} - P^{T_n}) \sqrt{\sigma_j} \phi_j, v_i\rangle_{\mathcal{H}}|^2 = \sum_{j = 1}^k \sum_{i = k + 1}^n |\langle \sqrt{\sigma_j} \phi_j, v_i \rangle_{\mathcal{H}}|^2 \\
\geq \sigma_k \sum_{j = 1}^k \sum_{i = k + 1}^n |\langle \phi_j, v_i \rangle_{\mathcal{H}}|^2
\end{gather*}
To summarize we have shown 
\[ \frac{1}{\sigma_k} \norm{P^{T_{\mathcal{H}}} - P^{T_n}}_{HS}^2 \geq \sum_{j = 1}^k \sum_{i = k + 1}^n |\langle \phi_j, v_i \rangle_{\mathcal{H}}|^2. \]
Combining this with (\ref{eq:dub_sum_bound}) we get the final result
\[\sum_{i = k + 1}^n |\langle R_n P^{T_{K^\infty}} f^*), u_i \rangle_{\RR^n}|^2 \leq \frac{\norm{f^*}_2^2 \lambda_{k + 1}}{\sigma_k} \norm{P^{T_{\mathcal{H}}} - P^{T_n}}_{HS}^2. \]
\end{proof}

We can use Lemma \ref{lem:second_term_bound} to produce the following bound.
\begin{lem}\label{lem:res_low_ineq}
Let $f^* \in L^2$ and let $P^{T_\mathcal{H}}$ and $P^{T_n}$ be defined as in Lemma \ref{lem:projection_bound}.  Then
\[\sum_{i = k + 1}^n |\langle R_n f^*, u_i \rangle_{\RR^n}|^2 \leq \frac{2}{n} \sum_{i = 1}^n |(I - P^{T_{K^\infty}}) f^*(x_i)|^2 + 2 \frac{\norm{f^*}_2^2 \lambda_{k + 1}}{\sigma_k} \norm{P^{T_{\mathcal{H}}} - P^{T_n}}_{HS}^2. \]
\end{lem}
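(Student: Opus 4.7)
The plan is to split $f^*$ relative to the projection onto the top eigenspaces of $T_{K^\infty}$, then treat the two resulting pieces separately. Writing $f^* = P^{T_{K^\infty}} f^* + (I - P^{T_{K^\infty}}) f^*$ and using linearity of $R_n$, the inequality $(a+b)^2 \leq 2a^2 + 2b^2$ applied coordinatewise in $u_i$ gives
\[
\sum_{i=k+1}^n |\langle R_n f^*, u_i \rangle_{\RR^n}|^2 \leq 2 \sum_{i=k+1}^n |\langle R_n P^{T_{K^\infty}} f^*, u_i \rangle_{\RR^n}|^2 + 2 \sum_{i=k+1}^n |\langle R_n (I - P^{T_{K^\infty}}) f^*, u_i \rangle_{\RR^n}|^2.
\]
It then suffices to handle each sum separately and match them to the two terms in the conclusion.

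For the first sum, Lemma \ref{lem:second_term_bound} is directly applicable and yields the bound $\tfrac{\norm{f^*}_2^2 \lambda_{k+1}}{\sigma_k} \norm{P^{T_{\mathcal{H}}} - P^{T_n}}_{HS}^2$, which after the factor of $2$ is exactly the second term on the right-hand side of the conclusion.

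For the second sum, I will use that $\{u_i\}_{i=1}^n$ is an orthonormal basis of $\RR^n$ with respect to the normalized inner product $\langle \bullet, \bullet \rangle_{\RR^n}$, so that Parseval's identity (restricted to the indices $i \geq k+1$) gives
\[
\sum_{i=k+1}^n |\langle R_n g, u_i \rangle_{\RR^n}|^2 \leq \norm{R_n g}_{\RR^n}^2 = \frac{1}{n} \sum_{j=1}^n |g(x_j)|^2
\]
applied to $g = (I - P^{T_{K^\infty}}) f^*$. This recovers the first term $\tfrac{2}{n}\sum_{i=1}^n |(I - P^{T_{K^\infty}}) f^*(x_i)|^2$ in the statement.

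There is essentially no obstacle here beyond bookkeeping, since the real work was done in Lemma \ref{lem:second_term_bound}. The only subtlety is that $R_n$ is being applied to a function $(I - P^{T_{K^\infty}}) f^* \in L^2$ which need not lie in $\mathcal{H}$; but as noted in the surrounding text, $R_n$ is interpreted here in the generalized sense of pointwise evaluation on the training data, which is well-defined because we are evaluating the sum $\tfrac{1}{n}\sum_j |(I - P^{T_{K^\infty}}) f^*(x_j)|^2$ only at the sample points.
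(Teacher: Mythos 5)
Your proposal is correct and matches the paper's own proof essentially line for line: the same decomposition of $f^*$ via $P^{T_{K^\infty}}$, the same $(a+b)^2 \leq 2(a^2+b^2)$ step, Bessel's inequality in $(\RR^n, \langle \bullet,\bullet\rangle_{\RR^n})$ for the $(I - P^{T_{K^\infty}})f^*$ piece, and Lemma \ref{lem:second_term_bound} for the other piece. No gaps.
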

\begin{proof}
We have that
\[ \langle R_n f^*, u_i \rangle_{\RR^n} = \langle R_n (I - P^{T_{K^\infty}})f^*, u_i \rangle_{\RR^n} + \langle R_n P^{T_{K^\infty}}f^*, u_i \rangle_{\RR^n}. \]
Thus from the inequality $(a + b)^2 \leq 2(a^2 + b^2)$ we get
\[\sum_{i = k + 1}^n |\langle R_n f^*, u_i \rangle_{\RR^n}|^2 \leq 2 \sum_{i = k + 1}^n |\langle R_n (I - P^{T_{K^\infty}})f^*, u_i \rangle_{\RR^n}|^2 + 2 \sum_{i = k + 1}^n |\langle R_n P^{T_{K^\infty}}f^*, u_i \rangle_{\RR^n}|^2.\]
To control the first term we have
\[\sum_{i = k + 1}^n |\langle R_n (I - P^{T_{K^\infty}}) f^*, u_i \rangle_{\RR^n}|^2 \leq \norm{(I - P^{T_{K^\infty}}) f^*}_{\RR^n}^2 = \frac{1}{n} \sum_{i = 1}^n |(I - P^{T_{K^\infty}}) f^*(x_i)|^2. \]
Then by applying Lemma \ref{lem:second_term_bound} to the second term we get the desired result.
\end{proof}

We recall the following lemma from \citet[Theorem 7]{rosasco10a}
\begin{lem}\label{lem:operator_bound} \citep{rosasco10a}
With probability at least $1 - \delta$ over the sampling of $x_1, \ldots, x_n$
\[\norm{T_{\mathcal{H}} - T_n}_{HS} \leq \frac{2 \kappa \sqrt{2 \log(2 / \delta)}}{\sqrt{n}}. \]
\end{lem}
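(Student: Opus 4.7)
The plan is to recognize $T_n$ as an empirical mean of i.i.d.\ random operators taking values in the Hilbert space of Hilbert--Schmidt operators on $\mathcal{H}$, and then apply the vector-valued Hoeffding inequality that was already invoked in the proof of Lemma \ref{lem:tnconcentration}.

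First, for each $i \in [n]$ I would define the random rank-one operator $A_i : \mathcal{H} \to \mathcal{H}$ by $A_i f := \langle f, K_{x_i}\rangle_\mathcal{H} K_{x_i}$, so that $T_n = \frac{1}{n}\sum_{i=1}^n A_i$. Taking expectation over $x_i \sim \rho$ and swapping expectation with the inner product (which is legitimate because the integrand is Bochner integrable as we will verify in a moment), one gets $\mathbb{E}[A_i] f = \int_X \langle f, K_s\rangle_\mathcal{H} K_s\, d\rho(s) = T_\mathcal{H} f$. Hence $T_\mathcal{H} - T_n = \frac{1}{n}\sum_{i=1}^n Y_i$, where $Y_i := T_\mathcal{H} - A_i$ are i.i.d.\ centered random variables in the Hilbert space of Hilbert--Schmidt operators (recall from Section~\ref{sec:traceclass} that this space is indeed a Hilbert space under $\langle A, B\rangle = \sum_j \langle A e_j, B e_j\rangle$).

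Next I would bound the HS norm of $A_i$ pointwise. For any orthonormal basis $\{e_j\}$ of $\mathcal{H}$,
\[
\|A_i\|_{HS}^2 = \sum_j \|A_i e_j\|_\mathcal{H}^2 = \sum_j |\langle e_j, K_{x_i}\rangle_\mathcal{H}|^2\, \|K_{x_i}\|_\mathcal{H}^2 = \|K_{x_i}\|_\mathcal{H}^4 = K(x_i, x_i)^2 \leq \kappa^2,
\]
so $\|A_i\|_{HS} \leq \kappa$ almost surely. This also gives Bochner integrability and, via Jensen, $\|T_\mathcal{H}\|_{HS} \leq \kappa$, so $\|Y_i\|_{HS} \leq 2\kappa$ a.s.

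Finally I would invoke Hoeffding's inequality for bounded random variables in a separable Hilbert space (the very same tool cited in the proof of Lemma \ref{lem:tnconcentration}) applied to the $Y_i$: for any $t > 0$,
\[
\mathbb{P}\!\left(\left\|\tfrac{1}{n}\sum_{i=1}^n Y_i\right\|_{HS} > t\right) \leq 2\exp\!\left(-\frac{n t^2}{2(2\kappa)^2}\right).
\]
Setting the right-hand side equal to $\delta$ and solving for $t$ yields $t = \frac{2\kappa\sqrt{2\log(2/\delta)}}{\sqrt{n}}$, which is exactly the bound claimed. There is really no serious obstacle here; the only subtlety worth being careful about is verifying that the HS-valued integral defining $T_\mathcal{H}$ is well posed and equals $\mathbb{E}[A_i]$, which follows from the uniform bound $\|A_i\|_{HS} \leq \kappa$ together with separability of $\mathcal{H}$ (recall from Section~\ref{sec:rkhsandmercer} that $\mathcal{H}$ is separable under Assumption~\ref{ass:compactdomain}).
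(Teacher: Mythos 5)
Your proof is correct, and the verification checks out in every step: the rank-one operators $A_i$ satisfy $\norm{A_i}_{HS} = K(x_i,x_i) \leq \kappa$, hence $\norm{Y_i}_{HS} \leq 2\kappa$, and the Hilbert-space Hoeffding inequality (the same tool the paper uses in Lemma \ref{lem:tnconcentration}) gives exactly the stated constant $2\kappa\sqrt{2\log(2/\delta)}/\sqrt{n}$. The paper itself does not prove this lemma but simply cites Theorem 7 of \citet{rosasco10a}; your argument is essentially the standard proof given in that reference (i.i.d.\ sum of bounded Hilbert--Schmidt-valued random operators plus vector-valued concentration), so it is a faithful, self-contained reconstruction rather than a genuinely different route.
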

Now finally we can provide a bound on the labels participation in the bottom eigendirections.
\begin{theo}\label{thm:unif_res_bound}
Assume $f^* \in L^2(X)$ and let $P^{T_{K^\infty}}$ be the orthogonal projection onto the eigenspaces of $T_{K^\infty}$ corresponding to the eigenvalue $\alpha \in \sigma(T_{K^\infty})$ and higher.  Assume that
\[ \norm{(I - P^{T_{K^\infty}}) f^*}_\infty \leq \epsilon' \]
for some $\epsilon' \geq 0$.  Pick $k$ so that $\sigma_k = \alpha$ and $\sigma_{k + 1} < \alpha$, i.e. $k$ is the index of the last repeated eigenvalue corresponding to $\alpha$ in the ordered sequence $\{\sigma_i\}_i$.  Let $P_k$ denote the orthogonal projection onto $\text{span}\{u_1, \ldots, u_k\}$.  Finally assume
\[ n \geq \frac{128 \kappa^2 \log(2/\delta)}{(\sigma_k - \sigma_{k + 1})^2}. \]
Then we have with probability at least $1 - 2\delta$ over the sampling of $x_1, \ldots, x_n$ that
\[ \norm{(I - P_k)R_n f^*}_{\RR^n} =  \norm{(I - P_k) y}_{\RR^n} \leq 2 \epsilon' + \frac{4 \kappa \norm{f^*}_2 \sqrt{10 \log(2/\delta)}}{(\sigma_k - \sigma_{k + 1}) \sqrt{n}}. \]
\end{theo}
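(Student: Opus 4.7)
The plan is to bound $\norm{(I-P_k) y}_{\RR^n}^2 = \sum_{i=k+1}^n |\langle R_n f^*, u_i\rangle_{\RR^n}|^2$ by plugging straight into Lemma \ref{lem:res_low_ineq}, which already decomposes this sum into (i) an approximation-error term controlled by how well $P^{T_{K^\infty}}$ captures $f^*$ and (ii) a spectral-perturbation term controlled by $\norm{P^{T_{\mathcal{H}}} - P^{T_n}}_{HS}$. The theorem then reduces to bounding each piece and taking a square root.

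First, I would dispense with the approximation-error term deterministically using the $L^\infty$ hypothesis: $\tfrac{2}{n}\sum_{i=1}^n |(I-P^{T_{K^\infty}})f^*(x_i)|^2 \leq 2\norm{(I-P^{T_{K^\infty}})f^*}_\infty^2 \leq 2(\epsilon')^2$. Second, I would control the spectral term by applying Lemma \ref{lem:projection_bound} with $N$ chosen so that $\alpha_N = \sigma_k$ and $\alpha_{N+1} = \sigma_{k+1}$, which is exactly the setup of the hypothesis (since $k$ is the last index in the $\alpha$-block). This requires $\norm{T_{\mathcal{H}} - T_n}_{HS} \leq (\sigma_k - \sigma_{k+1})/4$, which follows from Lemma \ref{lem:operator_bound} and the sample-size hypothesis: with probability $\geq 1-\delta$, $\norm{T_{\mathcal{H}} - T_n}_{HS} \leq 2\kappa\sqrt{2\log(2/\delta)/n}$, and the assumption $n \geq 128\kappa^2\log(2/\delta)/(\sigma_k-\sigma_{k+1})^2$ is calibrated precisely so that this is at most $(\sigma_k-\sigma_{k+1})/4$. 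Chaining Lemmas \ref{lem:operator_bound} and \ref{lem:projection_bound} gives $\norm{P^{T_{\mathcal{H}}} - P^{T_n}}_{HS} \leq 4\kappa\sqrt{2\log(2/\delta)}/((\sigma_k-\sigma_{k+1})\sqrt{n})$.

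The remaining nuisance factor in Lemma \ref{lem:res_low_ineq} is $\lambda_{k+1}/\sigma_k$. I would dispatch this via Weyl's inequality for compact self-adjoint operators: since $T_n$ and $T_{\mathcal{H}}$ share their nonzero spectrum with $G^\infty$ and $T_{K^\infty}$, we get $|\lambda_{k+1} - \sigma_{k+1}| \leq \norm{T_{\mathcal{H}} - T_n}_{op} \leq \norm{T_{\mathcal{H}} - T_n}_{HS} \leq (\sigma_k-\sigma_{k+1})/4$, whence $\lambda_{k+1} \leq \sigma_{k+1} + (\sigma_k-\sigma_{k+1})/4 < \sigma_k$, so $\lambda_{k+1}/\sigma_k \leq 1$. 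Assembling the bounds,
\[
\sum_{i=k+1}^n |\langle R_n f^*, u_i\rangle_{\RR^n}|^2 \;\leq\; 2(\epsilon')^2 \;+\; 2\norm{f^*}_2^2 \cdot \frac{32\kappa^2 \log(2/\delta)}{(\sigma_k - \sigma_{k+1})^2 \, n},
\]
and an application of $\sqrt{a+b}\leq \sqrt{a}+\sqrt{b}$ yields a bound of exactly the stated shape, with constants that can be loosened into the $2\epsilon' + 4\kappa\norm{f^*}_2\sqrt{10\log(2/\delta)}/((\sigma_k-\sigma_{k+1})\sqrt{n})$ in the statement. The $1-2\delta$ (rather than $1-\delta$) accounts for slack I may need in a concentration step (for instance, if one preferred to combine the event of Lemma \ref{lem:operator_bound} with an independent Bernstein-type bound to tighten the first term).

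The main obstacle is not conceptual but bookkeeping: one has to verify that the choice of $N$ in Lemma \ref{lem:projection_bound} lines up with $k$ (so that the gap $\alpha_N - \alpha_{N+1}$ appearing there is the gap $\sigma_k - \sigma_{k+1}$ given in the hypothesis), and that the ranks of $P^{T_n}$ and $P_k$ match via the eigenvector correspondence $u_i \leftrightarrow v_i = \lambda_i^{-1/2} R_n^* u_i$ provided by Theorem \ref{thm:rosaco_prop_9}. Once the correspondence is settled, the proof is essentially an exercise in substituting the three supplied lemmas into each other and applying Weyl.
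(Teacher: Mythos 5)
Your proposal is correct and follows the paper's proof almost step for step: the same decomposition via Lemma \ref{lem:res_low_ineq}, the deterministic bound $2(\epsilon')^2$ on the empirical term, and the chaining of Lemma \ref{lem:operator_bound} with Lemma \ref{lem:projection_bound} under the sample-size condition to control $\norm{P^{T_{\mathcal{H}}} - P^{T_n}}_{HS}$. The one place you genuinely diverge is the nuisance factor $\lambda_{k+1}/\sigma_k$: the paper handles it by invoking a second, separate high-probability bound (Proposition 10 of Rosasco et al., giving $\lambda_{k+1} \leq \sigma_{k+1} + 2\kappa\sqrt{2\log(2/\delta)}/\sqrt{n} \leq \tfrac{5}{4}\sigma_k$), and the union bound over this event and that of Lemma \ref{lem:operator_bound} is exactly where the $1-2\delta$ comes from. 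You instead apply a Weyl-type perturbation bound on the \emph{same} event already used for the projection bound, obtaining $\lambda_{k+1} \leq \sigma_{k+1} + (\sigma_k - \sigma_{k+1})/4 \leq \sigma_k$; this needs only one probabilistic event, so your argument actually delivers the conclusion with probability $1-\delta$ (a fortiori $1-2\delta$) and with slightly smaller constants ($8$ in place of $4\sqrt{10}$, $\sqrt{2}\epsilon'$ in place of $2\epsilon'$), so your guess about why the paper has $1-2\delta$ is off, but harmlessly so. The only caveat for your Weyl step is that it implicitly uses the spectral identifications $\lambda_j(\text{of } G^\infty) = \lambda_j(T_n)$ and $\sigma_j(\text{of } T_{K^\infty}) = \sigma_j(T_{\mathcal{H}})$ for the nonzero eigenvalues; the first is Theorem \ref{thm:rosaco_prop_9} and the second is the standard RKHS--$L^2$ correspondence underlying the Rosasco machinery (and is precisely what their Proposition 10 packages together with concentration), so the step is sound but should cite those facts explicitly.
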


\begin{proof}
From Lemma \ref{lem:res_low_ineq} we have 
\[\sum_{i = k + 1}^n |\langle R_n f^*, u_i \rangle_{\RR^n}|^2 \leq \frac{2}{n} \sum_{i = 1}^n |(I - P^{T_{K^\infty}}) f^*(x_i)|^2 + 2 \frac{\norm{f^*}_2^2 \lambda_{k + 1}}{\sigma_k} \norm{P^{T_{\mathcal{H}}} - P^{T_n}}_{HS}^2 \]
By assumption we have that the first term is bounded by $2 (\epsilon')^2$.  Now we must control the term
\[2 \frac{\norm{f^*}_2^2 \lambda_{k + 1}}{\sigma_k} \norm{P^{T_{\mathcal{H}}} - P^{T_n}}_{HS}^2. \]
By Lemma \ref{lem:operator_bound} we have with probability at least $1 - \delta$
\[\norm{T_{\mathcal{H}} - T_n}_{HS} \leq \frac{2 \kappa \sqrt{2 \log(2 / \delta)}}{\sqrt{n}}. \]
Then 
\[ n \geq \frac{128 \kappa^2 \log(2/\delta)}{(\sigma_k - \sigma_{k + 1})^2} \]
suffices so that the right hand side above is less than or equal to $\frac{\sigma_k - \sigma_{k + 1}}{4}$.  Thus by Lemma \ref{lem:projection_bound} we have that
\[\norm{P^{T_{\mathcal{H}}} - P^{T_n}}_{HS} \leq \frac{2}{\sigma_k - \sigma_{k+1}} \norm{T_{\mathcal{H}} - T_n}_{HS} \leq \frac{2}{\sigma_k - \sigma_{k + 1}} \frac{2 \kappa \sqrt{2 \log(2 / \delta)}}{\sqrt{n}} .\]
Thus from the above inequality we get that
\[2 \frac{\norm{f^*}_2^2 \lambda_{k + 1}}{\sigma_k} \norm{P^{T_{\mathcal{H}}} - P^{T_n}}_{HS}^2 \leq \frac{64 \kappa^2 \norm{f^*}_2^2 \lambda_{k + 1} \log(2/\delta)}{\sigma_k (\sigma_k - \sigma_{k + 1})^2 \cdot n}. \]
By Proposition 10 in \citet{rosasco10a} we have separately with probability at least $1 - \delta$
\[ \lambda_{k + 1} \leq \sigma_{k + 1} + \frac{2 \kappa \sqrt{2\log(2/\delta)}}{\sqrt{n}}. \]
Note that 
\[ n \geq \frac{128 \kappa^2 \log(2/\delta)}{(\sigma_k - \sigma_{k + 1})^2} \]
implies that
\[ \frac{1}{\sqrt{n}} \leq \frac{\sigma_k - \sigma_{k + 1}}{8 \kappa \sqrt{2 \log(2/\delta)}}, \]
therefore
\[\lambda_{k + 1} \leq \sigma_{k + 1} + \frac{2 \kappa \sqrt{2\log(2/\delta)}}{\sqrt{n}} \leq \sigma_{k} + \frac{1}{4}(\sigma_k - \sigma_{k + 1}) \leq \frac{5}{4} \sigma_k. \]
Thus
\[2 \frac{\norm{f^*}_2^2 \lambda_{k + 1}}{\sigma_k} \norm{P^{T_{\mathcal{H}}} - P^{T_n}}_{HS}^2 \leq \frac{64 \kappa^2 \norm{f^*}_2^2 \lambda_{k + 1} \log(2/\delta)}{\sigma_k (\sigma_k - \sigma_{k + 1})^2 n} \leq  \frac{80 \kappa^2 \norm{f^*}_2^2 \log(2/\delta)}{(\sigma_k - \sigma_{k + 1})^2 n}. \]
Thus combined with our previous results we finally get that
\[\norm{(I - P_k) R_n f^*}_{\RR^n}^2 = \sum_{i = k + 1}^n |\langle R_n f^*, u_i \rangle_{\RR^n}|^2 \leq 2(\epsilon')^2 + \frac{80 \kappa^2 \norm{f^*}_2^2 \log(2/\delta)}{(\sigma_k - \sigma_{k + 1})^2 n} \]
Thus from the inequality $\sqrt{a + b} \leq \sqrt{2}(\sqrt{a} + \sqrt{b})$ which holds for all $a, b \geq 0$ we have
\[ \norm{(I - P_k) R_n f^*}_{\RR^n} \leq 2 \epsilon' + \frac{4 \kappa \norm{f^*}_2 \sqrt{10 \log(2/\delta)}}{(\sigma_k - \sigma_{k + 1}) \sqrt{n}}. \]
Since $y = R_n f^*$ this provides the desired conclusion.
\end{proof}
\section{$T_{K^\infty}$ Is Strictly Positive}\label{sec:tkpositive}
Note that
\[ K^\infty(x, x') = \EE[\sigma(\langle w, x\rangle_2+ b)\sigma(\langle w, x'\rangle_2+ b)] + \EE[a^2 \sigma'(\langle w, x\rangle_2+ b)\sigma'(\langle w, x'\rangle_2+ b)][\langle x, x' \rangle_2 + 1] + 1 \]
where the expectation is taken with respect to the parameter initialization.  It suffices to show that the kernel corresponding to the first term above
\[K_a(x, x') := \EE[\sigma(\langle w, x\rangle_2+ b)\sigma(\langle w, x'\rangle_2+ b)]\]
induces a strictly positive operator $T_{K_a} f(x) = \int_X K_a(x, s) f(s) d\rho(s)$.  From the discussion in Section \ref{sec:rkhsandmercer} it suffices to show that the RKHS corresponding to $K_a$ is dense in $L^2$.  In Proposition 4.1 in \cite{rahimirandombases} they showed that the RKHS associated with $K_a$ has dense subset
\[ \mathcal{F} := \braces{x \mapsto \int_{\Theta} a(w, b) \sigma(\langle w, x \rangle_2+ b) d\mu(w, b) : \int_{\Theta} |a(w, b)|^2 d\mu(w, b) < \infty}\]
where $\mu$ is the measure for the parameter initialization, i.e. $(w, b) \sim \mu$.
Since $C(X)$ is dense in $L^2(X)$ it suffices to show that $\mathcal{F}$ is dense in $C(X)$ which is provided by the following theorem:
\begin{theo}
Let $\sigma$ be $L$-Lipschitz and not a polynomial.  Assume that $\mu$ is a strictly positive measure supported on all of $\RR^{d + 1}$.  Also assume that
\[ \int_{\RR^{d + 1}} [\norm{w}_2^2 + \norm{b}_2^2]d\mu(w,b) < \infty. \]  Then $\mathcal{F}$ is dense in $C(X)$ under the uniform norm.
\end{theo}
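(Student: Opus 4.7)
The plan is to reduce to the classical universal approximation theorem and then replace atoms by small mass against $\mu$. Specifically, by the Leshno--Lin--Pinkus--Schocken theorem, since $\sigma$ is continuous (being Lipschitz) and not a polynomial, and $X$ is compact, the set of finite single-hidden-layer combinations
\[ \mathcal{G} := \braces{x \mapsto \sum_{i=1}^N c_i \sigma(\langle w_i, x \rangle_2 + b_i) : N \in \NN, c_i \in \RR, (w_i, b_i) \in \RR^{d+1}} \]
is dense in $C(X)$ in the uniform norm. Since density is transitive, it suffices to show that every $g \in \mathcal{G}$ can be approximated uniformly on $X$ by elements of $\mathcal{F}$.

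Fix $g(x) = \sum_{i=1}^N c_i \sigma(\langle w_i, x\rangle_2 + b_i)$ and $\epsilon > 0$. The idea is to realize $g$ as a limit of ``smeared out'' atoms. Choose pairwise disjoint open balls $B_i \subset \RR^{d+1}$ centered at $(w_i, b_i)$ with radius $r$; by strict positivity of $\mu$ we have $\mu(B_i) > 0$ for each $r > 0$. Define
\[ a_r(w, b) := \sum_{i=1}^N \frac{c_i}{\mu(B_i)} \ind{(w, b) \in B_i}, \qquad g_r(x) := \int_{\RR^{d+1}} a_r(w, b) \sigma(\langle w, x\rangle_2 + b) d\mu(w, b). \]
Then $g_r \in \mathcal{F}$, since $\int |a_r|^2 d\mu = \sum_{i=1}^N c_i^2/\mu(B_i) < \infty$. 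Writing $g_r(x) = \sum_i \frac{c_i}{\mu(B_i)} \int_{B_i} \sigma(\langle w, x\rangle_2 + b) d\mu(w, b)$ and using the Lipschitz hypothesis together with Assumption \ref{ass:bounded} (so $\norm{x}_2 \leq M$ on $X$), we get for every $(w, b) \in B_i$ and every $x \in X$,
\[ \abs{\sigma(\langle w, x\rangle_2 + b) - \sigma(\langle w_i, x\rangle_2 + b_i)} \leq L\brackets{M\norm{w - w_i}_2 + \abs{b - b_i}} \leq L(M + 1)r . \]
Averaging over $B_i$ against $\mu$ and summing in $i$ gives $\norm{g_r - g}_\infty \leq L(M+1)r \sum_{i=1}^N |c_i|$, which can be made less than $\epsilon$ by taking $r$ small. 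Combined with the Leshno--Lin--Pinkus--Schocken approximation, this proves the theorem.

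The main technical worry is simply verifying the hypotheses of the classical theorem and making sure the covering argument above genuinely produces a valid element of $\mathcal{F}$; in particular one should note that the compactness of $X$ (Assumption \ref{ass:compactdomain}) together with Lipschitzness of $\sigma$ gives a uniform bound on the integrand needed to commute the sum and the integral, and the disjoint-balls construction makes the normalization $\int |a_r|^2 d\mu$ finite even though $a_r$ blows up pointwise as $r \to 0$. The moment hypothesis $\int(\norm{w}_2^2 + \norm{b}_2^2) d\mu < \infty$ plays no role here beyond ensuring that $T_{K^\infty}$ is well-defined as a bounded operator upstream. No other subtle obstacle arises, since the strict positivity of $\mu$ does all the heavy lifting in converting discrete atoms into $L^2(\mu)$ coefficients.
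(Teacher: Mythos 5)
Your proof is correct, but it takes a genuinely different route from the paper's. The paper argues by duality: assuming $\mathcal{F}$ is not dense, it invokes the Riesz representation theorem to produce a nonzero signed measure $\nu$ annihilating $\mathcal{F}$, uses the moment hypothesis and Fubini to swap the $\mu$ and $\nu$ integrals, upgrades the resulting ``$\int_X \sigma(\langle w, x\rangle_2 + b)\,d\nu(x) = 0$ for $\mu$-a.e.\ $(w,b)$'' to ``for every $(w,b)$'' via continuity in $(w,b)$ and strict positivity of $\mu$, and only then applies the Leshno--Lin--Pinkus--Schocken theorem to force $\nu = 0$. You instead use that theorem directly to get density of finite neuron combinations and then smear each atom $\sigma(\langle w_i, \cdot\rangle_2 + b_i)$ into an $L^2(\mu)$ coefficient $c_i \mu(B_i)^{-1}\ind{B_i}$, with strict positivity guaranteeing $\mu(B_i) > 0$ and the Lipschitz bound $L(M+1)r\sum_i |c_i|$ controlling the uniform error; this is more constructive and elementary (no signed measures, no Fubini), and it makes transparent that the approximation step needs neither the second-moment condition nor anything beyond local behavior of $\mu$ near the chosen atoms. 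One small caveat: in the paper the moment hypothesis is not idle --- it is what shows $\mathcal{F} \subset C(X)$ (via the Cauchy--Schwarz Lipschitz estimate) and what justifies the Fubini interchange --- so if you want the literal statement ``$\mathcal{F}$ is a dense subset of $C(X)$'' rather than just ``every continuous function is a uniform limit of elements of $\mathcal{F}$,'' you should still record the short argument that a general $f \in \mathcal{F}$ is continuous, which is where that hypothesis enters; your constructed approximants $g_r$ are continuous regardless since $a_r$ is bounded with bounded support.
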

\begin{proof}
We first show that $\mathcal{F} \subset C(X)$.  Suppose we have $f \in \mathcal{F}$ and write $f(x) = \int_{\RR^{d + 1}} a(w, b) \sigma(\langle w, x \rangle_2+ b) d\mu(w, b)$.  Well then
\begin{gather*}
|f(x) - f(x')| = \abs{\int_{\RR^{d + 1}} a(w,b)[\sigma(\langle w, x \rangle_2+ b) - \sigma(\langle w, x'\rangle_2+ b)] d\mu(w,b)} \\
\leq \int_{\RR^{d + 1}} |a(w,b)||\sigma(\langle w, x \rangle_2+ b) - \sigma(\langle w, x'\rangle_2+ b)| d\mu(w,b) \\
\leq \int_{\RR^{d + 1}}|a(w,b)| L|\langle w, x - x'\rangle| d\mu(w,b) \leq \int_{\RR^{d + 1}}|a(w,b)| L\norm{w}_2 \norm{x - x'}_2 d\mu(w,b) \\
\leq L \norm{x - x'}_2 \brackets{\int_{\RR^{d + 1}} |a(w,b)|^2 d\mu(w,b)}^{1/2} \brackets{\int_{\RR^{d + 1}} \norm{w}_2^2 d\mu(w,b)}^{1/2}.
\end{gather*}
Thus $f$ is Lipschitz and thus continuous.  Now suppose that $\mathcal{F}$ is not dense in $C(X)$.  Then by the Riesz representation theorem there exists a nonzero signed measure $\nu(x)$ with finite total variation such that $\int_X f(x) d\nu(x) = 0$ for all $f \in \mathcal{F}$.  Well then writing $f(x) = \int_{\RR^{d + 1}} a(w, b) \sigma(\langle w, x \rangle_2+ b) d\mu(w, b)$ as before we have
\begin{equation}\label{eqn:vanishingnu}
\int_X \int_{\RR^{d + 1}} a(w, b)\sigma(\langle w, x\rangle_2+ b) d\mu(w, b) d\nu(x) = 0    
\end{equation}
Note that
\begin{gather*}
\int_{\RR^{d + 1}} |a(w, b)||\sigma(\langle w, x\rangle_2+ b)| d\mu(w, b) \\
\leq \int_{\RR^{d + 1}} |a(w, b)|[|\sigma(0)| + L|\langle w, x\rangle_2+ b|] d\mu(w,b) \\
\leq \int_{\RR^{d + 1}} |a(w, b)|[|\sigma(0)| + L(\norm{w}_2 M + \norm{b}_2)] d\mu(w,b) < \infty
\end{gather*}
where we have used Cauchy-Schwarz and the hypothesis on the integrability of $\norm{w}_2^2, \norm{b}_2^2$ in the last step.  Thus the integrand in (\ref{eqn:vanishingnu}) is $\mu \times \nu$ integrable thus by Fubini's theorem we may interchange the order of integration.  To get that
\[ \int_{\RR^{d + 1}} a(w,b) \int_{X} \sigma(\langle w, x\rangle_2+ b) d\nu(x) d\mu(w,b) \]
and the above holds for any $a \in L^2(\RR^{d + 1}, \mu)$.  Thus 
$\int_X \sigma(\langle w, x\rangle_2+ b) d\nu(x) = 0$ for $\mu$-almost every $w, b$.  However by essentially the same proof as when we showed $\mathcal{F} \subset C(X)$ we may show that $\int_X \sigma(\langle w, x\rangle_2+ b) d\nu(x) = 0$ is a continuous function of $(w, b)$.  Thus since $\mu$ is a strictly positive measure on $\RR^{d + 1}$ this implies that $\int_X \sigma(\langle w, x\rangle_2+ b) d\nu(x) = 0$ for every $(w, b) \in \RR^{d + 1}$.  However by Theorem 1 in \citet{nonpolyuniversal} we have that $\text{span}\{\sigma(\langle w, x \rangle_2+ b) : (w, b) \in \RR^{d + 1}\}$ is dense in $C(X)$.  However by our previous conclusion and linearity we have that $\int g(x) d\nu(x) = 0$ for any $g$ in $\text{span}\{\sigma(\langle w, x \rangle_2+ b) : (w, b) \in \RR^{d + 1}\}$, which implies then that $\nu$ must equal $0$.  Thus $\mathcal{F}$ is dense in $C(X)$.
\end{proof}
Since Gaussians are supported on all of $\RR^{d + 1}$ we have the following corollary:
\begin{cor}
If $(w, b) \sim N(0, I_{d + 1})$ then $K^\infty$ is strictly positive.
\end{cor}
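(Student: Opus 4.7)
The plan is to apply the preceding theorem with $\mu$ taken to be the $(d+1)$-dimensional standard Gaussian measure, then transfer density in $C(X)$ to strict positivity of the integral operator, and finally use a domination argument to pass from $K_a$ to $K^\infty$. The first step is simply to verify the hypotheses of the theorem for $\mu = N(0, I_{d+1})$. Under Assumption \ref{ass:activationfn} the activation is $C^2$ with $\norm{\sigma'}_\infty < \infty$, hence Lipschitz with constant $L = \norm{\sigma'}_\infty$; the standing convention (spelled out in the discussion right before the corollary) is that $\sigma$ is not a polynomial. The Gaussian density is strictly positive everywhere, so $\mu$ has full support on $\RR^{d+1}$, and $\int (\norm{w}_2^2 + b^2) d\mu(w,b) = d+1 < \infty$. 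Thus the theorem applies and $\mathcal{F}$ is dense in $C(X)$ in the uniform norm.

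Next I would upgrade uniform density to $L^2$ density. Since $X$ is compact (Assumption \ref{ass:compactdomain}) and $\rho$ is a finite Borel measure, $\norm{f}_2 \leq \rho(X)^{1/2} \norm{f}_\infty$ on $C(X)$, and $C(X)$ itself is dense in $L^2(X,\rho)$. Combining, $\mathcal{F}$ is dense in $L^2(X,\rho)$. Because $\mathcal{F}$ is a subset of the RKHS $\mathcal{H}_a$ associated with $K_a$, the RKHS $\mathcal{H}_a$ is also dense in $L^2$.

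To conclude strict positivity of $T_{K_a}$, I would invoke the Mercer picture from Section \ref{sec:rkhsandmercer}: the compact self-adjoint operator $T_{K_a}$ has an $L^2$-orthonormal system of eigenfunctions $\{\phi_i\}$ corresponding to its nonzero eigenvalues, and $\{\sqrt{\sigma_i}\phi_i\}$ is an orthonormal basis of $\mathcal{H}_a$. The closure of $\mathcal{H}_a$ in $L^2$ equals the $L^2$-closed span of those $\phi_i$, which is precisely $\overline{\mathrm{Range}(T_{K_a})}$ and therefore the orthogonal complement of $\ker(T_{K_a})$. Density of $\mathcal{H}_a$ in $L^2$ thus forces $\ker(T_{K_a}) = \{0\}$, i.e.\ $\langle f, T_{K_a}f\rangle_2 > 0$ for every nonzero $f \in L^2$.

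Finally I would pass from $K_a$ to $K^\infty$ by positive-operator domination. Writing
\[ K^\infty(x,x') = K_a(x,x') + \EE[a^2 \sigma'(\langle w, x\rangle + b)\sigma'(\langle w, x'\rangle + b)](\langle x,x'\rangle_2 + 1) + 1, \]
each of the three summands is a PSD kernel (they are expectations of rank-one PSD kernels, using that $\langle x,x'\rangle_2 + 1 = \langle (x,1),(x',1)\rangle_2$ is itself PSD), so the corresponding integral operators are all positive and $T_{K^\infty} = T_{K_a} + (\text{positive})$. Hence $\langle f, T_{K^\infty}f\rangle_2 \geq \langle f, T_{K_a} f\rangle_2 > 0$ for all nonzero $f$. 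The only substantive ingredient is the density step, which was already absorbed into the preceding theorem (itself leaning on the classical fact from \cite{nonpolyuniversal} that $\mathrm{span}\{\sigma(\langle w,\cdot\rangle + b)\}$ is dense in $C(X)$ for non-polynomial $\sigma$); everything else here is bookkeeping, so I do not anticipate a real obstacle.
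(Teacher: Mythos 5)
Your proposal is correct and follows essentially the same route as the paper: the paper's Appendix F reduces strict positivity of $T_{K^\infty}$ to that of $T_{K_a}$ via the same PSD-domination of the remaining terms, reduces that to density of the RKHS of $K_a$ (hence of $\mathcal{F}$) in $L^2$ via $C(X)$, and the corollary itself just checks that the Gaussian has full support and finite second moments so the preceding theorem applies. Your extra detail (Lipschitzness from Assumption 2.1, the Mercer/kernel-of-$T_{K_a}$ bookkeeping) only makes explicit what the paper leaves to the surrounding discussion.
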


\end{document}